\definecolor{Gray}{gray}{1}
\definecolor{yxc}{RGB}{255,0,0}
\definecolor{yjc}{RGB}{225,0,100}
\definecolor{ytw}{RGB}{255,69,0}
\definecolor{gen}{RGB}{0,0,200}
\definecolor{lxs}{RGB}{138,43,226}
\definecolor{own_pink}{RGB}{217,25,169}
\definecolor{own_blue}{RGB}{0,100,223}
\newcommand{\Ntrim}{N^{\mathsf{trim}}}
\newcommand{\Nmain}{N^{\mathsf{main}}}
\newcommand{\Naux}{N^{\mathsf{aux}}}
\newcommand{\Dtrim}{\mathcal{D}^{\mathsf{trim}}}
\newcommand{\Dmain}{\mathcal{D}^{\mathsf{main}}}
\newcommand{\Daux}{\mathcal{D}^{\mathsf{aux}}}
\definecolor{own_pink}{RGB}{217,25,169}
\definecolor{own_blue}{RGB}{0,100,223}
\DeclareMathOperator{\ind}{\mathds{1}}  
\newcommand{\defn}{\coloneqq}
\newcommand{\rhob}{\rho^{\mathsf{b}}}
\newcommand{\myrho}{d^{\mathsf{b}}}
\newcommand{\Cstar}{C^{\star}_{\mathsf{rob}}}
\newcommand{\pib}{\pi^{\mathsf{b}}}
\newcommand{\no}{0} 
\newcommand{\ror}{\sigma} 
\newcommand{\unb}{\cU} 
\newcommand{\piopt}{\pi^\star} 
\newcommand{\minp}{P_{\mathsf{min}}} 
\newcommand{\minpall}{P_{\mathsf{min}}^\star} 
\newcommand{\DRLCB}{{\sf DRVI-LCB}\xspace} 
\newcommand{\DRVI}{{\sf DRVI}\xspace}
\newcommand{\cb}{c_{\mathrm{b}}}
\newcommand{\cA}{\mathcal{A}}
\newcommand{\cC}{\mathcal{C}}
\newcommand{\cD}{\mathcal{D}}
\newcommand{\cM}{\mathcal{M}}
\newcommand{\cP}{\mathcal{P}}
\newcommand{\cS}{{\mathcal{S}}}
\newcommand{\cT}{{\mathcal{T}}}
\newcommand{\cU}{\mathcal{U}}
\newcommand{\mymid}{\,|\,} 
\newcommand\reallywidehat[1]{%
\savestack{\tmpbox}{\stretchto{%
  \scaleto{%
    \scalerel*[\widthof{\ensuremath{#1}}]{\kern-.6pt\bigwedge\kern-.6pt}%
    {\rule[-\textheight/2]{1ex}{\textheight}}
  }{\textheight}%
}{0.5ex}}%
\stackon[1pt]{#1}{\tmpbox}%
}
\newcommand\reallywidecheck[1]{%
\savestack{\tmpbox}{\stretchto{%
  \scaleto{
    \scalerel*[\widthof{\ensuremath{#1}}]{\kern-.6pt\bigwedge\kern-.6pt}%
    {\rule[-\textheight/2]{1ex}{\textheight}}
  }{\textheight}%
}{0.5ex}}%
\stackon[1pt]{#1}{\scalebox{-1}{\tmpbox}}%
}
\newcommand{\tpe}{\widehat{\mathcal{T}}^{\sigma}_{\mathsf{pe}}}
\title{Distributionally Robust  Model-Based Offline Reinforcement Learning with Near-Optimal Sample Complexity}
\author{
	Laixi Shi\thanks{Department of Electrical and Computer Engineering, Carnegie Mellon University, Pittsburgh, PA 15213, USA.}\\
	Carnegie Mellon University\\
	\texttt{laixis@andrew.cmu.edu}
	\and
	Yuejie Chi\footnotemark[1] \\ 	 
 	Carnegie Mellon University  \\
	\texttt{yuejiechi@cmu.edu}
	}
\date{August 2022; revised December 2023}
\begin{document}

\theoremstyle{plain} \newtheorem{lemma}{\textbf{Lemma}}
\newtheorem{proposition}{\textbf{Proposition}}\newtheorem{theorem}{\textbf{Theorem}} \newtheorem{assumption}{Assumption}

\theoremstyle{remark}\newtheorem{remark}{\textbf{Remark}}

\maketitle

\begin{abstract}

This paper concerns the central issues of model robustness and sample efficiency in offline reinforcement learning (RL), which aims to learn to perform decision making from history data without active exploration. Due to uncertainties and variabilities of the environment, it is critical to learn a robust policy---with as few samples as possible---that performs well even when the deployed environment deviates from the nominal one used to collect the history dataset. 
We consider a distributionally robust formulation of offline RL, focusing on tabular robust Markov decision processes with an uncertainty set specified by the Kullback-Leibler divergence in both finite-horizon and infinite-horizon settings. To combat with sample scarcity, a model-based algorithm that combines distributionally robust value iteration with the principle of pessimism in the face of uncertainty is proposed, by penalizing the robust value estimates with a carefully designed data-driven penalty term. Under a mild and tailored assumption of the history dataset that measures distribution shift without requiring full coverage of the state-action space, we establish the finite-sample complexity of the proposed algorithms. We further develop an information-theoretic lower bound, which suggests that learning RMDPs is at least as hard as the standard MDPs when the uncertainty level is sufficient small, and corroborates the tightness of our upper bound up to polynomial factors of the (effective) horizon length for a range of uncertainty levels. To the best our knowledge, this provides the first provably near-optimal robust offline RL algorithm that learns under model uncertainty and partial coverage.

\end{abstract}

\noindent \textbf{Keywords:} offline/batch reinforcement learning, distributional robustness, pessimism, model-based reinforcement learning, KL divergence uncertainty

\allowdisplaybreaks

\setcounter{tocdepth}{2}
\tableofcontents

\section{Introduction}

Reinforcement learning (RL) concerns about finding an optimal policy that maximizes an agent's expected total reward in an unknown environment. A fundamental challenge of deploying RL to real-world applications is the limited ability to explore or interact with the environment, due to resources, time, or safety constraints. Offline RL, or batch RL, seeks to circumvent this challenge by resorting to history data---which are often collected by executing some possibly unknown behavior policy in the past---with the hope that the history data might already provide significant insights about the targeted optimal policy without further exploration \citep{levine2020offline}. 

Besides maximizing the expected total reward, perhaps an equally important goal---to say the least---for an RL agent is safety and robustness \citep{garcia2015comprehensive}, especially in high-stake applications such as robotics, autonomous driving, clinical trials, financial investments, and so on \citep{choi2009reinforcement,schulman2013finding}. It has been observed that a standard RL agent trained in an ideal environment might be extremely sensitive and fail catastrophically when the deployed environment is subject to small adversarial perturbations \citep{zhang2020robust}. Consequently, robust RL has attracted a surge of attentions with the goal to learn an optimal policy that is robust to environment perturbations. In fact, providing robustness guarantees becomes even more relevant in the offline setting, which can be formulated as {\em robust offline RL}, since the history data is often inevitably collected from a timeframe where it is no longer reasonable to assume model stillness, due to the highly non-stationary and time-varying dynamics of many real-world applications. Altogether, this naturally leads to a question:
\begin{itemize}
\item[]{\em Can we learn a near-optimal policy which is robust with respect to uncertainties and variabilities of the environments using as few history samples as possible?}
\end{itemize}

\subsection{Challenges and premises in robust offline RL}

Despite significant amount of recent activities in robust RL and offline RL, addressing model uncertainty and sample efficiency simultaneously remains challenging due to several key issues that we single out below. 

\begin{itemize}
\item {\em Distribution shift.} The history data is generated by following some behavior policy in an outdated environment, which can result in a data distribution that is heavily deviated from the desired one, i.e., induced by the target policy in the deployed environment.  
\item {\em Partial and limited coverage.} The history data might only provide partial and limited coverage over the entire state-action space, where the limited sample size leads to a poor estimate of the associated model parameters, and consequently, unreliable policy learning outcomes.  

\end{itemize}  

Understanding the implications of---and designing algorithms that work around---these challenges play a major role in advancing the state-of-the-art of robust offline RL. In particular, two prevalent algorithmic ideas, distributional robustness and pessimism, are called out as our guiding principles.   
\begin{itemize}
\item {\em Distributional robustness.} Instead of finding an optimal policy in a fixed environment, motivated by the literature in distributionally robust optimization \citep{delage2010distributionally}, one might seek to find a policy that achieves the best worst-case performance for all the environments in some uncertainty set around the offline environment, as formalized in the framework of robust RL \citep{iyengar2005robust,nilim2005robust}.  
\item {\em Pessimism.} When the samples are scarce, it is wise to act with caution based on the principle of pessimism, where one subtracts a penalty term---representing the confidence of the corresponding estimate---from the value functions to avoid  excessive risk.  Encouragingly, pessimism has been recently shown as an indispensable ingredient to achieve sample efficiency in offline RL without requiring full coverage \citep{jin2021pessimism,rashidinejad2021bridging,li2022settling}, as long as the trajectory of the behavior policy provides sufficient overlap with that of the target policy.

\end{itemize}

While these two ideas have been proven useful for robust RL and offline RL {\em separately}, tackling robust offline RL needs novel ingredients that go significantly beyond a na\"ive combination of existing techniques. This is because, in robust offline RL, one needs to handle the distribution shift induced not only by the behavior policy, but also by model perturbations, thus the penalty term derived from the pessimism principle in standard offline RL is no longer applicable. Indeed, while the value function of standard RL depends linearly with respect to the transition kernel, the dependency between the nominal transition kernel and the robust value function unfortunately becomes highly nonlinear---even without a closed-form expression---making the control of statistical uncertainty extremely challenging in robust offline RL.

\subsection{Main contributions}\label{sec:main-contri}

In this work, we provide an affirmative answer to the question raised earlier, by developing a provably efficient model-based algorithm that learns a near-optimal {\em distributionally-robust} policy from a minimal number of offline samples. Specifically, we consider a Robust Markov Decision Process (RMDP) with $S$ states, $A$ actions in both the nonstationary finite-horizon setting (with horizon length $H$) and the discounted infinite-horizon setting (with discount factor $\gamma$). Different from standard MDPs, RMDPs specify a family transition kernels, which lie within an uncertainty set taken as a small ball of size $\ror$ around a nominal transition kernel with respect to the Kullback-Leibler (KL) divergence. Given $K$ episodes (resp. $N$ transitions) of history data drawn by following some behavior policy $\pib$ under the nominal transition kernel in the finite-horizon (resp. infinite-horizon) setting, our goal is to learn the optimal robust policy $\piopt$ in the maximin sense, which has the best worst-case value for all the transition kernels within the uncertainty set \citep{iyengar2005robust,nilim2005robust}. Our main results are summarized below.  

 \begin{itemize}
\item We introduce a notion called {\em robust single-policy clipped concentrability coefficient} $\Cstar \in [1/S , \infty ]$ to quantify the quality of history data, which measures the distribution shift between the behavior policy $\pib$ and the optimal robust policy $\piopt$ in the presence of model perturbations, without requiring full coverage of the entire state-action space by the behavior policy. In contrast, prior algorithms \citep{yang2021towards,zhou2021finite,panaganti2021sample}---using simulator or offline data---all require full coverage of the entire state-action space. 

\item We propose a novel pessimistic variant of distributionally robust value iteration with a plug-in estimate of the nominal transition kernel \citep{iyengar2005robust,nilim2005robust}, called \DRLCB, by penalizing the robust value estimates with a carefully designed data-driven penalty term. We demonstrate that  \DRLCB finds an $\varepsilon$-optimal robust policy as soon as the sample size is above 
$  \widetilde{O} \left( \frac{S \Cstar H^5 }{ \minpall \ror^2\varepsilon^2 } \right) $ for the finite-horizon setting and $  \widetilde{O} \left( \frac{S \Cstar  }{ \minpall \ror^2(1-\gamma)^4\varepsilon^2 } \right)  $ for the infinite-horizon setting,
up to some logarithmic factor after a burn-in cost independent of $\varepsilon$. Here,
 $\minpall$ is the smallest positive state transition probability of the optimal robust policy $\piopt$ under the nominal kernel. 


\item To complement the upper bound, we further develop information-theoretic lower bounds for a range of uncertainty levels, showing there exists some transition kernel such that at least $\Omega \left( \frac{ S\Cstar H^4 }{     \varepsilon^2} \right) $
samples (resp. $\Omega \left( \frac{ S\Cstar }{     (1-\gamma)^3\varepsilon^2} \right) $ samples) are needed to find an $\varepsilon$-optimal robust policy when the uncertainty level $\ror\lesssim 1/H$ (resp. $\ror\lesssim (1-\gamma)$), and at least $\Omega \left( \frac{ S\Cstar H^3 }{  \minpall \ror^2\varepsilon^2} \right) $
samples (resp. $\Omega \left( \frac{ S\Cstar }{  \minpall \ror^2(1-\gamma)^2\varepsilon^2} \right) $ samples) are needed to find an $\varepsilon$-optimal robust policy when the uncertainty level $\ror \asymp  \log(1/\minpall)$, regardless of the choice of algorithms in the finite-horizon (resp. infinite-horizon) setting. Hence, this suggests that learning RMDPs is at least as hard as the standard MDP \citep{li2022settling} when the uncertainty level is sufficiently small, and corroborates the near-optimality of \DRLCB with respect to all key parameters up to a polynomial factor of the horizon length $H$ (resp. the effective horizon length $\frac{1}{1-\gamma}$) for a range of uncertainty levels ($\ror\asymp  \log(1/\minpall)$). 

\end{itemize}


To the best of our knowledge, our paper is the first work to execute the principle of pessimism in a data-driven manner for robust offline RL, leading to
the first provably efficient algorithm that learns under simultaneous model uncertainty and partial coverage of the history dataset. See Table~\ref{tab:our-work}  for a summary.

\paragraph{Comparison with prior art under full coverage.} Prior works  \citep{yang2021towards,zhou2021finite,panaganti2021sample} have only addressed the infinite-horizon setting under full coverage of the history data. Fortunately, our results also seamlessly cover this easier scenario, by replacing $\Cstar$ with $A$. 
Specializing our result to this setting to facilitate comparison, \DRLCB finds an $\varepsilon$-optimal robust policy with at most $\widetilde{O} \left(\frac{SA  }{  \minpall (1-\gamma)^4 \ror^2 \varepsilon^2}  \right)$ samples, which depends linearly with respect to the size of the state space $S$ (ignoring other parameters). In contrast, all prior works \citep{yang2021towards,zhou2021finite,panaganti2021sample} incur sample complexities that scale at least quadratically with respect to the size of the state space $S$. In addition, our bound improves the {\em exponential} dependency on $\frac{1}{1-\gamma}$ of \citet{zhou2021finite,panaganti2021sample} to a {\em polynomial} dependency, as well as the {\em quadratic} dependency on $1/\minp$ (which satisfies $\minp \leq \minpall$) of \citet{yang2021towards} to a {\em linear} one on $1/\minpall$. These improvements further corroborate the benefit of the proposed \DRLCB even under full coverage.  See Table~\ref{tab:prior-work} for detailed comparisons.

\begin{table}[t]
	\begin{center}
\begin{tabular}{c|c|c|c|c}
\toprule
	Horizon & Algorithm &  Coverage  & Sample complexity   & Uncertainty level  \tabularnewline
 \hline
 \multirow{6}{*}{infinite-horizon} 
 &    {   DRVI-LCB}  \vphantom{$\frac{1^{7}}{1^{7^{7}}}$}   & &   &    \tabularnewline
	&{{\bf (this work)}}  &    \multirow{-2}{*}{partial} & \multirow{-2}{*}{ $\frac{S\Cstar  }{  \minpall (1-\gamma)^4 \ror^2 \varepsilon^2} $}  &   \multirow{-2}{*}{full range} \tabularnewline 
\cline{2-5}  
 &    {   Lower bound}  \vphantom{$\frac{1^{7}}{1^{7^{7}}}$}   & &   &    \tabularnewline
	&{{\bf (this work)}}  &    \multirow{-2}{*}{partial} & \multirow{-2}{*}{ $\frac{S\Cstar  }{   (1-\gamma)^3   \varepsilon^2} $}  &   \multirow{-2}{*}{ $\ror\lesssim (1-\gamma)$} \tabularnewline 
\cline{2-5} 
& { Lower bound} \vphantom{$\frac{1^{7}}{1^{7^{7}}}$}    &  & &   \tabularnewline
& {{\bf (this work)}}  &   \multirow{-2}{*}{partial}  & \multirow{-2}{*}{ $ \frac{ S\Cstar  }{ \minpall (1-\gamma)^2 \ror^2 \varepsilon^2}  
$} &  \multirow{-2}{*}{ $\ror\asymp \log(1/\minpall)$}   \tabularnewline
\hline
  \multirow{6}{*}{finite-horizon}  
&   {  DRVI-LCB} \vphantom{$\frac{1^{7}}{1^{7^{7}}}$}    &  &    &    \tabularnewline
	&{ {\bf (this work)}} &     \multirow{-2}{*}{partial} & \multirow{-2}{*}{ $\frac{S\Cstar  H^5}{  \minpall \ror^2 \varepsilon^2} $}   &   \multirow{-2}{*}{full range}  \tabularnewline  
\cline{2-5} 
& {  Lower bound} \vphantom{$\frac{1^{7}}{1^{7^{7}}}$}     &   &   &     \tabularnewline
& {{\bf (this work)}} &     \multirow{-2}{*}{partial}  & \multirow{-2}{*}{ $ \frac{ S\Cstar H^4 }{   \varepsilon^2}  
$}    &   \multirow{-2}{*}{ $\sigma\lesssim 1/H$} \tabularnewline
\cline{2-5} 
& {  Lower bound} \vphantom{$\frac{1^{7}}{1^{7^{7}}}$}     &   &   &     \tabularnewline
& {{\bf (this work)}} &     \multirow{-2}{*}{partial}  & \multirow{-2}{*}{ $ \frac{ S\Cstar H^3 }{ \minpall  \ror^2 \varepsilon^2}  
$}    &   \multirow{-2}{*}{ $\ror\asymp \log(1/\minpall)$} \tabularnewline
\hline
\toprule
\end{tabular}

	\end{center}
	\caption{Our results for finding an $\varepsilon$-optimal robust policy in the infinite/finite-horizon robust MDPs with an uncertainty set measured with respect to the KL divergence using history data under partial coverage. The sample complexities included in the table are valid for sufficiently small $\varepsilon$, with all logarithmic factors omitted. Here, $\ror$ is the uncertainty level, $S$ is the size of the state space, $H$ is the horizon length for the finite-horizon setting, $\gamma$ is the discount factor for the infinite-horizon setting, $\Cstar$ is the robust single-policy clipped concentrability coefficient, and $\minpall  $ is the smallest positive state transition probability of the nominal kernel {\em visited by the optimal robust policy $\piopt$}. 
	\label{tab:our-work}   } 
 
\end{table}

\begin{table}[ht]
	\begin{center}
\begin{tabular}{c|c|c|c}
\toprule

	Problem type & Algorithm &  Coverage  & Sample complexity   \tabularnewline
\toprule
 \multirow{8}{*}{infinite-horizon} &DRVI \vphantom{$\frac{1^{7}}{1^{7^{7}}}$} &  \multirow{2}{*}{full}  & \multirow{2}{*}{$\frac{S^2A \exp\left(O(\frac{1}{1-\gamma})\right)}{ (1-\gamma)^4 \ror^2 \varepsilon^2} $}   \tabularnewline
&\citep{zhou2021finite} &  &      \tabularnewline
\cline{2-4} &REVI/DRVI \vphantom{$\frac{1^{7}}{1^{7^{7}}}$}   & \multirow{2}{*}{full} & \multirow{2}{*}{$\frac{S^2A \exp\left(O(\frac{1}{1-\gamma})\right)}{ (1-\gamma)^4 \ror^2 \varepsilon^2} $
}  \tabularnewline
&\citep{panaganti2021sample} &     &  \tabularnewline 
\cline{2-4}
&DRVI \vphantom{$\frac{1^{7}}{1^{7^{7}}}$}   &\multirow{2}{*}{full} & \multirow{2}{*}{$\frac{S^2A }{ \minp^2 (1-\gamma)^4 \ror^2 \varepsilon^2} $ }   \tabularnewline
&\citep{yang2021towards} &       &  \tabularnewline
\cline{2-4} 
 &   {  DRVI-LCB} \vphantom{$\frac{1^{7}}{1^{7^{7}}}$}   &  &    \tabularnewline
	&{  {\bf (this work)}} &\multirow{-2}{*}{ full} &    \multirow{-2}{*}{  $\frac{SA  }{  \minpall (1-\gamma)^4 \ror^2 \varepsilon^2} $}  \tabularnewline
\hline
  \multirow{2}{*}{finite-horizon}  
&   {  DRVI-LCB} \vphantom{$\frac{1^{7}}{1^{7^{7}}}$}    & &    \tabularnewline
	&{{\bf (this work)}} &     \multirow{-2}{*}{full} & \multirow{-2}{*}{ $\frac{SA  H^5}{  \minpall \ror^2 \varepsilon^2} $}    \tabularnewline  
\hline
\toprule
\end{tabular}

	\end{center}
	\caption{Comparisons between our results and prior arts for finding an $\varepsilon$-optimal robust policy in the infinite/finite-horizon robust MDPs with an uncertainty set measured with respect to the KL divergence under full coverage of the history data. The sample complexities included in the table are valid for sufficiently small $\varepsilon$, with all logarithmic factors omitted. Here, $\ror$ is the uncertainty level, $S$ is the size of the state space, $A$ is the size of the action space, $H$ is the horizon length for the finite-horizon setting, $\gamma$ is the discount factor for the infinite-horizon setting, $\minpall  $ is the smallest positive state transition probability of the nominal kernel {\em visited by the optimal robust policy $\piopt$}, and $\minp$ is the smallest positive state transition probability of the nominal kernel; it holds $\minp \leq \minpall$. 
	\label{tab:prior-work}   } 
 
\end{table}

\subsection{Related works}

We shall focus on the closely related works on offline RL and distributionally robust RL.

\paragraph{Offline RL.} Focusing on the task of learning an optimal policy from offline data, a significant amount of prior arts sets to understand the sample complexity and efficacy of offline RL under different assumptions of the history dataset. A bulk of prior results requires the history data to cover all the state-action pairs, under assumptions such as uniformly bounded concentrability coefficients \citep{chen2019information,munos2005error} and uniformly lower bounded data visitation distribution~\citep{yin2021optimal,yin2021near}, where the latter assumption is also related to studies of asynchronous Q-learning \citep{li2021sample}. More recently, the principle of pessimism has been investigated for offline RL in both model-based \citep{jin2021pessimism,xie2021policy,rashidinejad2021bridging,li2022settling} and model-free algorithms \citep{kumar2020conservative,shi2022pessimistic,yan2022efficacy}, without the stringent requirement of full coverage. In particular, \citet{li2022settling} established the near-minimax optimality of a pessimistic variant of value iteration under the single-policy clipped concentrability of history data, which inspired our algorithm design in the distributionally robust setting.

\paragraph{Distributionally robust RL.} While distributionally robust optimization has been mainly investigated in the context of supervised learning \citep{rahimian2019distributionally,gao2020finite,bertsimas2018data,duchi2018learning,blanchet2019quantifying,sinha2018certifying}, distributionally robust dynamic programming has also attracted considerable amount of attention, e.g. \citet{iyengar2005robust,nilim2003robustness,xu2012distributionally,nilim2005robust}, where natural robust extensions to the standard Bellman machineries are developed under mild assumptions. Targeting robust MDPs, empirical and theoretical works have been widely explored under different forms of uncertainty sets \citep{iyengar2005robust,xu2012distributionally,wolff2012robust,kaufman2013robust,ho2018fast,smirnova2019distributionally,ho2021partial,goyal2022robust,derman2020distributional,tamar2014scaling,badrinath2021robust,abdullah2019wasserstein,hou2020robust,song2020optimistic,yang2017convex,wang2022reliable,ding2023seeing}. Nonetheless, the majority of prior theoretical analyses focus on planning with an exact knowledge of the uncertainty set \citep{iyengar2005robust,xu2012distributionally,tamar2014scaling}, or are asymptotic in nature \citep{roy2017reinforcement}. 

A number of robust RL algorithms were proposed recently with an emphasis on finite-sample performance guarantees under different data generating mechanisms. \citet{wang2021online} proposed a robust Q-learning algorithm with an R-contamination uncertain set for the online setting, which achieves a similar bound as its non-robust counterpart. \citet{badrinath2021robust} proposed a model-free algorithm for the online setting with linear function approximation to cope with large state spaces. \citet{yang2021towards,panaganti2021sample} developed sample complexities for a model-based robust RL algorithm with a variety of uncertainty sets where the data are collected using a generative model. In addition, \citet{zhou2021finite} examined the uncertainty set defined by the KL divergence for offline data with uniformly lower bounded data visitation distribution. These works all require full coverage of the state-action space, whereas ours is the first one to leverage the principle of pessimism in robust offline RL. 

Since the first appearance of our paper on arXiv in August 2022, a few more papers have emerged that also tackle the sample complexity of robust RL algorithms. For example, 
\citet{wang2023finite,wang2023sample} developed finite-sample complexity bounds for robust variants of Q-learning with the generative model when the uncertainty set is measured by KL divergence; in particular, the improved bound of variance-reduced robust Q-learning \citep{wang2023sample} becomes independent of the size of the uncertainty set when it is sufficiently small with respect to the minimal support probability of the nominal kernel  at a price of worse dependency with $1/\minpall$.
\cite{shi2023curious} provided near-optimal sample complexity bounds for model-based robust RL algorithms with the generative model when the uncertainty set is measured by the total variation or chi-square distances, which highlighted that different uncertainty sets can lead to drastically different sample complexities, and hence, statistical consequences.

\subsection{Notation and paper organization}
 
 Throughout this paper, we denote by $\Delta(\cS)$ the probability simplex over a set $\cS$, and introduce the notation $[H]\coloneqq \{1,\cdots,H\}$ for any positive integer $H>0$. In addition, for any vector $x = \big[x(s,a)\big]_{(s,a)\in\cS\times\cA}\in \mathbb{R}^{SA}$ (resp.~$x = \big[x(s)\big]_{s\in\cS}\in \mathbb{R}^{S}$) that constitutes certain values for each state-action pair (resp.~state), we overload the notation by letting $x^2 = \big[x(s,a)^2\big]_{(s,a)\in\cS\times\cA}$ (resp.~$x^2 = \big[x(s)^2\big]_{s\in\cS}$). Moreover, for any two vectors $x=[x_i]_{1\leq i\leq n}$ and $y=[y_i]_{1\leq i\leq n}$, the notation $ {x}\leq {y}$ (resp.~$ {x}\geq {y}$) means
$x_{i}\leq y_{i}$ (resp.~$x_{i}\geq y_{i}$) for all $1\leq i\leq n$. Finally, the Kullback-Leibler (KL) divergence for any two distributions $P$ and $Q$ is denoted as $\mathsf{KL}(P \parallel Q)$. 

The rest of this paper is organized as follows. Section~\ref{sec:problem-formulation} provides the backgrounds and introduces the distributionally robust formulation of finite-horizon MDPs in the offline setting under partial coverage. Section~\ref{sec:main-result} presents the proposed algorithm and provides sample complexity guarantees. Section~\ref{sec:problem-formulation-infty} develops the corresponding results for the infinite-horizon setting. Section~\ref{sec:experiments} demonstrate the performance of the proposed algorithm through numerical experiments. Finally, we conclude in Section~\ref{sec:discussions}. The detailed proofs are postponed to the appendix.

\section{Problem formulation: episodic finite-horizon RMDPs}
\label{sec:problem-formulation}

\subsection{Basics of finite-horizon episodic tabular MDPs}

Consider an episodic finite-horizon MDP, represented by 
$\mathcal{M}= \big(\mathcal{S},\mathcal{A},H, P:=\{P_h\}_{h=1}^H, \{r_h\}_{h=1}^H \big)$, where $\mathcal{S} =\{1,\cdots, S\}$ and $\mathcal{A}= \{1,\cdots,A\}$ are the finite state and action spaces, respectively, $H$ is the horizon length, $P_h : \cS \times \cA \rightarrow \Delta (\cS) $ (resp.~$r_h: \cS \times \cA \rightarrow [0,1]$) denotes the probability transition kernel (resp.~reward function) at  step $h$ $(1\leq h\leq H)$.\footnote{Without loss of generality, we assume the reward function is deterministic, fixed, and normalized to be within $[0,1]$; it is straightforward to generalize our framework to incorporate random rewards with uncertainties.} For any transition kernel $P$, we introduce the $S$-dimensional distribution vectors
\begin{equation} \label{eq:transition_vector}
P_{h,s,a} \coloneqq P_h(\cdot \mymid s,a ) \in [0,1]^{1\times S}, \qquad \forall (h,s,a)\in[H]\times\cS\times\cA
\end{equation}
to represent the probability transition vector in state $s$ when taking action $a$ at step $h$.

Denote by $\pi =\{\pi_h\}_{h=1}^H$ as the policy or action selection rule of an agent, where $\pi_h: \mathcal{S} \rightarrow \Delta(\mathcal{A})$ specifies the action selection probability over the action space; when the policy is deterministic, we slightly abuse the notation and refer to $\pi_h(s)$ as the action selected by policy $\pi$ in state $s$ at step $h$. The value function $V^{\pi,P} = \{V_h^{\pi,P}\}_{h=1}^H$ of policy $\pi$ with a transition kernel $P$ is defined by
\begin{align}
	\label{eq:def_Vh}
	\forall (h,s)\in[H]\times \cS:\qquad V^{\pi,P}_{h}(s ) &\defn  \mathbb{E}_{\pi, P} 
	\left[  \sum_{t=h}^{H} r_{t}\big(s_{t}, a_t \big) \,\Big|\, s_{h}=s \right]  , 
\end{align}
where the expectation is taken over the randomness of the trajectory $\{s_h, a_h, r_h\}_{h=1}^H$ generated by executing policy $\pi$, namely, $a_t\sim \pi_t(s_t)$, and $s_{t+1} \sim P_t(\cdot \mymid s_t, a_t )$. Similarly, the Q-function $Q^{\pi,P} = \{Q_h^{\pi,P}\}_{h=1}^H$  of policy $\pi$ is defined as
\begin{align} 
	\label{eq:def_Qh}
	\forall (h,s,a)\in [H]\times \cS \times \cA:\qquad Q^{\pi, P}_{h}(s,a ) & \defn r_{h}(s,a)+ \mathbb{E}_{\pi,P} \left[  \sum_{t=h +1}^{H} r_t (s_t, a_t ) \,\Big|\, s_{h}=s, a_h  = a\right] ,
	\end{align}
where the expectation is again taken over the randomness of the trajectory.

Moreover, when the initial state $s_1$ is drawn from a given distribution $\rho$, let $d_h^{\pi,P}(s \mymid \rho)$ and $d_h^{\pi,P}(s,a \mymid \rho)$ denote respectively the state occupancy distribution and the state-action occupancy distribution induced by $\pi$ at time step $h\in [H]$.
In particular, we often dropped the dependency with respect to $\rho$ whenever it is clear from the context, by simply writing $d_h^{\pi,P}(s ): = d_h^{\pi,P}(s \mymid \rho)$ and $d_h^{\pi,P}(s,a) := d_h^{\pi,P}(s, a \mymid \rho)$, i.e.,
\begin{subequations}\label{eq:visitation_dist}
\begin{align} 
\forall (h, s)\in[H]\times\cS :\quad\quad  	d_h^{\pi,P}(s )&   \defn \mathbb{P}(s_h = s \mymid s_1 \sim \rho, \pi,P),   \\
\forall (h, s,a)\in[H]\times\cS\times \cA :\quad  d_h^{\pi,P}(s,a) & \defn \mathbb{P}(s_h = s \mymid s_1 \sim \rho, \pi,P) \, \pi_h(a \mymid s),
\end{align}
\end{subequations}
which are conditioned on $s_1\sim \rho$ and  the event that all actions and states are drawn according to policy $\pi$ and transition kernel $P$.

 \subsection{Distributionally robust MDPs}
 
In this section, we focus on finite-horizon episodic distributionally robust MDPs (RMDPs), denoted by $\mathcal{M}_{\mathsf{rob}}= \big(\mathcal{S},\mathcal{A}, H, \unb^\ror(P^{\no}), \{r_h\}_{h=1}^H \big)$. Different from standard MDPs, we now consider an ensemble of probability transition kernels or models within an uncertainty set centered around a nominal one $P^{\no} = \{P^{\no}_h\}_{h=1}^H$, where the distance between the transition kernels is measured in terms of the  Kullback-Leibler (KL) divergence. Specifically, given an uncertainty level $\sigma>0$, the uncertainty set around $P^{\no}$, which satisfies the so-called $(s,a)$-rectangularity condition \citep{wiesemann2013robust}, is specified as
\begin{align}\label{eq:kl-ball-finite-P}
	\unb^\ror(P^{\no})\defn \otimes \; \unb^\ror (P^{\no}_{h,s,a}),\qquad \unb^\ror(P_{h,s,a}^0) \defn \left\{ P_{h, s,a} \in \Delta (\cS): \mathsf{KL}\left(P_{h,s,a} \parallel P^0_{h,s,a}\right) \leq \ror \right\},
\end{align}
where $\otimes$ denote the Cartesian product.
In words, the KL divergence between the true transition probability vector and the nominal one at each state-action pair is at most $\sigma$; moreover, the RMDP reduces to the standard MDP when $\sigma =0$. 
 
Instead of evaluating a policy in a fixed MDP, the performance of a policy in the RMDP is evaluated based on its worst-case---i.e., smallest---value function over all the instances in the uncertainty set. That is, we define the {\em robust value function} $V^{\pi, \ror} = \{V_h^{\pi,\ror}\}_{h=1}^H$ and the {\em robust Q-function} $Q^{\pi,\ror} = \{Q_h^{\pi,\ror}\}_{h=1}^H$ respectively as
\begin{align*}
	\forall (h, s,a)\in[H]\times\cS \times \cA:\quad  V^{\pi,\ror}_{h}(s) &\defn \inf_{P\in \unb^{\ror}(P^{\no})} V_h^{\pi,P} (s), \qquad Q^{\pi,\ror}_{h}(s,a) \defn \inf_{P\in \unb^{\ror}(P^{\no})} Q_h^{\pi,P}(s,a),
\end{align*}
where the infimum is taken over the uncertainty set of transition kernels. 

\paragraph{Optimal robust policy.}
For finite-horizon RMDPs, it has been established that there exists at least one deterministic policy that maximizes the robust value function and the robust Q-function simultaneously \citep{iyengar2005robust,nilim2005robust}. In view of this, we shall denote a deterministic policy $\piopt = \{\piopt_h\}_{h=1}^H$ as an optimal robust policy throughout this paper. The resulting {\em optimal robust  value function} $V^{\star,\ror} =\{V_h^{\star,\ror} \}_{h=1}^H$ and {\em optimal robust Q-function} $Q^{\star,\ror} =\{Q_h^{\star,\ror} \}_{h=1}^H$ are denoted by
\begin{subequations}
\begin{align}\label{eq:optimal-pi-rmdp}
	\forall (h, s)\in[H]\times\cS :\quad  	V_h^{\star,\ror}(s) & \defn V_h^{\piopt,\ror}(s) = \max_{\pi} V_h^{\pi,\ror}(s), \\
		\forall (h, s,a)\in[H]\times\cS \times \cA:\quad   Q_h^{\star,\ror}(s,a) &\defn Q_h^{\piopt,\ror}(s,a) = \max_{\pi} Q_h^{\pi,\ror}(s,a).
\end{align}
\end{subequations}

Similar to \eqref{eq:visitation_dist}, we adopt the following short-hand notation for the occupancy distributions associated with the optimal policy:
\begin{subequations}
	\begin{align}\label{eq:visitation_dist-optimal}
	\forall (h,s)\in [H]\times\cS :\qquad d_h^{\star, P}(s)  & \defn d_h^{\piopt, P}(s),   \\
\forall (h,s,a)\in [H]\times\cS\times\cA:\qquad 	d_h^{\star,P}(s,a )  & \defn d_h^{\piopt, P}(s,a)= d_h^{\star, P}(s ) \ind\{a = \piopt_h(s)\}.
\end{align}
\end{subequations}

\paragraph{Robust Bellman equations.}
It turns out the Bellman's principle of optimality can be extended naturally to its robust counterpart \citep{iyengar2005robust,nilim2005robust}, which plays a fundamental role in solving the RMDP. To begin with, for any policy $\pi$, the robust value function and robust Q-function satisfy the following {\em robust Bellman consistency equation}:
\begin{align} \label{eq:robust_bellman_consistency}
\forall (h, s,a)\in[H]\times\cS \times \cA:\qquad  Q^{\pi,\ror}_{h}(s,a) = r_h(s,a) + \inf_{ \cP \in \unb^\ror(P^{\no}_{h,s,a})} \cP V_{h+1}^{\pi,\ror}.
\end{align}
Additionally, the optimal robust Q-function obeys the {\em robust Bellman optimality equation}:
\begin{align} \label{eq:robust_bellman_optimality}
\forall (h, s,a)\in[H]\times\cS \times \cA:\qquad  Q^{\star,\ror}_{h}(s,a) = r_h(s,a) + \inf_{ \cP \in \unb^{\ror}(P^{\no}_{h,s,a})} \cP V_{h+1}^{\star,\ror},
\end{align} 
which can be solved efficiently via a robust variant of value iteration when the RMDP is known \citep{iyengar2005robust,nilim2005robust}.

\subsection{Distributionally robust offline RL}
	\label{sec:offline-concentrability}

Let $\cD$ be a history/batch dataset, which consists of a collection of $K$ {\em independent} episodes generated based on executing a behavior policy $\pib = \{\pib_h\}_{h=1}^H$ in some nominal MDP $\mathcal{M}^{\no}= \big(\mathcal{S},\mathcal{A},H,P^{\no}:= \{P^{\no}_h\}_{h=1}^H, \{r_h\}_{h=1}^H \big)$. More specifically, for $1\leq k \leq K$, the $k$-th episode $\big( s_1^k, a_1^k,  \ldots, s_H^k, a_H^k, s_{H+1}^k \big)$ is generated according to
\begin{align}\label{eq:finite-batch-size-def}
s_1^k \sim \rhob,
\qquad a_h^k \sim \pib_h(\cdot\mymid s_h^k) 
\qquad\text{and}\qquad 
	s_{h+1}^k \sim P^{\no}_h(\cdot\mymid s_h^k, a_h^k) ,
	\qquad 1 \le h \le H.
\end{align}
Throughout the paper, $\rhob$ represents for some initial distribution associated with the history dataset. Then, we introduce the following short-hand notation for the occupancy distribution w.r.t. $\pib$:
\begin{equation}
	\label{eq:visitation_dist-optimal}
	\forall (h,s,a)\in [H]\times\cS\times\cA:\qquad \myrho_h(s )  \defn d_h^{\pib, P^{\no}}(s ),  \qquad
	\myrho_h(s,a )  \defn d_h^{\pib,P^{\no}}(s,a   ).
\end{equation}

\paragraph{Goal.}
With the history dataset $\cD$ in hand, our goal is to find a near-optimal robust policy $\widehat{\pi}$, which satisfies
\begin{align}\label{eq:goal}
	V_1^{\widehat{\pi},\ror}(\rho) \geq V_1^{\star,\ror}(\rho) -  \varepsilon
\end{align} 
using as few samples as possible, where $\varepsilon$ is the target accuracy level, and 
\begin{align}\label{eq:defn-V-rho}
	V_1^{\pi,\ror}(\rho) \coloneqq {\mathbb{E}}_{s_1\sim \rho} \big[ V_1^{\pi,\ror} (s_1) \big]\qquad \text{and} \qquad V_1^{\star,\ror}(\rho) \coloneqq  {\mathbb{E}}_{s_1\sim \rho} \big[ V_1^{\star,\ror} (s_1) \big]
\end{align}
are evaluated when the initial state $s_1$ is drawn from a given distribution $\rho$.

\paragraph{Robust single-policy clipped concentrability.} 
To quantify the quality of the history dataset to achieve the set goal, it is desirable to capture the distribution mismatch between the history dataset and the desired ones, inspired by the {\em single-policy clipped concentrability} assumption recently proposed by \citet{li2022settling},  we introduce a tailored assumption for robust MDPs as follows.

\begin{assumption}[Robust single-policy clipped concentrability] 
\label{assumption:dro-finite}
The behavior policy of the history dataset $\mathcal{D}$ satisfies
\begin{align}
	\max_{(s, a, h, P) \in \mathcal{S} \times \cA \times [H] \times \unb^{\ror}(P^{\no})} \frac{\min\big\{d_h^{\star,P}(s, a ), \frac{1}{S}\big\}}{\myrho_h(s, a )} \le \Cstar
	\label{eq:concentrate-finite}
\end{align}
for some quantity $\Cstar \in\big[\frac{1}{S},\infty\big]$.
Here, we take $\Cstar$ to be the smallest quantity satisfying \eqref{eq:concentrate-finite}, and refer to it as the robust single-policy clipped concentrability coefficient. In addition, we follow the convention $0/0=0$.
\end{assumption}

In words, $\Cstar$ measures the worst-case discrepancy---between the optimal robust policy $\piopt$ (from initial state distribution $\rho$) in any model $P\in  \unb^{\ror}(P^{\no})$ within the uncertainty set and the behavior policy $\pib$ (from initial state distribution $\rhob$) in the nominal model $P^{\no}$---in terms of the clipped maximum density ratio of the state-action occupancy distributions. 
\begin{itemize}
\item {\em Distribution shift.} When the uncertainty level $\sigma =0$, Assumption~\ref{assumption:dro-finite} reduces back to the single-policy clipped concentrability in \citet{li2022settling} for standard offline RL, a weaker notion that can be $S$ times smaller than the single-policy concentrability adopted in, e.g.,  \citet{rashidinejad2021bridging,xie2021policy,shi2022pessimistic}. On the other end, whenever $\sigma >0$, the proposed robust single-policy clipped concentrability accounts for the distribution shift not only due to the policies in use ($\piopt$ versus $\pib$) with respect to the respective initial state distributions, but also the underlying environments ($P\in  \unb^{\ror}(P^{\no})$ versus $P^{\no}$), and therefore, is generally larger than that in the non-robust counterpart.

\item {\em Partial coverage.} As long as $\Cstar$ is finite, i.e., $\Cstar < \infty$, it admits the scenarios when the history dataset only provides {\em partial coverage} over the entire state-action space, as long as the behavior policy $\pib$ visits the state-action pairs that are visited by the optimal robust policy $\piopt$ under at least one  model in the uncertainty set.

\end{itemize}
\begin{remark}\label{remark:Cstar_full_finite}
To facilitate comparison with prior works assuming full coverage, we can bound $\Cstar$ when the batch dataset is generated using a simulator \citep{yang2021towards,panaganti2021sample}; namely, we can generate sample state transitions based on the transition kernel of the nominal MDP for all state-action pairs at all time steps. In this case, it amounts to that $\myrho_h(s, a) = \frac{1}{SA}$ for all $(s,a,h)\in \cS\times \cA\times [H]$, which directly leads to the bound $\Cstar = \max_{(s, a, h, P) \in \mathcal{S} \times \cA \times [H] \times \unb^{\ror}(P^{\no})} \frac{\min\big\{d_h^{\star,P}(s, a ), \frac{1}{S}\big\}}{\myrho_h(s, a )}  \leq \frac{1/S}{1/(SA)} = A$. 
\end{remark}

\section{Algorithm and theory: episodic finite-horizon RMDPs}\label{sec:main-result}
In this section, we present a model-based algorithm---namely \DRLCB ---for robust offline RL in the finite-horizon setting, along with its performance guarantees.

\subsection{Building an empirical nominal MDP}

For a moment, imagine we have access to $N$ {\em independent} sample transitions $\cD_0 := \{(h_i,s_i,a_i,s_i')\}_{i=1}^N$ drawn from the transition kernel $P^{\no}$ of the nominal MDP $\mathcal{M}^{\no}$, where each sample $(h_i,s_i,a_i,s_i')$ indicates the transition from state $s_i$ to state $s_i'$ when action $a_i$ is taken at step $h_i$, drawn according to $s_i' \sim P^{\no}_{h_i}(\cdot \mymid s_i, a_i)$. It is then natural to build an empirical estimate $\widehat{P}^{\no}=\{\widehat{P}^{\no}_h\}_{h=1}^H$ of $P^{\no}$ based on the empirical frequencies of state transitions, where
\begin{align}
	\widehat{P}^0_{h}(s'\mymid s,a) \defn 
	\begin{cases} \frac{1}{N_h(s,a)} \sum\limits_{i=1}^N \mathds{1} \big\{ (h_i, s_i, a_i, s_i') = (h,s,a,s') \big\}, & \text{if } N_h(s,a) > 0 \\
		0, & \text{else}
	\end{cases}  
	\label{eq:empirical-P-finite}
\end{align}
for any $(h,s,a,s')\in[H] \times \cS\times \cA\times  \cS$. Here, $N_h(s,a)$ denotes the total number of sample transitions from $(s,a)$ at step $h$ as
\begin{align}\label{eq:defn-Nh-sa-finite}
	N_h(s,a) &\coloneqq \sum_{i=1}^N \mathds{1} \big\{ (h_i,s_i, a_i) = (h,s,a) \big\}. 
\end{align}

While it is possible to directly break down the history dataset $\cD$ into sample transitions, unfortunately, the sample transitions from the same episode are not independent, significantly hurdling the analysis. To alleviate this, \citet[Algorithm 2]{li2022settling} proposed a simple two-fold subsampling scheme to preprocess the history dataset $\cD$ and decouple the statistical dependency, resulting into a distributionally equivalent dataset $\cD_0$ with independent samples; for completeness, we provide the procedure in Algorithm~\ref{alg:finite-split}.   We have the following lemma paraphrased from \citet{li2022settling} for the obtained  dataset $\cD_0$.

\begin{algorithm}[t]
\DontPrintSemicolon
	\textbf{input:} a dataset $\mathcal{D}$, probability $\delta$. \\

	\textbf{data splitting:} split $\mathcal{D}$ into two  $\Dmain$ and $\Daux$, where each contain $K/2$ trajectories. 
	
	\textbf{lower bounding the number of transitions in $\Dmain$:} denote the number of  transitions from state $s$ at step $h$ in $\Dmain$ (resp.~$\Daux$) as $\Nmain_h(s)$ (resp.~$\Naux_h(s)$), construct 
\begin{align}
	\label{eq:defn-Ntrim}
	\Ntrim_h(s) &\coloneqq \max\left\{\Naux_h(s) - 10\sqrt{\Naux_h(s)\log\frac{HS}{\delta}}, \, 0\right\} ;
\end{align}

\textbf{generate the subsampled dataset $\Dtrim$:} randomly sample the transitions (i.e., the quadruples taking the form $(s,a,h,s')$) from $\Dmain$ uniformly at random, such that for each $(s,h)\in \cS\times [H]$, $\Dtrim$ contains $\min \{ \Ntrim_h(s), \Nmain_h(s)\}$ sample transitions.  
 
	\textbf{output:} set $\mathcal{D}_0 = \Dtrim$.  \\

	\caption{Two-fold subsampling trick for the finite-horizon setting.}
 	\label{alg:finite-split}

\end{algorithm}

\begin{lemma}[\citep{li2022settling}]\label{lemma:D0-property}
	With probability at least $1-8\delta$, the output dataset from the two-fold subsampling scheme in \citet{li2022settling} is distributionally equivalent to $\cD_0$, where $\{N_h(s,a)\}$ are independent of the sample transitions in $\mathcal{D}^0$ and obey
	\begin{equation}
	N_h(s,a) \geq \frac{K \myrho_h(s,a)}{8} - 5 \sqrt{ K \myrho_h(s,a) \log \frac{KH}{\delta} }.
	\end{equation}
for all $(h,s,a)\in [H]\times \cS\times\cA$.
\end{lemma}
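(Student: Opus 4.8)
The plan is to follow the construction in Algorithm~\ref{alg:finite-split} and to establish its two assertions separately: (i) the distributional equivalence together with the independence of $\{N_h(s,a)\}$ from the transition outcomes, and (ii) the high-probability lower bound on $N_h(s,a)$. For (i), the key structural observation is that at any fixed step $h$, each of the $K/2$ episodes of $\Dmain$ contributes \emph{at most one} transition out of $(s,a)$, namely the single pair $(s_h^k,a_h^k)$; hence, by the Markov property, the next-state samples attached to $(s,a,h)$ across the episodes of $\Dmain$ are mutually independent draws from $P^0_h(\cdot\mymid s,a)$. Moreover, the retained count $N_h(s,a)$ is a function only of $\Daux$ (through $\Ntrim_h(s)$, which is independent of $\Dmain$), of the visitation pattern $\{(s_h^k,a_h^k)\}$ of $\Dmain$, and of the external subsampling randomness---none of which reads the realized next states. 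Conditioning on $\{N_h(s,a)\}$ therefore leaves the retained next-states i.i.d.\ from $P^0_h(\cdot\mymid s,a)$, which is exactly the claimed independence and renders $\cD_0$ distributionally equivalent to a dataset of independent sample transitions.

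For (ii), I would first record Bernstein-type concentration for the empirical visitation counts. Since $\Nmain_h(s,a)=\sum_{k\in\Dmain}\ind\{(s_h^k,a_h^k)=(s,a)\}$ is a sum of $K/2$ independent Bernoulli variables with mean $\myrho_h(s,a)$ (and likewise for $\Naux_h(s,a)$ and the state-level counts $\Nmain_h(s)$, $\Naux_h(s)$), Bernstein's inequality followed by a union bound over all $(h,s,a)$ yields two-sided deviations of order $\sqrt{K\myrho_h(s,a)\log\frac{KH}{\delta}}$ on a good event of probability at least $1-8\delta$. On this event the buffer $-10\sqrt{\Naux_h(s)\log\frac{HS}{\delta}}$ built into $\Ntrim_h(s)$ in \eqref{eq:defn-Ntrim} is designed precisely so that $\Ntrim_h(s)$ drops strictly below the typical size of $\Nmain_h(s)$ (both concentrate around $\tfrac{K}{2}\myrho_h(s)$, and the buffer dominates their joint fluctuation); consequently $\Ntrim_h(s)\le\Nmain_h(s)$ uniformly, the minimum in the subsampling step resolves to $\Ntrim_h(s)$, and the procedure never runs out of transitions to subsample.

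With $\min\{\Ntrim_h(s),\Nmain_h(s)\}=\Ntrim_h(s)$, the same concentration also supplies a lower bound $\Ntrim_h(s)\gtrsim \tfrac{K}{4}\myrho_h(s)-c\sqrt{K\myrho_h(s)\log\frac{HS}{\delta}}$. The final step passes from this retained state-level count to the state-action count: since the $\Ntrim_h(s)$ retained transitions are drawn uniformly without replacement from the $\Nmain_h(s)$ transitions out of $(s,h)$ in $\Dmain$, the number carrying action $a$ is hypergeometric with mean $\Ntrim_h(s)\cdot \Nmain_h(s,a)/\Nmain_h(s)$, and the ratio $\Nmain_h(s,a)/\Nmain_h(s)$ concentrates around $\myrho_h(s,a)/\myrho_h(s)=\pib_h(a\mymid s)$. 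Concentration of this hypergeometric count, combined with the bounds above and a careful bookkeeping of the various $\sqrt{\cdot}$ terms, collapses to the stated bound $N_h(s,a)\ge \tfrac{K\myrho_h(s,a)}{8}-5\sqrt{K\myrho_h(s,a)\log\frac{KH}{\delta}}$.

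I expect the main obstacle to be this last step: tracking the constants through the chain of approximations---the factor $\tfrac12$ from the split, the $\Ntrim_h(s)$ buffer, and the hypergeometric subsampling---so that every lower-order fluctuation folds into the single clean deviation $5\sqrt{K\myrho_h(s,a)\log\frac{KH}{\delta}}$ while the leading constant lands exactly at $\tfrac18$. Because the statement is paraphrased from \citet{li2022settling}, in practice I would import their analysis of the two-fold subsampling trick essentially verbatim, verifying only that none of their arguments inspect the next-state labels, so that both the independence claim and the count lower bound transfer unchanged to the present robust-offline setting.
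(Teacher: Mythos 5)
The paper offers no proof of this lemma at all: it is imported wholesale from \citet{li2022settling} (the text explicitly says the lemma is ``paraphrased from'' that work), and since the lemma concerns only the data-generating process---the behavior policy interacting with the nominal kernel, with robustness playing no role---importing it verbatim is legitimate. Your closing paragraph, which proposes to do exactly that, therefore matches the paper's approach, and your sketch of part (ii) (Bernstein concentration for the counts, the buffer in \eqref{eq:defn-Ntrim} forcing $\Ntrim_h(s)\le \Nmain_h(s)$ on a good event, then concentration of the action split among retained transitions) is structurally the argument in \citet{li2022settling}.

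However, your sketch of the independence/distributional-equivalence step contains a genuine error, and it is precisely the subtlety that the two-fold trick exists to handle. You assert that $N_h(s,a)$ is a function of $\Daux$, of the visitation pattern $\{(s_h^k,a_h^k)\}$ of $\Dmain$, and of the external subsampling randomness, ``none of which reads the realized next states.'' But the visitation pattern of $\Dmain$ \emph{does} read the next states: $s_{h+1}^k$ is by definition the realized next state of the transition at step $h$ in episode $k$, so the counts $\Nmain_{h+1}(\cdot)$---and hence how many transitions are available to be retained at later steps---are functions of earlier next-state draws. This cross-step dependency is exactly why trajectory data cannot be naively treated as independent transitions in the first place. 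The correct resolution is different from what you wrote: on the good event, the retained state-level count is $\min\{\Ntrim_h(s),\Nmain_h(s)\}=\Ntrim_h(s)$, which is a function of $\Daux$ \emph{alone} and hence independent of everything in $\Dmain$; the distributional equivalence between $\Dtrim$ and an i.i.d.-generated dataset is then established in \citet{li2022settling} via a careful argument with these externally fixed counts, not via the claim that no ingredient inspects the next-state labels. If you import their proof, as you propose, this is the step you must import intact---your own justification for it would not survive scrutiny as a standalone argument.
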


Therefore, by invoking the two-fold sampling trick from \citet{li2022settling}, it is sufficient to treat the dataset $\cD_0$ with independent samples onwards with Lemma~\ref{lemma:D0-property} in place, which greatly simplifies the analysis.

\subsection{\DRLCB: a pessimistic variant of robust value iteration}

Armed with the estimate $\widehat{P}^{\no}$ of the nominal transition kernel $P^{\no}$, we are positioned to introduce our algorithm \DRLCB, summarized in Algorithm~\ref{alg:vi-lcb-dro-finite}.

\paragraph{Distributionally robust value iteration.} Before proceeding, 
 let us recall the update rule of the classical distributionally robust value iteration (\DRVI), which serves as the basis of our algorithmic development.  Given an estimate of the nominal MDP $\widehat{P}^{\no}$ and the radius $\sigma$ of the uncertainty set, \DRVI updates the robust value functions according to
\begin{align}\label{eq:VI-primal}
	\widehat{Q}_h(s,a) = r_h(s,a) + \inf_{ \cP \in \unb^{\sigma}(\widehat{P}^{\no}_{h,s,a})} \cP \widehat{V}_{h+1}, \qquad \mbox{and}\qquad \widehat{V}_{h}(s) = \max_a \widehat{Q}_h(s,a), 
\end{align}
which works backwards from $h=H$ to $h=1$, with the terminal condition $\widehat{Q}_{H+1} =0$. Due to strong duality \citep{hu2013kullback}, the update rule of the robust Q-functions in \eqref{eq:VI-primal} can be equivalently reformulated  in its dual form as
\begin{align} \label{eq:VI-dual}
	\widehat{Q}_h(s,a) = r_h(s, a)  + \sup_{\lambda\geq 0}  \left\{ -\lambda \log\left(\widehat{P}^0_{h, s, a}   \exp \left(\frac{-\widehat{V}_{h+1}}{\lambda}\right) \right) - \lambda \ror \right\},
\end{align}
which can be solved efficiently \citep{iyengar2005robust,yang2021towards,panaganti2021sample}.

\begin{algorithm}[t]
\DontPrintSemicolon
	\textbf{input:} a dataset $\mathcal{D}_0$; reward function $r$; uncertainty level $\ror$. \\ 
	\textbf{initialization:} $\widehat{Q}_{H+1} = 0$, $\widehat{V}_{H+1}=0$. \\

   \For{$h=H,\cdots,1$}
	{
		Compute the empirical nominal transition kernel $\widehat{P}^{\no}_h$ according to \eqref{eq:empirical-P-finite}; \\
		\For{$s\in \cS, a\in \cA$}{
		Compute the penalty term $b_h\big(s,a\big)$ according to \eqref{def:bonus-dro}; \\
			Set $\widehat{Q}_h(s, a)$ according to \eqref{eq:algorithm-Q-update};  \label{line:finite-update-Q}\\ 
		}
		\For{$s\in \cS$}{
			Set $\widehat{V}_h(s) = \max_a \widehat{Q}_h(s, a)$ and $\widehat{\pi}_h(s) = \arg\max_{a} \widehat{Q}_h(s,a)$;
		}
	}

	\textbf{output:} $\widehat{\pi}=\{\widehat{\pi}_h\}_{1\leq h\leq H}$. 
	\caption{Robust value iteration with LCB (\DRLCB) for robust offline RL.}
 \label{alg:vi-lcb-dro-finite}
\end{algorithm}

\paragraph{Our algorithm \DRLCB.} Motivated by the principle of pessimism in standard offline RL \citep{jin2021pessimism,xie2021policy,rashidinejad2021bridging,li2022settling}, we propose to perform a pessimistic variant of \DRVI, where the update rule of \DRLCB at step $h$ is modified as
\begin{align}\label{eq:algorithm-Q-update}
	\widehat{Q}_h(s,a) = \max\left\{ r_h(s, a)  +\sup_{\lambda\geq 0}  \left\{ -\lambda \log\left(\widehat{P}^0_{h, s, a} \cdot \exp \left(\frac{-\widehat{V}_{h+1}}{\lambda}\right) \right) - \lambda \ror \right\} - b_h \big(s,a\big), \,  0 \right\}.
\end{align}
Here, the robust $Q$-function estimate is adjusted by subtracting a carefully designed data-driven penalty term $b_h(s,a) $ that measures the uncertainty of the value estimates. Specifically, for some $\delta \in (0,1)$ and any $(s,a,h) \in \cS\times \cA\times [H]$, the penalty term $b_h(s,a) $ is defined as
\begin{align}
	&b_h(s,a)= \begin{cases}
	\min\left\{ \cb \frac{H}{\ror} \sqrt{\frac{\log(\frac{KHS}{\delta})}{ \widehat{P}_{\mathsf{min},h}(s,a) N_h(s,a)}}~,~H\right\} & \text{if } N_h(s,a) > 0, \\
	H & \text{ otherwise},
	\end{cases}
	\label{def:bonus-dro}
\end{align}
where $\cb$ is some universal constant, and
\begin{align}\label{eq:P-min-hat-def}
	\widehat{P}_{\mathsf{min},h}(s,a) \defn  \min_{s'} \Big\{\widehat{P}_h^{\no}(s' \mymid s,a): \; \widehat{P}_h^{\no}(s' \mymid s,a)>0 \Big\}.
\end{align}
The penalty term is novel and different from the one used in standard (no-robust) offline RL \citep{jin2021pessimism,xie2021policy,rashidinejad2021bridging,li2022settling,shi2022pessimistic}, by taking into consideration the unique problem structure pertaining to robust MDPs. In particular, 
it tightly upper bounds the statistical uncertainty which carries a non-linear and implicit dependency \ w.r.t. the estimated nominal transition kernel induced by the uncertainty set $\unb(P^{\no})$, addressing unique challenges not present for the standard MDP case.

\subsection{Performance guarantees}\label{sec:main-theorem}

Before stating the main theorems, let us first introduce several important metrics. 
\begin{itemize}
\item $\minpall$, which only depends on the state-action pairs covered by the optimal robust policy $\pi^\star$ under the nominal model $P^{\no}$:
\begin{align}\label{eq:P-min-star-def}
	\minpall \defn \min_{h,s,s'} \Big\{P_h^{\no}\left(s' |s, \pi^\star_h(s)\right):\; P_h^{\no}\left(s' |s, \pi^\star_h(s)\right)>0 \Big\}.
\end{align}
In words, $\minpall$ is the smallest positive state transition probability of the optimal robust policy $\piopt$ under the nominal kernel $P^{\no}$. 
\item Similarly, we introduce $P_{\mathsf{min}}^{\mathsf{b}}$ which only depends on the state-action pairs covered by the behavior policy $\pi^{\mathsf{b}}$ under the nominal model $P^{\no}$:
\begin{align}
	P_{\mathsf{min}}^{\mathsf{b}} \defn \min_{h,s,a,s'} \Big\{P_h^{\no}\left(s' |s, a\right):\; \myrho_h(s, a )>0 ,\, P_h^{\no}\left(s' \mymid s, a\right)>0 \Big\}. \label{eq:def-P-min-b}
\end{align}
In words, $P_{\mathsf{min}}^{\mathsf{b}}$ is the smallest positive state transition probability of the behavior policy $\pib$ under the nominal kernel $P^{\no}$. 

\item Finally, let $d_{\mathsf{min}}^{\mathsf{b}}$ denote the smallest positive state-action occupancy distribution of the behavior policy $\pi^{\mathsf{b}}$ under the nominal model $P^{\no}$:
\begin{align}\label{eq:d-min-b}
d_{\mathsf{min}}^{\mathsf{b}} \defn  \min_{h,s,a} \left\{\myrho_h(s, a ): \; \myrho_h(s, a ) >0 \right\}.
\end{align}

\end{itemize}
We are now positioned to present the performance guarantees of \DRLCB for robust offline RL.

\begin{theorem}\label{thm:dro-upper-finite}
Given an uncertainty level $\ror>0$, suppose that the penalty terms in Algorithm~\ref{alg:vi-lcb-dro-finite} are chosen as \eqref{def:bonus-dro} for sufficiently large $\cb$. With probability at least $1-\delta$, the output $\widehat{\pi}$ of Algorithm~\ref{alg:vi-lcb-dro-finite} obeys
\begin{align}
	V_{1}^{\star, \ror}(\rho) - V_{1}^{\widehat{\pi}, \ror}(\rho) \leq c_0 \frac{H^2}{\ror} \sqrt{\frac{S \Cstar\log^2(KHS / \delta ) }{ \minpall  K} },
\end{align}
as long as the number of episodes $K$ satisfies
\begin{align} \label{eq:dro-b-bound-N-condition}
	K \geq \frac{c_1 \log( KHS/ \delta )}{  d_{\mathsf{min}}^{\mathsf{b}}  P_{\mathsf{min}}^{\mathsf{b}} },
\end{align}
where $c_0$ and $c_1$ are some sufficiently large universal constants. 
\end{theorem}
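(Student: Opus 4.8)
The plan is to establish the suboptimality bound through a pessimism-based argument tailored to the robust setting. The starting point is to show that, on a high-probability ``good'' event, the penalty term $b_h(s,a)$ in \eqref{def:bonus-dro} genuinely dominates the statistical fluctuation of the empirical robust Bellman operator. Concretely, I would first prove a concentration lemma stating that for each $(h,s,a)$ with $N_h(s,a)>0$, the deviation between the empirical robust Bellman update $\inf_{\cP\in\unb^\ror(\widehat P^0_{h,s,a})}\cP\widehat V_{h+1}$ and its population counterpart $\inf_{\cP\in\unb^\ror(P^0_{h,s,a})}\cP\widehat V_{h+1}$ is at most $b_h(s,a)$ up to constants. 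The central difficulty---and the main obstacle of the whole proof---lies precisely here: unlike standard MDPs where the value depends linearly on the transition kernel, the robust update involves the dual objective $-\lambda\log(\widehat P^0_{h,s,a}\exp(-\widehat V_{h+1}/\lambda))-\lambda\ror$, which is highly nonlinear in the kernel. I would control its fluctuation by working with the dual variable $\lambda$ attaining the supremum, bounding the gradient of the dual objective with respect to the empirical distribution, and critically using that the effective curvature introduces the factor $\widehat P_{\mathsf{min},h}(s,a)$ in the denominator; this is what dictates the unusual form of the penalty. A Bernstein-type argument applied to the linear functional $\widehat P^0_{h,s,a}\exp(-\widehat V_{h+1}/\lambda)$, combined with a bound $\lambda\lesssim H/\ror$ on the optimal dual variable, should yield the claimed penalty magnitude.

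Next, conditioned on this concentration event, I would establish \emph{pessimism}, i.e. that the iterates satisfy $\widehat Q_h(s,a)\le Q^{\star,\ror}_h(s,a)$ and $\widehat V_h(s)\le V^{\star,\ror}_h(s)$ for all $(h,s,a)$. This follows by backward induction on $h$: assuming $\widehat V_{h+1}\le V^{\star,\ror}_{h+1}$, monotonicity of the robust Bellman operator together with the concentration guarantee ensures that subtracting $b_h(s,a)$ pushes $\widehat Q_h$ below $Q^{\star,\ror}_h$, and the $\max$ with $0$ preserves the inequality since robust values are nonnegative. The base case $h=H+1$ holds by the terminal condition $\widehat Q_{H+1}=0$.

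With pessimism in hand, the suboptimality gap $V^{\star,\ror}_1(\rho)-V^{\widehat\pi,\ror}_1(\rho)$ can be bounded by a telescoping/recursive argument. The key step is to derive a per-step decomposition showing that $V^{\star,\ror}_h(s)-V^{\widehat\pi,\ror}_h(s)$ is controlled by the accumulated penalties $b_h(s,a)$ evaluated along the trajectory of $\piopt$, because pessimism guarantees $\widehat\pi$ is chosen greedily and the gap at each step is at most twice the penalty at the state-action pairs visited by the optimal robust policy. I would then take expectation over the occupancy distribution $d_h^{\star,P}$ and invoke Assumption~\ref{assumption:dro-finite} to convert the visitation measure of $\piopt$ into the behavior occupancy $\myrho_h$, at which point the clipped concentrability coefficient $\Cstar$ enters. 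Substituting the lower bound on $N_h(s,a)$ from Lemma~\ref{lemma:D0-property}---valid under the burn-in condition \eqref{eq:dro-b-bound-N-condition} which guarantees $N_h(s,a)\gtrsim K\myrho_h(s,a)$ whenever $\myrho_h(s,a)>0$---and replacing $\widehat P_{\mathsf{min},h}(s,a)$ by $\minpall$ on the relevant support, a Cauchy--Schwarz step over the sum $\sum_{h,s,a}$ yields the $\sqrt{S\Cstar/(\minpall K)}$ scaling with the stated $H^2/\ror$ prefactor. The burn-in condition is exactly what ensures the empirical quantities $\widehat P_{\mathsf{min},h}$ and $N_h(s,a)$ concentrate around their population analogues so that these substitutions are legitimate.
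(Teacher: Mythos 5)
Your proposal follows essentially the same route as the paper's proof: a pointwise concentration bound for the empirical robust Bellman update derived through the KL dual (with the optimal dual variable bounded by $H/\ror$ and the curvature factor $\widehat{P}_{\mathsf{min},h}(s,a)$ dictating the shape of the penalty), pessimism by backward induction, a penalty recursion along the optimal robust policy, and a concentrability-plus-Cauchy--Schwarz step. However, there is a genuine gap in your pessimism step: you prove $\widehat{Q}_h \le Q^{\star,\ror}_h$ and $\widehat{V}_h \le V^{\star,\ror}_h$, i.e., pessimism relative to the \emph{optimal} robust value, whereas the argument requires pessimism relative to the \emph{learned} policy, namely $\widehat{Q}_h(s,a) \le Q^{\widehat{\pi},\ror}_h(s,a)$ and $\widehat{V}_h(s) \le V^{\widehat{\pi},\ror}_h(s)$, which is what the paper establishes (via the robust Bellman \emph{consistency} equation for $\widehat{\pi}$, not the optimality equation). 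The target quantity is $V^{\star,\ror}_1(\rho) - V^{\widehat{\pi},\ror}_1(\rho)$, and the only bridge from the pessimistic estimate $\widehat{V}_1$ to the true value of $\widehat{\pi}$ is the inequality $\widehat{V}_1 \le V^{\widehat{\pi},\ror}_1$; your weaker statement bounds $V^{\star,\ror}_1 - \widehat{V}_1$ by accumulated penalties but leaves open the possibility that $\widehat{V}_1$ far exceeds $V^{\widehat{\pi},\ror}_1$, so the target gap remains uncontrolled. Equivalently, in the greedy decomposition the term $\widehat{Q}_h\big(s,\widehat{\pi}_h(s)\big) - Q^{\widehat{\pi},\ror}_h\big(s,\widehat{\pi}_h(s)\big)$ must be shown non-positive; if instead you tried to bound it recursively, the error would propagate along the occupancy of $\widehat{\pi}$, where Assumption~\ref{assumption:dro-finite} provides no coverage whatsoever. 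The fix is routine---run your same induction with the consistency equation for $\widehat{\pi}$ and the identity $\widehat{V}_h(s) = \widehat{Q}_h\big(s,\widehat{\pi}_h(s)\big)$, after which $V^{\star,\ror}_h - V^{\widehat{\pi},\ror}_h \le V^{\star,\ror}_h - \widehat{V}_h$ lets the rest of your outline proceed exactly as in the paper---but as written the chain of inequalities does not close.

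Two smaller points. First, inside the concentration lemma, a uniform ``gradient of the dual objective'' argument breaks down in the degenerate case where the optimal dual variable vanishes ($\lambda^\star = 0$ or $\widehat{\lambda}^\star = 0$): there the dual value collapses to an essential infimum and the logarithmic term is no longer Lipschitz in the distribution. The paper handles this by a separate case analysis matching the essential infima under $\widehat{P}^0_{h,s,a}$ and $P^0_{h,s,a}$ (Lemmas~\ref{lem:lambda-n-bound} and~\ref{lem:empirical-essinf}), which is another place where the $P_{\mathsf{min}}$-type quantities and the burn-in condition \eqref{eq:dro-b-bound-N-condition} are genuinely needed. Second, in your final step the occupancy measures generated by the recursion are induced by worst-case kernels $\widehat{P}^{\inf}_j \in \unb^{\ror}(P^{0})$ rather than by the nominal kernel; this is precisely why Assumption~\ref{assumption:dro-finite} maximizes over $P \in \unb^{\ror}(P^{0})$, and your appeal to concentrability should be phrased through that maximum (as the paper does via the sets $\cD^\star_h$) rather than through the occupancy of $\piopt$ under $P^{0}$ alone.
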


Our theorem is the first to characterize the sample complexities of robust offline RL  under {\em partial coverage}, to the best of our knowledge (cf.~Table~\ref{tab:prior-work}). 
Theorem~\ref{thm:dro-upper-finite} shows that \DRLCB finds an $\varepsilon$-optimal robust policy as soon as the sample size $T= KH$ is above  the order of
\begin{equation}
\underbrace{ \frac{S \Cstar H^5}{ \minpall \ror^2\varepsilon^2} }_{\epsilon\textsf{-dependent}} + \underbrace{ \frac{H}{  d_{\mathsf{min}}^{\mathsf{b}}  P_{\mathsf{min}}^{\mathsf{b}} }}_{\textsf{burn-in cost}} , 
\end{equation}
up to some logarithmic factor, where the burn-in cost is independent of the accuracy level $\varepsilon$. For sufficiently small accuracy level $\varepsilon$, this
results in a sample complexity of
\begin{equation} \label{eq:final-sample-complexity}
\widetilde{O}\left( \frac{S \Cstar H^5}{ \minpall \ror^2\varepsilon^2}  \right).  
\end{equation}
Our theorem suggests that the sample efficiency of robust offline RL critically depends on the problem structure of the given RMDP (i.e. coverage of the optimal robust policy $\pi^{\star}$ as measured by $\minpall$) as well as the quality of the history dataset (as measured by $\Cstar$). Given that $\Cstar$ can be as small as on the order of $1/S$, the sample complexity requirement can exhibit a much weaker dependency with the size of the state space $S$.

On the flip side, to assess the optimality of Theorem~\ref{thm:dro-upper-finite}, we develop an information-theoretic lower bound for robust offline RL as provided in the following theorem.
\begin{theorem}\label{thm:dro-lower-finite}
For any $(H, S, C, \minpall, \sigma, \varepsilon)$ obeying $H\geq 2e^8$, $C \geq 4/S$, $\minpall \in (0, \frac{1}{H}]$, and $\varepsilon \leq \frac{H}{384   e^6\log (1/\minpall)  }$, we can construct a collection of finite-horizon RMDPs $\{\cM_\theta \mymid \theta \in \Theta\}$, an initial state distribution $\rho$, and a batch dataset with $K$ independent sample trajectories each with length $H$ satisfying $2 C\leq \Cstar \leq 4C$, such that 
	\[
	\inf_{\widehat{\pi}}\max_{\theta\in\Theta} \mathbb{P}_{\theta}\left\{  V_1^{\star,\ror}(\rho)-V_1^{\widehat{\pi}, \ror}(\rho)>\varepsilon\right\} \geq\frac{1}{8},
\]
provided that  
\begin{align}
 T =KH \leq 
 \begin{cases}
 \frac{c_1 S\Cstar H^{4}}{\varepsilon^2} \quad \text{ if } \quad 0 < \ror \leq \frac{1}{20H},
  \\
 \frac{c_1 S\Cstar H^{3}}{\minpall \ror^2 \varepsilon^2}  \quad \text{ if } \quad \log (1/\minpall) - 6 \leq \ror \leq \log (1/\minpall) - 5
		\end{cases}.
		\end{align}
Here, $c_1>0$ is some universal constant, the infimum is taken over all estimators $\widehat{\pi}$, and  
$\mathbb{P}_{\theta}$ denotes the probability
when the RMDP is $\mathcal{M}_{\theta}$. 
\end{theorem}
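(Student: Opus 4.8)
The plan is to construct a family of hard RMDP instances indexed by $\theta \in \Theta$ that are statistically indistinguishable from the offline dataset yet require meaningfully different optimal robust policies, and then invoke a standard Le Cam / Fano two-point (or small-packing) argument. Concretely, I would build instances over a small number of ``core'' states where the choice of action matters, with the remaining states acting as absorbing reward-collecting sinks. At each core state-action pair, the nominal transition kernel routes probability toward a ``good'' or ``bad'' absorbing state governed by a parameter $\theta$, and the two candidate instances differ only in which action is favorable. The key is to calibrate the gap between the two instances so that (i) the difference in optimal robust value exceeds $2\varepsilon$, forcing any policy to err on at least one instance, while (ii) the KL divergence between the induced dataset distributions is small enough that no estimator can reliably tell them apart given $T = KH$ samples. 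I would embed $S$ and $\Cstar$ into the construction by replicating the core gadget across $\Theta(S)$ states and tuning the behavior-policy occupancy $\myrho_h(s,a)$ so that the clipped concentrability $\Cstar$ takes the prescribed value $2C \le \Cstar \le 4C$; the factor $1/S$ inside the clipping in Assumption~\ref{assumption:dro-finite} is what lets the per-state sample budget be as small as $\Cstar/S$ per core state.

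The two regimes in the theorem should emerge from the same construction by varying how the uncertainty level $\ror$ interacts with the nominal transition probabilities. For the small-uncertainty regime $\ror \le 1/(20H)$, the robust value is a mild perturbation of the non-robust value, so I expect the hard instance to essentially reduce to the standard (non-robust) offline MDP lower bound of \citet{li2022settling}, yielding the $S\Cstar H^4/\varepsilon^2$ rate; here $\ror$ is small enough that the robustness does not help or hurt the information-theoretic difficulty. For the regime $\ror \asymp \log(1/\minpall)$, the robust infimum over the KL ball becomes sensitive to the smallest transition probability $\minpall$, and I would design the nominal kernel so that $\minpall$ is the relevant small parameter controlling the value gap. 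The computation here hinges on the dual form \eqref{eq:VI-dual}: I would evaluate $\sup_{\lambda \ge 0}\{-\lambda \log(P^0 \exp(-V/\lambda)) - \lambda\ror\}$ for the two-point value vectors and Taylor-expand to extract the leading dependence on $\ror$ and $\minpall$, which is what produces the $\minpall \ror^2$ factor in the denominator of the $S\Cstar H^3/(\minpall\ror^2\varepsilon^2)$ rate.

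The main obstacle I anticipate is controlling the robust value gap as a function of the instance parameter $\theta$ in the large-$\ror$ regime, because the robust value has no closed form and depends on $\theta$ through the dual optimization in a nonlinear way. Unlike the standard MDP case where value is linear in the transition kernel, here I must carefully characterize how an $O(\text{gap})$ perturbation in the nominal kernel propagates through the worst-case KL-ball infimum to a change in robust value of the right order, and simultaneously verify that this same perturbation induces only an $O(\text{gap}^2)$ KL divergence between the sample distributions so that the testing lower bound survives. Balancing these two requires a precise second-order analysis of the optimal dual variable $\lambda^\star$ near the boundary where some transition probability equals $\minpall$; getting the constants and the $\log(1/\minpall) - 6 \le \ror \le \log(1/\minpall)-5$ window to align is the delicate part. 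Once the per-instance value gap and the pairwise KL bound are both established, the remainder is a routine application of the mutual-information / Fano inequality to convert the sample budget $T$ into the stated $1/8$ lower bound on the minimax error probability.
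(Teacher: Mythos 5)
Your proposal follows essentially the same route as the paper's proof: two regimes handled by hard instances built from a core state that either returns to itself or falls into a zero-reward absorbing state, with the small-$\ror$ case reducing to the non-robust construction of \citet{li2022settling} (a Gilbert--Varshamov packing over $\{0,1\}^H$ plus a Fano-type argument) and the large-$\ror$ case resolved by a two-point Le Cam test whose $\varepsilon$-gap is extracted from explicit bounds on the optimal dual variable $\lambda^\star$---exactly the gap-amplification-versus-KL balance you anticipate, with the nominal-kernel gap calibrated as $\Delta \asymp \ror \minpall \varepsilon/H$ so that the robust value gap is $\Theta(\varepsilon)$ while the pairwise KL stays of order $\ror^2\minpall\varepsilon^2/H^2$. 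Two execution details differ from your sketch: the paper obtains the $S\Cstar$ dependence from a single rarely-visited core state (behavior mass $1/(CS)$ combined with the $1/S$ clipping) rather than replicating gadgets across $\Theta(S)$ states, and even the small-$\ror$ regime is not entirely ``free''---the paper must prove a robust-specific monotonicity result (Lemma~\ref{lem:finite-lb-uncertainty-set-KL}, showing $\underline{p}_\star - \underline{q}_\star \geq p-q$ and $\underline{q}_\star \geq 1 - c_3/H$ for the worst-case kernels in the KL ball) before the non-robust value-gap argument can be imported.
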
 

The messages of Theorem~\ref{thm:dro-lower-finite} are two-fold. 
\begin{itemize}
\item When the uncertainty level $\ror\lesssim 1/H$ is relatively small, Theorem~\ref{thm:dro-lower-finite} shows that no algorithm can succeed in finding an $\varepsilon$-optimal robust policy when the sample complexity falls below the order of 
$$ \Omega\left( \frac{S\Cstar H^{4}}{  \varepsilon^2}  \right), $$
which is at least as large as the sample complexity requirement of non-robust offline RL \citep{li2022settling}. Consequently, this leads to new insights regarding the statistical hardness of   learning robust RMDPs with the KL uncertainty set: it can be at least as hard as the standard MDPs (which corresponds to $\ror=0$), for sufficiently small uncertainty levels. 
\item When the uncertainty level $\ror\asymp  \log (1/\minpall) $, Theorem~\ref{thm:dro-lower-finite} shows that no algorithm can succeed in finding an $\varepsilon$-optimal robust policy when the sample complexity falls below the order of 
$$ \Omega\left( \frac{S\Cstar H^{3}}{\minpall \ror^2 \varepsilon^2}  \right), $$
which confirms the near-optimality of \DRLCB  up to a factor of $H^2$ ignoring logarithmic factors. Therefore, \DRLCB is the first provable algorithm for robust offline RL with a near-optimal sample complexity without requiring the stringent full coverage assumption. 
\end{itemize}

\section{Robust offline RL for discounted infinite-horizon RMDPs}
\label{sec:problem-formulation-infty}
In this section, we turn to the studies of robust offline RL for discounted infinite-horizon MDPs. 

\subsection{Backgrounds on discounted infinite-horizon RMDPs}
Similar to the finite-horizon setting, we consider the discounted infinite-horizon robust MDPs (RMDPs) represented by $\cM_{\mathsf{rob}} = \{\cS,\cA, \gamma, \cU^{\ror}(P^\no), r\}$. Here, $\cS = \{1,2,\cdots, S\}$ is the state space, $\cA = \{1,2,\cdots, A\}$ is the action space, $\gamma\in[0,1)$ is the discounted factor, and $r: \cS\times \cA \rightarrow [0,1]$ is the intermediate reward function. Different from the standard MDPs, $\cU^{\ror}(P^\no)$ denote the set of possible transition kernels within an uncertainty set centered around a nominal kernel $P^\no: \cS\times\cA \rightarrow \Delta(\cS)$ using the distance measured in terms of the KL divergence. In particular, given an uncertainty level $\sigma>0$, the uncertainty set around $P^{\no}$ is specified as
\begin{align}\label{eq:kl-ball-infinite-P}
	\unb^\ror(P^{\no})\defn \otimes \; \unb^\ror (P^{\no}_{s,a}),\qquad \unb^\ror(P_{s,a}^0) \defn \left\{ P_{s,a} \in \Delta (\cS): \mathsf{KL}\left(P_{s,a} \parallel P^0_{s,a}\right) \leq \ror \right\},
\end{align}
where we denote a vector of the transition kernel $P$ or $P^{\no}$ at $(s,a)$ respectively as
\begin{align}
	P_{s,a} \defn P(\cdot \mymid s,a) \in \mathbb{R}^{1\times S}, \qquad P_{s,a}^\no \defn P^\no(\cdot \mymid s,a) \in \mathbb{R}^{1\times S}.
\end{align}  

Note that at any time step, the adversary of the nature chooses a history-independent component within the fixed uncertainty set $\cU^\ror(P^0_{s,a})$ defined in (30), conditioned only on the current state-action pair $(s,a)$. This is to ensure the computation tractability of finding such adversary.

\paragraph{Policy and robust value/Q functions.} 
A (possibly random) stationary policy $\pi: \cS \rightarrow \Delta(\cA)$ represents the selection rule of the agent, namely, $\pi(a\mymid s)$ denote the probability of choosing $a$ in state $s$. With some abuse of notation, let $\pi(s)$ represent the action chosen by $\pi$ when $\pi$ is a deterministic policy. We define the {\em robust value function} $V^{\pi, \ror} $ and {\em robust Q-function} $Q^{\pi,\ror}$ respectively as
\begin{align*}
	\forall (s,a)\in \cS \times \cA:\quad  V^{\pi,\ror}(s) &\defn \inf_{P\in \unb^{\ror}(P^{\no})} V^{\pi,P} (s), \qquad Q^{\pi,\ror}(s,a) \defn \inf_{P\in \unb^{\ror}(P^{\no})} Q^{\pi,P}(s,a),
\end{align*}
where the value function $V^{\pi,P}$ and Q-function $Q^{\pi,P}$ w.r.t. policy $\pi$ and transition kernel $P$ are defined respectively by
\begin{align}
	\forall s\in\cS:\qquad V^{\pi,P}(s ) &\defn  \mathbb{E}_{\pi, P} 
	\left[  \sum_{t=0}^{\infty}  \gamma^t r\big(s_{t}, a_t \big) \,\Big|\, s_0=s\right], \label{eq:def_V_infinite} \\
	\forall (s,a)\in \cS \times \cA:\qquad Q^{\pi, P} (s,a ) & \defn \mathbb{E}_{\pi,P} \left[  \sum_{t=0}^{\infty}  \gamma^t r (s_t, a_t ) \,\Big|\, s_0=s, a_0  = a\right] , \label{eq:def_Q_infinite}
\end{align}
where the expectation is taken over the randomness of the trajectory.
In words, the robust value/Q functions characterize the worst case over all the instances in the uncertainty set.

\paragraph{Optimal policy and robust Bellman equation.} Similar to the finite-horizon RMDPs, 
it is well-known that there exists at least one
deterministic policy that maximizes the robust value function and Q-function simultaneously in the infinite-horizon setting as well \citep{iyengar2005robust,nilim2005robust}. With this in mind, we denote the optimal policy as $\pi^\star$ and the corresponding {\em optimal robust value function} (resp.~{\em optimal robust Q-function}) as $V^{\star,\ror}$ (resp.~$Q^{\star,\ror}$), namely
\begin{subequations}
\begin{align}
	\forall s \in \cS: \quad &V^{\star,\ror}(s) \defn V^{\pi^\star,\ror}(s) = \max_\pi V^{\pi,\ror}(s), \\
	\forall (s,a) \in \cS \times \cA: \quad &Q^{\star,\ror}(s,a) \defn Q^{\pi^\star,\ror}(s,a) = \max_\pi Q^{\pi,\ror}(s,a).
\end{align}
\end{subequations}
In addition, we continue to admit the Bellman's optimality principle, resulting in the following {\em robust Bellman consistency equation} (resp.~{\em robust Bellman optimality equation}):
\begin{subequations}
\begin{align}
	\forall (s,a)\in \cS\times \cA: \quad &Q^{\pi,\ror}(s,a) = r(s,a) + \gamma\inf_{P\in \unb^{\ror}(P^{\no}_{s,a})} P V^{\pi,\ror}, \label{eq:bellman-equ-pi-infinite}\\
	\forall (s,a)\in \cS\times \cA: \quad &Q^{\star,\ror}(s,a) = r(s,a) + \gamma\inf_{P\in \unb^{\ror}(P^{\no}_{s,a})} P V^{\star,\ror}. \label{eq:bellman-equ-star-infinite}
\end{align}
\end{subequations}

\paragraph{Occupancy distributions.} To begin,
let $\rho$ be some initial state distribution. We denote $d^{\pi,P}(s \mymid \rho)$ and $d^{\pi,P}(s,a \mymid \rho)$ respectively as the state occupancy distribution and the state-action occupancy distribution induced by policy $\pi$, namely
\begin{subequations}\label{eq:visitation_dist_infty}
\begin{align} 
\forall s \in \cS :\quad\quad  	d^{\pi,P}(s )& \defn (1-\gamma) \sum_{t=0}^\infty \gamma^t \mathbb{P}(s_t = s \mymid s_0 \sim \rho, \pi,P),   \\
\forall (s,a)\in \cS\times \cA :\quad  d^{\pi,P}(s,a) & \defn  (1-\gamma) \sum_{t=0}^\infty \gamma^t \mathbb{P}(s_t = s \mymid s_0 \sim \rho, \pi,P) \, \pi(a \mymid s).
\end{align}
\end{subequations}
Here, the occupancy distributions are conditioned on $s_0\sim \rho$ and the sequence of actions and states are generated based on policy $\pi$ and transition kernel $P$.
Next, applying \eqref{eq:visitation_dist_infty} with $\pi=\pi^\star$, we adopt the the following short-hand notation for the occupancy distributions associated with the optimal policy:
\begin{subequations}
	\begin{align}\label{eq:visitation_dist-optimal-infty}
	\forall s\in \cS :\qquad d^{\star, P}(s)  & \defn d^{\piopt, P}(s),   \\
\forall (s,a)\in \cS\times\cA:\qquad 	d^{\star,P}(s,a )  & \defn d^{\piopt, P}(s,a)= d^{\star, P}(s ) \ind\{a = \piopt(s)\}.
\end{align}
\end{subequations}

\subsection{Data collection and constructing the empirical MDP}
	\label{sec:offline-concentrability-infty}
Suppose that we observe a batch/history dataset $\cD = \{(s_i, a_i, s_i')\}_{1\leq i \leq N}$ consisting of $N$ sample transitions. These transitions are independently generated, where the state-action pair is drawn from some behavior distribution $d^{\mathsf{b}} \in \Delta(\cS\times \cA)$, followed by a next state drawn over the nominal transition kernel $P^\no$, i.e.,
\begin{align}\label{eq:infinite-batch-set-generation}
	(s_i, a_i) \overset{\text{i.i.d.}}{\sim} d^{\mathsf{b}} \quad \text{and} \quad s_i' \overset{\text{i.i.d.}}{\sim} P^\no(\cdot \mymid s_i, a_i), \qquad 1\leq i \leq N.
\end{align}
Armed with these, we are ready to introduce the goal in the infinite-horizon setting. Given the history dataset $\cD$ obeying Assumption~\ref{assumption:dro-infinite}, for some target accuracy $\varepsilon>0$, we aim to find a near-optimal robust policy $\widehat{\pi}$, which satisfies
\begin{align}\label{eq:goal-infty}
	V^{\widehat{\pi},\ror}(\rho) \geq V^{\star,\ror}(\rho) -  \varepsilon
\end{align}
in a sample-efficient manner for some initial state distribution $\rho$.

\begin{remark}
For simplicity, we limit ourselves to the case when the history dataset consists of independent sample transitions. It is not difficult to generalize to the Markovian data case, when we only have access to a single trajectory of data generated by following some behavior policy, by combining the two-fold subsampling trick in \citet[Appendix D]{li2022settling} with our analysis. We leave this extension to interested readers.
\end{remark}

Similar to Assumption~\ref{assumption:dro-finite}, we design the following {\em robust single-policy clipped concentrability} assumption tailored for infinite-horizon RMDPs to characterize the quality of the history dataset.
\begin{assumption}[Robust single-policy clipped concentrability for infinite-horizon MDPs] 
\label{assumption:dro-infinite}
The behavior policy of the history dataset $\mathcal{D}$ satisfies 
\begin{align}
	\max_{(s, a, P) \in \mathcal{S} \times \cA \times \unb^{\ror}(P^{\no})} \frac{\min\big\{d^{\star,P}(s, a ), \frac{1}{S}\big\}}{\myrho(s, a )} \le \Cstar
	\label{eq:concentrate-infinite}
\end{align}
for some finite quantity $\Cstar \in\big[\frac{1}{S},\infty\big)$.
Following the convention $0/0=0$, we denote $\Cstar$ to be the smallest quantity satisfying \eqref{eq:concentrate-infinite}, and refer to it as the robust single-policy clipped concentrability coefficient.
\end{assumption}

\begin{remark}
Similar to Remark~\ref{remark:Cstar_full_finite}, we can bound $\Cstar\leq A$ when the batch dataset is generated using a simulator \citep{yang2021towards,panaganti2021sample}. By combining this bound of $\Cstar$ with the theoretical guarantees developed momentarily in Theorem~\ref{thm:dro-upper-infinite}, we obtain the comparison in Table~\ref{tab:prior-work}.
\end{remark}

\paragraph{Building an empirical nominal MDP}
Recalling that we have $N$ independent samples in the dataset $\cD = \{(s_i, a_i, s_i')\}_{1\leq i \leq N}$. First, we denote $N(s,a)$ as the total number of sample transitions from any state-action pair $(s,a)$ as
\begin{align}\label{eq:defn-Nh-sa-infinite}
	N(s,a) &\coloneqq \sum_{i=1}^N \mathds{1} \big\{ (s_i, a_i) = (s,a) \big\}. 
\end{align}
Armed with $N(s,a)$, we construct the empirical estimate $\widehat{P}^\no$ of the nominal kernel $P^\no$ by the visiting frequencies of state-action pairs as follows:
\begin{align}
	\widehat{P}^0(s'\mymid s,a) \defn 
	\begin{cases} \frac{1}{N(s,a)} \sum\limits_{i=1}^N \mathds{1} \big\{ (s_i, a_i, s_i') = (s,a,s') \big\}, & \text{if } N(s,a) > 0 \\
		0, & \text{else}
	\end{cases}  
	\label{eq:empirical-P-infinite}
\end{align}
for any $(s,a,s')\in \cS\times \cA\times  \cS$.

\subsection{\DRLCB for discounted infinite-horizon RMDPs}

With the estimate $\widehat{P}^{\no}$ of the nominal transition kernel $P^{\no}$ in hand, we are positioned to introduce our algorithm \DRLCB for infinite-horizon RMDPs, which bears some similarity with the finite-horizon version (cf. Algorithm~\ref{alg:vi-lcb-dro-finite}), by
taking the uncertainties of the value estimates into consideration throughout the value iterations. The procedure is summarized in Algorithm~\ref{alg:vi-lcb-dro-infinite}.

\begin{algorithm}[t]
\DontPrintSemicolon
	\textbf{input:} a dataset $\mathcal{D}$; reward function $r$; uncertainty level $\ror$; number of iterations $M$. \\ 
	\textbf{initialization:} $\widehat{Q}_0(s,a)= 0$, $\widehat{V}_0(s)=0$ for all $(s,a) \in \cS\times \cA$. \\

	Compute the empirical nominal transition kernel $\widehat{P}^{\no}$ according to \eqref{eq:empirical-P-infinite}; \\
	Compute the penalty term $b(s,a)$ according to \eqref{def:bonus-dro-infinite}; \\
   \For{$m = 1,2,\cdots, M$}
	{
		
		\For{$s\in \cS, a\in \cA$}{
		
			Set $\widehat{Q}_m(s, a)$ according to \eqref{eq:pessimism-operator-equal};  \label{line:infinite-update-Q}\\ 
		}
		\For{$s\in \cS$}{
			Set $\widehat{V}_m(s) = \max_a \widehat{Q}_m(s, a)$;
		}
	}

	\textbf{output:} $\widehat{\pi}$ s.t. $\widehat{\pi}(s) = \arg\max_a \widehat{Q}_M(s,a)$ for all $s \in \cS$.
	\caption{Robust value iteration with LCB (\DRLCB) for infinite-horizon RMDPs.}
 \label{alg:vi-lcb-dro-infinite}
\end{algorithm}

\paragraph{The pessimistic robust Bellman operator.} At the core of \DRLCB is a pessimistic variant of the classical robust Bellman operator  in the infinite-horizon setting \citep{zhou2021finite,iyengar2005robust,nilim2005robust}, denoted as $\cT^\ror(\cdot): \mathbb{R}^{SA} \rightarrow \mathbb{R}^{SA}$, which we recall as follows: 
\begin{align}\label{eq:robust_bellman}
	\forall (s,a)\in &\cS\times \cA :\quad \cT^\ror(Q)(s,a) \defn r(s,a) + \gamma \inf_{ \cP \in \unb^{\sigma}(P^{\no}_{s,a})} \cP V,  \quad \text{with}  \quad  V(s) \defn \max_a Q(s,a).
\end{align}
Encouragingly, the robust Bellman operator shares the nice $\gamma$-contraction property of the standard Bellman operator, ensuring fast convergence of robust value iteration by applying the robust Bellman operator \eqref{eq:robust_bellman} recursively. In the robust offline setting, instead of recursing using the population robust Bellman operator, we need to construct a pessimistic variant of the robust Bellman operator $\tpe(\cdot)$ w.r.t. the empirical nominal kernel $\widehat{P}^\no$ as follows:
\begin{align}\label{eq:pessimism-operator}
	\forall (s,a)\in \cS\times \cA :\quad  \tpe(Q)(s,a) = \max\left\{ r(s, a)  + \gamma \inf_{ \cP \in \unb^{\sigma}(\widehat{P}^{\no}_{s,a})} \cP V - b\big(s,a\big), \,  0 \right\},
\end{align}
where $b(s,a)$ denotes the penalty term that measures the data-dependent uncertainty of the value estimates. 

To specify the tailored penalty term $b(s,a)$ in \eqref{eq:pessimism-operator}, we first introduce an additional term
\begin{align}\label{eq:P-min-hat-def-infinite}
	\forall (s,a)\in \cS\times \cA: \quad \widehat{P}_{\mathsf{min}}(s,a) \defn  \min_{s'} \Big\{\widehat{P}^{\no}(s' \mymid s,a): \; \widehat{P}^{\no}(s' \mymid s,a)>0 \Big\},
\end{align}
which in words represents the smallest positive transition probability of the estimated nominal kernel $\widehat{P}^{\no}(s' \mymid s,a)$. Then for some $\delta\in (0,1)$, some universal constant $\cb>0$, $b(s,a)$ is defined as
\begin{align}
	&b(s,a)= \begin{cases}
	\min \left\{\frac{\cb}{\ror(1-\gamma)} \sqrt{\frac{\log\big(\frac{2(1+\ror)N^3S}{(1-\gamma)\delta}\big)}{ \widehat{P}_{\mathsf{min}}(s,a) N(s,a)}} + \frac{4}{ \ror N(1-\gamma)}\;, \; \frac{1}{1-\gamma}\right\} + \frac{2}{\ror N}& \text{if } N(s,a) > 0, \\
	\frac{1}{1-\gamma}  + \frac{2}{\ror N} & \text{ otherwise}.
	\end{cases}
	\label{def:bonus-dro-infinite}
\end{align}

As shall be illuminated, our proposed {\rm pessimistic robust Bellman operator} $\tpe(\cdot)$ (cf.~\eqref{eq:pessimism-operator}) plays an important role in \DRLCB. Encouragingly, despite the additional data-driven penalty term $b(s,a)$, it still enjoys the celebrated $\gamma$-contractive property, which greatly facilitates the analysis. Before continuing, we summarize the $\gamma$-contraction property below, whose proof is postponed to Appendix~\ref{proof:lem:contration-of-T}. 

\begin{lemma}[$\gamma$-Contraction]\label{lem:contration-of-T}
	For any $\gamma\in  [ 0, 1 )$, the operator $\tpe(\cdot)$ (cf.~\eqref{eq:pessimism-operator}) is a $\gamma$-contraction w.r.t. $\|\cdot \|_\infty$. Namely, for any $Q_1, Q_2 \in \mathbb{R}^{SA}$ s.t. $Q_1(s,a), Q_2(s,a) \in \big[0, \frac{1}{1-\gamma}\big]$ for all $(s,a) \in \cS \times \cA$, one has
	\begin{align}\label{eq:lemma-contraction}
		\left\| \tpe(Q_1) - \tpe(Q_2) \right\|_\infty \leq \gamma \left\| Q_1 -Q_2\right\|_\infty.
	\end{align}
Additionally, there exists a unique fixed point $\widehat{Q}^{\star,\ror}_{\mathsf{pe}}$ of the operator $\tpe(\cdot)$ obeying $0\leq \widehat{Q}^{\star,\ror}_{\mathsf{pe}}(s,a) \leq \frac{1}{1-\gamma}$ for all $(s,a)\in \cS\times \cA$.
\end{lemma}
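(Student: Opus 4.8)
The plan is to recognize that $\tpe$ is assembled from a short list of operations, each of which is either nonexpansive or $\gamma$-contractive in $\linf{\cdot}$, and then to invoke the Banach fixed-point theorem. Writing $V_i(s) = \max_a Q_i(s,a)$ for $i\in\{1,2\}$, I would decompose the operator into (i) the pointwise maximum over actions $Q \mapsto V$; (ii) the robust one-step expectation $V \mapsto \inf_{\cP \in \unb^{\ror}(\widehat{P}^{\no}_{s,a})} \cP V$; (iii) the affine map that scales by $\gamma$ and adds the $Q$-independent quantity $r(s,a) - b(s,a)$; and (iv) the truncation $x \mapsto \max\{x,0\}$. The key structural fact I would exploit is that both the reward $r(s,a)$ and the penalty $b(s,a)$ are \emph{constants} with respect to $Q$, so they cancel when comparing $\tpe(Q_1)$ and $\tpe(Q_2)$ and therefore cannot affect the contraction modulus; this is exactly why grafting the data-driven pessimism penalty onto the robust Bellman operator is harmless.

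First I would establish the two nonexpansiveness estimates. For (i), the $1$-Lipschitzness of the maximum gives $|V_1(s) - V_2(s)| = |\max_a Q_1(s,a) - \max_a Q_2(s,a)| \le \max_a |Q_1(s,a) - Q_2(s,a)| \le \linf{Q_1 - Q_2}$ for every $s$. For (ii), fix $(s,a)$ and let $\cP^\star$ (nearly) attain the infimum defining the robust expectation of $V_2$; since $\cP^\star$ is a probability vector, $\inf_{\cP} \cP V_1 \le \cP^\star V_1 = \cP^\star V_2 + \cP^\star(V_1 - V_2) \le \inf_{\cP} \cP V_2 + \linf{V_1 - V_2}$, and by symmetry (and letting the near-optimality gap vanish) the robust expectation is $1$-Lipschitz in $V$ under $\linf{\cdot}$. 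This is the robust-Bellman analogue of the classical contraction argument and does not require passing to the explicit dual form.

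Chaining these two facts, the pre-truncation arguments of $\tpe(Q_1)$ and $\tpe(Q_2)$ differ by exactly $\gamma\big(\inf_{\cP}\cP V_1 - \inf_{\cP}\cP V_2\big)$, whose magnitude is at most $\gamma \linf{V_1 - V_2} \le \gamma \linf{Q_1 - Q_2}$. Since $x \mapsto \max\{x,0\}$ is $1$-Lipschitz, applying (iv) preserves this bound, yielding $\linf{\tpe(Q_1) - \tpe(Q_2)} \le \gamma \linf{Q_1 - Q_2}$, which is \eqref{eq:lemma-contraction}. For the fixed point, I would check that $\tpe$ maps the box $\mathcal{B} \defn \{Q : 0 \le Q(s,a) \le \tfrac{1}{1-\gamma}\ \forall (s,a)\}$ into itself: nonnegativity is immediate from the outer $\max\{\cdot,0\}$, while for the upper bound I would use $0 \le V(s) \le \tfrac{1}{1-\gamma}$ to get $\inf_{\cP}\cP V \le \tfrac{1}{1-\gamma}$ (a convex combination of entries of $V$), whence $r(s,a) + \gamma\inf_{\cP}\cP V - b(s,a) \le 1 + \tfrac{\gamma}{1-\gamma} = \tfrac{1}{1-\gamma}$ using $b(s,a) \ge 0$. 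Because $\mathcal{B}$ is a closed (hence complete) subset of $\mathbb{R}^{SA}$ and $\tpe$ is a $\gamma$-contraction mapping $\mathcal{B}$ into itself, the Banach fixed-point theorem furnishes a unique fixed point $\widehat{Q}^{\star,\ror}_{\mathsf{pe}} \in \mathcal{B}$, delivering both existence/uniqueness and the stated range $0 \le \widehat{Q}^{\star,\ror}_{\mathsf{pe}}(s,a) \le \tfrac{1}{1-\gamma}$.

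The two Lipschitz estimates are routine; the only points requiring care — and the conceptual crux — are that the data-driven penalty $b(s,a)$ enters additively and independently of $Q$, so the pessimism modification leaves the contraction modulus untouched, and that the truncation at $0$ is nonexpansive rather than merely monotone. Beyond this bookkeeping I do not anticipate a serious obstacle.
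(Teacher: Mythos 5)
Your proof is correct and follows essentially the same route as the paper's: cancel the $Q$-independent terms $r(s,a)-b(s,a)$, chain the nonexpansiveness of the action-maximum, the robust infimum $V\mapsto\inf_{\cP\in\unb^{\ror}(\widehat{P}^{\no}_{s,a})}\cP V$, and the truncation $\max\{\cdot,0\}$ to get the factor $\gamma$, exactly as the paper does via its auxiliary untruncated operator $\widetilde{\cT}^{\ror}_{\mathsf{pe}}$. For the fixed point, your argument (invariance of the box $\{Q: 0\le Q\le \tfrac{1}{1-\gamma}\}$ plus Banach's theorem) is in fact a more self-contained version of the paper's treatment, which omits existence as ``standard'' and deduces uniqueness directly from the contraction inequality; both yield the same conclusion.
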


\paragraph{Our algorithm \DRLCB for infinite-horizon robust offline RL.}
Armed with the $\gamma$-contraction property of the pessimistic robust Bellman operator $\tpe(\cdot)$, we are positioned to introduce \DRLCB for infinite-horizon RMDPs, summarized in Algorithm~\ref{alg:vi-lcb-dro-infinite}. Specifically, \DRLCB can be seen as a value iteration algorithm w.r.t. $\tpe(\cdot)$ (cf.~\eqref{eq:pessimism-operator}), whose update rule at the $m$-th iteration can be formulated as 
\begin{align}\label{eq:pessimism-operator-primal}
	\widehat{Q}_{m}(s,a) &= \tpe(\widehat{Q}_{m-1})(s,a) = \max\left\{ r(s, a)  + \gamma \inf_{ \cP \in \unb^{\sigma}(\widehat{P}^{\no}_{s,a})} \cP \widehat{V}_{m-1} - b\big(s,a\big), \,  0 \right\},
\end{align}
and $\widehat{V}_m(s) = \max_a \widehat{Q}_m(s, a)$ for all $m=1,2,\cdots, M$.
In view of strong duality \citep{hu2013kullback}, the above convex problem can be translated into a dual formulation, leading to the following equivalent update rule:
\begin{align}\label{eq:pessimism-operator-equal}
	\widehat{Q}_{m}(s,a) = \max\left\{ r(s, a)  +\sup_{\lambda\geq 0}  \left\{ -\lambda \log\left(\widehat{P}^0_{s, a} \cdot \exp \left(\frac{-\widehat{V}_{m-1}}{\lambda}\right) \right) - \lambda \ror \right\} - b\big(s,a\big), \,  0 \right\}, 
\end{align}
which can be solved efficiently \citep{iyengar2005robust,yang2021towards,panaganti2021sample} as a one-dimensional optimization problem. 

To finish the description, we initialize the estimates of Q-function ($\widehat{Q}_0$) and value function ($\widehat{V}_0$) to be zero and output the greedy policy of the final Q-estimates ($\widehat{Q}_M$) as the final policy $\widehat{\pi}$, namely,
\begin{align}
	\widehat{\pi}(s) = \arg\max_a \widehat{Q}_M(s,a) \quad \text{for all } s \in \cS.
\end{align}
It turns out that the iterates $\big\{\widehat{Q}_m \big\}_{m\geq0}$ of \DRLCB converge linearly to the fixed point $\widehat{Q}^{\star,\ror}_{\mathsf{pe}}$ owing to the nice $\gamma$-contraction property outlined in Lemma~\ref{lem:contration-of-T}. This fact is summarized in the following lemma, whose proof is postponed to Appendix~\ref{proof:lem:infinite-converge}.

\begin{lemma}\label{lem:infinite-converge}
	Let $\widehat{Q}_0 =0$. The iterates of Algorithm~\ref{alg:vi-lcb-dro-infinite} obey
	\begin{align}
		\forall  m\geq 0: \quad \widehat{Q}_m \leq \widehat{Q}^{\star,\ror}_{\mathsf{pe}} \quad \text{and} \quad \big\| \widehat{Q}_m -\widehat{Q}^{\star,\ror}_{\mathsf{pe}} \big\|_\infty \leq \frac{\gamma^m}{1-\gamma}.
	\end{align}
\end{lemma}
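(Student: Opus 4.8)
The plan is to handle the two assertions separately, both leveraging the properties of the pessimistic robust Bellman operator $\tpe(\cdot)$ established in Lemma~\ref{lem:contration-of-T}. The linear convergence bound follows almost immediately from the $\gamma$-contraction property, whereas the uniform upper bound $\widehat{Q}_m \leq \widehat{Q}^{\star,\ror}_{\mathsf{pe}}$ requires first verifying that $\tpe(\cdot)$ is \emph{monotone} (order-preserving), after which a short induction closes the argument.

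For the convergence rate, I would invoke the fact that $\widehat{Q}^{\star,\ror}_{\mathsf{pe}}$ is the unique fixed point of $\tpe(\cdot)$ and that $\widehat{Q}_m = \tpe(\widehat{Q}_{m-1})$. The $\gamma$-contraction property of Lemma~\ref{lem:contration-of-T} then gives
\begin{align*}
\big\| \widehat{Q}_m - \widehat{Q}^{\star,\ror}_{\mathsf{pe}} \big\|_\infty = \big\| \tpe(\widehat{Q}_{m-1}) - \tpe(\widehat{Q}^{\star,\ror}_{\mathsf{pe}}) \big\|_\infty \leq \gamma \big\| \widehat{Q}_{m-1} - \widehat{Q}^{\star,\ror}_{\mathsf{pe}} \big\|_\infty.
\end{align*}
Iterating $m$ times and using the initialization $\widehat{Q}_0 = 0$ together with the range $0 \leq \widehat{Q}^{\star,\ror}_{\mathsf{pe}}(s,a) \leq \frac{1}{1-\gamma}$ (also from Lemma~\ref{lem:contration-of-T}) yields
\begin{align*}
\big\| \widehat{Q}_m - \widehat{Q}^{\star,\ror}_{\mathsf{pe}} \big\|_\infty \leq \gamma^m \big\| \widehat{Q}^{\star,\ror}_{\mathsf{pe}} \big\|_\infty \leq \frac{\gamma^m}{1-\gamma},
\end{align*}
which is the claimed bound.

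For the upper bound, I would first prove monotonicity: if $Q_1 \leq Q_2$ entrywise, then the induced value functions satisfy $V_1 = \max_a Q_1(\cdot,a) \leq \max_a Q_2(\cdot,a) = V_2$ entrywise; since every $\cP \in \unb^\sigma(\widehat{P}^\no_{s,a})$ has nonnegative entries, $\cP V_1 \leq \cP V_2$ holds, and taking the infimum over the same uncertainty set preserves the inequality. The outer truncation $\max\{\cdot, 0\}$ is likewise order-preserving, so $\tpe(Q_1) \leq \tpe(Q_2)$. Then by induction: the base case $\widehat{Q}_0 = 0 \leq \widehat{Q}^{\star,\ror}_{\mathsf{pe}}$ holds by nonnegativity of the fixed point, and the inductive step gives $\widehat{Q}_m = \tpe(\widehat{Q}_{m-1}) \leq \tpe(\widehat{Q}^{\star,\ror}_{\mathsf{pe}}) = \widehat{Q}^{\star,\ror}_{\mathsf{pe}}$, as desired.

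The analysis here is routine; the only point requiring a modicum of care is the monotonicity verification, in particular confirming that both the infimum over the uncertainty set and the pointwise maximum with zero preserve the entrywise order. No genuine obstacle arises, since all the heavy lifting---the contraction estimate and the existence and range of the fixed point---is already delivered by Lemma~\ref{lem:contration-of-T}.
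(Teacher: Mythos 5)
Your proposal is correct and follows essentially the same route as the paper's own proof: establish monotonicity of $\tpe(\cdot)$, induct from $\widehat{Q}_0 = 0 \leq \widehat{Q}^{\star,\ror}_{\mathsf{pe}}$ to get the ordering, and apply the $\gamma$-contraction together with $\|\widehat{Q}^{\star,\ror}_{\mathsf{pe}}\|_\infty \leq \frac{1}{1-\gamma}$ for the convergence rate. Your monotonicity verification is in fact slightly more explicit than the paper's (spelling out that nonnegativity of $\cP$, the infimum, and the truncation all preserve the entrywise order), but the argument is the same.
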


\subsection{Performance guarantees}
 
Before introducing the main theorems, we first define several essential metrics. 
\begin{itemize}
	\item $d_{\mathsf{min}}^{\mathsf{b}}$: the smallest positive entry of the distribution $\myrho$, i.e., 
\begin{align}\label{eq:d-min-b-infinite}
d_{\mathsf{min}}^{\mathsf{b}} \defn  \min_{s,a} \left\{\myrho(s, a ): \; \myrho(s, a ) >0 \right\}.
\end{align}

\item $P_{\mathsf{min}}^{\mathsf{b}}$: the smallest positive state transition probability under the nominal kernel $P^{\no}$ in the region covered by dataset $\cD$, i.e.,
\begin{align}
	P_{\mathsf{min}}^{\mathsf{b}} \defn \min_{s,a,s'} \Big\{P^{\no}\left(s'  \mymid s, a\right):\; \myrho(s, a )>0 ,\, P^{\no}\left(s' \mymid s, a\right)>0 \Big\}. \label{eq:def-P-min-b-infinite}
\end{align}
Note that $P_{\mathsf{min}}^{\mathsf{b}}$ is determined only by the state-action pairs covered by the batch dataset $\cD$.

\item $\minpall$: the smallest positive state transition probability of the optimal robust policy $\piopt$ under the nominal kernel $P^{\no}$, namely
\begin{align}\label{eq:P-min-star-def-infinite}
	\minpall \defn \min_{s,s'} \Big\{P^{\no}\big(s'  \mymid s, \pi^\star(s)\big):\; P^{\no}\big(s'  \mymid s, \pi^\star(s)\big)>0 \Big\}.
\end{align}
We also note that $\minpall$ is determined only by the state-action pairs covered by the optimal robust policy $\pi^\star$ under the nominal model $P^{\no}$.

\end{itemize}

We are now positioned to introduce the sample complexity upper bound of \DRLCB, together with the minimax lower bound, for solving infinite-horizon RMDPs. 
First, we present the performance guarantees of \DRLCB for robust offline RL in the infinite-horizon case, with the proof deferred to Appendix~\ref{proof:thm:dro-upper-infinite}.

\begin{theorem}\label{thm:dro-upper-infinite}
Let $c_0$ and $c_1$ be some sufficiently large universal constants. Given an uncertainty level $\ror>0$, suppose that the penalty terms in Algorithm~\ref{alg:vi-lcb-dro-infinite} are chosen as \eqref{def:bonus-dro-infinite} for sufficiently large $\cb$. With probability at least $1-\delta$, the output $\widehat{\pi}$ of Algorithm~\ref{alg:vi-lcb-dro-infinite} obeys
\begin{align}
	V^{\star, \ror}(\rho) - V^{\widehat{\pi}, \ror}(\rho) \leq \frac{c_0}{\ror(1-\gamma)^2} \sqrt{  \frac{ S\Cstar \log^2 \left(\frac{(1+\ror)N^3S}{(1-\gamma)\delta} \right)}{   P_{\mathsf{min}}^\star N }},
\end{align}
as long as the number of samples $N$ satisfies
\begin{align} \label{eq:dro-b-bound-N-condition-infinite}
	N \geq \frac{c_1 \log( NS/ \delta )}{  d_{\mathsf{min}}^{\mathsf{b}}  P_{\mathsf{min}}^{\mathsf{b}} }.
\end{align}
\end{theorem}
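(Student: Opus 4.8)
The plan is to follow the pessimism template for offline RL, adapted throughout to the nonlinear KL-robust Bellman operator. I work directly with the fixed point $\widehat{Q}^{\star,\ror}_{\mathsf{pe}}$ of $\tpe$ guaranteed by Lemma~\ref{lem:contration-of-T}; since Lemma~\ref{lem:infinite-converge} gives $\|\widehat{Q}_M-\widehat{Q}^{\star,\ror}_{\mathsf{pe}}\|_\infty\le \gamma^M/(1-\gamma)$, taking $M$ large makes the optimization error negligible against the target bound, and I may treat $\widehat{\pi}$ as greedy with respect to $\widehat{Q}^{\star,\ror}_{\mathsf{pe}}$, with $\widehat{V}^{\star,\ror}_{\mathsf{pe}}(s)=\max_a\widehat{Q}^{\star,\ror}_{\mathsf{pe}}(s,a)$. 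The crux of the argument --- and the step I expect to be the main obstacle --- is a two-sided concentration bound certifying that the penalty $b(s,a)$ of \eqref{def:bonus-dro-infinite} dominates the statistical fluctuation of the empirical robust update: on a high-probability event, for every $(s,a)$ and every value vector $V$ encountered (valued in $[0,1/(1-\gamma)]$),
\[
\gamma\,\Big|\inf_{\cP\in\unb^\ror(P^\no_{s,a})}\cP V-\inf_{\cP\in\unb^\ror(\widehat{P}^\no_{s,a})}\cP V\Big|\ \le\ b(s,a).
\]
Unlike the linear dependence $\cP V$ of standard MDPs, the robust update is the nonlinear dual objective $\sup_{\lambda\ge 0}\{-\lambda\log(\widehat{P}^\no_{s,a}\exp(-V/\lambda))-\lambda\ror\}$, so I would (i) lower bound the optimal dual variable $\lambda^\star$ away from zero --- this is exactly where the smallest positive transition probability enters and yields the $\widehat{P}_{\mathsf{min}}(s,a)$ factor in the denominator of $b$; (ii) apply a Bernstein-type inequality to control $(\widehat{P}^\no_{s,a}-P^\no_{s,a})\exp(-V/\lambda^\star)$ uniformly over a suitable net of admissible $V$ and $\lambda$, producing the leading term $\frac{\cb}{\ror(1-\gamma)}\sqrt{\log(\cdots)/(\widehat{P}_{\mathsf{min}}(s,a)N(s,a))}$ together with the $O(1/(\ror N))$ corrections. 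Along the way I must show that the empirical support matches the true support (so $\widehat{P}^\no$ misses no positive-probability transition and $\widehat{P}_{\mathsf{min}}(s,a)\gtrsim \minpall$ on the $\piopt$-visited pairs), which is precisely what the burn-in condition \eqref{eq:dro-b-bound-N-condition-infinite} secures through $d_{\mathsf{min}}^{\mathsf{b}}$ and $P_{\mathsf{min}}^{\mathsf{b}}$.

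Granting this event, the remaining reasoning is structural. From the concentration bound I establish two monotone comparisons. First, pessimism: $\tpe(Q^{\star,\ror})\le Q^{\star,\ror}$ pointwise (the penalty absorbs the empirical-versus-population gap), so the $\gamma$-contraction and monotonicity of $\tpe$ propagate this to the fixed point, giving $\widehat{Q}^{\star,\ror}_{\mathsf{pe}}\le Q^{\star,\ror}$ and hence $\widehat{V}^{\star,\ror}_{\mathsf{pe}}\le V^{\star,\ror}$. Second, freezing the output policy $\widehat{\pi}$, a parallel monotonicity argument against the robust consistency equation \eqref{eq:bellman-equ-pi-infinite} for $V^{\widehat{\pi},\ror}$ yields $\widehat{V}^{\star,\ror}_{\mathsf{pe}}\le V^{\widehat{\pi},\ror}$. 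The second comparison reduces the goal to
\[
V^{\star,\ror}(\rho)-V^{\widehat{\pi},\ror}(\rho)\ \le\ V^{\star,\ror}(\rho)-\widehat{V}^{\star,\ror}_{\mathsf{pe}}(\rho),
\]
a quantity that is nonnegative by the first comparison.

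To control this residual, at each state I use $\widehat{V}^{\star,\ror}_{\mathsf{pe}}(s)\ge\widehat{Q}^{\star,\ror}_{\mathsf{pe}}(s,\piopt(s))$ and expand both $Q^{\star,\ror}$ and $\widehat{Q}^{\star,\ror}_{\mathsf{pe}}$ through their robust Bellman updates. Choosing on the empirical side the kernel $P^{\mathsf{inf}}_{s,a}\in\unb^\ror(\widehat{P}^\no_{s,a})$ attaining the infimum for $\widehat{V}^{\star,\ror}_{\mathsf{pe}}$, and combining the two-sided concentration bound with pessimism $\widehat{V}^{\star,\ror}_{\mathsf{pe}}\le V^{\star,\ror}$, gives the one-step recursion (with minor adjustments in the truncation case where $\widehat{Q}^{\star,\ror}_{\mathsf{pe}}=0$)
\[
V^{\star,\ror}(s)-\widehat{V}^{\star,\ror}_{\mathsf{pe}}(s)\ \le\ 2\,b(s,\piopt(s))+\gamma\, P^{\mathsf{inf}}_{s,\piopt(s)}\big(V^{\star,\ror}-\widehat{V}^{\star,\ror}_{\mathsf{pe}}\big).
\]
Unrolling this inequality produces $V^{\star,\ror}(\rho)-\widehat{V}^{\star,\ror}_{\mathsf{pe}}(\rho)\le \frac{2}{1-\gamma}\sum_{s,a}d^{\star,P^{\mathsf{inf}}}(s,a)\,b(s,a)$, where the occupancy is taken under $\piopt$ and the worst-case kernel $P^{\mathsf{inf}}$ (a member of $\unb^\ror(P^\no)$ up to the $\widehat{P}^\no$-versus-$P^\no$ discrepancy, so that Assumption~\ref{assumption:dro-infinite} applies). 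The final arithmetic then chains three facts: the Chernoff bound $N(s,a)\gtrsim N\myrho(s,a)$ on covered pairs (valid under \eqref{eq:dro-b-bound-N-condition-infinite}), the clipped concentrability $\min\{d^{\star,P^{\mathsf{inf}}}(s,a),1/S\}\le \Cstar\,\myrho(s,a)$, and the clipping bookkeeping $\sum_{s,a}d^{\star,P^{\mathsf{inf}}}(s,a)/\sqrt{\min\{d^{\star,P^{\mathsf{inf}}}(s,a),1/S\}}\lesssim\sqrt{S}$ via Cauchy--Schwarz. Substituting the leading term of $b$ and collecting the factors $\frac{1}{1-\gamma}\cdot\frac{1}{\ror(1-\gamma)}\cdot\frac{1}{\sqrt{\minpall}}\cdot\sqrt{S\Cstar/N}$ reproduces the claimed rate $\frac{c_0}{\ror(1-\gamma)^2}\sqrt{S\Cstar\log^2(\cdots)/(\minpall N)}$, while the lower-order $O(1/(\ror N))$ penalty pieces are absorbed into the burn-in regime; a union bound over the $\delta$-dependent events completes the argument.
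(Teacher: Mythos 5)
Your overall skeleton (pessimism relative to $V^{\widehat{\pi},\ror}$ via the fixed point $\widehat{Q}^{\star,\ror}_{\mathsf{pe}}$, a recursion along $\piopt$, then Cauchy--Schwarz with the clipped concentrability) matches the paper's Steps 2--4, but your ``crux'' concentration step has a genuine gap that would prevent you from obtaining the stated bound. You propose to certify $\big|\inf_{\cP\in\unb^\ror(\widehat{P}^\no_{s,a})}\cP V-\inf_{\cP\in\unb^\ror(P^\no_{s,a})}\cP V\big|\le b(s,a)$ uniformly ``over a suitable net of admissible $V$ and $\lambda$.'' The value vectors you must cover live in $[0,1/(1-\gamma)]^S$ and are statistically dependent on $\widehat{P}^\no_{s,a}$ (the same empirical kernel is reused at every iteration), so a union bound over an $\varepsilon$-net of this $S$-dimensional set costs $\log|\mathcal{N}_\varepsilon|\asymp S\log(\cdot)$, which inflates the penalty by roughly $\sqrt{S}$ and the sample complexity by a factor of $S$: you would prove an $S^2\Cstar$ rate --- essentially the prior art in Table~\ref{tab:prior-work} --- not the claimed $S\Cstar$ rate. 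This is exactly the pitfall the paper calls out, and its Lemma~\ref{lemma:dro-b-bound-infinite} is instead proved by a leave-one-out construction: for each state $s$ one builds auxiliary RMDPs $\widehat{\cM}^{s,u}_{\mathsf{rob}}$ whose empirical nominal kernel makes $s$ absorbing and whose reward at $s$ is a scalar $u$; the fixed point of the corresponding pessimistic operator is independent of $\widehat{P}^\no_{s,a}$ by construction, coincides with $\widehat{Q}^{\star,\ror}_{\mathsf{pe}}$ at a particular $u^\star$, and uniformity is then only needed over a \emph{one-dimensional} grid of $u$ values of cardinality $O\big(\ror N(\tfrac{1}{\ror}+\tfrac{1}{1-\gamma})\big)$, whose logarithm is dimension-free. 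Without this (or an equivalent decoupling device), your step (ii) does not deliver Theorem~\ref{thm:dro-upper-infinite}.

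A second, more local flaw: in your one-step recursion you take $P^{\mathsf{inf}}_{s,a}\in\unb^\ror(\widehat{P}^\no_{s,a})$, the minimizer of the \emph{empirical} robust update, and then invoke Assumption~\ref{assumption:dro-infinite} for $d^{\star,P^{\mathsf{inf}}}$ ``up to the $\widehat{P}^\no$-versus-$P^\no$ discrepancy.'' But the assumption quantifies only over $P\in\unb^\ror(P^\no)$, and a kernel in the KL ball around $\widehat{P}^\no$ need not lie in the KL ball around $P^\no$ (indeed $\widehat{P}^\no$ can have strictly smaller support, in which case the KL balls are disjoint sets of kernels), so the concentrability coefficient cannot be applied to that occupancy measure as stated. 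The paper avoids this by defining $\widehat{P}^{\inf}_{s,\piopt(s)}\defn\mathrm{argmin}_{\cP\in\unb^\ror(P^0_{s,\piopt(s)})}\cP\widehat{V}$ --- the minimizer over the uncertainty set of the \emph{true} nominal kernel applied to the empirical value function --- so that $\widehat{P}^{\inf}\in\cU^\ror(P^\no)$ exactly, with the empirical-versus-true gap absorbed into the $2b$ term; you should restructure your recursion the same way.
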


The result directly indicates that \DRLCB can finds an $\varepsilon$-optimal policy as long as the sample size in dataset $\cD$ exceeds the order of (ignoring logarithmic factors)
\begin{align}\label{eq:infinite-upper-bound}
	 \underbrace{\frac{S\Cstar}{P_{\mathsf{min}}^\star (1-\gamma)^4 \ror^2 \varepsilon^2 }}_{\varepsilon\textsf{-dependent}}  + \underbrace{\frac{ 1}{  d_{\mathsf{min}}^{\mathsf{b}}  P_{\mathsf{min}}^{\mathsf{b}}  }}_{\textsf{burn-in cost}}.
\end{align}
Note that the burn-in cost is independent with the accuracy level $\varepsilon$, which tells us that the sample complexity is no more than
\begin{align}
 \widetilde{O} \left( \frac{S \Cstar }{P_{\mathsf{min}}^\star (1-\gamma)^4 \sigma^2 \varepsilon^2 } \right)
\end{align}
as long as $\varepsilon$ is small enough. 
The sample complexity of \DRLCB still dramatically outperforms prior works under full coverage, which has been compared in detail in  Section~\ref{sec:main-contri} (cf.~Table~\ref{tab:prior-work}). In particular, our sample complexity produces an exponential improvement over \citet{zhou2021finite,panaganti2021sample} in terms of the dependency with the effective horizon $\frac{1}{1-\gamma}$, which is especially significant for long-horizon problems. Compared with \citet{yang2021towards}, our sample complexity is better by at least a factor of $S/\minpall$. To achieve the claimed bound, we resort to a delicate technique called the leave-one-out analysis \citep{agarwal2019optimality,li2020breaking,li2022settling}, by carefully  designing an auxiliary set of RMDPs to decouple the statistical dependency introduced across the iterates of pessimistic robust value iteration. This is the first time that the leave-one-out analysis is applied to understanding the sample efficiency of model-based robust RL algorithms, which is of potential independent interest to tighten the sample complexity of other robust RL problems.

To complement the upper bound, we develop an information-theoretic lower bound for robust offline RL as provided in the following theorem whose proof can be found in Appendix~\ref{proof:thm:dro-lower-infinite}.

\begin{theorem}\label{thm:dro-lower-infinite}
For any $(S, \minpall, \Cstar,\gamma, \ror, \varepsilon)$ obeying $\frac{1}{1-\gamma} \geq 2e^8$, $\minpall \in \big(0, 1-\gamma\big]$, $S \geq \log \big( 1/\minpall \big)$, $\Cstar\geq 8/S$, and $\varepsilon \leq \frac{1}{384   e^6 (1-\gamma)\log \big( 1/\minpall \big) }$, we can construct a collection of infinite-horizon RMDPs $\{ \cM_\theta \mymid \theta \in \Theta\}$, an initial state distribution $\rho$, and a batch dataset with $N$ independent samples, such that
	\[
	\inf_{\widehat{\pi}}\max_{\theta\in\Theta} P_\theta \left\{  V^{\star,\ror}(\rho)-V^{\widehat{\pi}, \ror}(\rho)>\varepsilon\big)\right\} \geq\frac{1}{8},
\]
provided that
\begin{align}
  N \leq 
 \begin{cases}
 \frac{c_1 S\Cstar }{(1-\gamma)^3 \varepsilon^2} \quad &\text{ if } \quad 0 < \ror \leq \frac{1-\gamma}{20},
  \\
 \frac{c_1 S\Cstar}{\minpall (1-\gamma)^2\ror^2  \varepsilon^2}  \quad &\text{ if } \quad \log \big( 1/\minpall \big) - 6\leq \ror \leq \log \big( 1/\minpall \big) - 5
		\end{cases}
		\end{align}
Here, $c_1>0$ is some universal constant, the infimum is taken over all estimators $\widehat{\pi}$, and $\mathbb{P}_{\theta}$ denotes the probability
when the RMDP is $\mathcal{M}_{\theta}$.
\end{theorem}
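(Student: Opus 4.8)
The plan is to establish the lower bound by reducing robust policy learning to binary (or composite) hypothesis testing, in the spirit of the two-point / Le Cam paradigm used for non-robust offline RL in \citet{li2022settling}, but adapted to the nonlinear KL-robust value function. I would treat the two uncertainty regimes separately, since they are driven by different ``hard directions'' in the construction, while keeping a common skeleton: (i) build a family of RMDP instances $\{\cM_\theta\}_{\theta\in\Theta}$ that are statistically hard to distinguish from the offline data yet force a large robust-value gap for any policy that guesses wrong; (ii) design the behavior occupancy $\myrho$ so that the realized concentrability equals the prescribed $\Cstar$ and the clip $\min\{d^{\star,P}(s,a),1/S\}$ activates at $1/S$; and (iii) invoke a Le Cam / Pinsker inequality (error $\ge \tfrac12(1-\sqrt{\mathsf{KL}/2})$) to conclude that unless $N$ exceeds the claimed threshold, the probability of an $\varepsilon$-suboptimal policy stays above $1/8$.

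For the construction I would use a base gadget made of two absorbing states (a rewarding one and a non-rewarding one) together with a critical state carrying two actions, where under the nominal kernel the ``good'' action reaches the rewarding state with probability $p$ and the ``bad'' action reaches it with a slightly different probability, and the index $\theta$ flips which action is better. Replicating this gadget across $\Theta(S)$ states with an Assouad-type aggregation over $\Theta=\{0,1\}^{S}$, and spreading the initial and behavior mass appropriately, realizes the $S$ factor: the optimal robust policy visits each critical pair with probability $\asymp 1/S$ (so the clip equals $1/S$), while $\myrho$ visits it with probability $\asymp \frac{1}{S\Cstar}$ (so the concentrability ratio is $\Cstar$), making the effective sample count at each critical pair $\asymp \frac{N}{S\Cstar}$. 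For the small-uncertainty regime $\ror\le (1-\gamma)/20$ I would take $p\asymp 1-\gamma$ and argue that such a small $\ror$ keeps the robust values within a constant factor of the non-robust ones, so the standard $\frac{S\Cstar}{(1-\gamma)^3\varepsilon^2}$ hardness transfers almost verbatim. For $\ror\asymp\log(1/\minpall)$ I would instead set the good-state probability on the order of $\minpall$, so that the worst-case member of the KL ball—read off from the dual form $\sup_{\lambda\ge 0}\{-\lambda\log(P^{\no}_{s,a}\exp(-V/\lambda))-\lambda\ror\}$—amplifies the robust value's sensitivity to $p$.

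The quantitative heart of the argument, and the step I expect to be the main obstacle, is the value-gap versus KL-cost tradeoff in the second regime. Concretely, I would perturb the good-state probability $p\asymp\minpall$ by a small $\Delta$ and show two things at once: that this creates a robust-value gap of order $\frac{\Delta}{\minpall\,\ror}\cdot\frac{1}{1-\gamma}$, by differentiating the dual objective through its optimizer $\lambda^\star$ and tracking how the worst-case redistribution magnifies changes at the $\minpall$-scale entry; and that the per-transition KL between the two nominal kernels is only $\asymp \frac{\Delta^2}{\minpall}$. Equating the gap to $\varepsilon$ forces $\Delta\asymp\varepsilon\,\minpall\,\ror\,(1-\gamma)$, so the per-sample KL is $\asymp \varepsilon^2\minpall\,\ror^2(1-\gamma)^2$; combining with the effective count via tensorization, $\mathsf{KL}(\mprob_{\theta}^{\otimes N}\,\|\,\mprob_{\theta'}^{\otimes N})\lesssim \frac{N}{S\Cstar}\,\varepsilon^2\minpall\,\ror^2(1-\gamma)^2$, Le Cam's method gives error $\ge 1/8$ exactly when $N\lesssim \frac{S\Cstar}{\minpall\,\ror^2(1-\gamma)^2\varepsilon^2}$. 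Pinning both constants sharply—so the amplification factor is precisely the $\frac{1}{\minpall\ror}$ needed and no slack is lost to the nonlinearity of the robust operator—is the delicate part; the remaining bookkeeping (the range conditions on $\minpall,\ror,\varepsilon$, verifying $2C\le\Cstar\le 4C$, and the Hamming-distance step of the Assouad aggregation) is routine.
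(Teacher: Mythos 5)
Your overall skeleton (two-hypothesis/Le Cam reduction, behavior mass $\asymp \frac{1}{CS}$ on the informative state so the clipped concentrability realizes $\Cstar$, per-sample KL $\asymp \Delta^2/\minpall$, tensorization, and the final inversion $N \lesssim \frac{S\Cstar}{\minpall\ror^2(1-\gamma)^2\varepsilon^2}$) matches the paper, and your small-$\ror$ regime is essentially the paper's argument (though ``transfers almost verbatim'' glosses over the step the paper proves via a dedicated KL-monotonicity lemma: that the worst-case gap $\underline{p}_\star-\underline{q}_\star$ does not shrink below the nominal gap $p-q$). However, there is a fatal inversion in your construction for the large-uncertainty regime, which is exactly the step you identified as the quantitative heart. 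You set the \emph{rewarding}-transition probability to $p \asymp \minpall$ and perturb it by $\Delta$. But the KL ball is $\{P : \mathsf{KL}(P \,\|\, P^{\no}) \le \ror\}$, and the point mass on the non-rewarding state has
\begin{align*}
\mathsf{KL}\big(\delta_{\mathrm{bad}} \,\big\|\, \mathsf{Ber}(p)\big) \;=\; \log\tfrac{1}{1-p} \;\approx\; \minpall \;\ll\; \ror \;\asymp\; \log(1/\minpall),
\end{align*}
so the adversary can annihilate the rewarding transition entirely, for \emph{both} hypotheses and both actions. The robust value of every action then collapses to the same quantity, the robust value gap is identically zero regardless of $\Delta$, and your claimed gap of order $\frac{\Delta}{\minpall\,\ror\,(1-\gamma)}$ cannot hold. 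No choice of $\Delta$ rescues this orientation of the gadget.

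The paper's construction is the mirror image: the rewarding transition gets probability $p = 1-\minpall$ and the non-rewarding one gets the small probability $\minpall$, with $\ror$ calibrated in $[\log(1/\minpall)-6,\,\log(1/\minpall)-5]$. Then zeroing out the rewarding transition would cost $\mathsf{KL} = \log(1/\minpall) > \ror$, i.e., it is \emph{just} out of the adversary's reach; instead the worst case pins the rewarding probability near $1/\log(1/\minpall)$, governed by the dual optimizer $\lambda^\star \asymp \frac{1}{(1-\gamma)\ror}$, and the $\frac{1}{\minpall\ror}$ amplification of the nominal gap $\Delta$ arises because the exponentially tilted denominator satisfies $q\,e^{-1/((1-\gamma)\lambda^\star)} + (1-q) \asymp \minpall$. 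With that fix your remaining accounting ($\Delta \asymp \varepsilon\,\minpall\,\ror\,(1-\gamma)$, per-sample KL $\asymp \varepsilon^2\minpall\ror^2(1-\gamma)^2$, effective count $N/(S\Cstar)$) reproduces the theorem. One further simplification worth noting: the paper does not need your Assouad aggregation over $\Theta = \{0,1\}^S$ — it uses a single informative state and a two-point test, and the $S$ factor comes purely from hiding that state from the behavior distribution (mass $\frac{1}{CS}$) while the clipped numerator saturates at $1/S$ (this is also why the theorem assumes $S \gtrsim \log(1/\minpall)$, so that $d^{\star,P}(2,\phi) \gtrsim 1/\log(1/\minpall) \ge 1/S$).
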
 

Similar to the finite-horizon setting, the messages of Theorem~\ref{thm:dro-lower-infinite} are two-fold. 
\begin{itemize}
\item When the uncertainty level $\ror\lesssim 1-\gamma$ is relatively small, Theorem~\ref{thm:dro-lower-infinite} shows that no algorithm can succeed in finding an $\varepsilon$-optimal robust policy when the sample complexity falls below the order of 
$$ \Omega\left( \frac{S\Cstar }{  (1-\gamma)^3 \varepsilon^2}  \right), $$
which is at least as large as the sample complexity requirement of non-robust offline RL \citep{li2022settling}. Consequently, this again suggests that learning robust RMDPs with the KL uncertainty set can be at least as hard as the standard MDPs (which corresponds to $\ror=0$), for sufficiently small uncertainty levels. 
\item When the uncertainty level $\ror\asymp  \log (1/\minpall) $, Theorem~\ref{thm:dro-lower-infinite} shows that no algorithm can succeed in finding an $\varepsilon$-optimal robust policy when the sample complexity falls below the order of 
$$\Omega\left(\frac{S\Cstar}{\minpall (1-\gamma)^2\ror^2  \varepsilon^2}\right),$$ 
which directly confirms that \DRLCB is near-optimal up to a polynomial factor of the effective horizon length $\frac{1}{1-\gamma}$ (cf.~\eqref{eq:infinite-upper-bound}). To the best of our knowledge, \DRLCB is the first provable algorithm with near-optimal sample complexity for infinite-horizon robust offline RL. Moreover, the requirement imposed on the history dataset is also much weaker than prior literature on robust offline RL \citep{yang2021towards,zhou2021finite}, without the need of full coverage of the state-action space.
\end{itemize}

\section{Numerical experiments}\label{sec:experiments}
We conduct experiments on the gambler's problem \citep{sutton2018reinforcement,zhou2021finite} to evaluate the performance of the proposed algorithm \DRLCB, with comparisons to the robust value iteration algorithm \DRVI without pessimism \citep{panaganti2021sample}. Our code can be accessed at:
\begin{center}
\url{https://github.com/Laixishi/Robust-RL-with-KL-divergence}. 
\end{center}

\paragraph{Gambler's problem.}
In the gambler's game \citep{sutton2018reinforcement,zhou2021finite},  a gambler bets on a sequence of  coin flips, winning the stake with heads and losing with tails. Starting from some initial balance, the game ends when the gambler's balance either reaches $50$ or $0$, or the total number of bets $H$ is hit. This problem can be formulated as an episodic finite-horizon MDP, with a state space $\mathcal{S} =\{0,1,\cdots, 50\}$ and the associated possible actions $a\in \big\{0,1,\cdots, \min\{s, 50-s\} \big\}$ at state $s$. Here, we set the horizon length $H = 100$. Moreover, the parameter of the transition kernel, which is the probability of heads for the coin flip, is fixed as $p_{\mathsf{head}}$ and remains the same in all time steps $h\in[H]$. The reward is set as $1$ when the state reaches $s=50$ and $0$ for all other cases. In addition, suppose the initial state (i.e., the gambler's initial balance) distribution $\rho$ is taken uniformly at random within $\mathcal{S}$. 
 
\begin{figure}[ht]
	\centering
	\begin{tabular}{cc}
		\includegraphics[width=0.4\linewidth]{./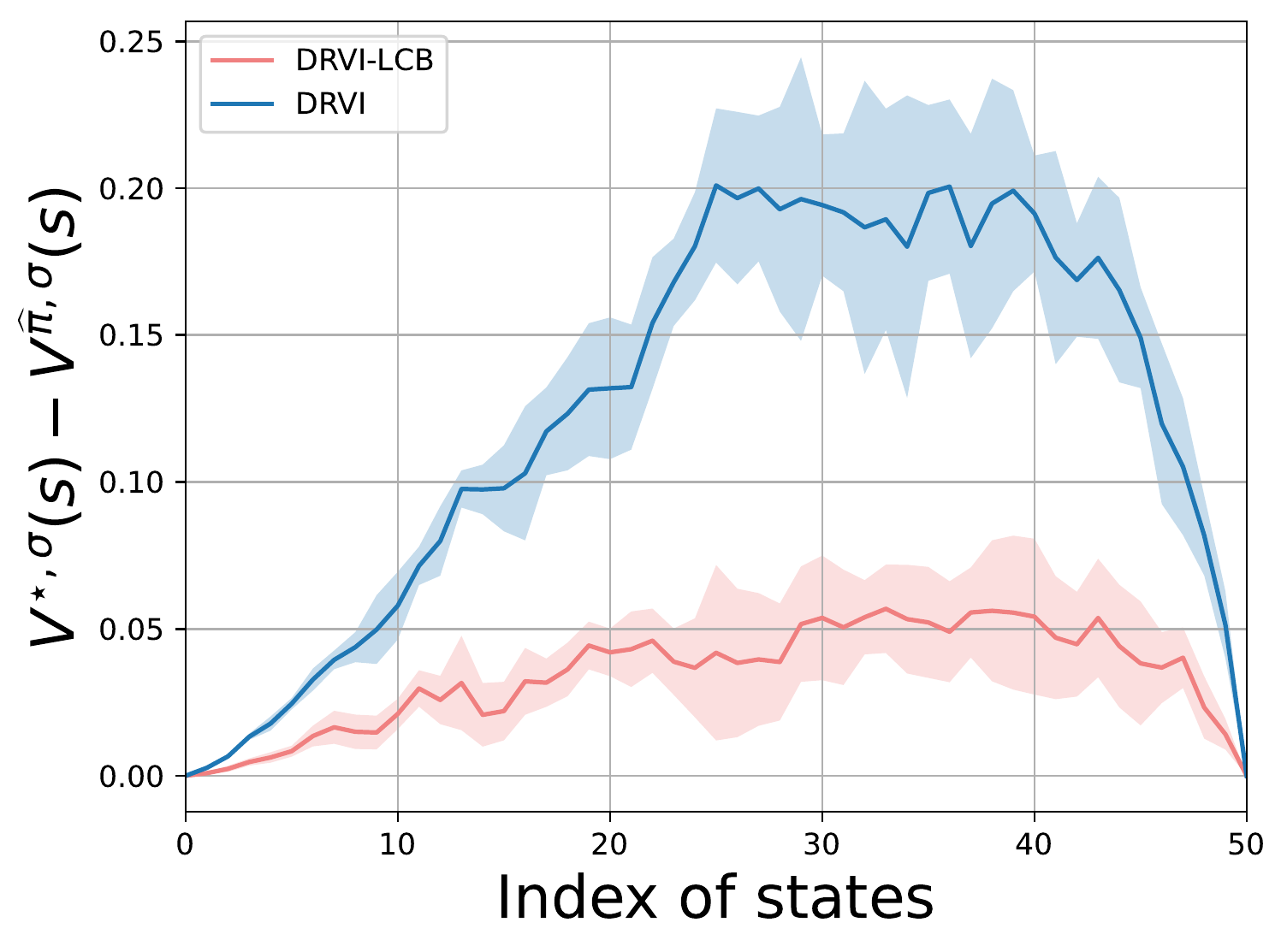} & \includegraphics[width=0.395\linewidth]{./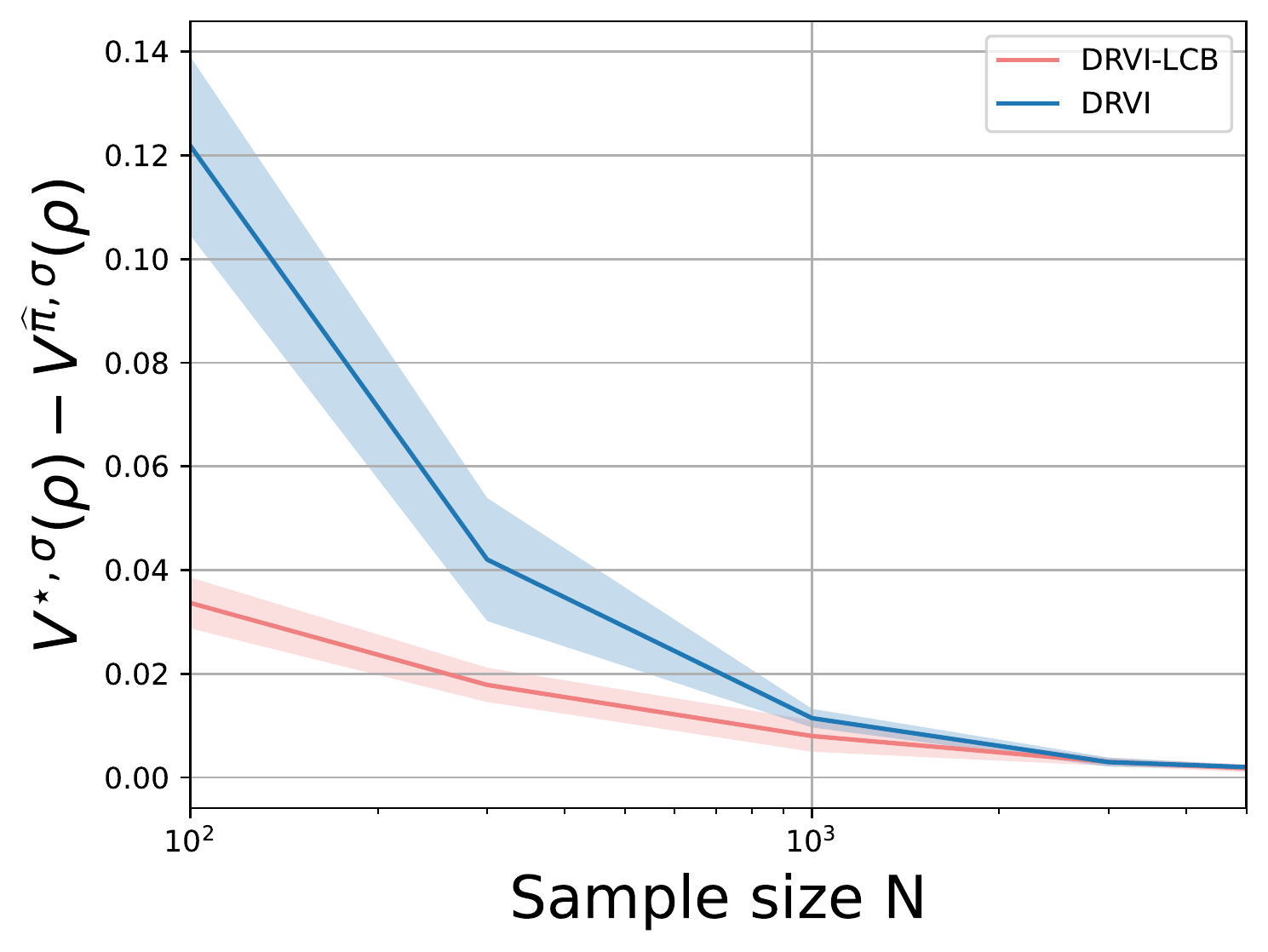} \\
				(a) value gap versus states & (b) value gap versus sample size \\
		 \includegraphics[width=0.4\linewidth]{./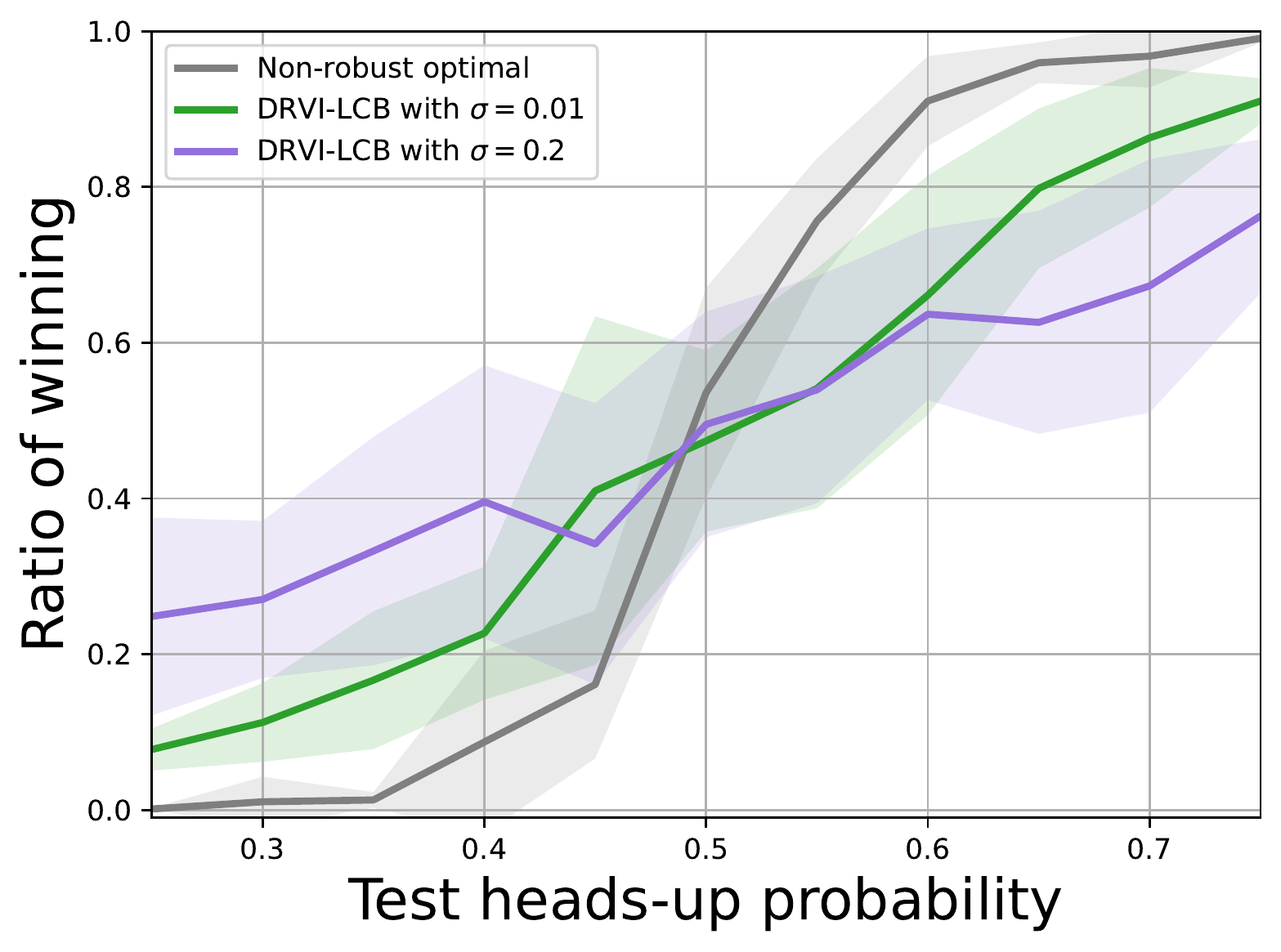} & \includegraphics[width=0.4\linewidth]{./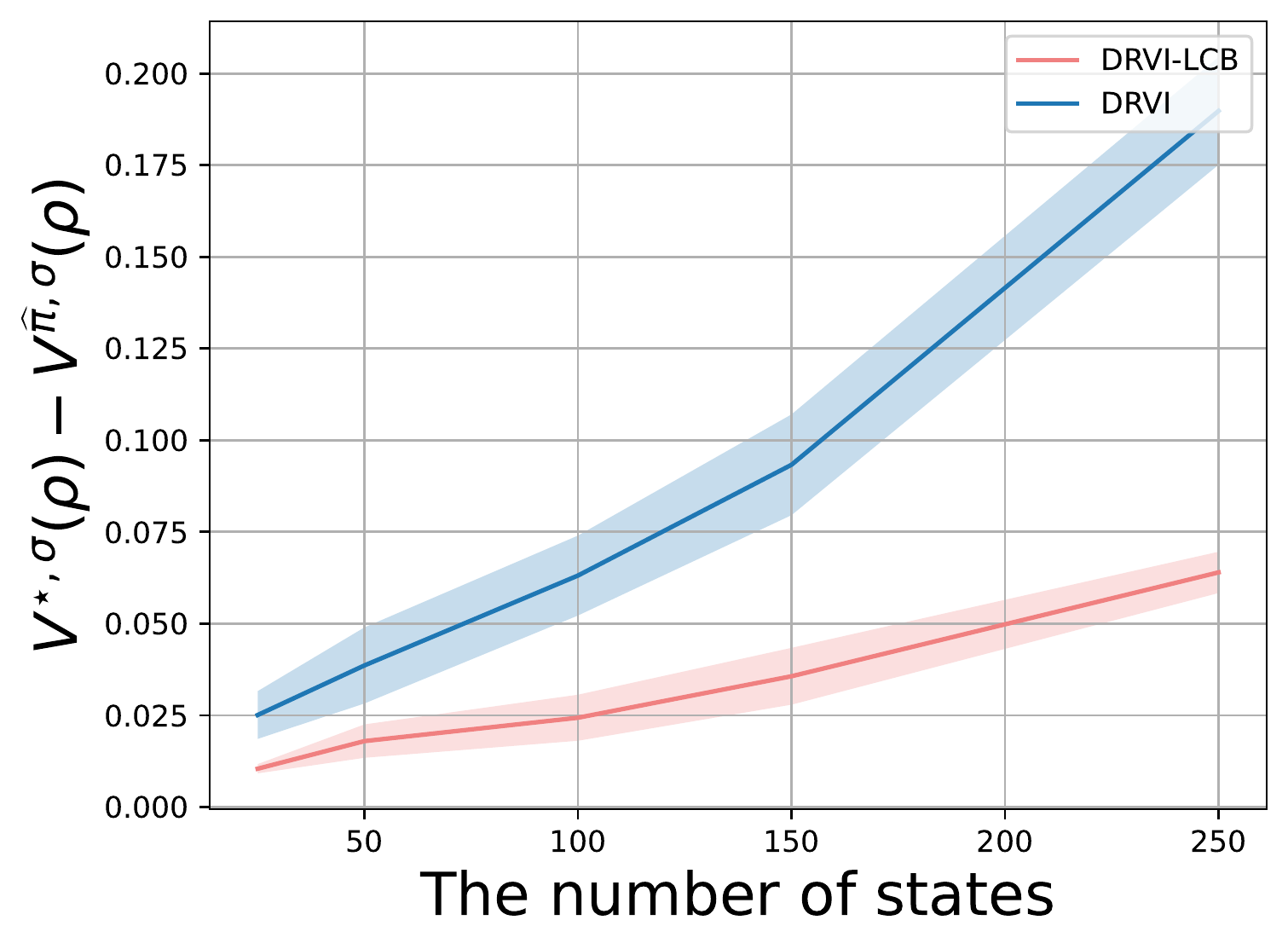}  \\
		 (c) winning rate versus $p_{\mathsf{head}}$ & (d) value gap versus the number of states
	\end{tabular}
	\caption{Performance of the proposed algorithm \DRLCB using independent samples per state-action pair and time step, where it shows better sample efficiency than the baseline algorithm \DRVI without pessimism, as well as better robustness in the learned policy compare to its non-robust counterpart.  }
		\label{fig:test_on_s}
	\end{figure}

\paragraph{The benefit of pessimism.} We first utilize a history dataset with $N$ independent samples per state-action pair and time step, generated from the nominal MDP with $p_{\mathsf{head}}^{\no} = 0.6$. 
We evaluate the performance of the learned policy $\widehat{\pi}$ using our proposed method \DRLCB with comparison to \DRVI without pessimism, where we fix the uncertainty level $\sigma = 0.1$ for learning the robust optimal policy. The experiments are repeated $10$ times with the average and standard deviations reported.
To begin with, Figure \ref{fig:test_on_s}(a) plots the sub-optimality value gap $V_1^{\star,\ror}(s) - V_1^{\widehat{\pi}, \ror}(s)$ for every $s\in\cS$, when a sample size $N =100$ is used to learn the robust policies. It is shown that \DRLCB outperform the baseline \DRVI uniformly over the state space when the sample size is small, corroborating the benefit of pessimism in the sample-starved regime. Furthermore, Figure~\ref{fig:test_on_s}(b) shows the sub-optimality gap $V_1^{\star,\ror}(\rho) - V_1^{\widehat{\pi}, \ror}(\rho)$ with varying sample sizes $N = 100, 300, 1000, 3000, 5000$, where the initial test distribution $\rho$ is generated randomly.\footnote{The probability distribution vector $\rho \in \Delta(\cS)$ is generated as $\rho(s) =u_s/\sum_{s\in \cS} u_s$, where $u_s$ is drawn independently from a uniform distribution.} While the performance of \DRLCB and \DRVI both improves with the increase of the sample size,  the proposed algorithm \DRLCB achieves much better performance with fewer samples.

\paragraph{The benefit of distributional robustness.} To corroborate the benefit of distributional robustness, we evaluate the performance of the policy learned from $N=1000$ samples using \DRLCB on perturbed environments with varying model parameters $p_{\mathsf{head}} \in[0.25,0.75]$. We measure the practical performance based on the ratio of winning (i.e., reaching the state $s=50$) calculated from $3000$ episodes. Figure~\ref{fig:test_on_s}(c) illustrates the ratio of winning  against the test probability of heads for the policies learned from \DRLCB with $\ror=0.01$ and $\ror=0.2$, which are benchmarked against the non-robust optimal policy of the nominal MDP using the exact model. It can be seen that the policies learned from \DRLCB deviate from the non-robust optimal policy as $\ror$ increases, which achieves better worst-case rates of winning across a wide range of perturbed environments. On the other end, while the non-robust policy maximizes the performance when the test environment is close to the history one used for training, its performance degenerates to be much worse than the robust policies when the probability of heads is mismatched significantly, especially when $p_{\mathsf{head}}$ drops below, say around, $0.5$. 

\paragraph{Impact of the number of states.} We evaluate the performance of \DRLCB and \DRVI when the number of states varies within $[25,50, 100, 150, 200, 250]$, using a fixed sample size $N=300$.
Figure~\ref{fig:test_on_s}(d) show that \DRLCB performs consistently better than \DRVI as the number of states $S$ increases, where the value gap exhibits a linear scaling with respect to the number of states.

\paragraph{Performance using trajectory data.} Instead of using independent samples, we now evaluate the proposed algorithm using a dataset consisting of $K$ sample trajectories generated from a uniform random policy, for the same setting of Figure \ref{fig:test_on_s}(b). Figure~\ref{fig:ablation} shows the sub-optimality value gap with respect to the number of trajectories $K$, where the performance of both \DRLCB and \DRVI improves as $K$ increases, and \DRLCB achieves better performance especially when $K$ is small, consistent with the observation under independent data.

\begin{figure}[t]
	\centering
\includegraphics[width=0.4\linewidth]{./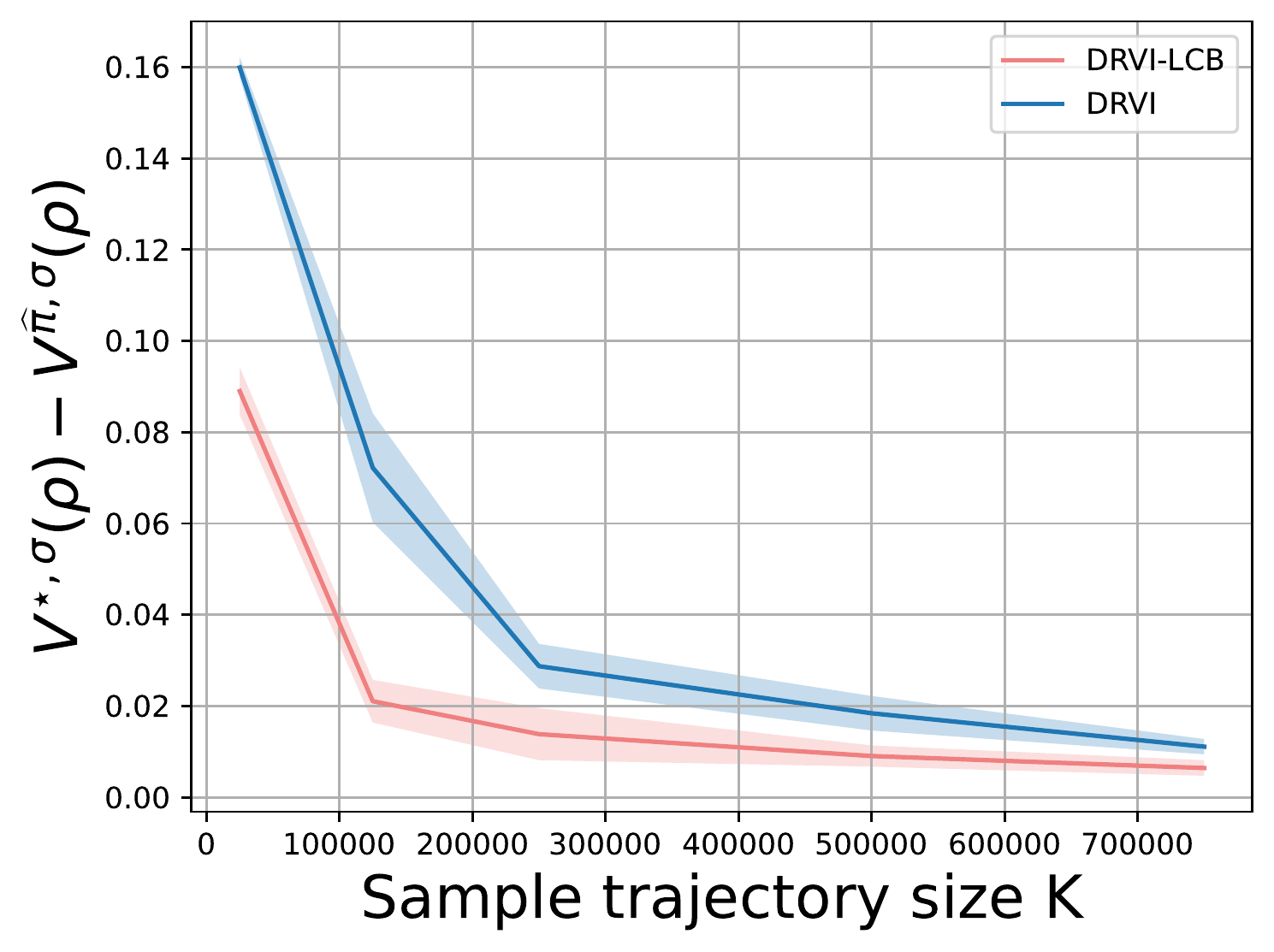}  
	\caption{Performance of the proposed algorithm \DRLCB compared against \DRVI using trajectory data.} 
		\label{fig:ablation}
	\end{figure}

%
 
\section{Conclusion}\label{sec:discussions}

To accommodate both model robustness and sample efficiency, this paper proposes a distributionally robust model-based algorithm for offline RL with the principle of pessimism. We study the finite-sample complexity of the proposed algorithm \DRLCB, and 
establish an information-theoretic lower bound to benchmark its near-optimality for a range of uncertainty levels. Numerical experiments are provided to demonstrate the efficacy of the proposed algorithm. To the best our knowledge, this provides the first provably near-optimal robust offline RL algorithm that learns under model perturbation and partial coverage. This work opens up several interesting directions. 
\begin{itemize}

\item {\em Tightening the gap between upper and lower bounds.} Our upper and lower bounds still leave room for future improvements. For example, it is yet to establish the information-theoretic lower bound over the full range of the uncertainty set, and close the gap between the upper and lower bounds with respect to the horizon length.

\item {\em Model-free algorithms for robust offline RL.} Can we design provably efficient model-free algorithms for robust offline RL with partial coverage? Recent works \citep{wang2023finite,wang2023sample} in understanding robust variants of Q-learning in the generative model might shed light on how to approach this question.

\item {\em Choice of uncertainty sets.} Moreover, it is possible to extend our framework to handle uncertainty sets defined using other distances such as the chi-square distance and the total variation distance in a similar fashion. \citet{shi2023curious} recently established near minimax-optimal sample complexities for the total variation distance and the chi-square distance in the generative model setting, paving ways to study these uncertainty sets in the offline setting.

\item {\em Adaptive tuning of the uncertainty set.} In this work, we treat the radius of the uncertainty set as a fixed, a priori specified parameter, and study the sample complexity of learning a robust optimal policy with respect to the given uncertainty set modeling the sim-to-real gap. It is of great interest to incorporate the tuning of the uncertainty set (both its size and metric) to complete the pipeline of the algorithm design, which will require a different framework than the one adopted in the current paper.

\end{itemize}

We leave these questions to future investigations.

\section*{Acknowledgements}
 
This work is supported in part by the grants ONR N00014-19-1-2404, NSF CCF-2106778, DMS-2134080, and CNS-2148212. 
L. Shi is also gratefully supported by the Leo Finzi Memorial Fellowship, Wei Shen and Xuehong Zhang Presidential Fellowship, and
Liang Ji-Dian Graduate Fellowship at Carnegie Mellon University. The authors thank Gen Li, Zhengyuan Zhou, and Nian Si for helpful discussions.


\appendix

\bibliography{bibfileRL,bibfileDRO}

\begin{thebibliography}{}

\bibitem[Abdullah et~al., 2019]{abdullah2019wasserstein}
Abdullah, M.~A., Ren, H., Ammar, H.~B., Milenkovic, V., Luo, R., Zhang, M., and
  Wang, J. (2019).
\newblock Wasserstein robust reinforcement learning.
\newblock {\em arXiv preprint arXiv:1907.13196}.

\bibitem[Agarwal et~al., 2020]{agarwal2019optimality}
Agarwal, A., Kakade, S., and Yang, L.~F. (2020).
\newblock Model-based reinforcement learning with a generative model is minimax
  optimal.
\newblock {\em Conference on Learning Theory}, pages 67--83.

\bibitem[Badrinath and Kalathil, 2021]{badrinath2021robust}
Badrinath, K.~P. and Kalathil, D. (2021).
\newblock Robust reinforcement learning using least squares policy iteration
  with provable performance guarantees.
\newblock In {\em International Conference on Machine Learning}, pages
  511--520. PMLR.

\bibitem[Bertsimas et~al., 2018]{bertsimas2018data}
Bertsimas, D., Gupta, V., and Kallus, N. (2018).
\newblock Data-driven robust optimization.
\newblock {\em Mathematical Programming}, 167(2):235--292.

\bibitem[Blanchet and Murthy, 2019]{blanchet2019quantifying}
Blanchet, J. and Murthy, K. (2019).
\newblock Quantifying distributional model risk via optimal transport.
\newblock {\em Mathematics of Operations Research}, 44(2):565--600.

\bibitem[Chen and Jiang, 2019]{chen2019information}
Chen, J. and Jiang, N. (2019).
\newblock Information-theoretic considerations in batch reinforcement learning.
\newblock In {\em International Conference on Machine Learning}, pages
  1042--1051. PMLR.

\bibitem[Choi et~al., 2009]{choi2009reinforcement}
Choi, J.~J., Laibson, D., Madrian, B.~C., and Metrick, A. (2009).
\newblock Reinforcement learning and savings behavior.
\newblock {\em The Journal of finance}, 64(6):2515--2534.

\bibitem[Delage and Ye, 2010]{delage2010distributionally}
Delage, E. and Ye, Y. (2010).
\newblock Distributionally robust optimization under moment uncertainty with
  application to data-driven problems.
\newblock {\em Operations research}, 58(3):595--612.

\bibitem[Derman and Mannor, 2020]{derman2020distributional}
Derman, E. and Mannor, S. (2020).
\newblock Distributional robustness and regularization in reinforcement
  learning.
\newblock {\em arXiv preprint arXiv:2003.02894}.

\bibitem[Ding et~al., 2023]{ding2023seeing}
Ding, W., Shi, L., Chi, Y., and Zhao, D. (2023).
\newblock Seeing is not believing: Robust reinforcement learning against
  spurious correlation.
\newblock In {\em Thirty-seventh Conference on Neural Information Processing
  Systems}.

\bibitem[Duchi, 2018]{duchi2018introductory}
Duchi, J.~C. (2018).
\newblock Introductory lectures on stochastic optimization.
\newblock {\em The Mathematics of Data}, 25:99--186.

\bibitem[Duchi and Namkoong, 2021]{duchi2018learning}
Duchi, J.~C. and Namkoong, H. (2021).
\newblock Learning models with uniform performance via distributionally robust
  optimization.
\newblock {\em The Annals of Statistics}, 49(3):1378--1406.

\bibitem[Gao, 2022]{gao2020finite}
Gao, R. (2022).
\newblock Finite-sample guarantees for {W}asserstein distributionally robust
  optimization: Breaking the curse of dimensionality.
\newblock {\em Operations Research}.

\bibitem[Garc{\i}a and Fern{\'a}ndez, 2015]{garcia2015comprehensive}
Garc{\i}a, J. and Fern{\'a}ndez, F. (2015).
\newblock A comprehensive survey on safe reinforcement learning.
\newblock {\em Journal of Machine Learning Research}, 16(1):1437--1480.

\bibitem[Gilbert, 1952]{gilbert1952comparison}
Gilbert, E.~N. (1952).
\newblock A comparison of signalling alphabets.
\newblock {\em The Bell system technical journal}, 31(3):504--522.

\bibitem[Goyal and Grand-Clement, 2022]{goyal2022robust}
Goyal, V. and Grand-Clement, J. (2022).
\newblock Robust markov decision processes: Beyond rectangularity.
\newblock {\em Mathematics of Operations Research}.

\bibitem[Ho et~al., 2018]{ho2018fast}
Ho, C.~P., Petrik, M., and Wiesemann, W. (2018).
\newblock Fast {B}ellman updates for robust {MDP}s.
\newblock In {\em International Conference on Machine Learning}, pages
  1979--1988. PMLR.

\bibitem[Ho et~al., 2021]{ho2021partial}
Ho, C.~P., Petrik, M., and Wiesemann, W. (2021).
\newblock Partial policy iteration for l1-robust markov decision processes.
\newblock {\em Journal of Machine Learning Research}, 22(275):1--46.

\bibitem[Hou et~al., 2020]{hou2020robust}
Hou, L., Pang, L., Hong, X., Lan, Y., Ma, Z., and Yin, D. (2020).
\newblock Robust reinforcement learning with {W}asserstein constraint.
\newblock {\em arXiv preprint arXiv:2006.00945}.

\bibitem[Hu and Hong, 2013]{hu2013kullback}
Hu, Z. and Hong, L.~J. (2013).
\newblock Kullback-leibler divergence constrained distributionally robust
  optimization.
\newblock {\em Available at Optimization Online}, pages 1695--1724.

\bibitem[Iyengar, 2005]{iyengar2005robust}
Iyengar, G.~N. (2005).
\newblock Robust dynamic programming.
\newblock {\em Mathematics of Operations Research}, 30(2):257--280.

\bibitem[Jin et~al., 2021]{jin2021pessimism}
Jin, Y., Yang, Z., and Wang, Z. (2021).
\newblock Is pessimism provably efficient for offline {RL}?
\newblock In {\em International Conference on Machine Learning}, pages
  5084--5096.

\bibitem[Kaufman and Schaefer, 2013]{kaufman2013robust}
Kaufman, D.~L. and Schaefer, A.~J. (2013).
\newblock Robust modified policy iteration.
\newblock {\em INFORMS Journal on Computing}, 25(3):396--410.

\bibitem[Kumar et~al., 2020]{kumar2020conservative}
Kumar, A., Zhou, A., Tucker, G., and Levine, S. (2020).
\newblock Conservative {Q}-learning for offline reinforcement learning.
\newblock In Larochelle, H., Ranzato, M., Hadsell, R., Balcan, M.~F., and Lin,
  H., editors, {\em Advances in Neural Information Processing Systems},
  volume~33, pages 1179--1191. Curran Associates, Inc.

\bibitem[Levine et~al., 2020]{levine2020offline}
Levine, S., Kumar, A., Tucker, G., and Fu, J. (2020).
\newblock Offline reinforcement learning: Tutorial, review, and perspectives on
  open problems.
\newblock {\em arXiv preprint arXiv:2005.01643}.

\bibitem[Li et~al., 2022]{li2022settling}
Li, G., Shi, L., Chen, Y., Chi, Y., and Wei, Y. (2022).
\newblock Settling the sample complexity of model-based offline reinforcement
  learning.
\newblock {\em arXiv preprint arXiv:2204.05275}.

\bibitem[Li et~al., 2020]{li2020breaking}
Li, G., Wei, Y., Chi, Y., Gu, Y., and Chen, Y. (2020).
\newblock Breaking the sample size barrier in model-based reinforcement
  learning with a generative model.
\newblock In {\em Advances in Neural Information Processing Systems},
  volume~33.

\bibitem[Li et~al., 2021]{li2021sample}
Li, G., Wei, Y., Chi, Y., Gu, Y., and Chen, Y. (2021).
\newblock Sample complexity of asynchronous {Q}-learning: Sharper analysis and
  variance reduction.
\newblock {\em IEEE Transactions on Information Theory}, 68(1):448--473.

\bibitem[Munos, 2005]{munos2005error}
Munos, R. (2005).
\newblock Error bounds for approximate value iteration.
\newblock In {\em Proceedings of the National Conference on Artificial
  Intelligence}, volume~20, page 1006. Menlo Park, CA; Cambridge, MA; London;
  AAAI Press; MIT Press; 1999.

\bibitem[Nilim and El~Ghaoui, 2005]{nilim2005robust}
Nilim, A. and El~Ghaoui, L. (2005).
\newblock Robust control of {M}arkov decision processes with uncertain
  transition matrices.
\newblock {\em Operations Research}, 53(5):780--798.

\bibitem[Nilim and Ghaoui, 2003]{nilim2003robustness}
Nilim, A. and Ghaoui, L. (2003).
\newblock Robustness in markov decision problems with uncertain transition
  matrices.
\newblock {\em Advances in neural information processing systems}, 16.

\bibitem[Panaganti and Kalathil, 2022]{panaganti2021sample}
Panaganti, K. and Kalathil, D. (2022).
\newblock Sample complexity of robust reinforcement learning with a generative
  model.
\newblock In {\em International Conference on Artificial Intelligence and
  Statistics}, pages 9582--9602. PMLR.

\bibitem[Rahimian and Mehrotra, 2019]{rahimian2019distributionally}
Rahimian, H. and Mehrotra, S. (2019).
\newblock Distributionally robust optimization: A review.
\newblock {\em arXiv preprint arXiv:1908.05659}.

\bibitem[Rashidinejad et~al., 2021]{rashidinejad2021bridging}
Rashidinejad, P., Zhu, B., Ma, C., Jiao, J., and Russell, S. (2021).
\newblock Bridging offline reinforcement learning and imitation learning: A
  tale of pessimism.
\newblock {\em Neural Information Processing Systems (NeurIPS)}.

\bibitem[Roy et~al., 2017]{roy2017reinforcement}
Roy, A., Xu, H., and Pokutta, S. (2017).
\newblock Reinforcement learning under model mismatch.
\newblock {\em Advances in neural information processing systems}, 30.

\bibitem[Schulman et~al., 2013]{schulman2013finding}
Schulman, J., Ho, J., Lee, A.~X., Awwal, I., Bradlow, H., and Abbeel, P.
  (2013).
\newblock Finding locally optimal, collision-free trajectories with sequential
  convex optimization.
\newblock In {\em Robotics: science and systems}, volume~9, pages 1--10.
  Citeseer.

\bibitem[Shi et~al., 2022]{shi2022pessimistic}
Shi, L., Li, G., Wei, Y., Chen, Y., and Chi, Y. (2022).
\newblock Pessimistic {Q}-learning for offline reinforcement learning: Towards
  optimal sample complexity.
\newblock In {\em Proceedings of the 39th International Conference on Machine
  Learning}, volume 162, pages 19967--20025. PMLR.

\bibitem[Shi et~al., 2023]{shi2023curious}
Shi, L., Li, G., Wei, Y., Chen, Y., Geist, M., and Chi, Y. (2023).
\newblock The curious price of distributional robustness in reinforcement
  learning with a generative model.
\newblock {\em arXiv preprint arXiv:2305.16589}.

\bibitem[Sinha et~al., 2018]{sinha2018certifying}
Sinha, A., Namkoong, H., and Duchi, J. (2018).
\newblock Certifying some distributional robustness with principled adversarial
  training.
\newblock In {\em International Conference on Learning Representations}.

\bibitem[Smirnova et~al., 2019]{smirnova2019distributionally}
Smirnova, E., Dohmatob, E., and Mary, J. (2019).
\newblock Distributionally robust reinforcement learning.
\newblock {\em arXiv preprint arXiv:1902.08708}.

\bibitem[Song and Zhao, 2020]{song2020optimistic}
Song, J. and Zhao, C. (2020).
\newblock Optimistic distributionally robust policy optimization.
\newblock {\em arXiv preprint arXiv:2006.07815}.

\bibitem[Sutton and Barto, 2018]{sutton2018reinforcement}
Sutton, R.~S. and Barto, A.~G. (2018).
\newblock {\em Reinforcement learning: An introduction}.
\newblock MIT press.

\bibitem[Tamar et~al., 2014]{tamar2014scaling}
Tamar, A., Mannor, S., and Xu, H. (2014).
\newblock Scaling up robust {MDP}s using function approximation.
\newblock In {\em International conference on machine learning}, pages
  181--189. PMLR.

\bibitem[Tops{\o}e, 2007]{topsoe12007some}
Tops{\o}e, F. (2007).
\newblock Some bounds for the logarithmic function.
\newblock {\em Inequality theory and applications}, 4:137.

\bibitem[Tsybakov and Zaiats, 2009]{tsybakov2009introduction}
Tsybakov, A.~B. and Zaiats, V. (2009).
\newblock {\em Introduction to nonparametric estimation}, volume~11.
\newblock Springer.

\bibitem[Vershynin, 2018]{vershynin2018high}
Vershynin, R. (2018).
\newblock {\em High-dimensional probability: An introduction with applications
  in data science}, volume~47.
\newblock Cambridge university press.

\bibitem[Wang et~al., 2022]{wang2022reliable}
Wang, J., Gao, R., and Zha, H. (2022).
\newblock Reliable off-policy evaluation for reinforcement learning.
\newblock {\em Operations Research}.

\bibitem[Wang et~al., 2023a]{wang2023finite}
Wang, S., Si, N., Blanchet, J., and Zhou, Z. (2023a).
\newblock A finite sample complexity bound for distributionally robust
  {Q}-learning.
\newblock In {\em International Conference on Artificial Intelligence and
  Statistics}, pages 3370--3398. PMLR.

\bibitem[Wang et~al., 2023b]{wang2023sample}
Wang, S., Si, N., Blanchet, J., and Zhou, Z. (2023b).
\newblock Sample complexity of variance-reduced distributionally robust
  {Q}-learning.
\newblock {\em arXiv preprint arXiv:2305.18420}.

\bibitem[Wang and Zou, 2021]{wang2021online}
Wang, Y. and Zou, S. (2021).
\newblock Online robust reinforcement learning with model uncertainty.
\newblock {\em Advances in Neural Information Processing Systems}, 34.

\bibitem[Wiesemann et~al., 2013]{wiesemann2013robust}
Wiesemann, W., Kuhn, D., and Rustem, B. (2013).
\newblock Robust markov decision processes.
\newblock {\em Mathematics of Operations Research}, 38(1):153--183.

\bibitem[Wolff et~al., 2012]{wolff2012robust}
Wolff, E.~M., Topcu, U., and Murray, R.~M. (2012).
\newblock Robust control of uncertain markov decision processes with temporal
  logic specifications.
\newblock In {\em 2012 IEEE 51st IEEE Conference on Decision and Control
  (CDC)}, pages 3372--3379. IEEE.

\bibitem[Xie et~al., 2021]{xie2021policy}
Xie, T., Jiang, N., Wang, H., Xiong, C., and Bai, Y. (2021).
\newblock Policy finetuning: Bridging sample-efficient offline and online
  reinforcement learning.
\newblock {\em Advances in neural information processing systems}, 34.

\bibitem[Xu and Mannor, 2012]{xu2012distributionally}
Xu, H. and Mannor, S. (2012).
\newblock Distributionally robust {M}arkov decision processes.
\newblock {\em Mathematics of Operations Research}, 37(2):288--300.

\bibitem[Yan et~al., 2023]{yan2022efficacy}
Yan, Y., Li, G., Chen, Y., and Fan, J. (2023).
\newblock The efficacy of pessimism in asynchronous {Q}-learning.
\newblock {\em IEEE Transactions on Information Theory}.

\bibitem[Yang, 2017]{yang2017convex}
Yang, I. (2017).
\newblock A convex optimization approach to distributionally robust markov
  decision processes with {W}asserstein distance.
\newblock {\em IEEE Control Systems Letters}, 1(1):164--169.

\bibitem[Yang et~al., 2022]{yang2021towards}
Yang, W., Zhang, L., and Zhang, Z. (2022).
\newblock Toward theoretical understandings of robust markov decision
  processes: Sample complexity and asymptotics.
\newblock {\em The Annals of Statistics}, 50(6):3223--3248.

\bibitem[Yin et~al., 2021]{yin2021near}
Yin, M., Bai, Y., and Wang, Y.-X. (2021).
\newblock Near-optimal offline reinforcement learning via double variance
  reduction.
\newblock {\em Advances in neural information processing systems}, 34.

\bibitem[Yin and Wang, 2021]{yin2021optimal}
Yin, M. and Wang, Y.-X. (2021).
\newblock Optimal uniform {OPE} and model-based offline reinforcement learning
  in time-homogeneous, reward-free and task-agnostic settings.
\newblock {\em arXiv preprint arXiv:2105.06029}.

\bibitem[Zhang et~al., 2020]{zhang2020robust}
Zhang, H., Chen, H., Xiao, C., Li, B., Liu, M., Boning, D., and Hsieh, C.-J.
  (2020).
\newblock Robust deep reinforcement learning against adversarial perturbations
  on state observations.
\newblock {\em Advances in Neural Information Processing Systems},
  33:21024--21037.

\bibitem[Zhou et~al., 2021]{zhou2021finite}
Zhou, Z., Bai, Q., Zhou, Z., Qiu, L., Blanchet, J., and Glynn, P. (2021).
\newblock Finite-sample regret bound for distributionally robust offline
  tabular reinforcement learning.
\newblock In {\em International Conference on Artificial Intelligence and
  Statistics}, pages 3331--3339. PMLR.

\end{thebibliography}
\bibliographystyle{apalike}

\section{Preliminaries} 
Before starting, let us introduce some additional notation  useful throughout the theoretical analysis. Let $\mathrm{ess}\inf X$ denote the essential infimum of a function/variable $X$.

\subsection{Properties of the robust Bellman operator}
To begin with, we introduce the following strong duality lemma which is widely used in distributionally robust optimization when the uncertainty set is defined with respect to the KL divergence.

\begin{lemma}[\citep{hu2013kullback}, Theorem~1]\label{lem:strong-duality}
Suppose $f(x)$ has a finite moment generating function in some neighborhood around $x = 0$,
then for any $\ror >0$ and a nominal distribution $P^{\no}$, we have
\begin{equation}
	\sup_{\cP\in \unb^\ror(P^{\no})} \mathbb{E}_{X\sim \cP} [f(X)] = \inf_{\lambda \geq 0} \left\{\lambda \log \mathbb{E}_{X\sim P^{\no}}\left[\exp \left( \frac{f(X)}{\lambda} \right) \right] + \lambda  \ror  \right\}.
\end{equation}
\end{lemma}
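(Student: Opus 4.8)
The plan is to recognize the left-hand side as a convex program---maximizing the linear functional $\cP \mapsto \mathbb{E}_{X\sim \cP}[f(X)]$ over the convex KL ball $\unb^\ror(P^{\no})$---and to identify the right-hand side as its Lagrangian dual, with equality (strong duality) following from strict feasibility. First I would restrict attention to distributions $\cP$ absolutely continuous with respect to $P^{\no}$ (otherwise $\mathsf{KL}(\cP \parallel P^{\no}) = +\infty$ and such $\cP$ are infeasible), and reparametrize by the likelihood ratio $p \defn d\cP/dP^{\no} \geq 0$, so that $\mathbb{E}_{X\sim\cP}[f(X)] = \mathbb{E}_{X\sim P^{\no}}[p(X) f(X)]$ and $\mathsf{KL}(\cP \parallel P^{\no}) = \mathbb{E}_{X\sim P^{\no}}[p(X)\log p(X)]$, subject to the normalization $\mathbb{E}_{X\sim P^{\no}}[p(X)] = 1$.

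Next I would attach a multiplier $\lambda \geq 0$ to the constraint $\mathsf{KL}(\cP \parallel P^{\no}) \leq \ror$ and form the dual function
\begin{equation}
g(\lambda) \defn \sup_{p \geq 0,\; \mathbb{E}_{P^{\no}}[p]=1} \Big\{ \mathbb{E}_{X\sim P^{\no}}[p(X) f(X)] - \lambda \mathbb{E}_{X\sim P^{\no}}[p(X)\log p(X)] \Big\} + \lambda \ror.
\end{equation}
For fixed $\lambda > 0$, the inner problem is strictly concave in $p$ and separates pointwise; introducing a second multiplier for the normalization and writing down the first-order condition (equivalently, the Gibbs variational principle) shows the maximizer is the exponentially tilted distribution $p^\star(x) \propto \exp(f(x)/\lambda)$, whose normalizer is exactly the moment generating function $\mathbb{E}_{X\sim P^{\no}}[\exp(f(X)/\lambda)]$---finite by the stated hypothesis once $1/\lambda$ is small enough. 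Substituting $p^\star$ back collapses the bracketed terms and yields $g(\lambda) = \lambda \log \mathbb{E}_{X\sim P^{\no}}[\exp(f(X)/\lambda)] + \lambda \ror$, so that the claimed right-hand side is precisely $\inf_{\lambda \geq 0} g(\lambda)$.

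It then remains to rule out a duality gap, i.e., to upgrade the routine weak inequality to the equality $\sup_\cP \mathbb{E}_\cP[f] = \inf_{\lambda\geq0} g(\lambda)$. Here I would invoke Slater's condition: because $\ror > 0$, the nominal distribution $P^{\no}$ itself satisfies $\mathsf{KL}(P^{\no} \parallel P^{\no}) = 0 < \ror$, so the feasible set has nonempty interior and strong duality for this convex program holds. Alternatively, one can sidestep the gap argument entirely by starting from the Donsker--Varadhan / Gibbs representation $\lambda \log \mathbb{E}_{P^{\no}}[\exp(f/\lambda)] = \sup_{\cP \ll P^{\no}}\{\mathbb{E}_\cP[f] - \lambda \mathsf{KL}(\cP\parallel P^{\no})\}$ and swapping $\inf_\lambda$ with $\sup_\cP$ via Sion's minimax theorem, after which the inner infimum over $\lambda$ enforces exactly the constraint $\mathsf{KL}(\cP\parallel P^{\no}) \leq \ror$.

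The main obstacle is the rigorous handling of strong duality in this infinite-dimensional setting: justifying the exchange of supremum and infimum, attainment of the tilted maximizer, and the finiteness, continuity, and limiting behavior of $g(\lambda)$ as $\lambda \downarrow 0$ and $\lambda \to \infty$, all of which rest on the finite-MGF hypothesis together with the strict feasibility $\ror > 0$. In the present application $f = -\widehat{V}_{h+1}$ is bounded, so the moment generating function is automatically finite and these technical conditions are met trivially; since the lemma is quoted verbatim from \citet{hu2013kullback}, I would defer the remaining measure-theoretic details to that reference and present the Lagrangian derivation above as the conceptual skeleton.
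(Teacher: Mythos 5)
The paper never proves this statement: Lemma~\ref{lem:strong-duality} is imported verbatim as Theorem~1 of \citet{hu2013kullback}, so there is no in-paper argument to compare yours against. Your Lagrangian sketch is, in essence, the standard proof of that theorem and is correct as a skeleton: restricting to $\cP \ll P^{\no}$, reparametrizing by the likelihood ratio, collapsing the inner supremum via exponential tilting (Donsker--Varadhan/Gibbs) to $\lambda \log \mathbb{E}_{X\sim P^{\no}}[\exp(f(X)/\lambda)]$, and closing the duality gap using strict feasibility of $P^{\no}$ itself when $\ror>0$. Two points deserve sharper treatment if the argument were to stand alone. First, the endpoint $\lambda=0$: the dual objective is ill-defined there and must be read as the limit $\lim_{\lambda\downarrow 0}\lambda\log\mathbb{E}_{X \sim P^{\no}}[\exp(f(X)/\lambda)]=\mathrm{ess}\sup f$; this case genuinely occurs whenever the KL ball is large enough relative to the mass $P^{\no}$ places on the maximizing set---it is the supremum-form analogue of the condition in Lemma~\ref{lem:lambda-n-bound} under which $\lambda^{\star}=0$ and the optimal value degenerates to an essential infimum---so the infimum over $\lambda\geq 0$ cannot quietly be restricted to $\lambda>0$. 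Second, your appeal to Slater's condition is phrased in infinite-dimensional generality, where ``nonempty interior'' is not the right certificate and constraint qualifications require more care; however, every application in this paper involves distributions on the finite state space $\cS$, so the program is finite-dimensional, the KL ball is a compact convex subset of $\Delta(\cS)$, and both your Slater route and your Sion's-minimax alternative are rigorous without any measure-theoretic caveats. Deferring the fully general case to \citet{hu2013kullback} is exactly what the paper does, so your proposal is a faithful reconstruction of the omitted proof rather than a divergence from it.
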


Armed with the above lemma, it is easily verified that for any positive constant $M$ and a nominal distribution vector $P^{\no} \in \mathbb{R}^{1\times S}$ supported over the state space $\cS$, if $X(s)\in [0,M]$ for all $s\in\cS$, then 
\begin{align}\label{eq:strong-duality-mdp}
	\inf_{\cP\in \unb^\ror(P^{\no})}  \cP X = \sup_{\lambda \geq 0}  \left\{ -\lambda \log\left(P^0   \exp \left(-\frac{X}{\lambda}\right) \right) - \lambda \ror \right\}.
\end{align}

For convenience, we introduce the following lemma, paraphrased from \citet[Lemma~4]{zhou2021finite} and its proof, to further characterize several essential properties of the optimal dual value. 
\begin{lemma}[\citep{zhou2021finite}] \label{lem:lambda-n-bound}
	 Let $ X \sim P$ be a bounded random variable with $X \in [0,M]$.  Let $\ror > 0$ be any uncertainty level and the corresponding optimal dual variable be 
	 \begin{align} 
	 \lambda^{\star} \in \arg\max_{\lambda\geq 0} \; f(\lambda, P), \qquad \mbox{where~} f(\lambda, P) \defn   \left\{-\lambda \log \mathbb{E}_{X\sim P}\left[ \exp\left(\frac{-X}{\lambda}\right)\right] - \lambda  \ror  \right\}.
	 \end{align}
	 Then the optimal value $\lambda^\star$ obeys 
    \begin{align}\label{eq:lambda-n-range}
    	 \lambda^\star \in \left[0, \frac{M}{\ror}\right],
    \end{align}
    where $\lambda^\star = 0$ if and only if 
    \begin{align}\label{eq:lambda-star-0-condition}
  \log \big( \mathbb{P}(X = \mathrm{essinf} X)\big) + \ror
   \geq 0.
\end{align}
Moreover, when $\lambda^\star=0$, we have 
\begin{align}\label{eq:lambda-0-func-value}
    \lim_{\lambda \rightarrow 0} f(\lambda, P) = \lim_{\lambda \rightarrow 0}  \left\{-\lambda \log \mathbb{E}_{X\sim P}\left[ \exp\left( \frac{-X}{\lambda} \right) \right] - \lambda  \ror  \right\} = \mathrm{essinf}  X.
\end{align}
\end{lemma}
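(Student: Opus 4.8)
The plan is to treat $f(\lambda,P)$ as the Lagrangian dual function of the convex program $\inf_{\cP\in\unb^\ror(P)}\cP X$ and to extract all three claims from the behavior of $f$ near $\lambda=0$ together with its global concavity. Concretely, writing the constraint $\mathsf{KL}(\cP\parallel P)\le\ror$ and forming the Lagrangian $\cP X+\lambda(\mathsf{KL}(\cP\parallel P)-\ror)$ with $\lambda\ge 0$, the dual function is a pointwise infimum over $\cP$ of maps that are affine in $\lambda$, hence concave in $\lambda$ on $[0,\infty)$; carrying out the inner minimization via the Gibbs variational formula recovers exactly $f(\lambda,P)=-\lambda\log\mathbb{E}_{X\sim P}[e^{-X/\lambda}]-\lambda\ror$, which is the content of Lemma~\ref{lem:strong-duality}. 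With concavity in hand, the maximizer $\lambda^\star$ is governed entirely by the one-sided derivative $f'(0^+)$, so the entire analysis reduces to a Laplace-type asymptotic of the Laplace transform $\mathbb{E}[e^{-X/\lambda}]$ as $\lambda\to0^+$.

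First I would establish the limit \eqref{eq:lambda-0-func-value}. Set $m\defn\mathrm{essinf}\,X$. Since $X\ge m$ almost surely, $\mathbb{E}[e^{-X/\lambda}]\le e^{-m/\lambda}$, which gives $f(\lambda,P)\ge m-\lambda\ror$; conversely, for any $\epsilon>0$ the event $\{X\le m+\epsilon\}$ has positive probability $p_\epsilon$, so $\mathbb{E}[e^{-X/\lambda}]\ge p_\epsilon\,e^{-(m+\epsilon)/\lambda}$ and hence $f(\lambda,P)\le m+\epsilon-\lambda(\log p_\epsilon+\ror)$. Letting $\lambda\to0^+$ and then $\epsilon\to0$ sandwiches the limit at $m$, which proves \eqref{eq:lambda-0-func-value} irrespective of whether $\lambda^\star=0$, and in particular continuously extends $f$ to $\lambda=0$ with $f(0,P)=m$.

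Next I turn to the characterization \eqref{eq:lambda-star-0-condition}. Write $p_0\defn\mathbb{P}(X=m)$. When $p_0>0$, dominated convergence applied to $e^{-(X-m)/\lambda}\mathbf 1\{X>m\}$ yields $\mathbb{E}[e^{-X/\lambda}]=e^{-m/\lambda}(p_0+o(1))$, whence $f(\lambda,P)=m-\lambda(\log p_0+\ror)+o(\lambda)$ and the right-derivative is $f'(0^+)=-(\log p_0+\ror)$. When $p_0=0$, the same computation shows $\mathbb{E}[e^{-X/\lambda}]=e^{-m/\lambda}\cdot o(1)$ with the correction factor tending to $0$, forcing $f'(0^+)=+\infty$. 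By concavity, $\lambda=0$ maximizes $f$ iff $f'(0^+)\le0$, i.e. iff $\log p_0+\ror\ge0$ (which can never hold when $p_0=0$); in the complementary case $f$ strictly increases off $0$ while $f(\lambda,P)\le M-\lambda\ror\to-\infty$, so a maximizer $\lambda^\star>0$ is attained. This establishes \eqref{eq:lambda-star-0-condition}.

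Finally, for the range \eqref{eq:lambda-n-range}: from $X\le M$ we get $\mathbb{E}[e^{-X/\lambda}]\ge e^{-M/\lambda}$, so $f(\lambda,P)\le M-\lambda\ror$ for every $\lambda>0$; and since $X\ge0$ the robust value satisfies $\sup_{\lambda\ge0}f(\lambda,P)=\inf_{\cP\in\unb^\ror(P)}\cP X\ge0$. Evaluating the first inequality at the maximizer, $0\le f(\lambda^\star,P)\le M-\lambda^\star\ror$, which rearranges to $\lambda^\star\le M/\ror$. The main obstacle I anticipate is the careful Laplace-type asymptotic of $\mathbb{E}[e^{-X/\lambda}]$ as $\lambda\to0^+$: isolating the essential-infimum contribution, justifying the $o(1)$ remainder by dominated convergence, and correctly handling the atomless case $p_0=0$ to pin down the sign of $f'(0^+)$. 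Once that derivative is in place, the concavity argument and the two elementary boundedness inequalities are routine.
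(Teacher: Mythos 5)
Your proof is correct. One structural remark: the paper itself never proves this lemma --- it is imported verbatim from \citet[Lemma~4]{zhou2021finite} (the statement is explicitly labeled as paraphrased from that reference and its proof), so there is no in-paper argument to compare against. Your reconstruction is a valid, self-contained substitute, and its organization is clean: (i) the Donsker--Varadhan/Gibbs representation $f(\lambda,P)=\inf_{Q}\{\mathbb{E}_Q[X]+\lambda\,\mathsf{KL}(Q\,\|\,P)\}-\lambda\ror$ exhibits $f$ as an infimum of affine functions of $\lambda$, giving concavity for free (the cited proof instead works with explicit derivative formulas for $f$); (ii) the sandwich $m-\lambda\ror\le f(\lambda,P)\le m+\epsilon-\lambda(\log p_\epsilon+\ror)$ pins down the limit \eqref{eq:lambda-0-func-value} unconditionally, which is what legitimizes evaluating the $\arg\max$ at $\lambda=0$ in the first place; (iii) the splitting $\mathbb{E}[e^{-X/\lambda}]=e^{-m/\lambda}\bigl(p_0+\mathbb{E}[e^{-(X-m)/\lambda}\ind\{X>m\}]\bigr)$ with dominated convergence correctly yields $f'(0^+)=-(\log p_0+\ror)$ when $p_0>0$ and $f'(0^+)=+\infty$ when $p_0=0$, and combined with concavity this is exactly the characterization \eqref{eq:lambda-star-0-condition}; (iv) the bound $\lambda^\star\le M/\ror$ follows from $f(\lambda,P)\le M-\lambda\ror$ together with $\sup_\lambda f\ge 0$, where your appeal to strong duality (Lemma~\ref{lem:strong-duality}) could even be replaced by the elementary observation $\sup_\lambda f\ge\lim_{\lambda\to0^+}f=m\ge0$ from step (ii), making the argument independent of duality. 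The only caveat worth flagging is interpretive rather than mathematical: when $f'(0^+)=0$ exactly, the maximizer need not be unique, so the ``if and only if'' should be read as ``$0$ is a maximizer iff the condition holds''; this ambiguity is inherited from the lemma statement itself and your proof resolves it in the natural way.
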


\subsection{Concentration inequalities}

In light of Lemma~\ref{lem:lambda-n-bound} (cf.~\ref{eq:lambda-0-func-value}), we are interested in comparing the values of $\mathrm{essinf} X$ when $X$ is drawn from the population nominal distribution or its empirical estimate. This is supplied by the following lemma from \citet{zhou2021finite}.

\begin{lemma}[\citep{zhou2021finite}] \label{lem:empirical-essinf}
Let $ X \sim P$ be a discrete bounded random variable with $X \in [0,M]$.  Let $P_n$ denote the empirical distribution constructed from $n$ independent samples $X_1, X_2, \cdots, X_n$, and let $\widehat{X} \sim P_n$. Denote $P_{\mathsf{min}, X}$ as the smallest positive probability $ P_{\mathsf{min}, X} \defn \min\{\mathbb{P}(X = x): x\in \mathsf{supp}(X)\}$, where $\mathsf{supp}(X)$ is the support of $X$. Then for any $\delta \in (0,1)$, with probability at least $1 -  \delta $, we have
   \begin{align}\label{eq:essinf-of-P-n}
      \min_{i\in[n]} X_i = \mathrm{essinf} \widehat{X} =  \mathrm{essinf}  X,
   \end{align}
   as long as 
   \begin{align}
    n \geq  - \frac{\log(2 /\delta)}{\log (1- P_{\mathsf{min},X})}.
   \end{align}
 \end{lemma}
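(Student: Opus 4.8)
The plan is to reduce the claimed three-way identity in \eqref{eq:essinf-of-P-n} to a single high-probability event: that the smallest atom of the law $P$ is actually observed among the $n$ independent draws. First I would dispose of the first equality deterministically. Since the empirical measure $P_n$ places positive mass only on the observed values $\{X_1,\dots,X_n\}$ (with mass at least $1/n$ on each distinct observed value), its essential infimum---the smallest point carrying positive probability---coincides exactly with the smallest observed sample, i.e.\ $\mathrm{essinf}\,\widehat{X} = \min_{i\in[n]} X_i$. No randomness enters this step.

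Next I would set $x^\star \defn \mathrm{essinf}\,X$, the smallest value in $\mathsf{supp}(X)$. Because every sample lies in the support, we have $\min_{i\in[n]} X_i \ge x^\star$ surely; hence the only way the remaining equality $\min_{i\in[n]} X_i = x^\star$ can fail is through the event
\[
\mathcal{E} \defn \Big\{ \min_{i\in[n]} X_i > x^\star \Big\} = \big\{ X_i \neq x^\star \text{ for all } i \in [n] \big\},
\]
that is, the event that none of the draws hits the atom located at $x^\star$.

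The core step is a direct tail bound on $\mathcal{E}$. By definition of $P_{\mathsf{min},X}$ and the fact that $x^\star \in \mathsf{supp}(X)$, we have $\mathbb{P}(X = x^\star) \ge P_{\mathsf{min},X}$, so independence of the samples gives
\[
\mathbb{P}(\mathcal{E}) = \big(1 - \mathbb{P}(X = x^\star)\big)^n \le (1 - P_{\mathsf{min},X})^n = \exp\!\big( n \log(1 - P_{\mathsf{min},X}) \big).
\]
I would then invoke the sample-size hypothesis: since $\log(1 - P_{\mathsf{min},X}) < 0$, the condition $n \ge -\log(2/\delta)/\log(1 - P_{\mathsf{min},X})$ is equivalent to $n \log(1 - P_{\mathsf{min},X}) \le \log(\delta/2)$, which yields $\mathbb{P}(\mathcal{E}) \le \delta/2 \le \delta$. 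On the complement of $\mathcal{E}$ we have $\min_{i\in[n]} X_i = x^\star$, and chaining this with the deterministic identity from the first paragraph establishes all three equalities in \eqref{eq:essinf-of-P-n} with probability at least $1 - \delta/2 \ge 1 - \delta$.

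There is no substantial obstacle in this argument---it is essentially a coupon-collector observation about sampling a discrete law's minimal atom. The only points requiring care are bookkeeping ones: correctly tracking the sign of $\log(1 - P_{\mathsf{min},X})$ when rearranging the sample-size inequality, and noting that the harmless factor-of-two slack converts the failure probability $\delta/2$ into the stated $1-\delta$ success guarantee.
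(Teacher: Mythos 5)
Your proof is correct: the deterministic identification of $\mathrm{essinf}\,\widehat{X}$ with $\min_{i\in[n]} X_i$, the reduction of failure to the event that the atom at $\mathrm{essinf}\,X$ (which carries mass at least $P_{\mathsf{min},X}$) is missed by all $n$ draws, and the bound $(1-P_{\mathsf{min},X})^n \le \delta/2$ under the stated sample-size condition together establish the claim. The paper itself does not reprove this lemma---it is imported by citation from \citet{zhou2021finite}---and your argument is essentially the same standard one underlying that reference, so there is nothing further to reconcile.
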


We next gather a few elementary facts about the Binomial distribution, which will be useful throughout the proof.
\begin{lemma}[Chernoff's inequality]\label{lem:binomial-small}
Suppose $N\sim \mathsf{Binomial}(n,p)$, where $n\geq 1$ and $p\in [0,1)$. For some universal constant $c_{\mathsf{f}}>0$, we have  
  \begin{align}
    \mathbb{P}\left( \left| N/n-p\right| \geq pt\right) & \leq \exp\left(-c_{\mathsf{f}} np t^2\right), \qquad \forall t\in[0,1].  \label{equ:binomial-1} 
  \end{align}
\end{lemma}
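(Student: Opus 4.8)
The plan is to treat this as a standard two-sided multiplicative Chernoff bound and prove it by the exponential-moment method. First I would write $N = \sum_{i=1}^n X_i$ with $X_i \overset{\mathrm{i.i.d.}}{\sim} \mathsf{Bernoulli}(p)$, so that $\mathbb{E}[N] = np$. The event $\{|N/n - p| \ge pt\}$ is contained in the union of the upper deviation event $\{N \ge (1+t)np\}$ and the lower deviation event $\{N \le (1-t)np\}$, so by the union bound it suffices to control each one separately.

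For the upper tail, I would apply Markov's inequality to $e^{\lambda N}$ for $\lambda > 0$, using the closed-form moment generating function $\mathbb{E}[e^{\lambda N}] = (1 - p + p e^{\lambda})^n$ of a Binomial variable:
\begin{equation}
\mathbb{P}\big(N \ge (1+t)np\big) \le \inf_{\lambda > 0} e^{-\lambda (1+t)np}\,(1 - p + p e^{\lambda})^n.
\end{equation}
Optimizing over $\lambda$ yields the Chernoff--Cram\'er bound $\big(e^{t}/(1+t)^{1+t}\big)^{np}$, and a symmetric computation handles the lower tail. The one remaining ingredient is the elementary scalar inequality $e^{t}/(1+t)^{1+t} \le e^{-t^2/3}$ valid for $t \in [0,1]$, together with its lower-tail analogue, giving
\begin{align}
\mathbb{P}\big(N \ge (1+t)np\big) &\le \exp\Big(-\tfrac{np\,t^2}{3}\Big), &
\mathbb{P}\big(N \le (1-t)np\big) &\le \exp\Big(-\tfrac{np\,t^2}{2}\Big).
\end{align}

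Combining the two tails through the union bound gives $\mathbb{P}(|N/n - p| \ge pt) \le 2\exp(-np\,t^2/3)$, and the claimed form $\exp(-c_{\mathsf{f}}\, np\, t^2)$ then follows by absorbing the leading factor of $2$ into a sufficiently small universal constant $c_{\mathsf{f}}$. I do not expect any genuine obstacle here, since this is a textbook concentration result; the only point requiring care is verifying the scalar bound $e^{t}/(1+t)^{1+t} \le e^{-t^2/3}$ over $t \in [0,1]$ — which reduces, after taking logarithms, to checking $\log(1+t) \ge \tfrac{2}{3}t$ on $[0,1]$ and is precisely where the restriction $t \in [0,1]$ is used — so that the optimized (relative-entropy) exponent is bounded below by a constant multiple of $p\,t^2$ uniformly on the stated range.
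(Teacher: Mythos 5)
Your route is the standard Chernoff--Cram\'er argument, and every step is sound up to the last one; but the final absorption step is a genuine gap. From the union bound you have $\mathbb{P}\left( |N/n-p| \geq pt\right) \leq 2\exp(-npt^2/3)$, and you then claim the factor $2$ can be ``absorbed into a sufficiently small universal constant $c_{\mathsf{f}}$.'' That deduction requires $2\exp(-npt^2/3) \leq \exp(-c_{\mathsf{f}} npt^2)$, which is equivalent to $(1/3-c_{\mathsf{f}})\,npt^2 \geq \log 2$ --- and this fails whenever $npt^2$ is small. Since the lemma is asserted for \emph{all} $t\in[0,1]$ with no lower bound on $npt^2$, the multiplicative factor $2$ is not a cosmetic constant that can be pushed into the exponent: the exponent itself can be arbitrarily close to $0$ over the stated parameter range, while the prefactor $2$ stays fixed.

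In fact, no proof can close this gap, because the statement as written (two-sided, with no leading constant) is false in full generality: take $n=1$ and $p=1/2$. Then $|N/n-p| = 1/2$ with probability one, so $\mathbb{P}\left(|N/n-p| \geq pt\right) = 1$ for every $t\in[0,1]$, whereas $\exp(-c_{\mathsf{f}} npt^2) < 1$ for every $t>0$ and any $c_{\mathsf{f}}>0$. For what it is worth, the paper does not actually prove this lemma --- it is quoted as a standard fact --- and the discrepancy is harmless downstream: in the proof of \eqref{eq:control-I-final} the lemma is invoked with $t^2 = \log(\frac{KHS}{\delta})/\big(c_{\mathsf{f}} N_h(s,a) P^0_h(s'\mymid s,a)\big)$, so that $npt^2 = \log(\frac{KHS}{\delta})/c_{\mathsf{f}}$ is large and the factor $2$ (equivalently, a slightly smaller exponent constant) is absorbable there. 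If you want a correct standalone statement to prove, either retain the leading factor $2$, or state only the one-sided tails, or add the hypothesis $npt^2 \geq c$ for some absolute constant $c>0$; under any of these fixes your argument goes through verbatim.
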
 

\begin{lemma}[{\citep[Lemma~8]{shi2022pessimistic}}]\label{lem:binomial} 
  Suppose $N\sim \mathsf{Binomial}(n,p)$, where $n\geq 1$ and $p\in [0,1]$. For any $\delta\in (0,1)$, we have
\begin{subequations}
  \label{equ:binomial-all}
  \begin{align}
    N&\geq \frac{np}{8\log\left(\frac{1}{\delta}\right)} \qquad \text{ if } np \geq 8\log\left(\frac{1}{\delta}\right), \label{equ:binomial-3}\\
    N &\leq \begin{cases}
 e^2np & \text{ if } np \geq \log\left(\frac{1}{\delta}\right), \\ 
2e^2 \log\left(\frac{1}{\delta}\right) & \text{ if } np \leq 2\log\left(\frac{1}{\delta}\right)     
\end{cases}  \label{equ:binomial-2}
  \end{align}
\end{subequations}
 hold with probability at least $1 - 4\delta$. 
 
\end{lemma}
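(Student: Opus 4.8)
The plan is to establish each of the three displayed bounds separately via Chernoff-type estimates for the Binomial distribution, writing $\mu \defn np = \mathbb{E}[N]$, and then to combine the failure probabilities through a union bound. Two elementary moment-generating-function inequalities will carry the whole argument: the lower-tail bound $\mathbb{P}(N \le (1-t)\mu) \le \exp(-\mu t^2/2)$ valid for $t \in [0,1]$, and the large-deviation upper-tail bound $\mathbb{P}(N \ge a) \le (e\mu/a)^a$ valid for any $a \ge \mu$. Both follow from the standard Chernoff recipe $\mathbb{P}(N \ge a) \le \inf_{\theta \ge 0} e^{-\theta a}\mathbb{E}[e^{\theta N}]$ together with $\mathbb{E}[e^{\theta N}] = (1-p+pe^{\theta})^n \le \exp(\mu(e^{\theta}-1))$, and I would record them at the outset.

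For the lower bound \eqref{equ:binomial-3}, I would apply the lower-tail inequality with $t = 1/2$, giving $\mathbb{P}(N \le \mu/2) \le \exp(-\mu/8)$. Under the hypothesis $\mu \ge 8\log(1/\delta)$ the right-hand side is at most $\delta$, so $N \ge \mu/2$ holds with probability at least $1-\delta$. It then remains to note that $\mu/2 \ge \frac{\mu}{8\log(1/\delta)}$; this is where I would use that the claimed probability $1-4\delta$ is only informative when $\delta \le 1/4$, for which $8\log(1/\delta) \ge 8\log 4 > 2$, so the stated (weaker) lower bound follows at once.

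For the upper bound \eqref{equ:binomial-2}, I would split according to the two regimes. When $\mu \ge \log(1/\delta)$, taking $a = e^2\mu$ in the large-deviation bound yields $\mathbb{P}(N \ge e^2\mu) \le (e\mu/(e^2\mu))^{e^2\mu} = e^{-e^2\mu} \le \delta^{e^2} \le \delta$. When $\mu \le 2\log(1/\delta)$, I would take $a = 2e^2\log(1/\delta)$, which satisfies $a \ge e^2\mu \ge \mu$, so that $e\mu/a \le 1/e$ and hence $\mathbb{P}(N \ge a) \le (1/e)^a = \delta^{2e^2} \le \delta$, giving $N \le 2e^2\log(1/\delta)$ with probability at least $1-\delta$. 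Finally, a union bound over the failure events, each of probability at most $\delta$, shows that the displayed bounds hold simultaneously with probability at least $1-4\delta$, the spare slack in the constant absorbing the few events counted.

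The argument is routine, so the only real obstacle is the upper-tail estimate: the deviations in \eqref{equ:binomial-2} are by a multiplicative factor $e^2 > 2$ above the mean, which lies outside the range $t \in [0,1]$ covered by the two-sided Chernoff bound of Lemma~\ref{lem:binomial-small}. I would therefore need the Poisson-type tail $(e\mu/a)^a$ rather than its sub-Gaussian form, and would verify the crossover condition $a \ge \mu$ in each regime so that this bound applies. The reconciliation of the stated lower bound $\frac{np}{8\log(1/\delta)}$ with the natural $\mu/2$ estimate is the other place requiring a moment's care, handled as above by restricting attention to the regime $\delta \le 1/4$ where the probability guarantee is non-vacuous.
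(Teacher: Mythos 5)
Your proof is correct. Note, however, that the paper itself contains no proof of this lemma---it is imported verbatim from \citet[Lemma~8]{shi2022pessimistic}---so there is no in-paper argument to compare against; your derivation stands as a valid self-contained one. The key steps all check out: the multiplicative Chernoff lower tail $\mathbb{P}(N \le \mu/2) \le \exp(-\mu/8)$ gives \eqref{equ:binomial-3}, and your reduction from the natural bound $N \ge \mu/2$ to the weaker stated constant $\mu/(8\log(1/\delta))$ is legitimate because the guarantee $1-4\delta$ is vacuous for $\delta \ge 1/4$, so one may assume $8\log(1/\delta) \ge 8\log 4 > 2$. The Poisson-type tail $\mathbb{P}(N \ge a) \le (e\mu/a)^a$ for $a \ge \mu$ correctly handles both regimes of \eqref{equ:binomial-2}: with $a = e^2\mu$ it yields failure probability $\delta^{e^2} \le \delta$ when $\mu \ge \log(1/\delta)$, and with $a = 2e^2\log(1/\delta)$ the verification $a \ge e^2\mu$ (from $\mu \le 2\log(1/\delta)$) gives $e\mu/a \le 1/e$ and failure probability $\delta^{2e^2} \le \delta$. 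You are also right that the sub-Gaussian form of Lemma~\ref{lem:binomial-small} cannot reach deviations of size $e^2\mu$, so the switch to the large-deviation form is genuinely needed. The union bound over at most three events costs $3\delta \le 4\delta$, completing the claim.
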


\subsection{Kullback-Leibler (KL) divergence}
We next introduce some useful facts about the Kullback-Leibler (KL) divergence for two distributions $P$ and $Q$, denoted as $\mathsf{KL}(P \parallel Q)$. 
Denoting $\mathsf{Ber}(p)$(resp.~$\mathsf{Ber}(q)$) as the Bernoulli distribution with mean $p$ (resp.~$q$), we introduce
\begin{align}    \label{eq:defn-KL-bernoulli}
    \mathsf{KL}\big(\mathsf{Ber}(p) \parallel \mathsf{Ber}(q)\big) &\defn p\log\frac{p}{q} +(1-p)\log\frac{1-p}{1-q},   
\end{align}
which represents the KL divergence from $\mathsf{Ber}(p)$ to $\mathsf{Ber}(q)$. We now introduce the following lemma.

\begin{lemma}
    \label{lem:KL-key-result} 
    For any $p, q \in \left[\frac{1}{2},1\right)$ and $p >q$, it holds that 
    \begin{align} 
    \mathsf{KL}\big(\mathsf{Ber}(p) \parallel \mathsf{Ber}(q)\big)  \leq \mathsf{KL}\big(\mathsf{Ber}(q) \parallel \mathsf{Ber}(p)\big) 
      \leq   \frac{(p-q)^2}{p(1-p)}. \label{eq:KL-dis-key-result} 
    \end{align}
    Moreover, for any $0 \leq x < y < q$, it holds
\begin{align}\label{eq:show-subset-of-sigma-sigma'}
 \mathsf{KL}\left(\mathsf{Ber}\left(x\right) \parallel \mathsf{Ber}(q)\right) > \mathsf{KL}\left(\mathsf{Ber}\left(y\right) \parallel \mathsf{Ber}(q)\right).
\end{align}
\end{lemma}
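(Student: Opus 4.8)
The plan is to treat all three inequalities as statements about single-variable functions obtained by fixing $q$ and varying $p$ (or $t$), and to establish them by computing one or two derivatives and tracking signs. The natural object is $h(t)\defn \mathsf{KL}\big(\mathsf{Ber}(t)\parallel\mathsf{Ber}(q)\big)$, together with the ``reversed'' map $p\mapsto\mathsf{KL}\big(\mathsf{Ber}(q)\parallel\mathsf{Ber}(p)\big)$.

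First I would dispatch the \emph{upper bound} $\mathsf{KL}(\mathsf{Ber}(q)\parallel\mathsf{Ber}(p))\le\frac{(p-q)^2}{p(1-p)}$, which is the easiest and in fact holds for all valid $p,q$ (not only on $[\tfrac12,1)$). The key observation is the elementary inequality $\log u\le u-1$, which yields the standard fact $\mathsf{KL}(Q\parallel P)\le\chi^2(Q\parallel P)$ for any two distributions. Specializing to the Bernoulli case and computing directly,
\begin{align*}
\chi^2\big(\mathsf{Ber}(q)\parallel\mathsf{Ber}(p)\big)=\frac{(q-p)^2}{p}+\frac{(p-q)^2}{1-p}=\frac{(p-q)^2}{p(1-p)},
\end{align*}
which gives the claimed bound immediately.

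For the \emph{first} inequality I would study $g(p)\defn \mathsf{KL}(\mathsf{Ber}(q)\parallel\mathsf{Ber}(p))-\mathsf{KL}(\mathsf{Ber}(p)\parallel\mathsf{Ber}(q))$ on $p\in[q,1)$ and show $g\ge0$. Using $\mathsf{KL}(\mathsf{Ber}(p)\parallel\mathsf{Ber}(q))=p\log\frac pq+(1-p)\log\frac{1-p}{1-q}$, direct differentiation gives $\frac{d}{dp}\mathsf{KL}(\mathsf{Ber}(p)\parallel\mathsf{Ber}(q))=\log\frac{p(1-q)}{q(1-p)}$ and $\frac{d}{dp}\mathsf{KL}(\mathsf{Ber}(q)\parallel\mathsf{Ber}(p))=\frac{p-q}{p(1-p)}$, from which I expect the second derivative of $g$ to collapse into the clean product form
\begin{align*}
g''(p)=\frac{(p-q)(2p-1)}{\big[p(1-p)\big]^2}.
\end{align*}
Since $p\ge q\ge\tfrac12$ forces both numerator factors to be nonnegative, we get $g''\ge0$, so $g'$ is nondecreasing; combined with the boundary values $g(q)=0$ and $g'(q)=0$ this yields $g'\ge0$ and hence $g\ge0$ on $[q,1)$, which is exactly the first inequality. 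This convexity-plus-boundary argument is the part I expect to be the main obstacle, since it hinges on the two derivative computations simplifying into the factored form $(p-q)(2p-1)$, and the sign conclusion genuinely uses the hypothesis $q\ge\tfrac12$ rather than just $p>q$.

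Finally, for the strict monotonicity claim I would reuse the derivative $h'(t)=\log\frac{t(1-q)}{q(1-t)}$. Because $t\mapsto\frac{t}{1-t}$ is strictly increasing, every $t<q$ gives $h'(t)<0$, so $h$ is strictly decreasing on $[0,q)$ (with $t=0$ read off via the convention $0\log0=0$). Hence $0\le x<y<q$ forces $h(x)>h(y)$, which is the desired inequality; the only mild care needed is at the endpoint $x=0$, where I would invoke continuity of $h$ on $[0,q)$ rather than the divergent derivative at $0$.
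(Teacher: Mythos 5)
Your proposal is correct, and it is genuinely more self-contained than the paper's own treatment. For the chain of inequalities \eqref{eq:KL-dis-key-result}, the paper gives no argument at all---it simply cites an external result (\citet[Lemma~10]{li2022settling})---whereas you prove both inequalities from scratch. Your upper bound via the generic domination $\mathsf{KL}(Q\parallel P)\le \chi^2(Q\parallel P)$ (from $\log u\le u-1$) together with the exact Bernoulli identity $\chi^2\big(\mathsf{Ber}(q)\parallel\mathsf{Ber}(p)\big)=\frac{(p-q)^2}{p(1-p)}$ is correct, and as you note it holds for all valid $p,q$. Your convexity argument for the ordering $\mathsf{KL}\big(\mathsf{Ber}(p)\parallel\mathsf{Ber}(q)\big)\le\mathsf{KL}\big(\mathsf{Ber}(q)\parallel\mathsf{Ber}(p)\big)$ also checks out: the numerator of $g''(p)$ is $2p^2-2pq+q-p=(p-q)(2p-1)$, exactly the factored form you predicted, so $g''\ge 0$ on $[q,1)$ when $q\ge\tfrac12$, and together with $g(q)=g'(q)=0$ this gives $g\ge 0$. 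Your remark that the hypothesis $q\ge\tfrac12$ is genuinely needed is also right (e.g., for $p=0.4$, $q=0.1$ the ordering reverses). For the strict monotonicity claim \eqref{eq:show-subset-of-sigma-sigma'}, your argument coincides with the paper's: both reduce to $\frac{\partial}{\partial x}\mathsf{KL}\big(\mathsf{Ber}(x)\parallel\mathsf{Ber}(q)\big)=\log\frac{x(1-q)}{q(1-x)}<0$ for $0<x<q$; your extra care at the endpoint $x=0$ (using continuity rather than the divergent derivative there) is a minor refinement the paper glosses over. In short, what your route buys is a fully verifiable, citation-free proof of the first part at the cost of a longer argument; what the paper's approach buys is brevity.
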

 
 \begin{proof}
 The first half of this lemma is proven in \citet[Lemma~10]{li2022settling}. For the latter half, it follows from that the function
\begin{align*}
    f(x, q) \defn \mathsf{KL}\left(\mathsf{Ber}\left(x\right) \parallel \mathsf{Ber}(q)\right) 
\end{align*}
is monotonically decreasing for all $x\in(0,q]$, since its derivative with respect to $x$ satisfies 
$\frac{\partial f(x, q)}{\partial x} = \log\frac{x}{q} + \log \frac{1-q}{1-x} < 0$.
\end{proof}

\section{Analysis: episodic finite-horizon RMDPs}\label{sec:analysis}

\subsection{Proof of Theorem~\ref{thm:dro-upper-finite}}

Before starting, we introduce several additional notation that will be useful in the analysis.
First, we denote the state-action space covered by the behavior policy $\pib$ in the nominal model $P^{\no}$ as  
\begin{align}\label{eq:cover-space-pib}
	\cC^{\mathsf{b}} = \left\{(h,s,a): \myrho_h(s, a ) >0 \right\}. 
\end{align}
Moreover, we recall the definition in \eqref{eq:P-min-hat-def} and define a similar one based on the exact nominal model $P^{\no}$ as
\begin{align}\label{eq:P-min-pib} 
	P_{\mathsf{min},h}(s,a) \defn  \min_{s'} \Big\{P_h^{\no}(s' \mymid s,a): \; P_h^{\no}(s' \mymid s,a)>0 \Big\}.
\end{align} 
Clearly, by comparing with the definitions \eqref{eq:P-min-star-def} and \eqref{eq:def-P-min-b}, it holds that 
\begin{equation}\label{eq:link_minpall_pmin}
\minpall = \min_{h,s}\; P_{\mathsf{min},h}(s,\pi_h^{\star}(s)) , \qquad P_{\mathsf{min}}^{\mathsf{b}} = \min_{(h,s,a)\in \cC^{\mathsf{b}} }\;  P_{\mathsf{min},h}(s,a). 
\end{equation} 
For any time step $h \in [H]$, we denote the set of possible state occupancy distributions  associated with the optimal policy $\pi^\star$ in a model  within the uncertainty set $P\in \unb^{\ror} \left(P^{\no} \right)$ as
\begin{align} \label{eq:def-D-star-h}
	\cD^\star_h \defn \left\{ \left[d_h^{\star,P}(s)\right]_{s\in\cS} : P \in \unb^{\ror} \left(P^{\no} \right) \right\} = \left\{ \left[d_h^{\star,P}\big(s,\pi_h^\star(s) \big)\right]_{s\in\cS} : P \in \unb^{\ror} \left(P^{\no} \right)\right\},
\end{align}
where the second equality is due to the fact that $\pi^\star$ is chosen to be deterministic.

With these in place, the proof of Theorem~\ref{thm:dro-upper-finite} is separated into several key steps, as outlined below.
\paragraph{Step 1: establishing the pessimism property.} 
To achieve this claim, we heavily count on the following lemma whose proof can be found in Appendix~\ref{proof:lemma:dro-b-bound}.  
\begin{lemma}\label{lemma:dro-b-bound}
Instate the assumptions in Theorem~\ref{thm:dro-upper-finite}.
Then for all $(h,s,a)\in  [H]\times \cS\times \cA$, consider any vector $V\in \mathbb{R}^S$ independent of $\widehat{P}^0_{h,s,a}$ obeying $\|V\|_{\infty} \le H$. With probability at least $1- \delta$, one has 
\begin{align}\label{eq:dro-b-bound}
	&\left| \inf_{ \cP \in \unb^{\sigma}(\widehat{P}_{h,s,a}^0)} \cP V - \inf_{ \cP \in \unb^{\sigma}(P^{\no}_{h,s,a})}  \cP V \right| \leq b_h (s,a)
\end{align}
with $b_h (s,a)$ given in \eqref{def:bonus-dro}. Moreover, for all $(h,s,a)\in \cC^{\mathsf{b}} $, with probability at least $1-\delta$, one has
\begin{align}
  	   \frac{P_{\mathsf{min},h}(s,a)}{8  \log(KHS / \delta)} \leq   \widehat{P}_{\mathsf{min},h}(s,a)  \leq e^2 P_{\mathsf{min},h}(s,a) . \label{eq:convert-pmin-to-estimation}
\end{align}
 
\end{lemma}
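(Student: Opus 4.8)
The plan is to prove the two claims in the reverse of the order stated, since the comparison~\eqref{eq:convert-pmin-to-estimation} is needed to put the bound~\eqref{eq:dro-b-bound} into its data-driven form. For~\eqref{eq:convert-pmin-to-estimation}, I would argue entrywise. Fix $(h,s,a)\in\cC^{\mathsf{b}}$; by Lemma~\ref{lemma:D0-property} and the burn-in condition~\eqref{eq:dro-b-bound-N-condition} together with the identity $P_{\mathsf{min}}^{\mathsf{b}}=\min_{(h,s,a)\in\cC^{\mathsf{b}}}P_{\mathsf{min},h}(s,a)$, one has $N_h(s,a)P^{\no}_h(s'\mymid s,a)\gtrsim\log(KHS/\delta)$ for every $s'$ with $P^{\no}_h(s'\mymid s,a)>0$. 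Applying Lemma~\ref{lem:binomial} to the Binomial count $N_h(s,a)\widehat P^0_h(s'\mymid s,a)$ then yields $\tfrac{P^{\no}_h(s'\mymid s,a)}{8\log(KHS/\delta)}\le\widehat P^0_h(s'\mymid s,a)\le e^2P^{\no}_h(s'\mymid s,a)$ simultaneously for all such $s'$ (in particular every positive-probability coordinate is observed, so the supports of $P^{\no}_{h,s,a}$ and $\widehat P^0_{h,s,a}$ coincide). Taking the minimum over the common support and union bounding over all $(h,s,a,s')$ gives~\eqref{eq:convert-pmin-to-estimation}.

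For~\eqref{eq:dro-b-bound}, the starting point is strong duality: by \eqref{eq:strong-duality-mdp} (a consequence of Lemma~\ref{lem:strong-duality}), for a fixed $V$ with $0\le V\le H$ (the case $\|V\|_\infty\le H$ reduces to this via the translation-equivariance $\inf_{\cP}\cP(V+c\one)=\inf_{\cP}\cP V+c$) both functionals admit the form $\inf_{\cP\in\unb^\ror(Q)}\cP V=\sup_{\lambda\ge0}f(\lambda,Q)$ with $f(\lambda,Q)\defn-\lambda\log\!\big(\sum_{s'}Q(s')e^{-V(s')/\lambda}\big)-\lambda\ror$ for $Q\in\{\widehat P^0_{h,s,a},P^{\no}_{h,s,a}\}$. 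Writing $g(Q)=\sup_{\lambda\ge0}f(\lambda,Q)$ and letting $\widehat\lambda,\lambda^\circ$ be the respective maximizers, a sandwiching argument gives $|g(\widehat P^0_{h,s,a})-g(P^{\no}_{h,s,a})|\le\max_{\lambda\in\{\widehat\lambda,\lambda^\circ\}}|f(\lambda,\widehat P^0_{h,s,a})-f(\lambda,P^{\no}_{h,s,a})|$, and Lemma~\ref{lem:lambda-n-bound} (cf.~\eqref{eq:lambda-n-range}) confines both maximizers to $[0,H/\ror]$, so it suffices to control the dual gap uniformly over $\lambda\in[0,H/\ror]$. For fixed $\lambda>0$ the gap equals $\lambda\big|\log Z_{\widehat P}(\lambda)-\log Z_{P}(\lambda)\big|$ with $Z_Q(\lambda)=\sum_{s'}Q(s')e^{-V(s')/\lambda}$; I would normalize the weights $w(s')=e^{-V(s')/\lambda}\in(0,1]$ by their maximum over $\mathrm{supp}(P^{\no}_{h,s,a})$ (which cancels in the log-ratio), so that the normalized partition function obeys $\bar Z_P(\lambda)\ge P_{\mathsf{min},h}(s,a)$ since the maximizing coordinate alone contributes at least $P_{\mathsf{min},h}(s,a)$. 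As the normalized weights lie in $[0,1]$ with variance at most their mean, a Bernstein inequality on $\bar Z_{\widehat P}(\lambda)=\tfrac1{N_h(s,a)}\sum_i\bar w(s_i')$ gives the \emph{relative} deviation $|\bar Z_{\widehat P}-\bar Z_P|/\bar Z_P\lesssim\sqrt{\log(\cdot)/(P_{\mathsf{min},h}(s,a)N_h(s,a))}$ once the burn-in makes $P_{\mathsf{min},h}(s,a)N_h(s,a)\gtrsim\log(\cdot)$; multiplying the resulting log-difference by $\lambda\le H/\ror$ yields $\tfrac{H}{\ror}\sqrt{\log(\cdot)/(P_{\mathsf{min},h}(s,a)N_h(s,a))}$, which becomes $b_h(s,a)$ after replacing $P_{\mathsf{min},h}(s,a)$ by $\widehat P_{\mathsf{min},h}(s,a)$ through~\eqref{eq:convert-pmin-to-estimation}. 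For uncovered pairs ($N_h(s,a)=0$) the claim is trivial since $\cP V\in[0,H]$ forces the left-hand side below $H=b_h(s,a)$.

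The main obstacle is upgrading the fixed-$\lambda$ estimate to a uniform-in-$\lambda$ bound, because $\widehat\lambda$ is data-dependent and the dual objective degenerates as $\lambda\to0$: by \eqref{eq:lambda-0-func-value} one has $f(\lambda,Q)\to\mathrm{ess}\inf_QV$, and the $\lambda$-Lipschitz constant of $f(\cdot,Q)$ blows up there. I would split $[0,H/\ror]$ at a threshold $\lambda_0$. For $\lambda\ge\lambda_0$ the objective is Lipschitz in $\lambda$ with a constant controlled by $H,\ror,\lambda_0$, so a net over $\lambda$ combined with a union bound of the per-$\lambda$ concentration (the net cardinality contributing only logarithmic factors) handles this regime. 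For $\lambda<\lambda_0$ I would invoke Lemma~\ref{lem:empirical-essinf} to conclude $\mathrm{ess}\inf_{\widehat P^0_{h,s,a}}V=\mathrm{ess}\inf_{P^{\no}_{h,s,a}}V$ with high probability; since each of $f(\lambda,\widehat P^0_{h,s,a})$ and $f(\lambda,P^{\no}_{h,s,a})$ stays within $\lambda\big(\log(1/P_{\mathsf{min},h}(s,a))+\ror\big)$ of this common essential infimum, their gap is negligible once $\lambda_0$ is small. Balancing $\lambda_0$ so that both regimes meet the target rate is the delicate bookkeeping, and it is exactly where the burn-in condition~\eqref{eq:dro-b-bound-N-condition} is consumed. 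Collecting the two regimes, a final union bound over $(h,s,a)$ delivers~\eqref{eq:dro-b-bound}.
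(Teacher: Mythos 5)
Your proposal is correct, and its first half (entrywise Binomial concentration, coincidence of supports, a union bound, then taking minima over the common support) is exactly the paper's proof of \eqref{eq:convert-pmin-to-estimation}. For \eqref{eq:dro-b-bound} you share the paper's skeleton---strong duality, confinement of both dual maximizers to $[0,H/\ror]$ via Lemma~\ref{lem:lambda-n-bound}, the sandwich $\big|\sup_\lambda f(\lambda,\widehat{P}^0_{h,s,a})-\sup_\lambda f(\lambda,P^0_{h,s,a})\big|\le\max_{\lambda\in\{\widehat{\lambda},\lambda^\circ\}}\big|f(\lambda,\widehat{P}^0_{h,s,a})-f(\lambda,P^0_{h,s,a})\big|$, and Lemma~\ref{lem:empirical-essinf} for the degenerate (near-)zero-maximizer regime---but you diverge at the key concentration step. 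You run Bernstein on the normalized partition function at each \emph{fixed} $\lambda$ and then buy uniformity over the data-dependent $\widehat{\lambda}$ with an $\varepsilon$-net on $[\lambda_0,H/\ror]$, Lipschitz continuity, and a threshold split. The paper needs no net: its inequality \eqref{eq:control-I-final} bounds the relative error $\frac{|(\widehat{P}^0_{h,s,a}-P^0_{h,s,a})\cdot w|}{P^0_{h,s,a}\cdot w}$ by $\max_{s'\in \mathsf{supp}(P^0_{h,s,a})}\frac{|\widehat{P}^0_h(s'\mymid s,a)-P^0_h(s'\mymid s,a)|}{P^0_h(s'\mymid s,a)}$ for an \emph{arbitrary} positive weight vector $w$ (the elementary fact $\sum_i a_i\le(\max_i a_i/b_i)\sum_i b_i$), so a single entrywise Chernoff event (Lemma~\ref{lem:binomial-small}) is simultaneously valid for every $\lambda$ and every $V$. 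That algebraic trick is what delivers the clean $\log(KHS/\delta)$ factor in \eqref{def:bonus-dro}; your route instead inherits the logarithm of the net cardinality, which is polynomial in $N_h(s,a)$, $1/P_{\mathsf{min},h}(s,a)$, and $(1+\ror)$, and matching the stated penalty up to the universal constant $\cb$ then requires discharging the bookkeeping you allude to: on $\cC^{\mathsf{b}}$ the burn-in \eqref{eq:dro-b-bound-N-condition} gives $\log(1/P_{\mathsf{min},h}(s,a))\lesssim\log K$, and whenever $\ror\gtrsim\log(1/P_{\mathsf{min},h}(s,a))$ both maximizers vanish by \eqref{eq:lambda-star-0-condition}, so the net regime is vacuous and $\log(1+\ror)$ stays at the $\log\log$ level. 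With that done your argument proves the lemma; it is heavier than the paper's, but it is the more standard empirical-process template, and it is essentially the style of covering argument the paper itself resorts to in the infinite-horizon counterpart (Lemma~\ref{lemma:dro-b-bound-infinite}), where the $(1+\ror)N^3$ inside the logarithm of \eqref{def:bonus-dro-infinite} is precisely the price of a net.
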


Armed with the above lemma,  with probability at least $1-\delta$, we shall show the following relation holds 
\begin{align}\label{eq:finite-pessimism-assertion}
	&\forall (s,a,h) \in \cS\times \cA\times [H+1]:
	\qquad \widehat{Q}_h(s,a) \leq Q^{\widehat{\pi},\ror}_h(s,a),\qquad \widehat{V}_h(s) \leq V^{\widehat{\pi},\ror}_h(s),
\end{align}
which means that $\widehat{Q}_h$ (resp. $\widehat{V}_h$) is a pessimistic estimate of $Q^{\widehat{\pi},\ror}_h$ (resp. $V^{\widehat{\pi},\ror}_h$).
Towards this, it is easily verified that the latter assertion concerning $V^{\widehat{\pi},\ror}_h$ is implied by the former, since
\begin{align}
	\widehat{V}_h(s) = \max_a \widehat{Q}_h(s, a) \leq \max_a Q^{\widehat{\pi},\ror}_h(s,a) = V^{\widehat{\pi},\ror}_h(s).
\end{align}
Therefore, the remainder of this step focuses on verifying the former assertion in \eqref{eq:finite-pessimism-assertion} by induction. 
\begin{itemize}
\item To begin, the claim \eqref{eq:finite-pessimism-assertion} holds at the base case when $h = H+1$, by invoking the trivial fact $\widehat{Q}_{H+1}(s,a) = Q^{\widehat{\pi},\sigma}_{H+1}(s,a) = 0$. 
\item Then, suppose that $\widehat{Q}_{h+1}(s,a) \leq Q^{\widehat{\pi},\sigma}_{h+1}(s,a)$ holds for all $(s,a)\in\cS\times \cA$ at some time step $h\in [H]$, it boils down to show $\widehat{Q}_h(s,a) \leq Q^{\widehat{\pi},\ror}_h(s,a)$.

By the update rule of $\widehat{Q}_h(s,a)$ in Algorithm~\ref{alg:vi-lcb-dro-finite} (cf.~line~\ref{line:finite-update-Q}), the above relation holds immediately if $\widehat{Q}_h(s,a)=0$ since $\widehat{Q}_h(s,a) =0 \leq Q^{\widehat{\pi},\ror}_h(s,a)$. Otherwise, $\widehat{Q}_h(s,a)$ is updated via
\begin{align}
 \widehat{Q}_h(s,a) 
	&= r_h(s, a) + \sup_{\lambda\geq 0}  \left\{ -\lambda \log\left(\widehat{P}^0_{h, s, a} \cdot \exp \left(\frac{-\widehat{V}_{h+1}}{\lambda}\right) \right) - \lambda \ror \right\} - b_h(s, a) \notag\\
	& \overset{\mathrm{(i)}}{=} r_h(s, a) + \inf_{ \cP \in \unb^{\ror}(\widehat{P}_{h,s,a}^0)}   \cP \widehat{V}_{h+1} - b_h(s, a) \notag\\
	& \leq r_h(s, a) + \inf_{ \cP \in \unb^{\ror}(P_{h,s,a}^0)}  \cP \widehat{V}_{h+1}   + \left|\inf_{ \cP \in \unb^{\ror}(\widehat{P}_{h,s,a}^0)}  \cP \widehat{V}_{h+1} - \inf_{ \cP \in \unb^{\ror}(P^0_{h,s,a})}  \cP \widehat{V}_{h+1} \right| - b_h(s,a) \notag\\
	&\overset{\mathrm{(ii)}}{\leq} r_h(s, a) + \inf_{ \cP \in \unb^{\ror}(P_{h,s,a}^0)}  \cP V^{\widehat{\pi}, \ror}_{h+1} + 0 \overset{\mathrm{(iii)}}{=}  Q^{\widehat{\pi}, \ror}_h(s,a),
\end{align}
where (i) rewrites the update rule back to its primal form (cf.~\eqref{eq:VI-primal}), (ii) holds by applying \eqref{eq:dro-b-bound} with the condition \eqref{eq:dro-b-bound-N-condition} satisfied and the induction hypothesis $\widehat{V}_{h+1} \leq V^{\widehat{\pi},\ror}_{h+1}$, and lastly, (iii) follows by the robust Bellman consistency equation \eqref{eq:robust_bellman_consistency}. 
\end{itemize}
Putting them together, we have verified the claim \eqref{eq:finite-pessimism-assertion} by induction.

\paragraph{Step 2: bounding $V_h^{\star,\ror}(s) - V^{\widehat{\pi},\ror}_{h}(s)$.} With the pessimism property \eqref{eq:finite-pessimism-assertion} in place, we observe that the following relation holds
\begin{align}
	0\leq V_h^{\star,\ror}(s) - V^{\widehat{\pi},\ror}_{h}(s) \leq V_h^{\star,\ror}(s) - \widehat{V}_{h}(s) \leq Q_h^{\star,\ror}\big(s, \pi^\star_h(s)\big) - \widehat{Q}_{h} \big(s, \pi^\star_h(s) \big),
\end{align}
where the last inequality follows from $\widehat{Q}_{h} \big(s, \pi^\star_h(s) \big) \leq \max_a \widehat{Q}_{h}  (s, a ) = \widehat{V}_{h}(s)$.
Then, by the robust Bellman optimality equation in \eqref{eq:robust_bellman_optimality} and the primal version of the update rule (cf.~\eqref{eq:VI-primal})
\begin{align*}
	Q_h^{\star,\ror}\big(s, \pi^\star_h(s)\big) &= r_h\big(s, \pi^\star_h(s)\big) + \inf_{ \cP \in \unb^{\ror}\big(P^0_{h,s,\pi^\star_h(s)} \big)}  \cP V_{h+1}^{\star,\ror},\\
	\widehat{Q}_h\big(s, \pi^\star_h(s)\big) &= r_h\big(s, \pi^\star_h(s)\big)  + \inf_{ \cP \in \unb^{\ror} \big(\widehat{P}_{h,s,\pi^\star_h(s)}^0 \big)} \cP \widehat{V}_{h+1} - b_h\left(s, \pi^\star_h(s)  \right),
\end{align*} 
we arrive at
\begin{align}
	V_h^{\star,\ror}(s) - \widehat{V}_{h}(s) & \leq Q_h^{\star,\ror}\big(s, \pi^\star_h(s)\big) - \widehat{Q}_{h} \big(s, \pi^\star_h(s) \big) \notag\\
	&= \inf_{ \cP \in \unb^{\ror} \big(P^0_{h,s,\pi^\star_h(s)} \big)}  \cP V_{h+1}^{\star,\ror} -\inf_{ \cP \in \unb^{\ror}\big(\widehat{P}_{h,s,\pi^\star_h(s)}^0 \big)}  \cP \widehat{V}_{h+1} + b_h\big(s, \pi^\star_h(s)\big) \notag\\
	& \leq \inf_{ \cP \in \unb^{\ror}\big(P^0_{h,s,\pi^\star_h(s)}\big)}  \cP V_{h+1}^{\star,\ror} - \inf_{ \cP \in \unb^{\ror} \big(P^0_{h,s,\pi^\star_h(s)} \big)}  \cP \widehat{V}_{h+1} \notag\\
	&\quad +\left|\inf_{\cP \in \unb^{\ror}\big(\widehat{P}_{h,s,\pi^\star_h(s)}^0 \big)}  \cP \widehat{V}_{h+1} - \inf_{ \cP \in \unb^{\ror} \big(P^0_{h,s,\pi^\star_h(s)}\big)}  \cP \widehat{V}_{h+1} \right| + b_h\big(s, \pi^\star_h(s)\big) \notag \\
	&\overset{\mathrm{(i)}}{\leq} \inf_{ \cP \in \unb^{\ror} \big(P^0_{h,s,\pi^\star_h(s)} \big)}  \cP V_{h+1}^{\star,\ror} - \inf_{ \cP \in \unb^{\ror}\big(P^0_{h,s,\pi^\star_h(s)} \big)}   \cP \widehat{V}_{h+1} + 2b_h\big(s, \pi^\star_h(s)\big) \notag \\
	& \overset{\mathrm{(ii)}}{\leq} \widehat{P}^{\inf}_{h,s,\pi^\star_h(s)} \big( V_{h+1}^{\star,\ror} - \widehat{V}_{h+1} \big) + 2b_h\big(s, \pi^\star_h(s) \big),\label{eq:finite-recursion-basic}
\end{align}
where (i) holds by applying Lemma 2~(cf. \eqref{eq:dro-b-bound}) since $\widehat{V}_{h+1} $ is independent of $P^0_{h,s,\pi^\star_h(s)}$ by construction, and (ii) arises from introducing the notation 
\begin{align}
	\widehat{P}^{\inf}_{h,s,\pi^\star_h(s)} \defn \mathrm{argmin}_{\cP \in \unb^{\ror} \big(P^0_{h,s,\pi^\star_h(s)} \big)} \;  \cP \widehat{V}_{h+1}
\end{align}
and consequently,
\begin{align*}
\inf_{ \cP \in \unb^{\ror}\big(P^0_{h,s,\pi^\star_h(s)} \big)}  \cP V_{h+1}^{\star,\ror}   \leq  \widehat{P}^{\inf}_{h,s,\pi^\star_h(s)} V_{h+1}^{\star,\ror} , \qquad \mbox{and} \qquad \inf_{ \cP \in \unb^{\ror} \big(P^0_{h,s,\pi^\star_h(s)} \big)}   \cP \widehat{V}_{h+1} = \widehat{P}^{\inf}_{h,s,\pi^\star_h(s)}  \widehat{V}_{h+1}. 
\end{align*}

To continue, let us introduce some additional notation for convenience. Define a sequence of matrices $\widehat{P}^{\inf}_h \in \mathbb{R}^{S\times S}$ and vectors $b_h^\star\in\mathbb{R}^S$ for $h\in[H]$, where their $s$-th rows (resp. entries) are given by
\begin{align}
	\left[\widehat{P}^{\inf}_h \right]_{s,\cdot} = \widehat{P}^{\inf}_{h,s,\pi^\star_h(s)}, \qquad \mbox{and} \qquad  b_h^\star(s) = b_h\big(s, \pi^\star_h(s) \big).
\end{align}

Applying \eqref{eq:finite-recursion-basic} recursively over the time steps $h, h+1,\cdots, H$ using the above notation gives
\begin{align}
0 \leq V_h^{\star,\ror} - \widehat{V}_{h} \notag
& \leq \widehat{P}^{\inf}_{h} \big( V_{h+1}^{\star,\ror} - \widehat{V}_{h+1} \big) + 2 b_h^\star \notag\\
&\leq \widehat{P}^{\inf}_{h}\widehat{P}^{\inf}_{h+1} \big( V_{h+2}^{\star,\ror} - \widehat{V}_{h+2} \big) + 2 \widehat{P}^{\inf}_{h} b_{h+1}^\star + 2 b_h^\star \leq \cdots \leq 2 \sum_{i=h}^{H}\left(\prod_{j=h}^{i-1} \widehat{P}^{\inf}_{j}\right) b_i^\star, \label{eq:recursion-result}
\end{align}
where we let $\left(\prod_{j=i}^{i-1} \widehat{P}^{\inf}_{j}\right) = I$ for convenience.

For any $d_h^{\star} \in \cD^\star_h$ (cf.~\eqref{eq:def-D-star-h}), taking inner product with \eqref{eq:recursion-result} leads to
\begin{align}\label{eq:performance-gap-h}
	\left< d_h^\star, V_h^{\star,\ror} - \widehat{V}_{h}\right> \leq \left< d_h^\star, 2 \sum_{i=h}^{H}\left(\prod_{j=h}^{i-1} \widehat{P}^{\inf}_{j}\right) b_i^\star  \right> = 2 \sum_{i=h}^H \left\langle d_i^\star, b_i^\star \right\rangle,
\end{align} 
where
\begin{align}\label{eq:defn-of-di-star}
d_i^{\star} \defn \left[ \big(d_h^{\star}\big)^\top \left(\prod_{j=h}^{i-1} \widehat{P}^{\inf}_{j}\right)\right]^\top \in \cD^\star_i
\end{align}
by the definition of $ \cD^\star_i$ (cf.~\eqref{eq:def-D-star-h}) for all $i=h+1, \cdots, H$.
 
\paragraph{Step 3: controlling $\langle d_i^\star, b_i^\star \rangle$ using concentrability.} Since $\langle d_i^\star, b_i^\star \rangle = \sum_{s\in\cS}d_i^\star(s) b_i^\star(s) $, we shall divide the discussion in two different cases.

\begin{itemize}
	\item For $s\in S$ where $\max_{P \in  \unb^{\ror}(P^{\no})} d_i^{\star,P}\big( s, \pi_i^\star(s)\big) = \max_{P \in  \unb^{\ror}(P^{\no})} d_i^{\star,P} ( s )  =  0$, it follows from the definition (cf.~\eqref{eq:def-D-star-h}) that  for any $d_i^{\star}\in \cD^\star_i$, it satisfies that 
	\begin{align}
		d_i^{\star}(s) =    0.
	\end{align}

	\item For $s\in S$ where $\max_{P \in  \unb^{\ror}(P^{\no})} d_i^{\star,P}\big( s, \pi_i^\star(s)\big) = \max_{P \in  \unb^{\ror}(P^{\no})} d_i^{\star,P} ( s )  >  0$, by the assumption in \eqref{eq:concentrate-finite} 
\begin{align*}
 \max_{P \in  \unb^{\ror}(P^{\no})}  \frac{\min\big\{d_i^{\star,P} \big( s, \pi_i^\star(s)\big), \frac{1}{S}\big\}}{\myrho_i \big( s, \pi_i^\star(s)\big)} =  \max_{P \in  \unb^{\ror}(P^{\no})}  \frac{\min\big\{d_i^{\star,P}(s), \frac{1}{S}\big\}}{\myrho_i \big( s, \pi_i^\star(s)\big)}  \le \Cstar <\infty,
\end{align*}
it implies that
\begin{align}\label{eq:relation-dstar-db}
\myrho_i \big( s, \pi_i^\star(s)\big) >0 \quad \text{and} \quad \big(i,s, \pi_i^\star(s)\big) \in \cC^{\mathsf{b}}.
\end{align}
Lemma~\ref{lemma:D0-property} tells that with probability at least $1-8\delta$,
	\begin{align}
		N_i\big(s, \pi_i^\star(s)\big) &\geq \frac{K \myrho_i\big(s, \pi_i^\star(s)\big)}{8} - 5 \sqrt{ K \myrho_i\big(s, \pi_i^\star(s)\big) \log \frac{KH}{\delta} } \overset{\mathrm{(i)}}{\geq}  \frac{K \myrho_i\big(s, \pi_i^\star(s)\big)}{16} \notag\\
		& \overset{\mathrm{(ii)}}{\geq} \frac{K\max_{P\in \unb^\ror(P^{\no})}\min\left\{d_i^{\star, P}\big( s, \pi_i^\star(s)\big), \frac{1}{S}\right\}}{16 \Cstar} \geq \frac{K \min\left\{d_i^{\star}(s), \frac{1}{S}\right\}}{16 \Cstar}, \label{eq:new-lower-bound-Ni}
	\end{align}
	where  (i) holds 	due to  
	\begin{align}\label{eq:summary-K-implication}
K  \myrho_i\big( s, \pi_i^\star(s)\big) &\geq c_1 \frac{  \myrho_i\big( s, \pi_i^\star(s)\big) \log( KHS/ \delta )}{  d_{\mathsf{min}}^{\mathsf{b}}  P_{\mathsf{min}}^{\mathsf{b}} }  \geq  \frac{c_1 \log\frac{KH}{\delta}}{P_{\mathsf{min}}^{\mathsf{b}} }
 \geq  c_1 \log\frac{KH}{\delta}
\end{align}
for some sufficiently large $c_1$, where the first inequality follows from Condition \eqref{eq:dro-b-bound-N-condition}, the second inequality follows from 
\begin{equation}\label{eq:dmin_dbpi}
d_{\mathsf{min}}^{\mathsf{b}} =  \min_{h,s,a} \left\{\myrho_h(s, a ): \myrho_h(s, a ) >0 \right\} \leq \myrho_i\big( s, \pi_i^\star(s)\big)
\end{equation}
and the last inequality follows from $P_{\mathsf{min}}^{\mathsf{b}}\leq 1$. In addition, (ii) follows from Assumption~\ref{assumption:dro-finite}.

With this in place, we observe that the pessimistic penalty (see \eqref{def:bonus-dro}) obeys 
	\begin{align}
		b_i^\star(s) &\leq \cb \frac{H}{\ror} \sqrt{\frac{\log(\frac{KHS}{\delta})}{ \widehat{P}_{\mathsf{min},i}\big(s, \pi_i^\star(s)\big) N_i\big(s, \pi_i^\star(s)\big)}} \overset{\mathrm{(i)}}{\leq} 4\cb \frac{H}{\ror} \sqrt{\frac{\log^2(\frac{KHS}{\delta})}{ P_{\mathsf{min},i}\big(s, \pi_i^\star(s)\big) N_i\big(s, \pi_i^\star(s)\big)}} \notag \\
		&\leq 16\cb\frac{H}{\ror} \sqrt{\frac{\Cstar\log^2 \frac{KHS}{\delta} }{ P_{\mathsf{min},i}\big(s, \pi_i^\star(s)\big) K \min\left\{d_i^{\star}(s), \frac{1}{S}\right\} } },
	\end{align}
where (i) holds by applying \eqref{eq:convert-pmin-to-estimation} in view of the fact that $\big(i,s, \pi_i^\star(s)\big) \in \cC^{\mathsf{b}}$ by \eqref{eq:relation-dstar-db}, and the last inequality holds by \eqref{eq:new-lower-bound-Ni}.

\end{itemize}

Combining the results in the above two cases leads to
\begin{align}
	\sum_{s\in\cS}d_i^\star(s) b_i^\star(s) &\leq \sum_{s\in\cS} 16d_i^\star(s) \cb\frac{H}{\ror} \sqrt{\frac{\Cstar\log^2 \frac{KHS}{\delta} }{ P_{\mathsf{min},i}\big(s, \pi_i^\star(s)\big) K \min\left\{d_i^{\star}(s), \frac{1}{S}\right\} } } \notag \\
	& \overset{\mathrm{(i)}}{\leq} 16 \cb\frac{H}{\ror} \sqrt{\sum_{s\in\cS} d_i^\star(s) \frac{ \Cstar\log^2 \frac{KHS}{\delta} }{P_{\mathsf{min},i}\big(s, \pi_i^\star(s)\big) K \min\left\{d_i^{\star}(s), \frac{1}{S}\right\} } }\sqrt{\sum_{s\in\cS} d_i^\star(s)} \notag \\
	& \leq 32 \cb\frac{H}{\ror}\sqrt{\frac{S \Cstar\log^2  \frac{KHS}{\delta} }{P_{\mathsf{min},i}\big(s, \pi_i^\star(s)\big) K} }, \label{eq:b-d-produce-bound}
\end{align}
where (i) follows from the Cauchy-Schwarz inequality and the last inequality hold by the trivial fact
\begin{align}\label{eq:d-star-1-S-bound}
	\sum_{s\in\cS} \frac{ d_i^\star(s)}{\min\left\{d_i^{\star}(s), \frac{1}{S}\right\}} \leq \sum_{s\in\cS} d_i^\star(s) \left(\frac{ 1}{d_i^{\star}(s)} + \frac{ 1}{1/S}\right) = \sum_{s\in\cS}1 + \frac{1}{S}\sum_{s\in\cS}d_i^\star(s) \leq 2S.
\end{align}

\paragraph{Step 4: finishing up the proof.}
Then, inserting \eqref{eq:b-d-produce-bound} back into \eqref{eq:performance-gap-h}  with $h=1$ shows
\begin{align}
	\left< d_1^\star, V_1^{\star,\ror} - \widehat{V}_{1}\right> \leq  2 \sum_{i=1}^H \left< d_i^\star, b_i^\star \right> &\leq \sum_{i=1}^H 64 \cb\frac{H}{\ror}\sqrt{\frac{S \Cstar\log^2  \frac{KH}{\delta} }{P_{\mathsf{min},i}\big(s, \pi_i^\star(s)\big) K} }  
	 \leq c_2\frac{H^2}{\ror} \sqrt{\frac{S \Cstar\log^2  \frac{KH}{\delta} }{\minpall K} },
\end{align}
where the last inequality holds by plugging in the relation $\minpall \leq P_{\mathsf{min},i}\big(s, \pi_i^\star(s)\big) $ for $i=1,\ldots, H$ by the definition in \eqref{eq:P-min-star-def} (see also \eqref{eq:link_minpall_pmin}), and choosing $c_2$ to be large enough. The proof is completed.

\subsection{Proof of Lemma~\ref{lemma:dro-b-bound}}\label{proof:lemma:dro-b-bound} 
To begin, we shall introduce the following fact that
\begin{align}\label{eq:fact-of-N-b-assumption}
\forall (h,s,a) \in \cC^{\mathsf{b}} : \qquad	N_h(s,a) \geq \frac{ c_1 \log\frac{KHS}{\delta}}{16 P_{\mathsf{min}, h}(s,a)} \geq  - \frac{\log\frac{2KHS}{\delta}}{\log (1- P_{\mathsf{min},h}(s,a) )},   
\end{align}
as long as Condition \eqref{eq:dro-b-bound-N-condition} holds. The proof is postponed to Appendix~\ref{sec:proof-eq:fact-of-N-b-assumption}. 
With this in mind, we shall first establish the simpler bound \eqref{eq:convert-pmin-to-estimation} and then move on to show \eqref{eq:dro-b-bound}.

\subsubsection{Proof of \eqref{eq:convert-pmin-to-estimation}}\label{sec:proof-sec:proof-eq:convert-pmin-to-estimation}

To begin, recall that  \eqref{eq:fact-of-N-b-assumption} is satisfied for all $(h,s,a)\in\mathcal{C}^{\mathsf{b}}$. By Lemma~\ref{lem:binomial} and the union bound, it holds that with probability at least $1-\delta$ that for all $(h,s,a)\in\mathcal{C}^{\mathsf{b}}$:
\begin{equation}\label{eq:con_sand} 
\forall s'\in\cS: \qquad P^{\no}_h(s' \mymid s,a) \geq \frac{\widehat{P}^{\no}_h(s' \mymid s,a) }{ e^2} \geq \frac{P^{\no}_h(s'  \mymid s,a)}{8 e^2 \log(\frac{KHS}{\delta})} .
\end{equation}
To characterize the relation between $P_{\mathsf{min},h}(s,a)$ and $\widehat{P}_{\mathsf{min},h}(s,a)$ for any $(h,s,a)\in\mathcal{C}^{\mathsf{b}}$, we suppose---without loss of generality---that $P_{\mathsf{min},h}(s,a) =P^{\no}_h(s_1 \mymid s,a)$ and $\widehat{P}_{\mathsf{min},h}(s,a) = \widehat{P}^{\no}_h(s_2 \mymid s,a)$ for some $s_1, s_2 \in \cS$. 
Then, it follows that
\begin{align*}
	P_{\mathsf{min},h}(s,a) &= P^{\no}_h(s_1 \mymid s,a) \overset{\mathrm{(i)}}{\geq} \frac{\widehat{P}^{\no}_h(s_1 \mymid s,a)}{e^2} \geq \frac{\widehat{P}_{\mathsf{min},h}(s,a) }{e^2}  = \frac{\widehat{P}^{\no}_h(s_2 \mymid s,a)}{e^2} \notag\\
	&  \overset{\mathrm{(ii)}}{\geq} \frac{P^{\no}_h(s_2 \mymid s,a)}{8 e^2 \log(\frac{KHS}{\delta})} \geq \frac{P_{\mathsf{min},h}(s,a)}{8 e^2 \log(\frac{KHS}{\delta})},
\end{align*}
where (i) and (ii) follow from \eqref{eq:con_sand}.

\subsubsection{Proof of \eqref{eq:dro-b-bound}}\label{proof:finite-control-uncertainty-gap}
The main goal of \eqref{eq:dro-b-bound} is to control the gap between robust Bellman operations based on the nominal transition kernel $P_{h,s,a}^{\no}$ and the estimated kernel $\widehat{P}_{h,s,a}^{\no}$ by the constructed penalty term. 
Towards this, first consider $(h,s,a) \notin \cC^{\mathsf{b}}$, which corresponds to the state-action pairs $(s,a)$ that haven't been visited at step $h$ by the behavior policy. In other words, $N_h(s,a) =0$. In this case, \eqref{eq:dro-b-bound} can be easily verified that
\begin{align}
	\left|\inf_{ \cP \in \unb^{\sigma}(\widehat{P}_{h,s,a}^0)} \cP V - \inf_{ \cP \in \unb^{\sigma}(P^{\no}_{h,s,a})}  \cP V \right| \overset{\mathrm{(i)}}{=}  \inf_{ \cP \in \unb^{\sigma}(P^{\no}_{h,s,a})}  \cP V  \leq  \|V\|_\infty \overset{\mathrm{(ii)}}{\leq} H \overset{\mathrm{(iii)}}{=}  b_h(s,a),
\end{align}
where (i) follows from the fact $\widehat{P}_{h,s,a}^0 =0$ when $N_h(s,a) =0$ (see \eqref{eq:empirical-P-finite}), (ii) arises from the assumption $\|V\|_\infty \leq H$, and (iii) holds by the definition of $b_h(s,a)$ in \eqref{def:bonus-dro}.
Therefore, the remainder of the proof will focus on verifying \eqref{eq:dro-b-bound} for $(h,s,a) \in \cC^{\mathsf{b}}$. Rewriting the term of interest via duality (cf.~Lemma~\ref{lem:strong-duality}) yields
\begin{align}
	&\left|\inf_{ \cP \in \unb^{\sigma}(\widehat{P}_{h,s,a}^0)} \cP V - \inf_{ \cP \in \unb^{\sigma}(P^{\no}_{h,s,a})}  \cP V \right| \notag\\
	& = \left|\sup_{\lambda \geq 0}  \left\{ -\lambda \log\left(\widehat{P}^0_{h, s, a}   \exp \left(\frac{-V}{\lambda}\right) \right) - \lambda \ror \right\} - \sup_{\lambda \geq 0}  \left\{ -\lambda \log\left(P^0_{h, s, a}   \exp \left(\frac{-V}{\lambda}\right) \right) - \lambda \ror \right\}\right| .\label{finite-bonus-b-sup-version}
\end{align}
Denoting 
\begin{subequations}
\label{eq:upper-lambda-lambda-hat-def}
\begin{align}
	\widehat{\lambda}_{h,s,a}^\star &\defn \arg \max_{\lambda \geq 0}  \left\{ -\lambda \log\left(\widehat{P}^0_{h, s, a}   \exp \left(\frac{-V}{\lambda}\right) \right) - \lambda \ror \right\}, \\
	\lambda_{h,s,a}^\star &\defn \arg \max_{\lambda \geq 0}  \left\{ -\lambda \log\left(P^0_{h, s, a}   \exp \left(\frac{-V}{\lambda}\right) \right) - \lambda \ror \right\},
\end{align}
\end{subequations}
 Lemma~\ref{lem:lambda-n-bound} (cf. \eqref{eq:lambda-n-range}) then gives that
\begin{align}\label{eq:bound_lambda}
\lambda_{h,s,a}^\star \in \left[0, \frac{H}{\ror} \right], \qquad \widehat{\lambda}_{h,s,a}^\star  \in \left[0, \frac{H}{\ror} \right],
\end{align}
due to $\|V\|_{\infty} \leq H$. We shall control \eqref{finite-bonus-b-sup-version} in three different cases separately: (a) $\lambda_{h,s,a}^\star =0$ and $\widehat{\lambda}_{h,s,a}^\star =0$; (b) $\lambda_{h,s,a}^\star >0$ and $\widehat{\lambda}_{h,s,a}^\star =0$ or $\lambda_{h,s,a}^\star =0$ and $\widehat{\lambda}_{h,s,a}^\star >0$; and (c)  $\lambda_{h,s,a}^\star \neq 0$ or $\widehat{\lambda}_{h,s,a}^\star \neq 0$.

\paragraph{Case (a): $\lambda_{h,s,a}^\star =0$ and $\widehat{\lambda}_{h,s,a}^\star =0$.} Applying Lemma~\ref{lem:lambda-n-bound} and Lemma~\ref{lem:empirical-essinf} to \eqref{finite-bonus-b-sup-version} gives that, with probability at least $1-\frac{\delta}{KH}$,
\begin{align}
\left|\inf_{ \cP \in \unb^{\sigma}(\widehat{P}_{h,s,a}^0)} \cP V - \inf_{ \cP \in \unb^{\sigma}(P^{\no}_{h,s,a})}  \cP V \right| 
	&\overset{\mathrm{(i)}}{=} \left| \mathrm{essinf}_{s\sim \widehat{P}^{\no}_{h,s,a}} V(s) - \mathrm{essinf}_{s \sim P^{\no}_{h,s,a}} V(s) \right| \notag \\
	& \overset{\mathrm{(ii)}}{=}  \left|  \mathrm{essinf}_{s \sim P^{\no}_{h,s,a}} V(s)  - \mathrm{essinf}_{s \sim P^{\no}_{h,s,a}} V(s) \right| \notag\\
	& =  0 \leq b_h(s,a). \label{eq:b-control-0-case}
\end{align}
where (i) holds by Lemma~\ref{lem:lambda-n-bound} (cf.~\eqref{eq:lambda-0-func-value}) and (ii) arises from Lemma~\ref{lem:empirical-essinf} (cf.~\eqref{eq:essinf-of-P-n}) given \eqref{eq:fact-of-N-b-assumption}.

\paragraph{Case (b): $\lambda_{h,s,a}^\star >0$ and $\widehat{\lambda}_{h,s,a}^\star =0$ or $\lambda_{h,s,a}^\star =0$ and $\widehat{\lambda}_{h,s,a}^\star >0$.}
Towards this, note that two trivial facts are implied by the definition \eqref{eq:upper-lambda-lambda-hat-def}:
\begin{subequations}\label{eq:property-lambda-sup}
\begin{align}
	&\quad \sup_{\lambda \geq 0}  \left\{ -\lambda \log\left(P^0_{h, s, a}   \exp \left(\frac{-V}{\lambda}\right) \right) - \lambda \ror \right\}  \geq   -\widehat{\lambda}^\star_{h,s,a} \log\left(P^0_{h, s, a} \cdot \exp \left(\frac{-V}{\widehat{\lambda}^\star_{h,s,a}}\right) \right) - \widehat{\lambda}^\star_{h,s,a} \ror, \label{eq:property-lambda-sup-1} \\
	&\quad \sup_{\lambda \geq 0}  \left\{ -\lambda \log\left(\widehat{P}^{\no}_{h, s, a}   \exp \left(\frac{-V}{\lambda}\right) \right) - \lambda \ror \right\} \geq   -\lambda^\star_{h,s,a} \log\left(\widehat{P}^0_{h, s, a} \cdot \exp \left(\frac{-V}{\lambda^\star_{h,s,a}}\right) \right) - \lambda^\star_{h,s,a} \ror. \label{eq:property-lambda-sup-2}
\end{align}
\end{subequations}
To continue, first, we consider a subcase when $\lambda_{h,s,a}^\star =0$ and $\widehat{\lambda}_{h,s,a}^\star >0$. With probability at least $1-\frac{\delta}{KH}$, it follows from
Lemma~\ref{lem:lambda-n-bound} (cf.~\eqref{eq:lambda-0-func-value}) and Lemma~\ref{lem:empirical-essinf}  (cf.~\eqref{eq:essinf-of-P-n}) that 
\begin{align}
	\sup_{\lambda \geq 0}  \left\{ -\lambda \log\left(\widehat{P}^0_{h, s, a}   \exp \left(\frac{-V}{\lambda}\right) \right) - \lambda \ror \right\} &\geq \lim_{\lambda \rightarrow 0}  \left\{ -\lambda \log\left(\widehat{P}^0_{h, s, a}   \exp \left(\frac{-V}{\lambda}\right) \right) - \lambda \ror \right\} \notag \\
	& = \mathrm{essinf}_{s\sim \widehat{P}^{\no}_{h,s,a}} V(s) = \mathrm{essinf}_{s\sim P^{\no}_{h,s,a}} V(s) \notag \\
	& =  \sup_{\lambda \geq 0}  \left\{ -\lambda \log\left(P^0_{h, s, a}   \exp \left(\frac{-V}{\lambda}\right) \right) - \lambda \ror \right\},
\end{align}
leading to
\begin{align}
	&\left|\sup_{\lambda \geq 0}  \left\{ -\lambda \log\left(\widehat{P}^0_{h, s, a}   \exp \left(\frac{-V}{\lambda}\right) \right) - \lambda \ror \right\} - \sup_{\lambda \geq 0}  \left\{ -\lambda \log\left(P^0_{h, s, a}   \exp \left(\frac{-V}{\lambda}\right) \right) - \lambda \ror \right\}\right| \notag \\
	&  \overset{\mathrm{(i)}}{\leq} \left(- \widehat{\lambda}^\star_{h,s,a} \log\left(\widehat{P}^0_{h, s, a} \cdot \exp \left(\frac{-V}{\widehat{\lambda}^\star_{h,s,a} }\right) \right) - \widehat{\lambda}^\star_{h,s,a}   \ror  \right) -  \left( -\widehat{\lambda}^\star_{h,s,a} \log\left(P^0_{h, s, a} \cdot \exp \left(\frac{-V}{\widehat{\lambda}^\star_{h,s,a}}\right) \right) - \widehat{\lambda}^\star_{h,s,a} \ror \right) \notag \\
	& \leq  \widehat{\lambda}^\star_{h,s,a} \left| \log\left(\widehat{P}^0_{h, s, a} \cdot \exp \left(\frac{-V}{\widehat{\lambda}^\star_{h,s,a}}\right) \right) - \log\left(P^0_{h, s, a} \cdot \exp \left(\frac{-V}{\widehat{\lambda}^\star_{h,s,a}}\right) \right) \right|,  \label{eq:finite-upper-b-log} 
\end{align}
where (i)  follows from the definition of $\widehat{\lambda}^\star_{h,s,a}$ in \eqref{eq:upper-lambda-lambda-hat-def} and the fact in \eqref{eq:property-lambda-sup-1}.

We pause to claim that with probability at least $1-\delta$, the following bound holds 
\begin{equation}\label{eq:control-I-final}
\forall (h,s,a) \in \mathcal{C}^{\mathsf{b}}, \, V\in\mathbb{R}^S: \quad \frac{\left|\left(\widehat{P}^0_{h, s, a} - P^0_{h, s, a}\right) \cdot \exp \left(\frac{-V}{\lambda}\right)\right| }{P^0_{h, s, a} \cdot \exp \left(\frac{-V}{\lambda}\right)} \leq  \sqrt{\frac{\log(\frac{KHS}{\delta})}{ c_{\mathsf{f}} N_h(s,a) P_{\mathsf{min},h}(s,a) }} \leq \frac{1}{2}.
\end{equation}
The proof is postponed to Appendix~\ref{sec:proof-eq:control-I-final}. With \eqref{eq:control-I-final}
in place, we can further bound \eqref{eq:finite-upper-b-log} (which is plugged into \eqref{finite-bonus-b-sup-version}) as
\begin{align}
\left|\inf_{ \cP \in \unb^{\sigma}(\widehat{P}_{h,s,a}^0)} \cP V - \inf_{ \cP \in \unb^{\sigma}(P^{\no}_{h,s,a})}  \cP V \right| 
& \leq   \widehat{\lambda}^\star_{h,s,a} \left| \log\left(1 + \frac{\left(\widehat{P}^0_{h, s, a} - P^0_{h, s, a}\right) \cdot \exp \left(\frac{-V}{\lambda}\right) }{P^0_{h, s, a} \cdot \exp \left(\frac{-V}{\lambda}\right)} \right) \right|\notag  \\
&\overset{\mathrm{(i)}}{\leq}   2 \widehat{\lambda}_{h,s,a}^\star  \frac{\left|\left(\widehat{P}^0_{h, s, a} - P^0_{h, s, a}\right) \cdot \exp \left(\frac{-V}{\lambda}\right)\right| }{P^0_{h, s, a} \cdot \exp \left(\frac{-V}{\lambda}\right)} \notag \\
&\overset{\mathrm{(ii)}}{\leq}  \frac{2H}{\ror} \sqrt{\frac{\log(\frac{KHS}{\delta})}{ c_{\mathsf{f}} N_h(s,a) P_{\mathsf{min},h}(s,a) }}  \notag \\
& \leq \frac{2 e H}{\ror}  \sqrt{\frac{\log(\frac{KHS}{\delta})}{c_{\mathsf{f}}  N_h(s,a) \widehat{P}_{\mathsf{min},h}(s,a) }}  \leq \cb \frac{H}{\ror} \sqrt{\frac{\log(\frac{KHS}{\delta})}{ \widehat{P}_{\mathsf{min},h}(s,a) N_h(s,a)}},	\label{eq:lemma-1-lambda-large-upper} 
\end{align}
where (i) follows from $\log(1+x) \leq 2|x|$ for any $|x|\leq \frac{1}{2}$ in view of \eqref{eq:control-I-final},
 (ii) follows from \eqref{eq:bound_lambda}  as well as \eqref{eq:control-I-final}, and the last line follows from \eqref{eq:convert-pmin-to-estimation} and choosing $\cb$ to be sufficiently large.  

Moreover, note that it can be easily verified that $$\left|\inf_{ \cP \in \unb^{\sigma}(\widehat{P}_{h,s,a}^0)} \cP V - \inf_{ \cP \in \unb^{\sigma}(P^{\no}_{h,s,a})}  \cP V \right| \leq H$$ due to the assumption $\|V\|_\infty \leq H$. Plugging in the definition of $b_h(s,a)$ in \eqref{def:bonus-dro}, combined with the above bounds, we have that with probability at least $1- \delta$, 
\begin{align}
	&\left|\inf_{ \cP \in \unb^{\sigma}(\widehat{P}_{h,s,a}^0)} \cP V - \inf_{ \cP \in \unb^{\sigma}(P^{\no}_{h,s,a})}  \cP V \right| \leq \min\left\{ \cb \frac{H}{\ror}  \sqrt{\frac{\log(\frac{KHS}{\delta})}{ N_h(s,a) \widehat{P}_{\mathsf{min},h}(s,a) }}~,~H\right\} \eqqcolon b_h(s,a).
\end{align} 

The other subcase when $\lambda_{h,s,a}^\star >0$ and $\widehat{\lambda}_{h,s,a}^\star =0$ follows similarly from the bound
\begin{align}
	&\left|\sup_{\lambda \geq 0}  \left\{ -\lambda \log\left(\widehat{P}^0_{h, s, a}   \exp \left(\frac{-V}{\lambda}\right) \right) - \lambda \ror \right\} - \sup_{\lambda \geq 0}  \left\{ -\lambda \log\left(P^0_{h, s, a}   \exp \left(\frac{-V}{\lambda}\right) \right) - \lambda \ror \right\}\right| \notag \\
	& \leq  \lambda^\star_{h,s,a} \left| \log\left(\widehat{P}^0_{h, s, a} \cdot \exp \left(\frac{-V}{\lambda^\star_{h,s,a}}\right) \right) - \log\left(P^0_{h, s, a} \cdot \exp \left(\frac{-V}{\lambda^\star_{h,s,a}}\right) \right) \right|,
	\label{eq:b-control-lam-not0-lam_hat-0}
\end{align}
and therefore, will be omitted for simplicity.

\paragraph{Case (c): $\lambda_{h,s,a}^\star >0$ and $\widehat{\lambda}_{h,s,a}^\star >0$.}
It follows that
\begin{align}
& \left|\sup_{\lambda \geq 0}  \left\{ -\lambda \log\left(\widehat{P}^0_{h, s, a}   \exp \left(\frac{-V}{\lambda}\right) \right) - \lambda \ror \right\} - \sup_{\lambda \geq 0}  \left\{ -\lambda \log\left(P^0_{h, s, a}   \exp \left(\frac{-V}{\lambda}\right) \right) - \lambda \ror \right\}\right| \notag \\
& \overset{\mathrm{(i)}}{\leq} \max\Bigg\{ \left(- \widehat{\lambda}^\star_{h,s,a} \log\left(\widehat{P}^0_{h, s, a} \cdot e^{\frac{-V}{\widehat{\lambda}^\star_{h,s,a} }} \right) - \widehat{\lambda}^\star_{h,s,a}   \ror  \right) -  \left( -\widehat{\lambda}^\star_{h,s,a} \log\left(P^0_{h, s, a} \cdot e^{\frac{-V}{\widehat{\lambda}^\star_{h,s,a}}} \right) - \widehat{\lambda}^\star_{h,s,a} \ror \right) , \notag \\
&\qquad \left(- \lambda^\star_{h,s,a} \log\left(P^0_{h, s, a} \cdot e^{\frac{-V}{\lambda^\star_{h,s,a} }} \right) - \lambda^\star_{h,s,a}   \ror  \right)  -  \left( -\lambda^\star_{h,s,a} \log\left(\widehat{P}^0_{h, s, a} \cdot e^{\frac{-V}{\lambda^\star_{h,s,a}}} \right) - \lambda^\star_{h,s,a} \ror \right) \Bigg\} \notag\\
& \leq \max_{\lambda\in\{\lambda_{h,s,a}^\star, \widehat{\lambda}_{h,s,a}^\star \}} \lambda  \left| \log\left(\widehat{P}^0_{h, s, a} \cdot \exp \left(\frac{-V}{\lambda}\right) \right) - \log\left(P^0_{h, s, a} \cdot \exp \left(\frac{-V}{\lambda}\right) \right) \right| , \label{eq:b-control-lam-not0-lam_hat-not0}
	\end{align}
where (i) can be verified by applying the facts in \eqref{eq:property-lambda-sup}. Hence, the above term \eqref{eq:b-control-lam-not0-lam_hat-not0} can be controlled again in a similar manner as \eqref{eq:finite-upper-b-log}; we omit the details for simplicity.

\paragraph{Summing up.}
Combining the previous results in different cases by the union bound, with probability at least $1-10\delta$, it is satisfied that for all $(h,s,a)\in \cC^{\mathsf{b}}$:
\begin{align*}
	\left|\inf_{ \cP \in \unb^{\sigma}(\widehat{P}_{h,s,a}^0)} \cP V - \inf_{ \cP \in \unb^{\sigma}(P^{\no}_{h,s,a})}  \cP V \right| &\leq b_h(s,a) ,
\end{align*}
which concludes the proof.

\subsubsection{ Proof of \eqref{eq:fact-of-N-b-assumption}}
\label{sec:proof-eq:fact-of-N-b-assumption}
Observe that for all  $(h,s,a)\in \mathcal{C}^{\mathsf{b}}$:
\begin{align}
K \myrho_h\big(s, a\big)  \overset{\mathrm{(i)}}{\geq}  \frac{ c_1 \myrho_h\big(s, a\big) \log( KHS/ \delta )}{  d_{\mathsf{min}}^{\mathsf{b}}  P_{\mathsf{min}}^{\mathsf{b}} } \overset{\mathrm{(ii)}}{\geq}  \frac{ c_1  \log( KHS/ \delta ) }{P_{\mathsf{min}}^{\mathsf{b}}  }  \overset{\mathrm{(iii)}}{\geq}  \frac{ c_1\log( KHS/ \delta )  }{P_{\mathsf{min}, h}(s,a) }  ,
\end{align}
where (i) follows from Condition \eqref{eq:dro-b-bound-N-condition}, (ii) follows from the definition 
that $d_{\mathsf{min}}^{\mathsf{b}}  \leq \myrho_h(s,a)$ for $(h,s,a)\in \mathcal{C}^{\mathsf{b}}$, and (iii) comes from \eqref{eq:link_minpall_pmin}.

 Lemma~\ref{lemma:D0-property} then tells that with probability at least $1-8\delta$,
\begin{align}
	N_h(s,a) &\geq \frac{K \myrho_h\big(s, a\big)}{8} - 5 \sqrt{ K \myrho_h\big(s,a\big) \log \frac{KH}{\delta} } \notag \\
		&\geq \frac{K \myrho_i\big(s, a\big)}{16}  \geq \frac{ c_1 \log\frac{KH}{\delta}}{16 P_{\mathsf{min}, h}(s,a)},
\end{align}
where the second line follows from the above relation as long as $c_1$ is sufficiently large. The last inequality of \eqref{eq:fact-of-N-b-assumption} then follows from
\begin{align}
	 \frac{ c_1 \log\frac{KHS}{\delta}}{16 P_{\mathsf{min}, h}(s,a)} \geq -  \frac{\log\frac{2KHS}{\delta}}{\log (1- P_{\mathsf{min},h}(s,a) )},
\end{align}
since $x\leq - \log(1-x)$ for all $x\in [0,1]$.

\subsubsection{Proof of \eqref{eq:control-I-final}}\label{sec:proof-eq:control-I-final}
Denoting 
$$\mathsf{supp} \big( P^0_{h, s, a}  \big) \defn \big\{s' \in \cS: P_h^{\no}(s' \mymid s,a) >0 \big\}$$ as the support of $P^0_{h, s, a}$,  we observe that
\begin{align}\label{eq:finite-upper-b-2}
	\frac{\left|\left(\widehat{P}^0_{h, s, a} - P^0_{h, s, a}\right) \cdot \exp \left(\frac{-V}{\lambda}\right)\right| }{P^0_{h, s, a} \cdot \exp \left(\frac{-V}{\lambda}\right)} &\leq \frac{\sum_{s'\in \mathsf{supp}\big( P^0_{h, s, a}  \big) } \left| \widehat{P}^{\no}_h(s'\mymid s,a) - P^{\no}_h(s'\mymid s,a) \right|\exp\left(\frac{-V(s')}{\lambda}\right)}{\sum_{s'\in\mathsf{supp}\big( P^0_{h, s, a}  \big) } P^{\no}_h(s'\mymid s,a) \exp\left(\frac{-V(s')}{\lambda}\right)} \notag \\
	& \leq \max_{s'\in \mathsf{supp} \big( P^0_{h, s, a}  \big) }\frac{\left|\widehat{P}^0_h(s' \mymid s, a) - P^0_h(s' \mymid s, a)\right|}{P^0_h(s' \mymid s, a) },
\end{align}
where the second line follows from $\sum_i a_i  = \sum_i b_i \frac{a_i}{b_i}  \leq ( \max_i \frac{a_i}{b_i}) \sum_i b_i$ for any positive sequences $\{a_i, b_i\}_{i} $ obeying $a_i, b_i >0$. 
 
To continue, note that for any $(h,s,a) \in\mathcal{C}^{\mathsf{b}}$ and $s'\in\mathsf{supp}\big(P_{h,s,a}^{\no}  \big)$, $N_h(s,a)  \widehat{P}^0_h(s' \mymid s, a)$  follows the binomial distribution $\mathsf{Binomial}\left(N_h(s,a), P^0_h(s' \mymid s, a)\right)$. Thus, applying Lemma~\ref{lem:binomial-small} with $t = \sqrt{\frac{\log\left(\frac{KHS}{\delta}\right)}{c_{\mathsf{f}} N_h(s,a)P^0_h(s' \mymid s, a)}}$ yields  
	\begin{align}\label{eq:pointwise-I-upper}
		\mathbb{P}\left(\left|\widehat{P}^0_h(s' \mymid s, a) - P^0_h(s' \mymid s, a)\right| \geq P^0_h(s' \mymid s, a)t \right) \leq \exp\left(-c_{\mathsf{f}}  N_h(s,a) P^0_h(s' \mymid s, a) t^2\right) \leq \frac{\delta}{KHS},
	\end{align}
as soon as $t\leq \frac{1}{2}$, which can be verified by the fact \eqref{eq:fact-of-N-b-assumption} and $P_{\mathsf{min}, h}(s,a) \leq P^0_h(s' \mymid s, a)$ (cf. \eqref{eq:P-min-pib}), namely,
\begin{align}\label{eq:Nh_cf}
	N_h(s,a) \geq \frac{ c_1 \log\frac{KHS}{\delta}}{16 P_{\mathsf{min}, h}(s,a)} \geq \frac{\log\left(\frac{KHS}{\delta}\right)}{4 c_{\mathsf{f}} P_{\mathsf{min}, h}(s,a)} \geq \frac{\log\left(\frac{KHS}{\delta}\right)}{4 c_{\mathsf{f}} P^0_h(s' \mymid s, a)}
\end{align}
as long as $c_1$ is sufficiently large.

Applying \eqref{eq:pointwise-I-upper} and taking the union bound over $s\in \mathsf{supp}\big(P_{h,s,a}^{\no} \big)$ lead to that with probability at least $1-\frac{\delta}{KH}$,
	\begin{align}
	\max_{s'\in \mathsf{supp}\big(P_{h,s,a}^{\no}  \big)} \frac{\left|\widehat{P}^0_h(s' \mymid s, a) - P^0_h(s' \mymid s, a)\right|}{P^0_h(s' \mymid s, a)} &\leq \max_{s'\in \mathsf{supp}\big( P_{h,s,a}^{\no} \big)} \frac{P^0_h(s' \mymid s, a)\sqrt{\frac{\log\left(\frac{KHS}{\delta}\right)}{c_{\mathsf{f}} N_h(s,a)P^0_h(s' \mymid s, a)}}}{P^0_h(s' \mymid s, a)} \notag \\
	& = \max_{s'\in \mathsf{supp}\big( P_{h,s,a}^{\no} \big)} \sqrt{\frac{\log(\frac{KHS}{\delta})}{ c_{\mathsf{f}} N_h(s,a) P^0_h(s' \mymid s, a) }} \notag\\
	& \leq  \sqrt{\frac{\log(\frac{KHS}{\delta})}{ c_{\mathsf{f}} N_h(s,a) P_{\mathsf{min},h}(s,a) }} \leq \frac{1}{2}, \notag
\end{align}
where the last line uses again \eqref{eq:Nh_cf}. Plugging this back into \eqref{eq:finite-upper-b-2} and applying the union bound over $(h,s,a) \in\mathcal{C}^{\mathsf{b}}$ then completes the proof.

\subsection{Proof of Theorem~\ref{thm:dro-lower-finite}} 
\label{proof:thm:dro-lower-finite}
 
The proof of Theorem~\ref{thm:dro-lower-finite} is inspired by the construction in \citet{li2022settling} for standard MDPs, but is considerably more involved to handle the uncertainty set unique in robust MDPs. In particular, we construct two different classes of hard instances for different range of the uncertainty level $\ror$ to achieve a tighter $\sigma$-dependent  lower bound. 
In what follows, we start with the lower bound for the case when the uncertainty level is relatively small, by first constructing some hard instances and then characterizing the sample complexity requirements over these instances. We then move onto the case when the uncertainty level is relatively large, and carry out a similar argument.

\subsubsection{Construction of hard problem instances: small uncertainty level}\label{sec:small-uncertain-instance}

\paragraph{Construction of a collection of hard MDPs}
To begin, let's consider a collection $\Theta \subseteq \{0,1\}^{H}$, consisting of vectors with $H$ dimensions. The Gilbert-Varshamov lemma \citep{gilbert1952comparison} tells that there exists a set $\Theta \subseteq \{0, 1\}^{H}$ such that:%
\begin{equation}
    |\Theta| \ge e^{H/8}
    \qquad \text{and} \qquad 
    \|\theta - \widetilde{\theta}\|_1 \ge \frac{H}{8}
    \quad 
    \text{for any }\theta,\widetilde{\theta}\in \Theta \text{ obeying }\theta \ne \widetilde{\theta}. 
    \label{eq:property-Theta} 
\end{equation}
%
Armed with $\Theta$, we then generate a collection of MDPs 
\begin{equation}
    \mathsf{MDP}(\Theta) = \left\{\mathcal{M}^{\theta} = 
    \big(\mathcal{S}, \mathcal{A}, P^{\theta} = \{P^{\theta_h}_h\}_{h=1}^H, \{r_h\}_{h=1}^H, H \big) 
    \mid \theta = [\theta_h]_{1\leq h\leq H} \in \Theta
    \right\}, \label{eq:theta-class}
\end{equation}
where
\begin{align*}
    \cS = \{0, 1, \ldots, S-1\}, 
    \qquad \text{and} \qquad  \mathcal{A} = \{0, 1\}.
\end{align*}
The transition kernel $P^{\theta} = \{P^{\theta_h}_h\}_{h=1}^H$ of the MDP $\mathcal{M}^\theta$ is specified as follows:
\begin{align}
P^{\theta_h}_h(s^{\prime} \mymid s, a) = \left\{ \begin{array}{lll}
         p\mathds{1}(s^{\prime} = 0) + (1-p)\mathds{1}(s^{\prime} = 1)  & \text{if} & (s, a) = (0, \theta_h) \\
         q\mathds{1}(s^{\prime} = 0) + (1-q)\mathds{1}(s^{\prime} = 1) & \text{if} & (s, a) = (0, 1-\theta_h) \\
         \mathds{1}(s^{\prime} = 1) & \text{if}   & s=1 \\ 
         q\mathds{1}(s^{\prime} = s) + (1-q)\mathds{1}(s^{\prime} = 1) & \text{if}   & s > 1 \
                \end{array}\right.
        \label{eq:Ph-construction-lower-bound-finite-theta}
\end{align}
for any $(s,a,s',h)\in \cS\times \cA\times \cS\times [H]$. Here,
 $p$ and $q$ are set according to
\begin{equation}
    p = 1 - \frac{c_1}{H} 
    \qquad \text{and} \qquad
    q = p- \frac{c_2 \varepsilon}{H^2} \label{eq:finite-p-q-def-theta}
\end{equation}
for $c_1 = 1/8$ and some $c_2$ that satisfies
\begin{equation}
     \frac{c_2\varepsilon}{H^2} \leq \frac{c_1}{2H} \leq \frac{1}{8}. 
\end{equation}
Clearly, it follows that
\begin{align}
    p> q \geq \frac{1}{2} 
    \label{eq:p-q-order-LB-finite-theta}
\end{align}
by construction. Furthermore, the MDP will stay in the state subset $\{0,1\}$ if its initial state falls in  $\{0,1\}$.
The reward function of these MDPs is set as
\begin{align}
r_h(s, a) = \left\{ \begin{array}{lll}
         1 & \text{if} & s = 0 \\
         0 & \text{if}   & s >0 \
                \end{array}\right.
        \label{eq:rh-construction-lower-bound-finite-theta}
\end{align}
for any $(s,a,h)\in \cS\times \cA\times [H]$.

\paragraph{Uncertainty set of the transition kernels.}
Denote the transition kernel vector as
\begin{align}
    P_{h,s,a}^{\theta} \defn P^{\theta}_h(\cdot \mymid s,a) \in [0,1]^{1\times S}.
\end{align}
For any $(s,a,h)\in \cS\times\cA\times [H]$, the perturbation of the transition kernels in $\cM^\theta$ is restricted to the following uncertainty set 
\begin{align}
    \unb^\ror(P^{\theta})\defn \otimes \; \unb^\ror \left(P^{\theta}_{h,s,a}\right),\qquad \unb^\ror(P_{h,s,a}^{\theta}) \defn \left\{ P_{h, s,a} \in \Delta (\cS): \mathsf{KL}\left(P_{h,s,a} \parallel P^{\theta}_{h,s,a}\right) \leq \ror \right\}
\end{align}
with the uncertainty level $\ror$ satisfying
\begin{align}\label{eq:finite-lower-ror-bounded-theta}
0 < \ror \leq \frac{1}{20H}.
\end{align}


Before continuing, we shall introduce some notation for convenience. For any $P^{\theta_h}_h(\cdot  \mymid s, a)$ in \eqref{eq:Ph-construction-lower-bound-finite-theta}, we define the limit of the perturbed kernel transiting to the next state $s'$ from the current state-action pair $(s,a)$ by
\begin{align}\label{eq:finite-lw-def-p-q}
\underline{P}^{\theta_h}_{h}(s' \mymid s,a) &\defn \inf_{P_{h,s,a} \in \unb^{\ror}(P^{\theta_h}_{h,s,a})} P_h(s'  \mymid s,a),
\end{align}
and in particular, denote
\begin{align}\label{eq:finite-lw-p-q-perturb-inf-theta}
\underline{p}_h &\defn \underline{P}^{\theta_h}_{h}(0 \mymid 0,\theta_h)  ,\qquad \underline{q}_h \defn \underline{P}^{\theta_h}_{h}(0  \mymid 0, 1-\theta_h).
\end{align}
Armed with the above definitions, we introduce the following lemma which implies some useful properties of the uncertainty set.
\begin{lemma}\label{lem:finite-lb-uncertainty-set-KL}
The perturbed transition kernels obey
\begin{align}
   \underline{p}_1 = \underline{p}_2 = \cdots \underline{p}_H, \quad \underline{q}_1 = \underline{q}_2 = \cdots \underline{q}_H. \label{eq:finite-lw-upper-p-q-theta-0}
\end{align}
Denoting $\underline{p}_1 \defn p^\star$ and $\underline{q}_1 \defn q^\star$, we have that when the uncertainty level $\ror$ satisfies \eqref{eq:finite-lower-ror-bounded-theta},
\begin{align}\label{eq:finite-lw-upper-p-q-theta}
   \underline{p}_\star \geq \underline{q}_\star \geq 1 - \frac{c_3}{H}  \quad \text{and} \quad \underline{p}_\star-\underline{q}_\star \geq p-q \geq 0 
\end{align}
for  constant $c_3 =2 c_1 = \frac{1}{4}$.
\end{lemma}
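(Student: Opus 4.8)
The plan is to reduce the entire statement to elementary facts about the Kullback--Leibler ball of a Bernoulli distribution, since at the only relevant state-action pairs, namely $(0,\theta_h)$ and $(0,1-\theta_h)$, the nominal kernel in \eqref{eq:Ph-construction-lower-bound-finite-theta} is supported on the two-point set $\{0,1\}$. First I would observe that any $P_{h,0,a}\in\unb^\ror(P^{\theta_h}_{h,0,a})$ must also be supported on $\{0,1\}$, for otherwise $\mathsf{KL}(P_{h,0,a}\parallel P^{\theta_h}_{h,0,a})=\infty>\ror$. Hence, identifying a distribution on $\{0,1\}$ with the probability $x$ it assigns to state $0$, the infimum defining $\underline{p}_h$ becomes
\[
\underline{p}_h=\inf\bigl\{x\in[0,1]:\mathsf{KL}(\mathsf{Ber}(x)\parallel\mathsf{Ber}(p))\le\ror\bigr\},
\]
and likewise for $\underline{q}_h$ with $p$ replaced by $q$. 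Crucially the right-hand side depends only on $p$ (resp.\ $q$) and $\ror$, not on $h$, which immediately yields the invariance \eqref{eq:finite-lw-upper-p-q-theta-0}; I then set $p^\star:=\underline{p}_1$ and $q^\star:=\underline{q}_1$.

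Next I would pin down $p^\star,q^\star$ as boundary solutions. By the monotonicity \eqref{eq:show-subset-of-sigma-sigma'} in Lemma~\ref{lem:KL-key-result}, the map $x\mapsto\mathsf{KL}(\mathsf{Ber}(x)\parallel\mathsf{Ber}(q))$ is strictly decreasing on $[0,q]$; evaluating it at $x=0$ gives $\log\frac{1}{1-q}$, which far exceeds $\ror\le\frac{1}{20H}$ in the stated regime (since $1-q\le 3/(16H)$). Consequently the infimum is attained in the interior at the unique point where the KL equals $\ror$, so that $q^\star=g(q)$ and $p^\star=g(p)$, where $g(t)<t$ denotes the solution of $\mathsf{KL}(\mathsf{Ber}(g(t))\parallel\mathsf{Ber}(t))=\ror$.

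I would then establish the lower bound $q^\star\ge 1-c_3/H$. Writing $y:=1-q^\star$ and $\delta:=1-q$, the elementary estimate $x\log\frac{x}{q}\ge x-q$ yields the KL lower bound $\ror=\mathsf{KL}(\mathsf{Ber}(q^\star)\parallel\mathsf{Ber}(q))\ge y\log\frac{y}{\delta}-(y-\delta)$; combining this with the explicit ranges $\delta\in[c_1/H,\,3/(16H)]$ and $\ror\le\frac{1}{20H}$, which follow from \eqref{eq:finite-p-q-def-theta} and the constraint on $c_2\varepsilon/H^2$, confines $y$ below $c_3/H$ after a routine scalar computation, and the stated constant $c_3=2c_1$ emerges from this estimate. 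This is the most calculation-heavy step, though conceptually straightforward.

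Finally, the gap-preservation bound $p^\star-q^\star\ge p-q$ (which subsumes $p^\star\ge q^\star$ because $p\ge q$) I would obtain by showing $g'(t)\ge 1$ and integrating over $t\in[q,p]$, using that $g(t)\ge g(q)=q^\star\ge\frac12$ there so all Bernoulli arguments stay in $[\frac12,1)$. Implicitly differentiating $\mathsf{KL}(\mathsf{Ber}(g(t))\parallel\mathsf{Ber}(t))=\ror$, the inequality $g'(t)\ge1$ reduces to $\frac{b-a}{b(1-b)}\ge\log\frac{b(1-a)}{a(1-b)}$ for $\tfrac12\le a<b<1$ (with $a=g(t)$, $b=t$). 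I expect this last inequality to be the real crux; I would prove it by fixing $b$, differentiating its left-minus-right difference in $a$, and noting that the derivative equals $\frac{1}{a(1-a)}-\frac{1}{b(1-b)}<0$ since $t(1-t)$ is decreasing on $[\tfrac12,1)$, so the difference decreases to its value $0$ at $a=b$ and hence stays positive for $a<b$. Integrating $g'\ge 1$ then yields \eqref{eq:finite-lw-upper-p-q-theta} and completes the proof.
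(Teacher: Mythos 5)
Your third step contains a genuine gap, and it is not mere slack in your estimate: the ``routine scalar computation'' you defer to cannot produce the constant $c_3=2c_1=\tfrac14$. Your KL lower bound $\ror\ge y\log\frac{y}{\delta}-(y-\delta)$ (with $y=1-q^\star$, $\delta=1-q$) is correct and tight up to $O(1/H^2)$, so simply plug in the numbers. The construction \eqref{eq:finite-p-q-def-theta} forces $\delta=\frac{c_1}{H}+\frac{c_2\varepsilon}{H^2}\in\big(\tfrac{1}{8H},\tfrac{3}{16H}\big]$, and the map $\delta\mapsto y\log\frac{y}{\delta}-(y-\delta)$ is decreasing in $\delta$; evaluating at $y=\tfrac{1}{4H}$ gives at most $\frac{2\log 2-1}{8H}\approx\frac{0.048}{H}$ (as $\delta\downarrow\tfrac{1}{8H}$) and only about $\frac{1}{4H}\log\frac43-\frac{1}{16H}\approx\frac{0.0094}{H}$ at $\delta=\tfrac{3}{16H}$. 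Both are strictly below the admissible $\ror=\tfrac{1}{20H}$, so the constraint $\mathsf{KL}\le\ror$ does not confine $y$ below $\tfrac{1}{4H}$; solving $y\log\frac{y}{\delta}-(y-\delta)=\tfrac{1}{20H}$ at $\delta=\tfrac{3}{16H}$ permits $y\approx\frac{0.34}{H}$. Since a matching upper bound on the exact KL holds up to an $O(1/H^2)$ term (negligible for $H\ge 2e^8$), the lemma as stated is in fact \emph{false} at $\ror=\tfrac{1}{20H}$, so no argument can close your step with $c_3=\tfrac14$. You are in good company: the paper's own proof commits the same error, as its final display asserts $\frac{c_1}{2H}\big[\frac{8}{7}-\frac{1}{(1-3c_1/(2H))^2}\big]\ge\frac{1}{20H}$, yet the bracket is at most $\frac87-1=\frac17$, so its left-hand side never exceeds $\frac{1}{112H}$. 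The repair is to enlarge the constant, e.g.\ $c_3=3c_1=\tfrac38$, for which the worst case $\delta=\tfrac{3}{16H}$ gives $\frac38\log 2-\frac{3}{16}\approx\frac{0.072}{H}\ge\frac{1}{20H}$, or else to restrict $\ror\le\frac{1}{112H}$; either choice keeps $c_3\le\tfrac12$, which is all the downstream arguments (Lemma~\ref{lem:finite-lb-value-theta} and the occupancy bounds) actually use. Your write-up should carry out this computation explicitly and report the corrected constant rather than assert that $c_3=2c_1$ ``emerges.''

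The rest of your proposal is sound. The reduction to Bernoulli distributions and the $h$-invariance \eqref{eq:finite-lw-upper-p-q-theta-0} match the paper's (terse) treatment, as does the boundary characterization $q^\star=g(q)$, $p^\star=g(p)$. For the gap bound $\underline{p}_\star-\underline{q}_\star\ge p-q$ you take a genuinely different route: the paper shows that $x\mapsto\mathsf{KL}\big(\mathsf{Ber}(\underline{q}_\star+x)\parallel\mathsf{Ber}(q+x)\big)$ is nondecreasing on $[0,p-q]$ and then invokes the monotonicity in Lemma~\ref{lem:KL-key-result}, whereas you implicitly differentiate the boundary map and integrate $g'\ge 1$. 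Both routes reduce to the identical scalar inequality $\frac{b-a}{b(1-b)}\ge\log\frac{b(1-a)}{a(1-b)}$ for $\tfrac12\le a<b<1$; the paper verifies it through Topsøe-type logarithm bounds and heavier algebra, while your proof (fix $b$, note the $a$-derivative of the difference is $\frac{1}{a(1-a)}-\frac{1}{b(1-b)}<0$ on $[\tfrac12,b)$ and the difference vanishes at $a=b$) is shorter and correct. The continuity bootstrap keeping $g(t)\ge\tfrac12$ along $[q,p]$ is routine and needs only $q^\star\ge\tfrac12$, so this part is unaffected by the constant issue above.
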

\begin{proof}
The proof is postponed to Appendix~\ref{proof:lem:finite-lb-uncertainty-set-KL}.
\end{proof}

\paragraph{Value functions and optimal policies.}

We take a moment to derive the corresponding value functions and  identify the optimal policies. With some abuse of  notation, for any MDP $\cM_\theta$, we denote $\pi^{\star,\theta} = \{\pi^{\star,\theta}_h\}_{h=1}^H$ as the optimal policy, and let $V_{h}^{\pi,\ror,\theta}$ (resp.~$V_{h}^{\star, \ror,\theta}$) represent the robust value function of policy $\pi$ (resp.~$\pi^{\star,\theta}$) at step $h$ with uncertainty radius $\ror$. 
Armed with these notation, we introduce the following lemma which collects the properties concerning the value functions and optimal policies.
\begin{lemma}\label{lem:finite-lb-value-theta}
Consider any $\theta\in\Theta$ and any policy $\pi$. Then it holds that
\begin{align}
    V_h^{\pi, \ror, \theta}(0) =  1 + x_h^{\pi,\theta} V_{h+1}^{\pi,\sigma, \theta}(0) 
    \label{eq:finite-lemma-value-0-pi-theta}
\end{align}
for any $h\in [H]$, where 
\begin{align}
x_h^{\pi,\theta} = \underline{p}_\star\pi_h(\theta_h\mymid 0) + \underline{q}_\star\pi_h(1-\theta_h\mymid 0).\label{eq:finite-x-h-theta}
\end{align}
In addition,  for any $h\in[H]$ and $s\in \cS\setminus \{0\}$,  the optimal policies and the optimal value functions obey
\begin{subequations}
    \label{eq:finite-lb-value-lemma-theta}
\begin{align}
    \pi_h^{\star,\theta}(\theta_h \mymid 0) &= 1,  &V_h^{\star, \ror,\theta}(0) \ge \tfrac{2}{3}(H+1-h), \\
    \pi_h^{\star,\theta}(\theta_h \mymid s) &= 1,      &V_h^{\star,\ror, \theta}(s)  = 0,
\end{align}
\end{subequations}
provided that $0<c_1\leq 1/2$. 
\end{lemma}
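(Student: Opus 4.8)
The plan is to reduce the robust Bellman backups to a one-dimensional recursion supported on state $0$, exploiting two structural facts: state $1$ is an absorbing, zero-reward state whose robust value is identically zero, and the KL uncertainty set cannot move probability mass outside the support of the nominal kernel. For the latter, note that since $r_h(1,\cdot)=0$ and state $1$ transitions to itself with probability one, any distribution within finite KL divergence of the point mass $\mathds{1}(s'=1)$ must equal that point mass (placing mass on $s'\neq 1$ incurs infinite KL to a kernel vanishing there). Hence the adversary cannot perturb the transition out of state $1$, giving $V_h^{\pi,\ror,\theta}(1)=0$ for every $h$, every $\theta$, and every $\pi$; more generally, every feasible $\cP\in\unb^\ror(P^\theta_{h,s,a})$ is confined to the support of $P^\theta_{h,s,a}$.

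For Part 1 (equation \eqref{eq:finite-lemma-value-0-pi-theta}), I would start from the robust Bellman consistency equation \eqref{eq:robust_bellman_consistency}. At state $0$ the reward equals $1$ for both actions, and the nominal kernel $P^{\theta}_{h,0,a}$ is supported on $\{0,1\}$, so by support containment any feasible $\cP$ is supported on $\{0,1\}$ as well. Writing $\cP V_{h+1}^{\pi,\ror,\theta}=\cP(0)\,V_{h+1}^{\pi,\ror,\theta}(0)+\cP(1)\cdot 0$ and using $V_{h+1}^{\pi,\ror,\theta}(0)\geq 0$, the infimum over the uncertainty set is attained at the smallest admissible value of $\cP(0)$, which is exactly $\underline{p}_h$ for action $\theta_h$ and $\underline{q}_h$ for action $1-\theta_h$ by definition \eqref{eq:finite-lw-def-p-q}; by Lemma~\ref{lem:finite-lb-uncertainty-set-KL} these equal the stationary constants $\underline{p}_\star,\underline{q}_\star$. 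Thus the robust $Q$-values at state $0$ are $1+\underline{p}_\star V_{h+1}^{\pi,\ror,\theta}(0)$ and $1+\underline{q}_\star V_{h+1}^{\pi,\ror,\theta}(0)$, and averaging against $\pi_h(\cdot\mymid 0)$ (whose two entries sum to one) yields \eqref{eq:finite-lemma-value-0-pi-theta} with $x_h^{\pi,\theta}$ as in \eqref{eq:finite-x-h-theta}.

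For Part 2, I would first establish $V_h^{\star,\ror,\theta}(s)=0$ for all $s\neq 0$ by backward induction on $h$: state $1$ is handled above, while for $s>1$ the reward is $0$ and the nominal transitions stay within $\{s,1\}$, both of which carry zero robust value at step $h+1$, so the robust backup is $0$ for every action and any action (in particular $\theta_h$) is optimal there. Next, at state $0$, since $\underline{p}_\star\geq\underline{q}_\star$ (Lemma~\ref{lem:finite-lb-uncertainty-set-KL}) and $V_{h+1}^{\star,\ror,\theta}(0)\geq 0$, the $Q$-value of action $\theta_h$ dominates that of $1-\theta_h$, so $\pi_h^{\star,\theta}(\theta_h\mymid 0)=1$ and the optimal recursion specializes to $V_h^{\star,\ror,\theta}(0)=1+\underline{p}_\star V_{h+1}^{\star,\ror,\theta}(0)$. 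Unrolling from the terminal condition gives the geometric sum $V_h^{\star,\ror,\theta}(0)=\sum_{j=0}^{H-h}\underline{p}_\star^{\,j}$, and the bound $\underline{p}_\star\geq 1-c_3/H=1-\tfrac{1}{4H}$ together with $(1-\tfrac{1}{4H})^{H}\geq e^{-1/3}\geq \tfrac{2}{3}$ bounds each of the $H+1-h$ terms below by $2/3$, yielding $V_h^{\star,\ror,\theta}(0)\geq\tfrac{2}{3}(H+1-h)$.

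The step requiring the most care is the reduction of the robust infimum to the extremal transition probability: one must justify both that KL-feasibility confines every perturbed kernel to the support $\{0,1\}$ of the nominal kernel at state $0$, and that because state $1$ has zero value while state $0$ has nonnegative value, the worst-case kernel is precisely the one minimizing the probability of returning to the rewarding state $0$, i.e.\ $\underline{p}_\star$ or $\underline{q}_\star$. Everything downstream—the optimal-action comparison, the geometric series, and the elementary inequality $(1-\tfrac{1}{4H})^H\geq e^{-1/3}$—is then routine, relying only on the stationarity relation \eqref{eq:finite-lw-upper-p-q-theta-0} and the quantitative bounds \eqref{eq:finite-lw-upper-p-q-theta} already supplied by Lemma~\ref{lem:finite-lb-uncertainty-set-KL}.
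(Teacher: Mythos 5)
Your proposal is correct and follows essentially the same route as the paper: both reduce the robust backup at state $0$ to the scalar recursion $V_h^{\pi,\ror,\theta}(0)=1+x_h^{\pi,\theta}V_{h+1}^{\pi,\ror,\theta}(0)$ by arguing that the worst-case kernel places the minimal admissible mass $\underline{p}_\star$ (resp.\ $\underline{q}_\star$) on state $0$, establish zero robust value on $\cS\setminus\{0\}$, identify $\pi_h^{\star,\theta}(\theta_h\mymid 0)=1$ by monotonicity in $x_h^{\pi,\theta}$, and unroll the resulting geometric sum. The only cosmetic differences are that you make the KL support-confinement explicit and dispose of the absorbing state $1$ unconditionally (where the paper instead runs a backward induction on the ordering $V_h^{\pi,\ror,\theta}(0)>V_h^{\pi,\ror,\theta}(s)=0$ jointly with the recursion), and you bound the sum term-by-term via $\big(1-\tfrac{1}{4H}\big)^H\ge e^{-1/3}\ge \tfrac{2}{3}$ rather than summing the geometric series as the paper does---both are valid for the construction's constant $c_3=1/4$.
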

\begin{proof}
See Appendix~\ref{proof:lem:finite-lb-value-theta}.
\end{proof}

\paragraph{Construction of the history/batch dataset.}
In the nominal environment $\cM_\theta$, a batch dataset is generated consisting of $K$ {\em independent} sample trajectories each of length $H$, where each trajectory is generated according to \eqref{eq:finite-batch-size-def}, based on the following  initial state distribution $\rhob$ and behavior policy 
$\pib = \{\pib_h\}_{h=1}^H$: 
\begin{align}\label{eq:lower-dataset-assumption-theta}
    \rhob (s) = \mu(s)\quad \text{and } \quad\pi_h^{\mathsf{b}}(a \mymid s) = \frac{1}{2}, \qquad \forall (s,a,h)\in \cS\times \cA\times [H].
\end{align}
Here, $\mu(s)$ is  defined as the following state distribution supported on the state subset $\{0,1\}$:
\begin{align}\label{finite-mu-assumption-theta}
       \mu(s) = \frac{1}{CS}\mathds{1}(s = 0) + \Big(1 - \frac{1}{CS}\Big)\mathds{1}(s = 1),
\end{align}
where $\mathds{1}(\cdot)$ is the indicator function, and $C>0$ is some constant that will determine the concentrability coefficient $\Cstar$ (as we shall detail momentarily) and obeys
\begin{align}\label{eq:lower-C-assumption-theta}
    \frac{1}{CS} \leq \frac{1}{4}.
\end{align}

As it turns out, for any MDP $\mathcal{M}_\theta$, the occupancy distributions of the above batch dataset are the same (due to symmetry) and admit the following simple characterization: 
\begin{subequations}    \label{eq:finite-lb-behavior-distribution-theta}
\begin{align}
d^{\mathsf{b}, P^\theta}_1(0, a) = \frac{1}{2} \mu(0), \qquad &\forall a\in \cA ,   \\
     \frac{\mu(s)}{2} \leq  d^{\mathsf{b}, P^\theta}_h(s) \leq 2\mu(s), \qquad  \frac{\mu(s)}{4} \leq  d^{\mathsf{b}, P^\theta}_h(s,a) \leq \mu(s), \qquad &\forall (s,a,h)\in\cS\times \cA\times [H]. 
\end{align}
\end{subequations}
In addition, we choose the following initial state distribution  
\begin{align}
    \rho(s) = 
    \begin{cases} 1, \quad &\text{if }s=0 \\
        0, &\text{if }s>0
    \end{cases}.
    \label{eq:rho-defn-finite-LB-theta}
\end{align}

With this choice of $\rho$, the single-policy clipped concentrability coefficient $\Cstar$ and the quantity $C$ are intimately connected as follows:
\begin{align}
    2C \leq \Cstar \leq 4C. \label{eq:expression-Cstar-LB-finite-theta}
\end{align}

The proof of the claim \eqref{eq:finite-lb-behavior-distribution-theta} and \eqref{eq:expression-Cstar-LB-finite-theta} are postponed to Appendix~\ref{proof:eq:finite-lb-behavior-distribution-theta}.

\subsubsection{Establishing the minimax lower bound: small uncertainty level}

Towards this, we first make the following claim: for an arbitrary policy $\pi$ obeying
\begin{align}
    \sum_{h=1}^H \big\|\pi_h(\cdot\mymid 0) - \pi_h^{\star, \theta}(\cdot\mymid 0) \big\|_1 \ge \frac{H}{8},
\end{align}
one has
\begin{align}
    \big\langle \rho, V_1^{\star, \ror, \theta} - V_1^{\pi, \ror, \theta} \big\rangle > \varepsilon. \label{eq:finite-Value-0-recursive-theta}
\end{align}
We shall postpone the proof of this claim to Appendix~\ref{proof:eq:finite-Value-0-recursive-theta}. 

Armed with the above claim and following the same arguments in \cite[Section C.3.2]{li2022settling}, we complete the proof by observing: for some small enough constant $c_4$, as long as the sample size is beneath 
\begin{align}
    N  = KH \leq \frac{c_4 \Cstar S H^4}{ \varepsilon^2} , \label{eq:finite-LB-sample-condition}
\end{align}
then we necessarily have
\begin{align}
    \inf_{\widehat{\pi}} \max_{\theta \in \Theta}\mathbb{P}_\theta \left\{ V^{\star,\ror}_\theta(\rho) - V^{\widehat{\pi},\ror}_\theta (\rho)  \geq \varepsilon  \right\} \ge \frac{1}{4},
\end{align}
where  $\mathbb{P}_\theta$ denote the probability conditioned on that the MDP is $\cM_\theta$. We omit the details for brevity and complete the proof; interested readers can referred to \cite[Section C.3.2]{li2022settling}.
%


\subsubsection{Construction of hard problem instances: large uncertainty level}

We now move onto the case when the uncertainty level is relatively large, we construct another class of hard instances, which is almost the same as the previous one except for the transition kernel.

\paragraph{Construction of a collection of hard MDPs.} Let us introduce two MDPs
\begin{align}
   \left\{ \cM_\phi=
    \left(\mathcal{S}, \mathcal{A}, P^{\phi} = \{P^{\phi}_h\}_{h=1}^H, \{r_h\}_{h=1}^H, H \right) 
    \mymid \phi = \{0,1\}
    \right\},
\end{align}
where the state space is $\cS = \{0, 1, \ldots, S-1\}$, and the action space is $\mathcal{A} = \{0, 1\}$. 
The transition kernel $P^\phi$ of the constructed MDP $\cM_\phi$ is defined as
\begin{subequations}  \label{eq:Ph-construction-lower-finite}
\begin{align}
P^{\phi}_1(s^{\prime} \mymid s, a) = \left\{ \begin{array}{lll}
         p\mathds{1}(s^{\prime} = 0) + (1-p)\mathds{1}(s^{\prime} = 1)  & \text{if} & (s, a) = (0, \phi) \\
         q\mathds{1}(s^{\prime} = 0) + (1-q)\mathds{1}(s^{\prime} = 1) & \text{if} & (s, a) = (0, 1-\phi) \\
         \mathds{1}(s^{\prime} = 1) & \text{if}   & s=1 \\ 
         q\mathds{1}(s^{\prime} = s) + (1-q)\mathds{1}(s^{\prime} = 1) & \text{if}   & s > 1 \
                \end{array}\right.
        \label{eq:Ph-construction-lower-finite-1}
\end{align}
and 
\begin{align}
P^{\phi}_h(s^{\prime} \mymid s, a) = \mathds{1}(s^{\prime} = s),\qquad \forall (h,s,a) \in \{2,\ldots, H\} \times \cS \times \cA.
        \label{eq:Ph-construction-lower-finite-h}
\end{align}
\end{subequations}
In words, except at step $h=1$, the MDP always stays in the same state. Additionally, the MDP will always stay in the state subset $\{0,1\}$ if the initial distribution is supported only on $\{0,1\}$, in view of \eqref{eq:Ph-construction-lower-finite}. Here, $p$ and $q$ are set to be 
\begin{equation}
    p = 1 - \alpha 
    \qquad \text{and} \qquad
    q = 1 - \alpha - \Delta \label{eq:finite-p-q-def}
\end{equation}
for some $H \geq 2e^8$, $\alpha$ and $\Delta$ obeying  
\begin{equation}\label{eq:lower-p-q-beta-c1}
    0 < \alpha \leq \frac{1}{H} \leq \frac{1}{2e^8} \qquad \qquad \text{and} \qquad \Delta \leq \frac{\alpha}{2} \leq \frac{1}{2H} \leq \frac{1}{4e^8},
\end{equation}
where $\beta$ is set as 
 \begin{align}\label{eq:lower-bound-H-assumption}
\beta  \defn \frac{\log \frac{1}{\alpha+\Delta}}{2} \geq \frac{\log (2H/3)}{2} \geq 4.
\end{align} 
The assumption \eqref{eq:lower-p-q-beta-c1} immediately indicates the facts
\begin{align}\label{eq:lower-p-q-assumption}
    1> p>q \geq \frac{1}{2}.
\end{align}

Moreover, for any $(h,s,a)\in  [H]\times\cS\times \cA$, the reward function is defined as
\begin{align}
r_h(s, a) = \left\{ \begin{array}{lll}
         1 & \text{if } s = 0 \\
         0 & \text{otherwise}  \ 
                \end{array}\right. .
        \label{eq:rh-construction-lower-bound-finite}
\end{align}

\paragraph{Construction of the history/batch dataset.}
We utilize the same batch dataset described in Appendix~\ref{sec:small-uncertain-instance} and choose the same initial state distribution $\rho$ in \eqref{eq:rho-defn-finite-LB-theta}
As a result, for any MDP $\mathcal{M}_\phi$, the occupancy distributions of the above batch dataset are the same (due to symmetry) and admit the following simple characterization: 
\begin{subequations}    \label{eq:finite-lb-behavior-distribution}
\begin{align}
d^{\mathsf{b}, P^\phi}_1(0, a) = \frac{1}{2} \mu(0), \qquad &\forall a\in \cA ,   \\
     \frac{\mu(s)}{2} \leq  d^{\mathsf{b}, P^\phi}_h(s) \leq 2\mu(s), \qquad  \frac{\mu(s)}{4} \leq  d^{\mathsf{b}, P^\phi}_h(s,a) \leq \mu(s), \qquad &\forall (s,a,h)\in\cS\times \cA\times [H]. 
\end{align}
\end{subequations}
The proof of the claim \eqref{eq:finite-lb-behavior-distribution} is postponed to Appendix~\ref{proof:eq:finite-lb-behavior-distribution}.

\paragraph{Uncertainty set of the transition kernels.}
Denote the transition kernel vector as
\begin{align}
    P_{h,s,a}^{\phi} \defn P^{\phi}_h(\cdot \mymid s,a) \in [0,1]^{1\times S}.
\end{align}
For any $(s,a,h)\in \cS\times\cA\times [H]$, the perturbation of the transition kernels in $\cM_\phi$ is restricted to the following uncertainty set
\begin{align}
    \unb^\ror(P^{\phi})\defn \otimes \; \unb^\ror \left(P^{\phi}_{h,s,a}\right),\qquad \unb^\ror(P_{h,s,a}^{\phi}) \defn \left\{ P_{h, s,a} \in \Delta (\cS): \mathsf{KL}\left(P_{h,s,a} \parallel P^{\phi}_{h,s,a}\right) \leq \ror \right\},
\end{align}
where the radius of the uncertainty set $\sigma$ obeys:
\begin{align}\label{eq:finite-lower-ror-bounded}
 \left(1- \frac{3}{\beta}\right) \log\left(\frac{1}{\alpha+ \Delta}\right) \leq \ror \leq \left(1- \frac{2}{\beta}\right)\log\left(\frac{1}{\alpha+ \Delta}\right).
\end{align}


Before continuing, we shall introduce some notation for convenience. For any $P^{\phi}_h(\cdot  \mymid s, a)$ in \eqref{eq:Ph-construction-lower-finite}, we define the limit of the perturbed kernel transiting to the next state $s'$ from the current state-action pair $(s,a)$ by
\begin{align}\label{eq:finite-lw-def-p-q}
\underline{P}^{\phi}_{h}(s' \mymid s,a) &\defn \inf_{P_{h,s,a} \in \unb^{\ror}(P^{\phi}_{h,s,a})} P_h(s'  \mymid s,a),
\end{align}
and in particular, denote
\begin{align}\label{eq:finite-lw-p-q-perturb-inf}
\underline{p} &\defn \underline{P}^{\phi}_{1}(0 \mymid 0,\phi)  ,\qquad \underline{q} = \underline{P}^{\phi}_{1}(0  \mymid 0, 1-\phi).
\end{align}
Armed with the above definitions, we introduce the following lemma which implies some useful properties of the uncertainty set.

\begin{lemma}\label{lem:finite-lb-uncertainty-set}
When $\beta$ satisfies \eqref{eq:lower-bound-H-assumption} and the uncertainty level $\ror$ satisfies \eqref{eq:finite-lower-ror-bounded}, the perturbed transition kernels obey
\begin{align}\label{eq:finite-lw-upper-p-q}
   \underline{p}\geq \underline{q} \geq \frac{1}{\beta}.
\end{align}
\end{lemma}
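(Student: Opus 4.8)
The plan is to reduce the computation of $\underline{p}$ and $\underline{q}$ to a one-parameter Bernoulli problem and then exploit the monotonicity properties of the binary KL divergence recorded in Lemma~\ref{lem:KL-key-result}. Since the nominal vectors $P^{\phi}_1(\cdot \mymid 0, \phi)$ and $P^{\phi}_1(\cdot\mymid 0, 1-\phi)$ are supported on $\{0,1\}$ with mass $p$ and $q$ respectively on state $0$, any perturbation with finite KL divergence must be absolutely continuous with respect to them and hence also supported on $\{0,1\}$. Consequently $\underline{p}$ is the smallest $x\in[0,1]$ with $\mathsf{KL}\big(\mathsf{Ber}(x)\parallel\mathsf{Ber}(p)\big)\le\ror$, and likewise $\underline{q}$ with $q$ in place of $p$. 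Because the binary KL divergence is strictly decreasing in its first argument below the nominal value (cf.~\eqref{eq:show-subset-of-sigma-sigma'}) and because $\ror \le 2\beta-4 < 2\beta = \log\frac{1}{\alpha+\Delta} = \log\frac{1}{1-q} \le \log\frac{1}{1-p}$ (using $1-q=\alpha+\Delta$, $1-p=\alpha$, and \eqref{eq:lower-bound-H-assumption}), each infimum is attained in the interior: $\underline{p}\in(0,p)$ with $\mathsf{KL}\big(\mathsf{Ber}(\underline{p})\parallel\mathsf{Ber}(p)\big)=\ror$, and $\underline{q}\in(0,q)$ with $\mathsf{KL}\big(\mathsf{Ber}(\underline{q})\parallel\mathsf{Ber}(q)\big)=\ror$.

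\textbf{Comparing $\underline{p}$ and $\underline{q}$.} To prove $\underline{p}\ge\underline{q}$ I would hold the first argument fixed at $\underline{q}$ and differentiate in the nominal parameter: a direct computation gives $\frac{\partial}{\partial a}\mathsf{KL}\big(\mathsf{Ber}(\underline{q})\parallel\mathsf{Ber}(a)\big) = \frac{a-\underline{q}}{a(1-a)} > 0$ for $a\in(\underline{q},1)$. Integrating from $q$ up to $p>q$ (noting $[q,p]\subset(\underline{q},1)$) yields $\mathsf{KL}\big(\mathsf{Ber}(\underline{q})\parallel\mathsf{Ber}(p)\big) > \mathsf{KL}\big(\mathsf{Ber}(\underline{q})\parallel\mathsf{Ber}(q)\big) = \ror$. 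Since $x\mapsto\mathsf{KL}\big(\mathsf{Ber}(x)\parallel\mathsf{Ber}(p)\big)$ is decreasing on $(0,p)$ and equals $\ror$ at $x=\underline{p}$, the strictly larger value at $x=\underline{q}$ forces $\underline{q}\le\underline{p}$.

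\textbf{Lower bounding $\underline{q}$.} For $\underline{q}\ge 1/\beta$ the same monotonicity reduces matters to showing $\mathsf{KL}\big(\mathsf{Ber}(1/\beta)\parallel\mathsf{Ber}(q)\big)\ge\ror$: since both $1/\beta\le 1/4<q$ and $\underline{q}$ lie in $(0,q)$ where the KL is decreasing, this inequality places $1/\beta$ to the left of $\underline{q}$. Using $\log\frac{1}{1-q}=2\beta$ and $\ror\le 2\beta-4$, I would expand
\begin{equation*}
\mathsf{KL}\big(\mathsf{Ber}(1/\beta)\parallel\mathsf{Ber}(q)\big) = \tfrac{1}{\beta}\log\tfrac{1/\beta}{q} + \Big(1-\tfrac1\beta\Big)\Big(\log\big(1-\tfrac1\beta\big) + 2\beta\Big),
\end{equation*}
and bound the pieces via $\big(1-\tfrac1\beta\big)\log\big(1-\tfrac1\beta\big)\ge -\tfrac1\beta$ (from $\log(1-t)\ge -t/(1-t)$) and $-\log q\ge 0$, obtaining $\mathsf{KL}\big(\mathsf{Ber}(1/\beta)\parallel\mathsf{Ber}(q)\big) \ge 2\beta-2-\frac{1+\log\beta}{\beta}$. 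This is at least $2\beta-4\ge\ror$ exactly when $\frac{1+\log\beta}{\beta}\le 2$, which holds for all $\beta\ge 4$ since the left-hand side is decreasing in $\beta$ on $[4,\infty)$. I expect this last estimate to be the main obstacle, since it requires tracking the precise constants in the prescribed range \eqref{eq:finite-lower-ror-bounded} of $\ror$ so that the elementary bounds close with the slack afforded by $\beta\ge4$; the two monotonicity steps are then essentially immediate from Lemma~\ref{lem:KL-key-result}.
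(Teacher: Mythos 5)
Your proof is correct, and its core follows the same route as the paper's: both arguments reduce to showing $\mathsf{KL}\big(\mathsf{Ber}(1/\beta)\parallel\mathsf{Ber}(q)\big)\ge\ror$ and then conclude $\underline{q}\ge 1/\beta$ from the monotonicity \eqref{eq:show-subset-of-sigma-sigma'} of the binary KL divergence in its first argument. The elementary estimates are packaged slightly differently---the paper bounds the Bernoulli-entropy term $\frac{1}{\beta}\log\frac{1}{\beta}+(1-\frac{1}{\beta})\log(1-\frac{1}{\beta})$ below by $-1$ and absorbs that loss using $\log\frac{1}{\alpha+\Delta}\ge\beta$, whereas you bound the two terms separately and close with $\frac{1+\log\beta}{\beta}\le 2$ for $\beta\ge 4$---but the two computations are interchangeable and both land on $\ror\le 2\beta-4\le \mathsf{KL}\big(\mathsf{Ber}(1/\beta)\parallel\mathsf{Ber}(q)\big)$. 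Where you go beyond the paper is the first inequality $\underline{p}\ge\underline{q}$: the paper dismisses it as ``easily verified since $p>q$,'' while you supply an actual proof via the sign of $\frac{\partial}{\partial a}\mathsf{KL}\big(\mathsf{Ber}(\underline{q})\parallel\mathsf{Ber}(a)\big)=\frac{a-\underline{q}}{a(1-a)}>0$ for $a>\underline{q}$, which is considerably cleaner than the long derivative computation the paper deploys for the analogous (stronger) claim in its small-uncertainty counterpart, Lemma~\ref{lem:finite-lb-uncertainty-set-KL}. Likewise, your preliminary check that $\ror\le 2\beta-4<\log\frac{1}{1-q}\le\log\frac{1}{1-p}$, so that both infima are attained at interior solutions of the KL equation (rather than at the boundary mass-zero point), is a worthwhile piece of rigor that the paper leaves implicit when it characterizes $\underline{q}$ through $\mathsf{KL}\big(\mathsf{Ber}(\underline{q})\parallel\mathsf{Ber}(q)\big)=\ror$.
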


\begin{proof}
See Appendix~\ref{proof:lem:finite-lb-uncertainty-set}.  
\end{proof} 

\paragraph{Value functions and optimal policies.}
Similar to Appendix~\ref{sec:small-uncertain-instance}, for any MDP $\cM_\phi$, we denote $\pi^{\star,\phi} = \{\pi^{\star,\phi}_h\}_{h=1}^H$ as the optimal policy, and let $V_{h}^{\pi,\ror,\phi}$ (resp.~$V_{h}^{\star, \ror,\phi}$) represent the robust value function of policy $\pi$ (resp.~$\pi^{\star,\phi}$) at step $h$ with uncertainty radius $\ror$. Then we introduce the following lemma which collects the properties concerning the value functions and optimal policies.
\begin{lemma}\label{lem:finite-lb-value}
For any $\phi = \{0,1\}$ and any policy $\pi$, defining 
\begin{align}
z_{\phi}^{\pi} \defn \underline{p}\pi_1(\phi\mymid 0) + \underline{q} \pi_1(1-\phi \mymid 0),\label{eq:finite-x-h}
\end{align}
it holds that
\begin{align}
    V_1^{\pi, \sigma, \phi}(0) =  1 + z_{\phi}^{\pi} (H-1).
    \label{eq:finite-lemma-value-0-pi}
\end{align}
In addition, the optimal policies and the optimal value functions obey 
\begin{subequations}
    \label{eq:finite-lb-value-lemma}
\begin{align}
    V_1^{\star,\sigma, \phi}(0) &= 1 + \underline{p}(H-1),\\
    \forall h\in[H] \setminus \{1\}:\quad V_h^{\star,\sigma, \phi}(0) &= H - h + 1,\\
    \forall h\in[H]: \quad \pi_h^{\star,\phi}(\phi \mymid 0) &= 1, \quad \pi_h^{\star,\phi}(\phi \mymid 1) = 1, \qquad V_h^{\star,\sigma, \phi}(1) = 0.
\end{align}
\end{subequations}
The robust single-policy clipped concentrability coefficient $\Cstar$ obeys
\begin{align}
  2C \leq  \Cstar \leq 4C. \label{eq:expression-Cstar-LB-finite}
\end{align}

\end{lemma}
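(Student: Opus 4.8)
The plan is to exploit the almost-degenerate dynamics of $\cM_\phi$: for $h\geq 2$ the transitions are frozen, so all the essential behavior happens at the first step.

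First I would settle the value functions for $h\geq 2$. By \eqref{eq:Ph-construction-lower-finite-h} the nominal kernel $P^\phi_h(\cdot\mymid s,a)$ is the point mass at $s$ for every $h\geq 2$; since any $P$ with $\mathsf{KL}(P\parallel \mathds{1}(\cdot=s))\leq\ror$ must be absolutely continuous with respect to that point mass, the set $\unb^\ror(P^\phi_{h,s,a})$ is the singleton $\{\mathds{1}(\cdot=s)\}$ for all $h\geq 2$. Hence the dynamics for $h\geq 2$ are deterministic self-loops, independent of both the policy and the adversary, and the robust value equals the remaining accumulated reward. With the reward \eqref{eq:rh-construction-lower-bound-finite}, this gives $V_h^{\pi,\ror,\phi}(0)=H-h+1$ and $V_h^{\pi,\ror,\phi}(1)=0$ for every $h\in[H]\setminus\{1\}$ and every $\pi$; in particular $V_2^{\pi,\ror,\phi}(0)=H-1$ and $V_2^{\pi,\ror,\phi}(1)=0$. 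Because the chain never leaves $\{0,1\}$ once started there, I never need the values at states $s>1$.

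Next I would compute the step-$1$ quantities via the robust Bellman consistency equation \eqref{eq:robust_bellman_consistency}. The nominal kernel $P^\phi_{1,0,a}$ is supported on $\{0,1\}$, and the finiteness of the KL constraint forces every $\cP$ in the uncertainty set to share this support, so $\cP V_2^{\pi,\ror,\phi}=\cP(0)\,(H-1)$. Since $H-1>0$, the infimum is achieved by pushing as much mass as possible off state $0$, i.e.\ by the minimizer in \eqref{eq:finite-lw-p-q-perturb-inf}, giving $Q_1^{\pi,\ror,\phi}(0,\phi)=1+\underline{p}(H-1)$ and $Q_1^{\pi,\ror,\phi}(0,1-\phi)=1+\underline{q}(H-1)$. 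Averaging these two values with the weights $\pi_1(\cdot\mymid 0)$ yields \eqref{eq:finite-lemma-value-0-pi}. Invoking $\underline{p}\geq\underline{q}$ from Lemma~\ref{lem:finite-lb-uncertainty-set}, the affine map $\pi_1(\phi\mymid 0)\mapsto z_\phi^\pi$ is maximized at $\pi_1(\phi\mymid 0)=1$, which forces $\pi_1^{\star,\phi}(\phi\mymid 0)=1$ and $V_1^{\star,\ror,\phi}(0)=1+\underline{p}(H-1)$; at state $1$ and at state $0$ for $h\geq 2$ every action is equivalent, so I set $\pi_h^{\star,\phi}(\phi\mymid 0)=\pi_h^{\star,\phi}(\phi\mymid 1)=1$. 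Together with the first paragraph this establishes \eqref{eq:finite-lb-value-lemma}.

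Finally I would bound $\Cstar$ using $\rho=\mathds{1}(\cdot=0)$ from \eqref{eq:rho-defn-finite-LB-theta}, the occupancy bounds \eqref{eq:finite-lb-behavior-distribution}, and $\mu(0)=\tfrac{1}{CS}$, $\mu(1)=1-\tfrac{1}{CS}$ from \eqref{finite-mu-assumption-theta}. Under $\pi^{\star,\phi}$ the only state-action pairs with positive $d_h^{\star,P}$ are $(0,\phi)$ (all $h$) and $(1,\phi)$ ($h\geq 2$), so every other term in \eqref{eq:concentrate-finite} vanishes by the convention $0/0=0$. For the lower bound I would evaluate \eqref{eq:concentrate-finite} at $(1,0,\phi)$, where $d_1^{\star,P}(0,\phi)=1$ gives $\min\{1,1/S\}=1/S$ and $d_1^{\mathsf{b},P^\phi}(0,\phi)=\tfrac12\mu(0)=\tfrac{1}{2CS}$, producing the ratio exactly $2C$, hence $\Cstar\geq 2C$. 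For the upper bound I would use $\min\{\cdot,1/S\}\leq 1/S$ together with $d_h^{\mathsf{b},P^\phi}\geq\mu(\cdot)/4$: the pair $(0,\phi,\cdot)$ contributes at most $\tfrac{1/S}{\mu(0)/4}=4C$, while $(1,\phi,\cdot)$ contributes at most $\tfrac{1/S}{\mu(1)/4}\leq\tfrac{16}{3S}\leq 4C$ using $\mu(1)\geq 3/4$ and $C\geq 4/S$; taking the maximum over all pairs gives $\Cstar\leq 4C$, which is \eqref{eq:expression-Cstar-LB-finite}. The main obstacle is the step-$1$ robust Bellman evaluation: one must confirm that the worst-case kernel stays supported on $\{0,1\}$ and drives the mass on state $0$ down to precisely $\underline{p}$ (resp.\ $\underline{q}$), which rests on the strict ordering $V_2^{\pi,\ror,\phi}(0)>V_2^{\pi,\ror,\phi}(1)$ and the singleton structure of the uncertainty set at the deterministic steps $h\geq 2$; the concentrability accounting is then routine once \eqref{eq:finite-lb-behavior-distribution} is in hand.
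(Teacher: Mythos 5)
Your proposal is correct and follows essentially the same route as the paper's proof: singleton uncertainty sets at the frozen steps $h\geq 2$ (forced by absolute continuity under the KL constraint), a step-one robust Bellman evaluation whose infimum is attained at $\underline{p},\underline{q}$ because $V_2^{\pi,\sigma,\phi}(0)=H-1>0=V_2^{\pi,\sigma,\phi}(1)$, monotonicity of $z_\phi^\pi$ in $\pi_1(\phi\mid 0)$ (via $\underline{p}\geq\underline{q}$) to pin down the optimal policy, and a direct occupancy computation giving the exact ratio $2C$ at $h=1$ and the bound $4C$ at $h\geq 2$. The only cosmetic omission is $V_1^{\star,\sigma,\phi}(1)=0$ (the lemma asserts this for all $h\in[H]$, not just $h\geq 2$), but since state $1$ is absorbing under $P_1^\phi$ as well, your singleton-support argument applies verbatim at $h=1$.
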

\begin{proof}
See Appendix~\ref{proof:lem:finite-lb-value}.
\end{proof}

In view of Lemma~\ref{lem:finite-lb-value}, we note that the smallest positive state transition probability of the optimal policy $\pi^\star$ under any MDP $\cM_\phi$ with $\phi \in \{0,1\}$ thus can be given by
\begin{align}\label{eq:P-min-phi}
\minpall \defn \min_{h,s,s'} \Big\{P_h^{\phi}\left(s' |s, \pi^{\star,\phi}_h(s)\right):\; P_h^{\phi}\left(s' |s, \pi^{\star,\phi}_h(s)\right)>0 \Big\} = P_1^{\phi}\left( 1 |0, 1- \phi \right) = 1- p ,
\end{align}
which obeys 
$$ \alpha = \minpall \in (0, 1/H]$$ 
according to \eqref{eq:finite-p-q-def} and \eqref{eq:lower-p-q-beta-c1}.

\subsubsection{Establishing the minimax lower bound: large uncertainty level}\label{eq:proof-finite-lower-bound}
We are now ready to establish the sample complexity lower bound. With the choice of the initial distribution $\rho$ in \eqref{eq:rho-defn-finite-LB-theta}, for any policy estimator $\widehat{\pi}$ computed based on the batch dataset, we plan to control the quantity
$$\big\langle \rho, V_1^{\star, \sigma, \phi} - V_1^{\widehat{\pi},\sigma, \phi} \big\rangle = V_1^{\star, \sigma, \phi}(0) - V_1^{\widehat{\pi}, \sigma, \phi}(0).$$

\paragraph{Step 1: converting the goal to estimate $\phi$.} 
We make the following claim which shall be verified in Appendix~\ref{proof:finite-lower-diff-control}: given $    \varepsilon \leq \frac{H}{384  e^6 \log\left(\frac{1}{\alpha}\right)}\leq  \frac{H}{384  e^6 \log\left(\frac{1}{\alpha+ \Delta}\right)}$,
choosing 
\begin{align}\label{eq:Delta-chosen}
    \Delta = \frac{128  e^6  \sigma (1-q) \varepsilon}{H} =\frac{128  e^6  \sigma (\alpha+ \Delta) \varepsilon}{H}  \leq \frac{128  e^6   (\alpha+ \Delta) \varepsilon \log\left(\frac{1}{\alpha+ \Delta}\right)}{  H} \leq \frac{\alpha}{2},
\end{align}
which satisfies \eqref{eq:lower-p-q-beta-c1} with the aid of \eqref{eq:finite-lower-ror-bounded} and \eqref{eq:finite-p-q-def},
it holds that for any policy $\widehat{\pi}$, 
\begin{align}
    \big\langle \rho, V_1^{\star, \sigma, \phi} - V_1^{\widehat{\pi}, \sigma, \phi} \big\rangle  \geq 2\varepsilon \big(1-\widehat{\pi}_1(\phi\mymid 0)\big). \label{eq:finite-Value-0-recursive}
\end{align}
Armed with this relation between the policy $\widehat{\pi}$ and its sub-optimality gap, we are positioned to construct an estimate of $\phi$. We denote $\mathbb{P}_\phi$ as the probability distribution when the MDP is $\mathcal{M}_\phi$, for any $\phi \in \{0,1\}$.

Suppose for the moment that a policy estimate $\widehat{\pi}$ achieves
\begin{align}
    \mathbb{P}_\phi \left\{\big\langle \rho, V_1^{\star, \sigma, \phi} - V_1^{\widehat{\pi}, \sigma, \phi} \big\rangle \leq \varepsilon\right\} \geq \frac{7}{8},
    \label{eq:assumption-theta-small-LB-finite}
\end{align}
then in view of \eqref{eq:finite-Value-0-recursive}, 
we necessarily have $\widehat{\pi}_1(\phi\mymid 0) \geq \frac{1}{2}$ with probability at least $\frac{7}{8}$.
With this in mind, we are motivated to construct the following estimate $\widehat{\phi}$ for $\phi\in \{0,1\}$: 
\begin{align}
    \widehat{\phi}=\arg\max_{a\in \{0,1\}} \, \widehat{\pi}_1(a\mymid 0),
    \label{eq:defn-theta-hat-inf-LB}
\end{align}
which obeys
\begin{align}
    \mathbb{P}_{\phi}\big\{ \widehat{\phi} = \phi \big\} 
    \geq \mathbb{P}_{\phi}\big\{ \widehat{\pi}_1(\phi \mymid 0) > 1/2 \big\} \geq \frac{7}{8}. 
    \label{eq:P-theta-accuracy-inf}
\end{align}
In what follows, we would like to show \eqref{eq:P-theta-accuracy-inf} cannot happen without enough samples, which would in turn contradict \eqref{eq:finite-Value-0-recursive}.

\paragraph{Step 2: probability of error in testing two hypotheses.}
Armed with the above preparation, we shall focus on differentiating the two hypotheses $\phi \in\{ 0, 1\}$. 
Towards this, consider the minimax probability of error defined as follows:
\begin{equation}
    p_{\mathrm{e}} \coloneqq \inf_{\psi}\max \big\{ \mathbb{P}_{0}(\psi \neq 0), \, \mathbb{P}_{1}(\psi \neq 1) \big\} , \label{eq:error-prob-two-hypotheses-finite-LB}
\end{equation}
where the infimum is taken over all possible tests $\psi$ constructed from the batch dataset.

Let $\mu^{\mathsf{b},\phi}$ (resp.~$\mu^{\mathsf{b},\phi}_h(s_h)$) be the distribution of a sample trajectory $\{s_h, a_h\}_{h=1}^H$ (resp.~a sample $(a_h,s_{h+1})$ conditional on $s_h$) for the MDP $\mathcal{M}_\phi$. Following standard results from \citet[Theorem~2.2]{tsybakov2009introduction} and  the additivity of the KL divergence (cf.~\citet[Page~85]{tsybakov2009introduction}), we obtain
\begin{align}
p_{\mathrm{e}} &  \geq \frac{1}{4}\exp\Big(- K\mathsf{KL} \big(\mu^{\mathsf{b},0}\parallel \mu^{\mathsf{b},1} \big)\Big)\nonumber\\
    & \geq \frac{1}{4}\exp\bigg\{-\frac{1}{2} K\mu(0)\Big(\mathsf{KL}\big(P_{1}^0(\cdot\mymid0,0)\parallel P_{1}^1(\cdot\mymid0,0)\big)+\mathsf{KL}\big(P_{1}^0(\cdot\mymid0,1)\parallel P_{1}^1(\cdot\mymid0,1)\big)\Big)\bigg\},
    \label{eq:finite-remainder-KL}
\end{align}
where we also use the independence of the $K$ trajectories in the batch dataset in the first line. Here, the second line arises from the chain rule of the KL divergence \citep[Lemma 5.2.8]{duchi2018introductory} and the Markov property of the sample trajectories (recall that $d^{\mathsf{b}, P^0}_h = d^{\mathsf{b}, P^1}_h$) according to
\begin{align}
\mathsf{KL}\big(\mu^{\mathsf{b},0}\parallel\mu^{\mathsf{b},1}\big) & =\sum_{h=1}^{H}\mathop{\mathbb{E}}\limits _{s_{h}\sim d^{\mathsf{b}, P^0}_{h}}\left[\mathsf{KL}\big(\mu_{h}^{\mathsf{b},0}(s_{h})\parallel\mu_{h}^{\mathsf{b},1}(s_{h})\big)\right] \notag \\
&=\sum_{a\in\{0,1\}}  d^{\mathsf{b}, P^0}_{1}(0,a)\mathsf{KL}\left(P_{1}^{0}(\cdot\mymid0,a)\parallel P_{1}^{1}(\cdot\mymid0,a)\right)  \notag\\
    & =  \frac{1}{2}\mu(0)\sum_{a\in\{0,1\}}\mathsf{KL}\big(P_{1}^{0}(\cdot\mymid0,a)\parallel P_{1}^{1}(\cdot\mymid0,a)\big), \notag 
\end{align}
where the penultimate equality holds by the fact that $P_{h}^{0}(\cdot\mymid s,a)$ and $P_{h}^{1}(\cdot\mymid s,a)$ only differ when $h=1$ and $s=0$, and the last equality follows from \eqref{eq:finite-lb-behavior-distribution}.

It remains to control the KL divergence terms in \eqref{eq:finite-remainder-KL}.
Given $p\geq q \geq 1/2$ (cf.~\eqref{eq:lower-p-q-assumption}), applying Lemma~\ref{lem:KL-key-result} (cf.~\eqref{eq:KL-dis-key-result}) yields
\begin{align}
\mathsf{KL}\big(P_1^{0}(\cdot \mymid 0, 0)\parallel P_1^{1}(\cdot \mymid 0, 0)\big) & =\mathsf{KL}\left(p\parallel q \right)  \leq \frac{(p-q)^2}{(1-p)p}  \overset{\mathrm{(i)}}{=} \frac{\Delta^2}{p(1-p)} \notag\\
    & \overset{\mathrm{(ii)}}{=} \frac{ 128^2 e^{12} \ror^2 (1-q)^2 \varepsilon^2}{H^{2} p(1-p)} \notag \\
    & \overset{\mathrm{(iii)}}{\leq} \frac{ c_1 \ror^2 \minpall  \varepsilon^2}{H^{2}},
    \label{eq:finite-KL-bounded}
\end{align}
where (i) follows from the definition \eqref{eq:finite-p-q-def}, (ii) holds by plugging in the expression of $\Delta$ in \eqref{eq:Delta-chosen}, (iii) arises from $1-q \leq 2(1-p) = 2\minpall$ (see \eqref{eq:lower-p-q-beta-c1} and \eqref{eq:P-min-phi}), $p>\frac{1}{2}$, as long as $c_1$ is a large enough constant.
It can be shown that $\mathsf{KL}\big(P_1^{0}(\cdot \mymid 0, 1)\parallel P_1^{1}(\cdot \mymid 0, 1)\big)$ 
can be upper bounded in the same way. 
Substituting \eqref{eq:finite-KL-bounded} back into \eqref{eq:finite-remainder-KL} 
demonstrates that: if the sample size is chosen as
\begin{align}\label{eq:finite-sample-N-condition}
    KH \leq \frac{H^{3 } S\Cstar \log 2}{ 4 c_1  \minpall \ror^2 \varepsilon^2},
\end{align}
then one necessarily has
\begin{align}
    p_{\mathrm{e}} &\geq \frac{1}{4}\exp\bigg\{-\frac{1}{2} K\mu(0) \cdot 2\frac{ c_1 \ror^2 \minpall  \varepsilon^2}{H^{2}} \bigg\}  \overset{\mathrm{(i)}}{=} \frac{1}{4}\exp\bigg\{ -K \frac{ c_1 \ror^2 \minpall  \varepsilon^2}{SC H^{2}} \bigg\} \notag \\
    & \overset{\mathrm{(ii)}}{\geq} \frac{1}{4}\exp\bigg\{ -K\frac{4 c_1 \ror^2 \minpall  \varepsilon^2 }{S\Cstar H^{2 }} \bigg\} \geq \frac{1}{8}, \label{eq:pe-LB-13579-inf}
\end{align}
where (i) follows from \eqref{finite-mu-assumption-theta} and (ii) holds by \eqref{eq:expression-Cstar-LB-finite}. 

\paragraph{Step 3: putting things together.} 
Finally, suppose that there exists an estimator $\widehat{\pi}$ such that
\[
    \mathbb{P}_0 \big\{  \big\langle \rho, V_1^{\star, \sigma, 0} - V_1^{\widehat{\pi}, \sigma, 0} \big\rangle > \varepsilon  \big\} < \frac{1}{8}
    \qquad \text{and} \qquad
    \mathbb{P}_1 \big\{  \big\langle \rho, V_1^{\star,\sigma, 1} - V_1^{\widehat{\pi},\sigma, 1} \big\rangle > \varepsilon  \big\} < \frac{1}{8}.
\]
Then Step 1 tells us that the estimator $\widehat{\phi}$ defined in \eqref{eq:defn-theta-hat-inf-LB} must satisfy
\[
    \mathbb{P}_0\big(\widehat{\phi} \neq 0\big) < \frac{1}{8} 
    \qquad \text{and} \qquad
    \mathbb{P}_1\big(\widehat{\phi} \neq 1\big) < \frac{1}{8},
\]
which cannot happen under the sample size condition in \eqref{eq:finite-sample-N-condition} to avoid contradition with \eqref{eq:pe-LB-13579-inf}.  The proof is thus finished.


\subsubsection{Proof of Lemma~\ref{lem:finite-lb-uncertainty-set-KL}}\label{proof:lem:finite-lb-uncertainty-set-KL}

First, \eqref{eq:finite-lw-upper-p-q-theta-0} can be easily verified by the definition of $\underline{p}_h$ and $\underline{q}_h$ in \eqref{eq:finite-lw-p-q-perturb-inf-theta} and the transition $P^{\theta_h}_h$ in \eqref{eq:Ph-construction-lower-bound-finite-theta}.

\paragraph{Proof of the first inequality in \eqref{eq:finite-lw-upper-p-q-theta}.}

It is observed that
\begin{align}
\mathsf{KL}\left(\mathsf{Ber}\left(1-\frac{2c_1}{H} \right) \parallel \mathsf{Ber}(q)\right)  
& = \left(1-\frac{2c_1}{H} \right)\log\left(\frac{1-\frac{2c_1}{H}}{q} \right)  + \frac{2c_1}{H}\log\left( \frac{\frac{2c_1}{H}}{1-q}\right)  \notag \\
& = \left(1-\frac{2c_1}{H} \right)\log\left( 1 + \frac{1- q -\frac{2c_1}{H}}{q} \right)  + \frac{2c_1}{H}\log\left(  1 + \frac{q-1 + \frac{2c_1}{H}}{1-q}\right)  \notag \\
&\overset{\mathrm{(i)}}{\geq} \left(1-\frac{2c_1}{H} \right) \frac{\frac{1- q -\frac{2c_1}{H}}{q} }{2q}\frac{1+q - \frac{2c_1}{H}}{1-\frac{2c_1}{H}} + \frac{2c_1}{H} \cdot 2 \frac{q-1 +\frac{2c_1}{H} }{1-q+\frac{2c_1}{H} } \notag \\
& = (q-1 +\frac{2c_1}{H}) \left[ -\frac{1+q - \frac{2c_1}{H}}{2q^2} + \frac{4c_1}{H}\frac{1}{1-q+\frac{2c_1}{H} }\right] \notag \\  
&\overset{\mathrm{(ii)}}{\geq} \frac{c_1}{2H} \left[ \frac{-1}{(1-\frac{3c_1}{2H})^2}+ \frac{8}{7} \right] \geq \frac{1}{20H} \geq \ror,
\end{align}
where (i) holds by $\log(1+x) \geq \frac{x}{2(1+x)}$ when $0 \leq x < \infty$ and $\log(1+x) \geq \frac{x}{2}\frac{2+x}{1+x}$ when $-1< x \leq 0$ \citep{topsoe12007some}, and the penultimate inequality holds by $1-\frac{3c_1}{2H} \geq 1 - \frac{3}{256} \geq \frac{63}{64}$. Here, (ii) can be verified by 
\begin{align*}
    -\frac{1+q - \frac{2c_1}{H}}{2q^2} &\geq -\frac{1+q }{2q^2} \geq \frac{-2}{2q^2} \overset{\mathrm{(iii)}}{\geq} \frac{-1}{(1-\frac{3c_1}{2H})^2},   \\
    \frac{4c_1}{H}\frac{1}{1-q+\frac{2c_1}{H} }  &\overset{\mathrm{(iv)}}{\geq}  \frac{4c_1}{H}\frac{1}{ \frac{3c_1}{2H} +\frac{2c_1}{H} } \geq \frac{8}{7},
\end{align*} 
where (iii) holds by $q\geq 1-\frac{3c_1}{2H}$, and (iv) arises from $  0\leq 1- q \leq \frac{3c_1}{2H}$.

With above fact and $\mathsf{KL}\left(\mathsf{Ber}\left(\underline{q}_\star \right) \parallel \mathsf{Ber}(q)\right) = \ror $ in mind, applying  Lemma~\ref{lem:KL-key-result} leads to
$\underline{q}_\star \geq 1 - \frac{2c_1}{H}$.

\paragraph{Proof of the second inequality in \eqref{eq:finite-lw-upper-p-q-theta}.}
First, observing the first claim in \eqref{eq:finite-lw-upper-p-q-theta}, combined with Lemma~\ref{lem:KL-key-result}, we know that for any uncertainty level $\ror \leq \frac{1}{20H}$, there exists a unique $\underline{q}_\star$ obeying $\underline{q}_\star \geq 1- \frac{2c_1}{H} > \frac{1}{2}$ such that
\begin{align}
\ror = \mathsf{KL}\left(\mathsf{Ber}\left(\underline{q}_\star \right) \parallel \mathsf{Ber}(q)\right). \label{eq:some-fact-of-underline-q}
\end{align}
Then let us define the following function for $0<x\leq \frac{1-q}{3}$ (i.e., $p=q+x$)
\begin{align}
g(x, q) = \mathsf{KL}\left(\mathsf{Ber}\left(\underline{q}_\star +x \right) \parallel \mathsf{Ber}(q + x)\right) = (\underline{q}_\star +x)\log\frac{\underline{q}_\star +x}{q+x} + (1-\underline{q}_\star -x) \log\frac{1-\underline{q}_\star -x}{1-q-x}
\end{align}
The first derivative $\nabla_x g(x,q)$ is
\begin{align}
&\nabla_x g(x,q) \nonumber\\
&= \log\left(\frac{\underline{q}_\star +x}{q+x}\right) + (q+x)\frac{q+x - (\underline{q}_\star+x)}{(q+x)^2} - \log\left(\frac{1-\underline{q}_\star -x}{1-q-x}\right)  + (1-q - x)\frac{-(1-q-x)+(1-\underline{q}_\star-x)}{(1-q-x)^2} \notag \\
& =\log\left(\frac{\underline{q}_\star +x}{q+x}\right) - \log\left(\frac{1-\underline{q}_\star -x}{1-q-x}\right) + \frac{q -\underline{q}_\star  }{ q+x} + \frac{q -\underline{q}_\star}{1-q-x} \notag \\
& = \log\left(1 + \frac{\underline{q}_\star -q}{q+x}\right) - \log\left( 1 + \frac{q-\underline{q}_\star}{1-q-x}\right) + \frac{q -\underline{q}_\star  }{ q+x} + \frac{q -\underline{q}_\star}{1-q-x} \notag \\
&\overset{\mathrm{(i)}}{\geq}\frac{\underline{q}_\star -q}{2(q+x)} \frac{2 + \frac{\underline{q}_\star -q}{q+x}}{1 + \frac{\underline{q}_\star -q}{q+x}} - \frac{q-\underline{q}_\star}{2(1-q-x)}\frac{2+\frac{q-\underline{q}_\star}{1-q-x}}{1+\frac{q-\underline{q}_\star}{1-q-x}}+ \frac{q -\underline{q}_\star  }{ q+x} + \frac{q -\underline{q}_\star}{1-q-x} \notag \\
&= \frac{q - \underline{q}_\star}{2(q+x)} \left(2 - \frac{2q+2x + \underline{q}_\star -q}{q+x + \underline{q}_\star -q} \right) +  \frac{q-\underline{q}_\star}{2(1-q-x)} \left(2 - \frac{2 - 2q -2x +q-\underline{q}_\star}{1-q-x +q-\underline{q}_\star} \right) \notag \\
& = \frac{q - \underline{q}_\star}{2(q+x)} \frac{\underline{q}_\star - q}{x + \underline{q}_\star} + \frac{q-\underline{q}_\star}{2(1-q-x)} \frac{q - \underline{q}_\star}{1-\underline{q}_\star - x} \notag \\
& = \frac{(q-\underline{q}_\star)^2 \left[(q+x)(\underline{q}_\star+x) - (1-q-x)(1-\underline{q}_\star-x)\right]}{2(q+x)(\underline{q}_\star+x)(1-q-x)(1-\underline{q}_\star-x)} \geq 0
\end{align}
where (i) holds by $\log(1+x) \leq \frac{x}{2}\frac{2+x}{1+x}$ when $0 \leq x < \infty$ and $\log(1+x) \geq \frac{x}{2}\frac{2+x}{1+x}$ when $-1< x \leq 0$ \citep{topsoe12007some}, and the last inequality always holds for any $0 <x \leq \frac{1-q}{3}$ and $\underline{q}_\star \geq \frac{1}{2}$.

The above fact shows that $g(p-q,q) \geq g(0,q) = \sigma$, and thus
\begin{align}
\mathsf{KL}\left(\mathsf{Ber}\left(p - q + \underline{q}_\star\right) \parallel \mathsf{Ber}(p)\right) \geq \sigma,
\end{align}
which complete the proof by observing that
$\underline{p}_\star \geq p - q + \underline{q}_\star$ via applying Lemma~\ref{lem:KL-key-result} with \eqref{eq:some-fact-of-underline-q}.

 \subsubsection{Proof of Lemma~\ref{lem:finite-lb-value-theta}}\label{proof:lem:finite-lb-value-theta}

\paragraph{Ordering the value function for different states.}
First, note that for any policy $\pi$ at the final step $H+1$, we have
\begin{align}
\forall s\in\cS: \quad V_{H+1}^{\pi,\ror,\theta}(s) = 0.
\end{align}
Then for any $\theta \in \Theta$ and any policy $\pi$, it is easily verified that
\begin{align}
V_{H}^{\pi,\ror,\theta}(0) &= \mathbb{E}_{a\sim\pi_h(\cdot \mymid 0)}\left[r_h(0,a) + \inf_{ \cP \in \unb^{\sigma}(P^{\theta_h}_{h,0,a})}  \cP  V_{H+1}^{\pi,\ror,\theta}\right] = 1, \notag \\
V_{H}^{\pi,\ror,\theta}(s) &= \mathbb{E}_{a\sim\pi_h(\cdot \mymid s)}\left[r_h(s,a) + \inf_{ \cP \in \unb^{\sigma}(P^{\theta_h}_{h,s,a})}  \cP  V_{H+1}^{\pi,\ror,\theta}\right] =0, \quad \forall s\in \cS \setminus \{0\},
\end{align}
which directly indicates that
\begin{align}
    \forall s\in\cS\setminus\{0,1\}:\quad  V_{H}^{\pi,\ror,\theta}(0)  > V_{H}^{\pi,\ror,\theta}(s) = 0.\label{eq:ordering-states-base-case}
\end{align}

Then we provide the following claim which will be proved momentarily using induction: For any $\theta \in \Theta$ and any policy $\pi$, the following equation holds
\begin{align}
    \forall (h,s)\in [H] \times \cS\setminus\{0\} : \quad  V_{h}^{\pi,\ror,\theta}(0) > V_{h}^{\pi,\ror,\theta}(s) =0. \label{eq:ordering-states}
\end{align}

The above result leads to the following immediate fact for state $s\in \cS\setminus\{0\}$:
\begin{align}
    \forall (h,s)\in [H] \times\cS\setminus\{0\}:  \quad V_h^{\star,\sigma,\theta }(s) = \max_{\pi}  V_h^{\pi,\sigma,\theta }(s) = 0,
\end{align}
since \eqref{eq:ordering-states} holds for any $\pi$ and $h\in[H]$.
Therefore, for any state $s\in\cS\setminus\{0\}$, without loss of generality, we choose the optimal policy obeying
\begin{align}
\forall (h,s)\in [H] \times\cS\setminus\{0\}: \qquad  \pi_h^{\star,\theta}(\theta_h  \mymid s) = 1.
\end{align}

Then the rest of the proof will focus on deriving the value function and optimal policy over state $s=0$. To begin with, recalling the value function in \eqref{eq:value-0-theta}
\begin{align}
V_h^{\pi,\ror, \theta}(0) &=1 + x_h^{\pi,\theta} V_{h+1}^{\pi,\sigma, \theta}(0) \label{eq:interemediate_opt_10-theta}
\end{align}
and observing that the function $V_h^{\pi,\ror, \theta}(0)$ is increasing in $x_h^{\pi,\theta}$ and that $x_h^{\pi,\theta}$ is increasing in $\pi_h(\theta_h \mymid 0)$ (due to the fact $\underline{p}_\star\geq \underline{q}_\star$ in \eqref{eq:finite-lw-upper-p-q-theta}).
As a result,  the optimal policy obeys
\begin{equation}
    \pi_h^{\star,\theta}(\theta_h \mymid 0) = 1 \label{eq:finite-lb-optimal-policy-theta}
\end{equation}
 at state $0$. Plugging it back to \eqref{eq:interemediate_opt_10-theta} gives
 \begin{align}
    V_h^{\star,\ror, \theta}(0) &=1 + x_h^{\pi^{\star,\theta}} V_{h+1}^{\star,\sigma, \theta}(0) \notag \\
    &\overset{\mathrm{(i)}}{=} 1 + \underline{p}_\star V_{h+1}^{\star, \ror,\theta}(0)  \ge \sum_{j=0}^{H-h} \underline{p}_\star^j \ge \sum_{j = 0}^{H-h} \Big(1 - \frac{c_3}{H} \Big)^j = \frac{1-\big(1-\frac{c_{3}}{H}\big)^{H-h+1}}{c_{3}/H}\notag\\
    &\ge \frac{2}{3}(H+1-h).
 \end{align}
 where (i) holds by $ x_h^{\pi^\star,\theta} = \underline{p}_\star\pi^{\star, \theta}_h(\theta_h \mymid 0) + \underline{q}_\star \pi^{\star, \theta}_h(1-\theta_h \mymid 0) = \underline{p}_\star$. Here, the last inequality holds since we observe that
\[
    \Big(1-\frac{c_{3}}{H}\Big)^{H-h+1}\leq\exp\left(-\frac{c_{3}}{H}(H-h+1)\right)\leq1-\frac{2c_{3}(H-h+1)}{3H},
\]
as long as $c_3\leq 0.5$, which follows due to the elementary inequalities $1-x\leq \exp(-x)$ for any $x\geq 0$ and $\exp(-x)\leq 1-2x/3$ for any $0\leq x\leq 1/2$.

\paragraph{Proof of claim in \eqref{eq:ordering-states}.}
We shall \eqref{eq:ordering-states} through induction. Towards this, assuming that at time step $h+1$, the following holds
\begin{align}
 \forall s\in\cS\setminus\{0,1\}: \quad  V_{h+1}^{\pi,\ror,\theta}(0) > V_{h+1}^{\pi,\ror,\theta}(s) = 0. \label{eq:ordering-states-assumption}
\end{align}
Observing that the base case when $h=H$ has already been confirmed in \eqref{eq:ordering-states-base-case}, now we move on to prove the same property for time step $h$.

To start with, the robust value function of state $0$ at step $h$ satisfies
\begin{align}
    V_h^{\pi,\ror, \theta}(0) &= \mathbb{E}_{a\sim\pi_h(\cdot \mymid 0)}\left[r_h(0,a) + \inf_{ \cP \in \unb^{\sigma}(P^{\theta_h}_{h,0,a})}  \cP V_{h+1}^{\pi,\ror, \theta}\right] \notag \\
    & \overset{\mathrm{(i)}}{=} 1 + \pi_h(\theta_h \mymid 0)\Big(  \inf_{ \cP \in \unb^{\sigma}(P^{\theta_h}_{h,0,\theta})}  \cP   V_{h+1}^{\pi,\ror,\theta} \Big) 
     + \pi_h(1-\theta_h \mymid 0)\Big(   \inf_{ \cP \in \unb^{\sigma}(P^{\theta_h}_{h,0,1-\theta_h})}  \cP V_{h+1}^{\pi,\ror, \theta}   \Big) \notag \\
    & \overset{\mathrm{(ii)}}{=} 1 + \pi_h(\theta_h \mymid 0)\Big[ \underline{p} V_{h+1}^{\pi,\sigma,\theta}(0) + \left(1- \underline{p}\right) V_{h+1}^{\pi,\sigma,\theta}(1)\Big] + \pi_h(1-\theta_h \mymid 0)\Big[ \underline{q} V_{h+1}^{\pi,\sigma,\theta}(0) + \left(1-\underline{q} \right) V_{h+1}^{\pi,\sigma,\theta}(1)\Big] \notag\\
    & \overset{\mathrm{(iii)}}{=} 1 + V_{h+1}^{\pi,\sigma,\theta}(1) + x_h^{\pi,\theta}  \left[V_{h+1}^{\pi,\sigma,\theta}(0) - V_{h+1}^{\pi,\sigma,\theta}(1) \right] \notag\\
    & = 1 + x_h^{\pi,\theta} V_{h+1}^{\pi,\sigma, \theta}(0) \label{eq:value-0-theta}
\end{align}
where (i) uses the definition of the reward function in \eqref{eq:rh-construction-lower-bound-finite-theta}, (ii) uses the induction assumption in \eqref{eq:ordering-states-assumption} so that the infimum is attained by picking the choice specified in \eqref{eq:finite-lw-p-q-perturb-inf-theta} with a smallest probability mass imposed on the transition to state $0$. Finally, we plug in the definition \eqref{eq:finite-x-h-theta} of $x_h^{\pi,\theta}$ in (iii), and the last line follows from \eqref{eq:ordering-states-assumption}.

\begin{align*}
    V_h^{\pi,\sigma,\theta }(s) &= \mathbb{E}_{a\sim\pi_h(\cdot \mymid s)}\left[r_h(s,a) + \inf_{ \cP \in \unb^{\sigma}(P^{\theta_h}_{h,s,a})}  \cP  V_{h+1}^{\pi,\sigma,\theta}\right] = 0, \label{eq:value-otherstate-theta}
\end{align*}
where the last inequality holds by the reward and transition function in \eqref{eq:rh-construction-lower-bound-finite-theta} and \eqref{eq:Ph-construction-lower-bound-finite-theta} with the induction assumption \eqref{eq:ordering-states-assumption}.

Combining \eqref{eq:value-0-theta} and \eqref{eq:value-otherstate-theta}, we complete the proof:
\begin{align}
    \forall s\in\cS\setminus\{0\}: \quad  V_h^{\pi,\ror, \theta_h}(0) \geq 1 > V_h^{\pi,\sigma,\theta }(s) =0.
\end{align}

\subsubsection{Proof of claim \eqref{eq:finite-lb-behavior-distribution-theta} and \eqref{eq:expression-Cstar-LB-finite-theta}}\label{proof:eq:finite-lb-behavior-distribution-theta}

 \paragraph{Proof of the claim~\eqref{eq:finite-lb-behavior-distribution-theta}. }
With the initial state distribution and behavior policy defined in \eqref{eq:lower-dataset-assumption-theta}, we have for any MDP $\mathcal{M}_\theta$,
\begin{align*}
     d^{\mathsf{b}, P^\theta}_1(s) = \rhob (s) = \mu(s),
\end{align*}
which leads to
\begin{align}
   \forall a\in \cA:\quad  d^{\mathsf{b}, P^\theta}_1(0, a) = \frac{1}{2} \mu(0).
\end{align}
In view of \eqref{eq:Ph-construction-lower-finite-1}, the state occupancy distribution at any step $h=2, 3, \cdots, H$ obeys
\begin{align}
    d^{\mathsf{b}, P^\theta}_h(0) &\geq \mathbb{P}\left\{s_h = 0 \mymid s_{h-1} =0 ; \pib \right\}  \geq d^{\mathsf{b}, P^\theta}_{h-1}(0)\left[\pib_{h-1}(\theta_{h-1} \mymid 0)\underline{p}_\star + \pib_{h-1}(1-\theta_{h-1} \mymid 0)\underline{q}_\star \right] \notag \\
    & \geq d^{\mathsf{b}, P^\theta}_{h-1}(0) \underline{q}_\star \geq \cdots \geq  d_{1}^{\mathsf{b}, P^\theta}(0) \prod_{j = 0}^{h-1}\underline{q}_\star   \ge d_{1}^{\mathsf{b}, P^\theta}(0) \Big(1-\frac{c_3}{H}\Big)^{H} > \frac{\mu(0)}{2}, \label{eq:finite-optimal-d-large-theta1}
\end{align}
where the last line makes use of the properties $ \underline{q}_\star \geq 1- c_3/H$ in Lemma~\ref{lem:finite-lb-uncertainty-set-KL} and 
\[
\left(1-\frac{c_{3}}{H}\right)^{H}\geq\Big(1-\frac{1}{2H}\Big)^{H} > \frac{1}{2}
\]
provided that $0<c_3 = 1/4 < 1/2$. 
In addition, as state $1$ is an absorbing state and state $0$ will only transfer to itself or state $1$ at each time step, we directly achieve that
\begin{align}
    d^{\mathsf{b}, P^\theta}_h(0) \leq d^{\mathsf{b}, P^\theta}_{h-1}(0) \leq \cdots \leq d^{\mathsf{b}, P^\theta}_1(0)  \leq  \mu(0). \label{eq:finite-optimal-d-large-theta2}
\end{align}

For state $1$, as it is absorbing, we directly have
\begin{align}
     d^{\mathsf{b}, P^\theta}_h(1) &= \mathbb{P} \left\{s_h = 1 \mymid s_{h-1}=1 ; \pib \right\}  \geq d^{\mathsf{b}, P^\theta}_{h-1}(1) \geq \cdots \geq d^{\mathsf{b}, P^\theta}_{1}(1) = \mu(1). \label{eq:finite-optimal-d-large-theta3}
\end{align}
According to the assumption in \eqref{eq:lower-C-assumption-theta}, it is easily verified that
\begin{align}
d^{\mathsf{b}, P^\theta}_h(1) \leq 1 \leq  2\mu(1). \label{eq:finite-optimal-d-large-theta4}
\end{align}

Finally, combining \eqref{eq:finite-optimal-d-large-theta1}, \eqref{eq:finite-optimal-d-large-theta2}, \eqref{eq:finite-optimal-d-large-theta3}, \eqref{eq:finite-optimal-d-large-theta4}, the definitions of $P_h^{\theta_h}(\cdot \mymid s,a)$ in \eqref{eq:Ph-construction-lower-bound-finite-theta} and the Markov property, we arrive at for any $(h,s)\in [H] \times \cS$,
\begin{align}
   \frac{\mu(s)}{2} \leq  d^{\mathsf{b}, P^\theta}_h(s) \leq 2\mu(s),
\end{align}
which directly leads to
\begin{align}
    \frac{\mu(s)}{4} \leq  d^{\mathsf{b}, P^\theta}_h(s,a) = \pib_1(a\mymid s) d^{\mathsf{b}, P^\theta}_h(s) \leq \mu(s).
\end{align}

\paragraph{Proof of the claim~\eqref{eq:expression-Cstar-LB-finite-theta}.}
Examining the definition of $\Cstar$ in \eqref{eq:concentrate-finite}, we make the following observations.
\begin{itemize} 
\item For $h=1$, we have
\begin{align}
    \max_{(s, a,  P) \in \mathcal{S} \times \cA \times \unb^{\ror}(P^{\theta})} \frac{\min\big\{d_1^{\star,P}(s, a ), \frac{1}{S}\big\}}{ d^{\mathsf{b}, P^\theta}_1(s, a )}  &\overset{\mathrm{(i)}}{=} \max_{P \in \unb^{\ror}(P^{\theta})} \frac{\min\big\{d_1^{\star,P}(0, \theta_h), \frac{1}{S}\big\}}{d^{\mathsf{b}, P^\theta}_1(0, \theta_h )}  \overset{\mathrm{(ii)}}{=}  \max_{P \in \unb^{\ror}(P^{\theta})} \frac{1 }{S d^{\mathsf{b}, P^\theta}_1(0, \theta_h )}  \notag \\
    & \overset{\mathrm{(iii)}}{=} \frac{2}{S\mu(0)} = 2C,
\end{align}
where (i) holds by $d_1^{\star,P}(s) = \rho(s) = 0$ for all $s\in \cS\setminus \{0\}$ (see \eqref{eq:rho-defn-finite-LB-theta}) and $\pi_h^{\star,\theta}(\theta_h \mymid 0) = 1$ for all $h\in[H]$, (ii) follows from the fact $d_1^{\star,P}(0, \theta) = 1$, (iii) is verified in \eqref{eq:finite-lb-behavior-distribution-theta}, and the last equality arises from the definition in \eqref{finite-mu-assumption-theta}. 
\item Similarly, for $h=2, 3,\cdots, H$, we arrive at
\begin{align}
\max_{(s, a, P) \in \mathcal{S} \times \cA \times \unb^{\ror}(P^{\theta})} \frac{\min\big\{d_h^{\star,P}(s, a ), \frac{1}{S}\big\}}{ d^{\mathsf{b}, P^\theta}_h(s, a )}  &\overset{\mathrm{(i)}}{=}  \max_{s\in \{0,1\}, P \in \unb^{\ror}(P^{\theta})} \frac{\min\big\{d_h^{\star,P}(s, \theta_h), \frac{1}{S}\big\}}{d^{\mathsf{b}, P^\theta}_h(s, \theta_h )} \notag \\
&\leq \max_{s\in \{0,1\}, P \in \unb^{\ror}(P^{\theta})} \frac{1}{S d^{\mathsf{b}, P^\theta}_h(s, \theta_h )} \overset{\mathrm{(ii)}}{\leq} \frac{4}{S\mu(0)} = 4C,
\end{align}
where (i) holds by the optimal policy in \eqref{eq:finite-lb-value-lemma-theta} and the trivial fact that $d_h^{\star,P}(s) =  0$ for all $s\in \cS\setminus \{0,1 \}$ (see \eqref{eq:rho-defn-finite-LB-theta} and \eqref{eq:Ph-construction-lower-bound-finite-theta}), (ii) arises from \eqref{eq:finite-lb-behavior-distribution-theta}, and the last equality comes from \eqref{finite-mu-assumption-theta}. 
\end{itemize}
Combining the above cases, we complete the proof by
\begin{align*}
  2C \leq  \Cstar = \max_{(h,s, a, P) \in [H] \times \mathcal{S} \times \cA \times \unb^{\ror}(P^{\theta})} \frac{\min\big\{d_h^{\star,P}(s, a ), \frac{1}{S}\big\}}{ d^{\mathsf{b}, P^\theta}_h(s, a )}  \leq 4C.
\end{align*}

\subsubsection{Proof of the claim~\eqref{eq:finite-Value-0-recursive-theta}}\label{proof:eq:finite-Value-0-recursive-theta}

By virtue of \eqref{eq:finite-x-h-theta} and \eqref{eq:finite-lb-value-lemma-theta}, we see that $x_h^{\pi^{\star,\theta},\theta} = \underline{p}_\star$ for all $h\in[H]$, which combined with \eqref{eq:finite-lemma-value-0-pi-theta} gives
\begin{align}
\big\langle \rho, V_h^{\star, \ror, \theta} - V_h^{\pi, \ror, \theta} \big\rangle &= V_h^{\star, \theta}(0) - V_h^{\pi, \theta}(0) \notag\\
&= \underline{p}_\star V_{h+1}^{\star, \ror, \theta}(0)   - x_h^{\pi,\theta} V_{h+1}^{\pi,\theta}(0) \notag\\
&  = x_h^{\pi,\theta} \left(V_{h+1}^{\star,\ror, \theta}(0) - V_{h+1}^{\pi,\ror,\theta}(0)\right)  + (\underline{p}_\star - x_h^{\pi,\theta})V_{h+1}^{\star, \ror, \theta}(0)   \notag \\
& \overset{\mathrm{(i)}}{\geq} \underline{q}_\star \left(V_{h+1}^{\star,\ror, \theta}(0) - V_{h+1}^{\pi,\ror,\theta}(0)\right) + (\underline{p}_\star - x_h^{\pi,\theta}) V_{h+1}^{\star, \ror, \theta}(0) \notag \\
&\overset{\mathrm{(ii)}}{\geq} \underline{q}_\star \left(V_{h+1}^{\star,\ror, \theta}(0) - V_{h+1}^{\pi,\ror,\theta}(0)\right) + \frac{1}{2}(p-q)\big\|\pi_{h}^{\star,\theta}(\cdot\mymid 0)-\pi_{h}(\cdot\mymid 0)\big\|_{1} V_{h+1}^{\star, \ror, \theta}(0) \notag \\
&\overset{\mathrm{(iii)}}{\geq} \underline{q}_\star \left(V_{h+1}^{\star,\ror, \theta}(0) - V_{h+1}^{\pi,\ror,\theta}(0)\right) + \frac{c_2\varepsilon}{3H^2}(H+1-h) \big\|\pi_h^{\star, \theta}(\cdot\mymid 0) - \pi_h(\cdot\mymid 0)\big\|_1 \label{eq:finite-lb-recursion-theta}
\end{align}
where (i) follows from the fact that $x_h^\pi \geq \underline{q}_\star$ for any $\pi$ and $h\in[H]$, and (iii) holds by the facts \eqref{eq:finite-lb-value-lemma-theta} and the choice \eqref{eq:finite-p-q-def-theta} of $(p,q)$. Here, (ii) arises from
\begin{align}
 \underline{p}_\star-x_{h}^{\pi,\theta}&=( \underline{p}_\star-  \underline{q}_\star)\big(1-\pi_{h}(\theta_{h}\mymid0)\big) \notag \\
& \geq ( p-q)\big(1-\pi_{h}(\theta_{h}\mymid0)\big) \notag \\
 &=\frac{1}{2}(p-q)\big(1-\pi_{h}(\theta_{h}\mymid0)+\pi_{h}(1-\theta_{h}\mymid0)\big)=\frac{1}{2}(p-q)\big\|\pi_{h}^{\star,\theta}(\cdot\mymid 0)-\pi_{h}(\cdot\mymid 0)\big\|_{1}, 
\end{align}
where the first inequality holds by applying Lemma~\ref{lem:finite-lb-uncertainty-set-KL}.
With the fact of \eqref{eq:finite-lb-recursion-theta} in mind, combined with the fact $\underline{q}_\star \geq 1- \frac{c_3}{H}$, following the same proof pipeline of \cite[(276) to (278)]{li2022settling} leads to 
\begin{align}
\big\langle \rho, V_h^{\star, \ror, \theta} - V_h^{\pi, \ror, \theta} \big\rangle  > \varepsilon. 
\end{align}
We omit the proof here for conciseness.


\subsubsection{Proof of \eqref{eq:finite-lb-behavior-distribution}}\label{proof:eq:finite-lb-behavior-distribution}

With the initial state distribution and behavior policy defined in \eqref{eq:lower-dataset-assumption-theta}, we have for any MDP $\mathcal{M}_\phi$ with $\phi \in\{0,1\}$,
\begin{align*}
     d^{\mathsf{b}, P^\phi}_1(s) = \rhob (s) = \mu(s),
\end{align*}
which leads to
\begin{align}
   \forall a\in \cA:\quad  d^{\mathsf{b}, P^\phi}_1(0, a) = \frac{1}{2} \mu(0).
\end{align}
In view of \eqref{eq:Ph-construction-lower-finite-1}, the state occupancy distribution at step $h=2$ obeys
\begin{align*}
    d^{\mathsf{b}, P^\phi}_2(0) &= \mathbb{P}\left\{s_2 = 0 \mymid s_1\sim d^{\mathsf{b}, P^\phi}_1 ; \pib \right\}  = \mu(0)\left[\pib_1(\phi \mymid 0)p + \pib_1(1-\phi \mymid 0)q \right] = \frac{(p+q)\mu(0)}{2},
    \end{align*}
    and
    \begin{align*}
     d^{\mathsf{b}, P^\phi}_2(1) &= \mathbb{P} \left\{s_2 = 1 \mymid s_1\sim  d^{\mathsf{b}, P^\phi}_1 ; \pib \right\} \notag \\
    &= \mu(0)\left[\pib_1(\phi \mymid 0)(1-p) + \pib_1(1-\phi \mymid 0)(1-q) \right] + \mu(1) = \mu(1) + \frac{(2-p-q)\mu(0)}{2}.
\end{align*}
With the above result in mind and recalling the assumption in \eqref{eq:lower-p-q-assumption}, we arrive at
\begin{align}
   \frac{\mu(0)}{2} \leq  d^{\mathsf{b}, P^\phi}_2(0) \leq \mu(0), \qquad \mu(1)\leq  d^{\mathsf{b}, P^\phi}_2(1) \overset{\mathrm{(i)}}{\leq} 2\mu(1),
\end{align}
where (i) holds by applying \eqref{eq:lower-p-q-assumption} and \eqref{eq:lower-C-assumption-theta} (which implies $\mu(0)\leq \mu(1)$ by the assumption in \eqref{eq:lower-C-assumption-theta})
\begin{align*}
    d^{\mathsf{b}, P^\phi}_2(1) = \mu(1) + \frac{(2-p-q)\mu(0)}{2} \leq \mu(1) + \mu(0) \leq 2 \mu(1).
\end{align*} 
Finally, from the definitions of $P_h^\phi(\cdot \mymid s,a)$ in \eqref{eq:Ph-construction-lower-finite-h} and the Markov property, we arrive at for any $(h,s)\in [H] \times \cS$,
\begin{align}
   \frac{\mu(s)}{2} \leq  d^{\mathsf{b}, P^\phi}_h(s) \leq 2\mu(s),
\end{align}
which directly leads to
\begin{align}
    \frac{\mu(s)}{4} \leq  d^{\mathsf{b}, P^\phi}_h(s,a) = \pib_1(a\mymid s) d^{\mathsf{b}, P^\phi}_h(s) \leq \mu(s).
\end{align}

\subsubsection{Proof of Lemma~\ref{lem:finite-lb-uncertainty-set}}\label{proof:lem:finite-lb-uncertainty-set}

Note that $\underline{p} \geq \underline{q}$ can be easily verified since $p>q$, which indicates that the first assertion is true. So we will focus on the second assertion in \eqref{eq:finite-lw-upper-p-q}. Towards this, invoking the definition in \eqref{eq:defn-KL-bernoulli}, let $\ror'$ be the KL divergence from $\mathsf{Ber}\big(\frac{1}{\beta}\big)$ to $\mathsf{Ber}(q)$, defined as follows
\begin{align}\label{eq:def-of-sigma'}
    \ror' & \defn \mathsf{KL}\left(\mathsf{Ber}\left(\frac{1}{\beta}\right) \parallel \mathsf{Ber}(q)\right) = \frac{1}{\beta}\log\frac{\frac{1}{\beta}}{q} + \left(1- \frac{1}{\beta}\right)\log\frac{\left(1- \frac{1}{\beta}\right)}{1-q} \notag \\
     &  = \left(\frac{1}{\beta}\right)\log\left(\frac{1}{\beta}\right) - \left(\frac{1}{\beta}\right) \log(q) + \left(1- \frac{1}{\beta}\right)\log\left(\frac{1}{\alpha+\Delta}\right)   + \left(1- \frac{1}{\beta}\right)\log\left(1- \frac{1}{\beta}\right),
    \end{align}
where the second line uses the definition of $q$ in \eqref{eq:finite-p-q-def}. We claim that $\ror'$ satisfies
the following relation with $\ror$, which will be proven at the end of this proof:
\begin{align}\label{eq:bound_ror_ror'}
0 <\ror \leq \left(1- \frac{2}{\beta}\right)\log\left(\frac{1}{\alpha+\Delta}\right) \leq \ror' \leq \left(1- \frac{1}{\beta}\right)\log\left(\frac{1}{\alpha+\Delta}\right). 
 \end{align}

Recalling the definition of the transition kernel in \eqref{eq:Ph-construction-lower-finite-1}
\begin{align*}
    P_1^\phi(0 \mymid 0, 1-\phi) &= q, \quad  P_1^\phi(1 \mymid 0, 1-\phi) = 1-q, \quad 
     P_1^\phi(s \mymid 0, 1-\phi) = 0, \quad \forall s\in \cS \setminus \{0,1\},
\end{align*}
the uncertainty set of the transition kernel with radius $\sigma$ is thus given as 
\begin{align}
    \unb^\ror(P^\phi_{1,0, 1-\phi}) = \left\{P_{1, 0,1-\phi} \in\Delta(\cS): P(0 \mymid 0, 1-\phi) = q', P(1 \mymid 0, 1-\phi) = 1- q', \mathsf{KL}\left(\mathsf{Ber}\left(q'\right) \parallel \mathsf{Ber}(q)\right) \leq \sigma \right\}.
\end{align}  
Recalling the definition of $\underline{q}$ in \eqref{eq:finite-lw-p-q-perturb-inf}, we  can bound  
\begin{align*}
   \underline{q} &= \inf_{P_{1,0,1-\phi} \in  \unb^\ror(P^\phi_{1,0, 1-\phi})}  P(0 \mymid 0,1-\phi)  =  \inf_{q': \mathsf{KL}\left(\mathsf{Ber}\left(q'\right) \parallel \mathsf{Ber}(q)\right) \leq \sigma} q' \notag \\
   &\overset{\mathrm{(i)}}{\geq} \inf_{q': \mathsf{KL}\left(\mathsf{Ber}\left(q'\right) \parallel \mathsf{Ber}(q)\right) \leq \sigma'} q'  = \frac{1}{\beta},
\end{align*}
where (i) holds by $\sigma \leq \sigma'$ (cf.~\eqref{eq:bound_ror_ror'}) and the last equality follows from applying Lemma~\ref{lem:KL-key-result} (cf.~\eqref{eq:show-subset-of-sigma-sigma'}) and \eqref{eq:def-of-sigma'} to arrive at
\begin{align*}
    \forall 0\leq q'< \frac{1}{\beta}: \qquad \mathsf{KL}\left(\mathsf{Ber}\left(q'\right) \parallel \mathsf{Ber}(q)\right) > \mathsf{KL}\left(\mathsf{Ber}\left(\frac{1}{\beta}\right) \parallel \mathsf{Ber}(q)\right) = \sigma'.
\end{align*}

\paragraph{Proof of \eqref{eq:bound_ror_ror'}.} To control $\sigma'$, we plug in the assumptions in \eqref{eq:lower-p-q-assumption} and $\beta \geq 4$ and arrive at the trivial facts 
\begin{align*}
    \left(\frac{1}{\beta}\right)\log\left(\frac{1}{\beta}\right) - \left(\frac{1}{\beta}\right) \log(q) <0, \quad \left(1- \frac{1}{\beta}\right)\log\left(1- \frac{1}{\beta}\right) <0.
\end{align*}
The above facts directly lead to
\begin{align}\label{eq:ror'-upper}
    \ror' \leq \left(1- \frac{1}{\beta}\right)\log\left(\frac{1}{\alpha+\Delta}\right).
\end{align}
Similarly, observing 
\begin{align*}
    -1 \leq \left(\frac{1}{\beta}\right)\log\left(\frac{1}{\beta}\right) + \left(1- \frac{1}{\beta}\right)\log\left(1- \frac{1}{\beta}\right) & \leq 0, \quad - \left(\frac{1}{\beta}\right) \log(q) \geq 0,
\end{align*}
we arrive at
\begin{align}\label{eq:ror'-lower}
    \ror' \geq -1 + \left(1- \frac{1}{\beta}\right)\log\left(\frac{1}{\alpha+\Delta}\right) \geq \left(1- \frac{2}{\beta}\right)\log\left(\frac{1}{\alpha+\Delta}\right)
\end{align}
as long as 
$ \log\left(\frac{1}{\alpha+ \Delta}\right)\geq \beta$ (cf. \eqref{eq:lower-bound-H-assumption}).
With \eqref{eq:ror'-upper} and \eqref{eq:ror'-lower} in hand, it is straightforward to see that the choice of the uncertainty radius $\ror$ in \eqref{eq:finite-lower-ror-bounded} obeys the advertised bound \eqref{eq:bound_ror_ror'}.

\subsubsection{Proof of Lemma~\ref{lem:finite-lb-value}}\label{proof:lem:finite-lb-value}

For notational conciseness, we shall drop the superscript $\phi$ and use the shorthand $V_{h}^{\pi, \sigma} = V_{h}^{\pi,  \ror,\phi}$ and $V_{h}^{\star,\ror} = V_{h}^{\star, \ror,\phi}$ whenever it is  clear from the context. We begin by deriving the robust value function for any policy $\pi$. 
Starting with state $1$, at any step $h\in[H]$, it obeys
\begin{align*}
    V_h^{\pi,\sigma }(1) &= \mathbb{E}_{a\sim\pi_h(\cdot \mymid 1)}\left[r_h(1,a) + \inf_{ \cP \in \unb^{\sigma}(P^{\phi}_{h,1,a})}  \cP  V_{h+1}^{\pi,\sigma}\right] = 0 + V_{h+1}^{\pi,\sigma}(1),
\end{align*}
where the first equality follows from the robust Bellman consistency equation (cf.~\eqref{eq:robust_bellman_consistency}), and the second equality follows from the observation that the distribution $P^{\phi}_{h,1,a}$ is supported solely on state $1$ in view of \eqref{eq:Ph-construction-lower-finite-1}, therefore $\unb^{\sigma}(P^{\phi}_{h,1,a}) = P^{\phi}_{h,1,a}$. Leveraging the terminal condition $ V_{H+1}^{\pi,\sigma}(1) = 0$, and recursively applying the previous relation, we have
\begin{align}
    V_h^{\star,\sigma}(1) = V_h^{\pi,\sigma}(1) = 0,\qquad \forall h\in[H]. \label{eq:finite-s-1-upper-value}
\end{align}
Similarly, turning to state $0$, at any step $h>1$, the robust value function satisfies 
\begin{align*}
    V_h^{\pi,\sigma}(0) &= \mathbb{E}_{a\sim\pi_h(\cdot \mymid 0)}\left[r_h(0,a) + \inf_{ \cP \in \unb^{\sigma}(P^{\phi}_{h,0,a})}  \cP  V_{h+1}^{\pi,\sigma}\right] = 1 + V_{h+1}^{\pi,\sigma}(0),
\end{align*}
which again uses the fact that the distribution $P^{\phi}_{h,0,a}$ is supported solely on state $0$ in view of \eqref{eq:Ph-construction-lower-finite-h}, therefore $\unb^{\sigma}(P^{\phi}_{h,0,a}) = P^{\phi}_{h,0,a}$.
Leveraging the terminal condition $ V_{H+1}^{\pi,\sigma}(0) = 0$, and recursively applying the previous relation, we have
\begin{align}
    V_h^{\star,\sigma}(0) = V_h^{\pi,\sigma}(0) = H-h+1, \qquad   2 \leq h \leq H. \label{eq:finite-s-0-h-upper-value}
\end{align}

Taking \eqref{eq:finite-s-1-upper-value} and \eqref{eq:finite-s-0-h-upper-value} together, it follows that
\begin{align}\label{eq:o-larger-than-1-for-h-2-H}
   \forall\; 2 \leq h\leq H: \qquad  V_h^{\pi,\sigma}(0) > V_h^{\pi,\sigma}(1).
\end{align}
Consequently, the robust value function of state $0$ at step $h=1$ satisfies
\begin{align}
    V_1^{\pi,\sigma}(0) &= \mathbb{E}_{a\sim\pi_1(\cdot \mymid 0)}\left[r_1(0,a) + \inf_{ \cP \in \unb^{\sigma}(P^{\phi}_{1,0,a})}  \cP V_{2}^{\pi,\sigma}\right] \notag \\
    & \overset{\mathrm{(i)}}{=} 1 + \pi_1(\phi \mymid 0)\Big(  \inf_{ \cP \in \unb^{\sigma}(P^{\phi}_{1,0,\phi})}  \cP   V_{2}^{\pi,\sigma} \Big) 
     + \pi_1(1-\phi \mymid 0)\Big(   \inf_{ \cP \in \unb^{\sigma}(P^{\phi}_{1,0,1-\phi})}  \cP V_{2}^{\pi,\sigma}   \Big) \notag \\
    & \overset{\mathrm{(ii)}}{=} 1 + \pi_1(\phi \mymid 0)\Big[ \underline{p} V_{2}^{\pi,\sigma}(0) + \left(1- \underline{p}\right) V_{2}^{\pi,\sigma}(1)\Big] + \pi_1(1-\phi \mymid 0)\Big[ \underline{q} V_{2}^{\pi,\sigma}(0) + \left(1-\underline{q} \right) V_{2}^{\pi,\sigma}(1)\Big] \notag\\
    & \overset{\mathrm{(iii)}}{=} 1 + V_{2}^{\pi,\sigma}(1) + z_{\phi}^{\pi}  \left[V_{2}^{\pi,\sigma}(0) - V_{2}^{\pi,\sigma}(1) \right] \notag\\
    & = 1 + z_{\phi}^{\pi} V_{2}^{\pi,\sigma}(0)
\end{align}
where (i) uses the definition of the reward function in \eqref{eq:rh-construction-lower-bound-finite}, (ii) uses \eqref{eq:o-larger-than-1-for-h-2-H} so that the infimum is attained by picking the choice specified in \eqref{eq:finite-lw-p-q-perturb-inf} with a smallest probability mass imposed on the transition to state $0$. Finally, we plug in the definition \eqref{eq:finite-x-h} of $z_{\phi}^{\pi}$ in (iii), and the last line follows from \eqref{eq:finite-s-1-upper-value}.

Therefore, taking $\pi = \pi^{\star,\phi}$ in the previous relation directly leads to
\begin{align}\label{eq:interemediate_opt_10}
    V_1^{\star,\sigma}(0) = 1 + z_{\phi}^{\pi^{\star,\phi}} V_{2}^{\star,\sigma}(0) = 1 + z_{\phi}^{\pi^{\star,\phi}} (H-1),
\end{align} 
where the second equality follows from \eqref{eq:finite-s-0-h-upper-value}.
Observing that the function $(H-1)z$ is increasing in $z$ and that $z_{\phi}^{\pi}$ is increasing in $\pi_1(\phi \mymid 0)$ (due to the fact $\underline{p}\geq \underline{q}$ in \eqref{eq:finite-lw-upper-p-q}).
As a result,  the optimal policy obeys
\begin{equation}
    \pi_1^{\star,\phi}(\phi \mymid 0) = 1 \label{eq:finite-lb-optimal-policy}
\end{equation}
 at state $0$, and plugging back to \eqref{eq:interemediate_opt_10} gives
  $$   V_1^{\star,\sigma}(0)   =  1 + z_{\phi}^{\pi^{\star,\phi}} (H-1) = 1+ \underline{p}(H-1), $$
 where $ z_{\phi}^{\pi^{\star,\phi}}= \underline{p}\pi^{\star, \phi}_1(\phi\mymid 0) + \underline{q} \pi^{\star, \phi}_1(1-\phi \mymid 0) = \underline{p}$.
 For the rest of the states, without loss of generality, we choose the optimal policy obeying
\begin{align}
\forall h\in [H]: \qquad  \pi_h^{\star,\phi}(\phi \mymid 0)=1,\quad \pi_h^{\star,\phi}(\phi  \mymid 1) = 1.
\end{align}

\paragraph{Proof of claim \eqref{eq:expression-Cstar-LB-finite}.}
Given that $\pi_h^{\star,\phi}(\phi \mymid 0) =1$ for all $ h\in[H]$ and $\rho(0) =1$, for any $P \in \unb^\ror(P^\phi)$, we have
\begin{align}
    d_2^{\star, P}(0, \phi) &= d_2^{\star, P}(0)\pi_2^{\star,\phi}(\phi \mymid 0) = d_2^{\star, P}(0) = \mathbb{P}_{s_2 \sim P (\cdot \mymid s_1, \pi_1^{\star,\phi}(s_1)  )}\big\{s_2 =0 \mymid s_1 \sim \rho ; \pi^{\star,\phi}\big\} \notag \\
    & = P_1(0 \mymid 0, \phi)  \overset{\mathrm{(i)}}{\geq} \underline{P}^{\phi}_{1}(0 \mymid 0,\phi) \overset{\mathrm{(ii)}}{=} \underline{p} \geq \frac{1}{\beta},
\end{align}
which (i) holds by plugging in the definition \eqref{eq:finite-lw-def-p-q}, (ii) follows from the definition \eqref{eq:finite-lw-p-q-perturb-inf}, and the final inequality arises from Lemma~\ref{lem:finite-lb-uncertainty-set}. Hence, for all $2\leq h\leq H$, by the Markov property and $P^\phi_h(0 \mymid 0, \phi) = 1$,   we have
\begin{align}
    d_h^{\star, P}(0, \phi) = d_2^{\star, P}(0, \phi) \geq \frac{1}{\beta}.
\end{align}
Examining the definition of $\Cstar$ in \eqref{eq:concentrate-finite}, we make the following observations.
\begin{itemize} 
\item For $h=1$, we have
\begin{align}
    \max_{(s, a,  P) \in \mathcal{S} \times \cA \times \unb^{\ror}(P^{\phi})} \frac{\min\big\{d_1^{\star,P}(s, a ), \frac{1}{S}\big\}}{ d^{\mathsf{b}, P^\phi}_1(s, a )}  &\overset{\mathrm{(i)}}{=} \max_{P \in \unb^{\ror}(P^{\phi})} \frac{\min\big\{d_1^{\star,P}(0, \phi), \frac{1}{S}\big\}}{d^{\mathsf{b}, P^\phi}_1(0, \phi )}  \overset{\mathrm{(ii)}}{=}  \max_{P \in \unb^{\ror}(P^{\phi})} \frac{1 }{S d^{\mathsf{b}, P^\phi}_1(0, \phi )}  \notag \\
    & \overset{\mathrm{(iii)}}{=} \frac{2}{S\mu(0)} = 2C,
\end{align}
where (i) holds by $d_1^{\star,P}(s) = \rho(s) = 0$ for all $s\in \cS\setminus \{0\}$ (see \eqref{eq:rho-defn-finite-LB-theta}) and $\pi_h^{\star,\phi}(\phi \mymid 0) = 1$ for all $h\in[H]$, (ii) follows from the fact $d_1^{\star,P}(0, \phi) = 1$, (iii) is verified in \eqref{eq:finite-lb-behavior-distribution}, and the last equality arises from the definition in \eqref{finite-mu-assumption-theta}. 
\item Similarly, for $h=2$, we arrive at
\begin{align}
\max_{(s, a, P) \in \mathcal{S} \times \cA \times \unb^{\ror}(P^{\phi})} \frac{\min\big\{d_2^{\star,P}(s, a ), \frac{1}{S}\big\}}{ d^{\mathsf{b}, P^\phi}_2(s, a )}  &\overset{\mathrm{(i)}}{=}  \max_{s\in \{0,1\}, P \in \unb^{\ror}(P^{\phi})} \frac{\min\big\{d_2^{\star,P}(s, \phi), \frac{1}{S}\big\}}{d^{\mathsf{b}, P^\phi}_2(s, \phi )} \notag \\
&\leq \max_{s\in \{0,1\}, P \in \unb^{\ror}(P^{\phi})} \frac{1}{S d^{\mathsf{b}, P^\phi}_2(s, \phi )} \overset{\mathrm{(ii)}}{\leq} \frac{4}{S\mu(0)} = 4C,
\end{align}
where (i) holds by the optimal policy in \eqref{eq:finite-lb-value-lemma} and the trivial fact that $d_2^{\star,P}(s) =  0$ for all $s\in \cS\setminus \{0,1 \}$ (see \eqref{eq:rho-defn-finite-LB-theta} and \eqref{eq:Ph-construction-lower-finite-1}), (ii) arises from \eqref{eq:finite-lb-behavior-distribution}, and the last equality comes from \eqref{finite-mu-assumption-theta}. 
\item For all other steps $h=3,\ldots, H$, observing from the deterministic transition kernels in \eqref{eq:Ph-construction-lower-finite-h}, it can be easily verified that
\begin{align}
\max_{(s, a, P) \in   \mathcal{S} \times \cA \times \unb^{\ror}(P^{\phi})} \frac{\min\big\{d_h^{\star,P}(s, a ), \frac{1}{S}\big\}}{ d^{\mathsf{b}, P^\phi}_h(s, a )}  &= \max_{(s, a, P) \in \mathcal{S} \times \cA \times \unb^{\ror}(P^{\phi})} \frac{\min\big\{d_2^{\star,P}(s, a ), \frac{1}{S}\big\}}{ d^{\mathsf{b}, P^\phi}_2(s, a )} \leq  4C.
\end{align}
\end{itemize}
Combining the above cases, we complete the proof by
\begin{align*}
  2C \leq  \Cstar = \max_{(h,s, a, P) \in [H] \times \mathcal{S} \times \cA \times \unb^{\ror}(P^{\phi})} \frac{\min\big\{d_h^{\star,P}(s, a ), \frac{1}{S}\big\}}{ d^{\mathsf{b}, P^\phi}_h(s, a )}  \leq 4C.
\end{align*}

\subsubsection{Proof of the claim \eqref{eq:finite-Value-0-recursive}}\label{proof:finite-lower-diff-control}

Recall that by virtue of \eqref{eq:finite-x-h} and \eqref{eq:finite-lb-value-lemma}, we arrive at
\begin{align*}
z_{\phi}^{\star} \defn  z_{\phi}^{\pi^{\star,\phi}}= \underline{p}\pi^{\star, \phi}_1(\phi\mymid 0) + \underline{q} \pi^{\star, \phi}_1(1-\phi \mymid 0) = \underline{p}.
\end{align*}
Applying \eqref{eq:finite-lemma-value-0-pi} yields
\begin{equation} \label{eq:finite-lower-diff}
    \big\langle \rho, V_1^{\star,\sigma, \phi} - V_1^{\pi,\sigma, \phi} \big\rangle = V_h^{\star, \sigma, \phi}(0) - V_h^{\pi,\sigma, \phi}(0) = \left(\underline{p} - z_{\phi}^{\pi}\right) (H-1)    = \left(\underline{p} -\underline{q}\right)(H-1)\left(1-\pi_1(\phi\mymid 0)\right), 
\end{equation}
where the last equality uses the definition \eqref{eq:finite-x-h}. Therefore, it boils down to control $\underline{p} -\underline{q}$.

To continue, we define an auxiliary value function vector $\overline{V}\in\mathbb{R}^{S\times 1}$ obeying
\begin{align}
    \overline{V}(0) = H-1 \quad\text{ and } \quad \overline{V}(s) = 0, \quad \forall s\in \cS \setminus  \{0\}. \label{eq:lower-bound-Vbar-defn}
\end{align}
With this in hand, applying Lemma~\ref{lem:strong-duality} gives
\begin{align}
     (H-1) \left(\underline{p} - \underline{q}\right) &  \overset{\mathrm{(i)}}{=} \inf_{ \cP \in \unb^{\sigma}(P_{1,0,\phi}^{\phi})}  \cP  \overline{V} -  \inf_{ \cP \in \unb^{\sigma}(P_{1,0,1-\phi}^{\phi})}  \cP \overline{V} \notag\\
    & = \sup_{\lambda \geq 0}  \left\{ -\lambda \log\left(P_{1,0,\phi}^{\phi} \cdot \exp \left(\frac{-\overline{V}}{\lambda}\right) \right) - \lambda \ror \right\} - \sup_{\lambda \geq 0}  \left\{ -\lambda \log\left(P_{1,0,1-\phi}^{\phi} \cdot \exp \left(\frac{- \overline{V}}{\lambda}\right) \right) - \lambda \ror \right\}\notag\\
    & \overset{\mathrm{(ii)}}{\geq}  \left\{ -\lambda^\star  \log\left(P_{1,0,\phi}^{\phi} \cdot \exp \left(\frac{-\overline{V}}{\lambda^\star}\right) \right) - \lambda^\star\ror \right\} -  \left\{ -\lambda^\star \log\left(P_{1,0,1-\phi}^{\phi} \cdot \exp \left(\frac{- \overline{V}}{\lambda^\star}\right) \right) - \lambda^\star \ror\right\} \notag \\
    & = -\lambda^\star \left[\log\left(P_{1,0,\phi}^{\phi} \cdot \exp \left(\frac{-\overline{V}}{\lambda^\star}\right) \right) - \log\left(P_{1,0,1-\phi}^{\phi} \cdot \exp \left(\frac{- \overline{V}}{\lambda^\star}\right) \right)\right], \label{eq:staccato}
\end{align}
where (i) follows from (see the definition of $\underline{p}$ in \eqref{eq:finite-lw-p-q-perturb-inf})
\begin{align*}
    \inf_{ \cP \in \unb^{\sigma}(P_{1,0,\phi}^{\phi})}  \cP  \overline{V} &= \underline{P}^{\phi}_{1}(0 \mymid 0,\phi) \overline{V}(0) = (H-1)\underline{p}  , \\
    \inf_{ \cP \in \unb^{\sigma}(P_{1,0, 1- \phi}^{\phi})}  \cP  \overline{V} &= \underline{P}^{\phi}_{1}(0 \mymid 0, 1- \phi)\overline{V}(0) = (H-1)\underline{q}.
\end{align*}
Here, (ii) holds by letting
\begin{align}\label{eq:lower-bound-defn-f-lambda}
    \lambda^\star \defn \arg \max_{\lambda \geq 0} f(\lambda) \defn \arg \max_{\lambda \geq 0}  \left\{ -\lambda \log\left(P_{1,0,1-\phi}^{\phi} \cdot \exp \left(\frac{- \overline{V}}{\lambda}\right) \right) - \lambda \ror \right\}.
\end{align}
The rest of the proof is then to control \eqref{eq:staccato}. We start with the observation that $\lambda^\star>0$; this is because in view of Lemma~\ref{lem:lambda-n-bound} (cf.~\eqref{eq:lambda-star-0-condition}), it suffices to verify that 
\begin{align}
    \log(1-q) + \ror \overset{\mathrm{(i)}}{\leq}  \log (\alpha+ \Delta) + \left(1- \frac{2}{\beta}\right) \log\left(\frac{1}{\alpha+ \Delta}\right) =  - \frac{2}{\beta} \log\left(\frac{1}{\alpha+ \Delta}\right)  <0,
\end{align}
where (i) holds by \eqref{eq:finite-lower-ror-bounded}. We now claim the following bound for $\lambda^{\star}$ holds, whose proof is postponed to the end:
\begin{align}
 \frac{H}{16\ror} \leq  \frac{H-1}{\log\left(\frac{\beta}{\alpha+ \Delta}\right) }  \leq \lambda^\star  &\leq  \frac{H-1}{\left(1-\frac{3}{\beta}\right) \log\left(\frac{1}{\alpha+ \Delta}\right) }, \label{eq:finite-lower-check-lambda-star-1} 
\end{align}
which immediately implies the following by taking exponential maps given $\lambda^{\star}>0$:
\begin{align}
      \frac{\alpha+ \Delta}{\beta} &\leq e^{-(H-1) /\lambda^\star} \leq  (\alpha+ \Delta)^{1-3/\beta}. \label{eq:finite-lower-check-lambda-star-2}
      \end{align}
Moving to the second term of  \eqref{eq:staccato}, it follows that
\begin{align}
\log\left(P_{1,0,\phi}^{\phi} \cdot \exp \left(\frac{-\overline{V}}{\lambda^\star}\right) \right) - \log\left(P_{1,0,1-\phi}^{\phi} \cdot \exp \left(\frac{- \overline{V}}{\lambda^\star}\right) \right)   
    & \overset{\mathrm{(i)}}{=} \log \frac{p e^{- (H-1)/\lambda^\star} + (1-p)}{q e^{-(H-1)/\lambda^\star} + (1-q)} \notag \\
    & =   \log \left(1 + \frac{(p-q) \left(e^{-(H-1)/\lambda^\star} -1 \right)}{qe^{-(H-1)/\lambda^\star} + (1-q)} \right) \notag \\
    & \overset{\mathrm{(ii)}}{<}  - \frac{\Delta  \left(1- e^{-(H-1)/\lambda^\star} \right)}{q e^{-(H-1)/\lambda^\star} + (1-q)} \notag \\ 
    & \overset{\mathrm{(iii)}}{\leq} - \frac{1}{2} \frac{\Delta}{ \left(\frac{1}{\alpha+ \Delta}\right)^{\frac{3}{\beta}}(1-q) + (1-q)} \notag \\
    & \leq - \frac{ \Delta}{4 e^6 (1-q)},
    \label{eq:finite-lower-V-diff-bounded}
\end{align}
where (i) follows from the definitions in \eqref{eq:Ph-construction-lower-finite} and \eqref{eq:lower-bound-Vbar-defn}, 
(ii) holds by $\log(1+x)<x$ for $x\in(-1,\infty)$, (iii) can be verified by \eqref{eq:finite-lower-check-lambda-star-2}, $\beta\geq 4$, and \eqref{eq:lower-p-q-beta-c1}:
\begin{align*}
   1- e^{-(H-1)/\lambda^\star} \geq 1-(\alpha+ \Delta)^{1-3/\beta} \geq  1 - (\alpha+ \Delta)^{1/4} \geq 1- \left(\frac{3}{2H}\right)^{1/4} \geq \frac{1}{2},
\end{align*}
and the last line uses $\left(\frac{1}{\alpha+ \Delta}\right)^{\frac{3}{\beta}} = \left(\frac{1}{\alpha+ \Delta}\right)^{6/\log\left(\frac{1}{\alpha+ \Delta}\right) } = e^6$ by the definition of $\beta$ in \eqref{eq:lower-bound-H-assumption}.
Plugging \eqref{eq:finite-lower-check-lambda-star-1} and \eqref{eq:finite-lower-V-diff-bounded} back into \eqref{eq:staccato} and \eqref{eq:finite-lower-diff}, we arrive at
\begin{align*}
\big\langle \rho, V_1^{\star, \ror, \phi} - V_1^{\pi, \ror, \phi} \big\rangle & =  (H-1) \left(\underline{p} - \underline{q}\right) \left(1-\pi_1(\phi\mymid 0)\right)   \\
& \overset{\mathrm{(i)}}{\geq}    \frac{H\Delta}{64   e^6   \ror (1-q) }   \left(1-\pi_1(\phi\mymid 0)\right) = 2\varepsilon \left(1-\pi_1(\phi\mymid 0)\right),
\end{align*}
where (i) holds by \eqref{eq:finite-lower-check-lambda-star-1} and the last equality follows directly from the choice of $\Delta$ in \eqref{eq:Delta-chosen}.

\paragraph{Proof of inequality \eqref{eq:finite-lower-check-lambda-star-1}.}

Applying \eqref{eq:lambda-n-range} in Lemma~\ref
{lem:lambda-n-bound} to $\lambda^\star$ in \eqref{eq:lower-bound-defn-f-lambda} leads to the upper bound in \eqref{eq:finite-lower-check-lambda-star-1}:
\begin{align}
    \lambda^\star \leq \frac{H-1}{\ror} \leq \frac{H-1}{\left(1-\frac{3}{\beta}\right) \log\left(\frac{1}{\alpha+ \Delta}\right) }, 
\end{align}
where the last inequality holds by \eqref{eq:finite-lower-ror-bounded}. 
As a result, we shall focus on showing the lower bounds in \eqref{eq:finite-lower-check-lambda-star-1} in the remainder of the proof.

Recalling the definition of $q$ in \eqref{eq:finite-p-q-def}, we can reparameterize $1-q$ using two positive variables $c_q$ and $\lambda_q$ (whose choices will be made clearer soon) as follows:
\begin{align}
1- q = \alpha = c_q e^{-(H-1)/\lambda_q}. \label{eq:assumption-of-q-0}
\end{align}
Deriving the first derivative of the function of interest $f(\lambda)$ in \eqref{eq:lower-bound-defn-f-lambda} as follows:
\begin{align}
    \nabla_\lambda f(\lambda) &= \nabla_\lambda\left( -\lambda \log\left(P_{1,0,1-\phi}^{\phi} \cdot \exp \left(\frac{- \overline{V}}{\lambda}\right) \right) - \lambda \ror \right) \notag\\
    & \overset{\mathrm{(i)}}{=} \nabla_\lambda \left( -\lambda \log\left(q e^{-(H-1)/\lambda} + 1-q\right) - \lambda \ror \right) \notag\\
    & = -\ror -\log \left(q e^{-(H-1)/\lambda} + 1-q\right) - \frac{1}{\lambda}\cdot \frac{q (H-1)e^{-(H-1)/\lambda} }{q e^{-(H-1)/\lambda} + 1-q},
    \end{align}
where (i) holds by the chosen transition kernels in \eqref{eq:Ph-construction-lower-finite} and the last line arises from basic calculus.
To continue, when $\lambda = \lambda_q$, the derivative of the function $f(\lambda)$ can be expressed as 
\begin{align}
     \nabla_\lambda f(\lambda) \mymid_{\lambda =\lambda_q} &= -\ror - \log\left( (1-q)\frac{q}{c_q}  + 1 - q\right) + \frac{(1-q)\frac{q}{c_q} \log\frac{1-q}{c_q} }{(1-q)\frac{q}{c_q}   + 1-q} \notag \\
    & = -\ror - \log(1-q) - \log\left( 1 + \frac{q}{c_q} \right) + \frac{\frac{q}{c_q} \log\frac{1-q}{c_q} }{\frac{q}{c_q}   + 1} \notag \\
    & = -\ror - \log(1-q)\left( 1 -\frac{q/c_q}{q/c_q + 1}\right) - \log\left( 1 + \frac{q}{c_q} \right) - \frac{ \frac{q}{c_q}\log(c_q)}{1 + q/c_q }  \notag \\
    & \overset{\mathrm{(i)}}{=} -\ror + \log\left(\frac{1}{\alpha+ \Delta}\right) \left( 1 -\frac{q/c_q}{q/c_q + 1}\right) - \log\left( 1 + \frac{q}{c_q} \right) - \frac{ \frac{q}{c_q}\log(c_q)}{1 + q/c_q }  \\
    & \overset{\mathrm{(ii)}}{\geq} \log\left(\frac{1}{\alpha+ \Delta}\right) \left( \frac{2}{\beta} -\frac{q/c_q}{q/c_q + 1}\right)  - \log\left( 1 + \frac{q}{c_q} \right) - \frac{ \frac{q}{c_q}\log(c_q)}{1 + q/c_q } \notag \\
    & \overset{\mathrm{(iii)}}{\geq} \frac{1}{\beta} \log\left(\frac{1}{\alpha+ \Delta}\right)   - \log(1+\frac{1}{\beta}) - 1 \notag \\
    &  \geq \frac{1}{\beta} \log\left(\frac{1}{\alpha+ \Delta}\right)   - 2 = 0,
\end{align}
where (i) holds by \eqref{eq:assumption-of-q-0}, (ii) follows from the bound of $\ror$ in \eqref{eq:finite-lower-ror-bounded}, (iii) arises from letting $c_q = \beta \geq 4$ and noting the fact $1/2\leq q < 1$ (see \eqref{eq:lower-p-q-assumption}), leading to
\begin{align}
    \frac{}{}\frac{1}{2\beta} \leq \frac{q}{c_q}< \frac{1}{\beta}, \qquad  \frac{q/c_q}{q/c_q + 1} \leq \frac{1}{\beta}, \qquad \frac{ \frac{q}{c_q}\log(c_q)}{1 + q/c_q }<1.
\end{align} 
Finally,
the last line holds by $1/\beta \leq \frac{1}{4}$ and $\log\left(\frac{1}{\alpha+\Delta}\right)   = 2\beta$ (see \eqref{eq:lower-bound-H-assumption}).

To proceed, note that the function $f(\lambda)$ is concave with respect to $\lambda$. Therefore, observing $\nabla_\lambda f(\lambda) \mymid_{\lambda =\lambda_q} \geq 0$ with $c_q =\beta$, we have $\lambda_q \leq \lambda^\star$, which implies (see \eqref{eq:assumption-of-q-0})
\begin{align}
    1- q= \alpha+ \Delta =  \beta e^{-(H-1)/\lambda_q} \leq \beta e^{-(H-1) /\lambda^\star}.
\end{align}
The above assertion directly gives
\begin{align*}
    \lambda^\star \geq \frac{H-1}{\log\left(\frac{\beta}{\alpha+ \Delta}\right) }.
\end{align*}
The proof is completed by noticing
\begin{align*}
 \frac{H-1}{\log\left(\frac{\beta}{\alpha+ \Delta}\right) } =\frac{H-1}{\log\left(\frac{1}{\alpha+ \Delta}\right) + \log \beta } \overset{\mathrm{(i)}}{\geq} \frac{H-1}{2\log\left(\frac{1}{\alpha+ \Delta}\right)} \geq \frac{H}{16\ror},
\end{align*}
where (i) follows from \eqref{eq:lower-bound-H-assumption}, and the last inequality follows from \eqref{eq:finite-lower-ror-bounded} and the fact $\beta \in [4,\infty)$.

\section{Analysis: discounted infinite-horizon RMDPs}\label{sec:analysis-infinite}

\subsection{Proof of Lemma~\ref{lem:contration-of-T}}\label{proof:lem:contration-of-T}
We shall first show that the operator $\tpe(\cdot)$ (cf.~\eqref{eq:pessimism-operator}) is a $\gamma$-contraction, which will in turn imply the existence of the unique fixed point of $\tpe(\cdot)$.
Before starting, suppose that the entries of $ Q_1, Q_2 \in \mathbb{R}^{SA}$ are all bounded in $\big[0, \frac{1}{1-\gamma}\big]$ for all $(s,a)\in \cS\times \cA$. Denote that 
\begin{align}\label{eq:contraction-proof-notation}
	\forall s\in \cS: \quad & V_1(s) \defn  \max_a Q_1(s,a), \quad V_2(s) \defn \max_a Q_2(s,a).
\end{align}

\paragraph{Proof of $\gamma$-contraction.} We first show that $\tpe(\cdot)$ is a $\gamma$-contraction. Towards this, instead of $\tpe(\cdot)$, we begin with a simpler operator $\widetilde{\cT}^\ror_{\mathsf{pe}}(\cdot)$, defined as follows:
\begin{align}\label{eq:tilde-T-pe}
	\forall (s,a)\in \cS\times \cA :\quad  \widetilde{\cT}^\ror_{\mathsf{pe}}(Q)(s,a) = r(s, a)  + \gamma \inf_{ \cP \in \unb^{\sigma}(\widehat{P}^{\no}_{s,a})} \cP V - b\big(s,a\big),
\end{align}
which consequently leads to
\begin{align}\label{eq:tilde-T-pe-relation}
	\forall (s,a)\in \cS\times \cA :\quad \tpe(Q)(s,a) = \max \left\{\widetilde{\cT}^\ror_{\mathsf{pe}}(Q)(s,a), 0 \right\}.
\end{align}
It follows straightforwardly that
\begin{align}
	\left \|\tpe(Q_1) - \tpe(Q_2)\right\|_\infty \leq \left \|\widetilde{\cT}^\ror_{\mathsf{pe}}(Q_1) - \widetilde{\cT}^\ror_{\mathsf{pe}}(Q_2)\right\|_\infty ,
\end{align}
and hence it suffices to establish the $\gamma$-contraction of $\widetilde{\cT}^\ror_{\mathsf{pe}}(\cdot)$. With this in mind, we observe that
\begin{align}
	\left \|\widetilde{\cT}^\ror_{\mathsf{pe}}(Q_1) - \widetilde{\cT}^\ror_{\mathsf{pe}}(Q_2)\right\|_\infty & = \gamma \left\| \inf_{ \cP \in \unb^{\sigma}(\widehat{P}^{\no}_{s,a})} \cP V_1 - \inf_{ \cP \in \unb^{\sigma}(\widehat{P}^{\no}_{s,a})} \cP V_2 \right\|_\infty \overset{\mathrm{(i)}}{\leq} \gamma \left\| V_1 - V_2 \right\|_\infty \nonumber \\
	& \overset{\mathrm{(ii)}}{=} \gamma \max_s \left|\max_a Q_1(s,a) - \max_a Q_2(s,a)\right| \notag\\
	&  \leq \gamma \max_{(s,a)} \left|Q_1(s,a) - Q_2(s,a)\right| = \gamma \left\|  Q_1 -  Q_2 \right\|_\infty , 
\end{align}
where the first equality holds by the definition of $\widetilde{\cT}^\ror_{\mathsf{pe}}(\cdot)$ (cf.~\eqref{eq:tilde-T-pe}), (i) follows from that the infimum operator is a $1$-contraction w.r.t. $\|\cdot \|_\infty$ and $\| \cP V_1 - \cP V_2 \|_\infty \leq \| V_1 -V_2 \|_\infty$ for all $\cP \in \Delta(\cS)$, (ii) arises from the definitions in \eqref{eq:contraction-proof-notation}, and the last inequality is due to the maximum operator is also a $1$-contraction w.r.t. $\|\cdot \|_\infty$. Combining the above two inequalities establish the desired statement.

\paragraph{Existence of the unique fixed point.} To continue, we shall first claim that there exists at least one fixed point of $\tpe(\cdot)$. This is a standard argument, which we omit for brevity; interested readers are encouraged to refer to, e.g. \citet{li2022settling}, for details.

To prove the uniqueness of the fixed points of $\tpe(\cdot)$, suppose that there exist two fixed points $Q'$ and $Q''$ obeying obeying $Q' = \tpe(Q')$ and  $Q'' = \tpe(Q'')$. Moreover, the definition of $\tpe(\cdot)$  directly implies $0 \leq Q', Q'' \leq \frac{1}{1-\gamma}$, since for any $0\leq Q\leq \frac{1}{1-\gamma}$, it follows that  $0\leq \tpe(Q) \leq \frac{1}{1-\gamma}$. By the $\gamma$-contraction property, it follows that
\begin{align}\label{eq:contraction-contradiction}
	\left\| Q' - Q'' \right\|_\infty  = \left\| \tpe(Q') - \tpe  (Q''  ) \right\|_\infty \leq \gamma \left\| Q' - Q'' \right\|_\infty. 
\end{align}
However, \eqref{eq:contraction-contradiction} can't happen given $\gamma \in \big[\frac{1}{2}, 1\big)$, indicating the uniqueness of the fixed points of $\tpe(\cdot)$.

\subsection{Proof of Lemma~\ref{lem:infinite-converge}}\label{proof:lem:infinite-converge}
To begin with, considering any $Q, Q'$ obeying $Q\leq Q'$, and $0 \leq Q, Q' \leq \frac{1}{1-\gamma} $. We observe that the operator $\tpe(\cdot)$ (cf.~\eqref{eq:pessimism-operator}) has the monotone non-decreasing property, namely,
\begin{align}\label{eq:monotone-of-T-pe}
	\tpe(Q)(s,a) &= \max\left\{ r(s, a)  + \gamma \inf_{ \cP \in \unb^{\sigma}(\widehat{P}^{\no}_{s,a})} \cP V - b\big(s,a\big), \,  0 \right\} \notag \\
	& = \max\left\{ r(s, a)  + \gamma \inf_{ \cP \in \unb^{\sigma}(\widehat{P}^{\no}_{s,a})} \cP \max_{a'} Q(\cdot, a') - b\big(s,a\big), \,  0 \right\} \notag \\
	& \leq \max\left\{ r(s, a)  + \gamma \inf_{ \cP \in \unb^{\sigma}(\widehat{P}^{\no}_{s,a})} \cP \max_{a'} Q'(\cdot, a') - b\big(s,a\big), \,  0 \right\} = \tpe(Q')(s,a),
\end{align}
where the last line uses $Q\leq Q'$. Recalling the fixed point $\widehat{Q}^{\star,\ror}_{\mathsf{pe}}$ of $\tpe(\cdot)$, armed with \eqref{eq:monotone-of-T-pe} and the initialization $\widehat{Q}_0 = 0$, we arrive at
\begin{align*}
	\widehat{Q}_1 = \tpe(\widehat{Q}_0) \leq \tpe(\widehat{Q}^{\star,\ror}_{\mathsf{pe}}) = \widehat{Q}^{\star,\ror}_{\mathsf{pe}},
\end{align*}
where the inequality follows from $\widehat{Q}_0 =0 \leq \widehat{Q}^{\star,\ror}_{\mathsf{pe}}$.
Implementing the above result recursively gives
\begin{align*}
	\forall \; m \geq 0: \quad \widehat{Q}_m \leq \widehat{Q}^{\star,\ror}_{\mathsf{pe}}.
\end{align*}
Applying the $\gamma$-contraction property in Lemma~\ref{lem:contration-of-T} thus yields that for any $m\geq0$,
\begin{align}
	\| \widehat{Q}_m -\widehat{Q}^{\star,\ror}_{\mathsf{pe}}\|_\infty = \left\| \tpe(\widehat{Q}_{m-1}) - \tpe(\widehat{Q}^{\star,\ror}_{\mathsf{pe}}) \right\|_\infty & \leq \gamma \| \widehat{Q}_{m-1} -\widehat{Q}^{\star,\ror}_{\mathsf{pe}}\|_\infty \nonumber \\
	& \leq \cdots \leq \gamma^m \| \widehat{Q}_0 -\widehat{Q}^{\star,\ror}_{\mathsf{pe}}\|_\infty = \gamma^m \| \widehat{Q}^{\star,\ror}_{\mathsf{pe}}\|_\infty \leq \frac{\gamma^m}{1-\gamma}, \nonumber
\end{align}
where the last inequality holds by the fact $\| \widehat{Q}^{\star,\ror}_{\mathsf{pe}}\|_\infty \leq \frac{1}{1-\gamma}$ (see Lemma~\ref{lem:contration-of-T}).

\subsection{Proof of Theorem~\ref{thm:dro-upper-infinite}}
\label{proof:thm:dro-upper-infinite}

 	 To begin, we introduce some additional notation that will be useful throughout the analysis. We denote the state-action space covered by the batch dataset $\cD$ as  
\begin{align}\label{eq:cover-space-pib-infinite}
	\cC^{\mathsf{b}} = \left\{(s,a): \myrho(s, a ) >0 \right\}. 
\end{align}
In addition, recalling the definition in \eqref{eq:P-min-hat-def-infinite}, we define a similar one based on the true nominal model $P^{\no}$ as
\begin{align}\label{eq:P-min-pib-infinite} 
	P_{\mathsf{min}}(s,a) \defn  \min_{s'} \Big\{P^{\no}(s' \mymid s,a): \; P^{\no}(s' \mymid s,a)>0 \Big\},
\end{align}
which directly indicates that
\begin{equation}\label{eq:link_minpall_pmin-infinite}
\minpall = \min_{s}\; P_{\mathsf{min}}(s,\pi^{\star}(s)) , \qquad P_{\mathsf{min}}^{\mathsf{b}} = \min_{(s,a)\in \cC^{\mathsf{b}} }\;  P_{\mathsf{min}}(s,a). 
\end{equation}
Next, we denote the set of possible state occupancy distributions associated with the optimal policy $\pi^\star$ in a model within the uncertainty set $P\in \unb^{\ror} \left(P^{\no} \right)$ as
\begin{align} \label{eq:def-D-star-infinite}
	\cD^\star \defn \left\{ \left[d^{\star,P}(s)\right]_{s\in\cS} : P \in \unb^{\ror} \left(P^{\no} \right) \right\} = \left\{ \left[d^{\star,P}\big(s,\pi^\star(s) \big)\right]_{s\in\cS} : P \in \unb^{\ror} \left(P^{\no} \right)\right\},
\end{align}
where the second equality is due to the fact that $\pi^\star$ is chosen to be deterministic.

We are now ready to embark on the proof of Theorem~\ref{thm:dro-upper-infinite}. We first introduce a fact that is used throughout the proof; the proof is postponed to Appendix~\ref{sec:proof-eq:fact-of-N-b-assumption-infinite}:
\begin{align}\label{eq:fact-of-N-b-assumption-infinite}
\forall (s,a) \in \cC^{\mathsf{b}} : \qquad	N(s,a) \geq \frac{N \myrho(s,a)}{12} \geq \frac{ c_1\log( NS/ \delta )  }{12 P_{\mathsf{min}}(s,a) } \geq  - \frac{\log\frac{2NS}{\delta}}{\log (1- P_{\mathsf{min}}(s,a) )}
\end{align}
as long as \eqref{eq:dro-b-bound-N-condition-infinite} holds. 

For notation simplicity, denote the output Q-function and value function from Algorithm~\ref{alg:vi-lcb-dro-infinite} as $\widehat{Q} = \widehat{Q}_M$ and $\widehat{V} = \widehat{V}_M$.
Invoking Lemma~\ref{lem:infinite-converge} with $M \geq \frac{\log \frac{ \ror N}{1-\gamma}}{\log \frac{1}{\gamma}}$ directly leads to 
\begin{align}\label{eq:infinite-Q-N}
	\big \| \widehat{Q} - \widehat{Q}^{\star,\ror}_{\mathsf{pe}} \big\|_\infty \leq \frac{1}{ \ror N}
\end{align}
and therefore 
\begin{align}\label{eq:infinite-V-N}
	\big\| \widehat{V} - \widehat{V}^{\star,\ror}_{\mathsf{pe}} \big\|_\infty \leq \max_s \left| \max_a \widehat{Q}(s,a) - \max_a \widehat{Q}^{\star,\ror}_{\mathsf{pe}}(s,a)\right| \leq \big\| \widehat{Q} - \widehat{Q}^{\star,\ror}_{\mathsf{pe}}\big\|_\infty \leq \frac{1}{\ror N}.
\end{align}

The proof of Theorem~\ref{thm:dro-upper-infinite} is separated into several key steps as follows.

\paragraph{Step 1: controlling the uncertainty via leave-one-out analysis.} Given access to only a finite number of samples for estimating the nominal transition kernel $P^\no$, we need to efficiently control 
$$\left| \inf_{ \cP \in \unb^{\sigma}(\widehat{P}_{s,a}^0)} \cP \widehat{V} - \inf_{ \cP \in \unb^{\sigma}(P^{\no}_{s,a})}  \cP \widehat{V} \right|$$ 
across the robust value iterations, where $\widehat{V}$ is statistically dependent on $\widehat{P}_{s,a}^0$ (since $\widehat{P}_{s,a}^0$ will be reused in the update rule (cf.~\eqref{eq:pessimism-operator-equal}) for all the iterations). A naive treatment via the standard covering arguments will unfortunately lead to rather loose bounds \citep{zhou2021finite,panaganti2021sample,yang2021towards}. To overcome this challenge, we resort to the leave-one-out analysis---pioneered by \citet{agarwal2019optimality,li2020breaking,li2022settling} in the context of model-based RL---to decouple the statistical dependency. The results are summarized in the following lemma, with the proof provided in Appendix~\ref{proof:lemma:dro-b-bound-infinite}.

\begin{lemma}\label{lemma:dro-b-bound-infinite}
Instate the assumptions in Theorem~\ref{thm:dro-upper-infinite}. 
Then for all vector $\widetilde{V}$ obeying $\big\|\widetilde{V} - \widehat{V}^{\star, \ror}_{\mathsf{pe}} \big\|_\infty \le \frac{1}{\sigma N}$ and $\|\widetilde{V}\|_{\infty} \le \frac{1}{ 1-\gamma}$, with probability at least $1- \delta$, one has 
\begin{align}\label{eq:dro-b-bound-infinite}
	&\left| \inf_{ \cP \in \unb^{\sigma}(\widehat{P}_{s,a}^0)} \cP \widetilde{V} - \inf_{ \cP \in \unb^{\sigma}(P^{\no}_{s,a})}  \cP \widetilde{V} \right| \leq \min \left\{\frac{\cb}{\ror(1-\gamma)} \sqrt{\frac{\log(\frac{2(1+\ror)N^3S}{(1-\gamma)\delta})}{ \widehat{P}_{\mathsf{min}}(s,a) N(s,a)}} + \frac{4}{N \ror (1-\gamma)}\;, \; \frac{1}{1-\gamma}\right\}
\end{align}
for all  $(s,a)\in \cS\times \cA$. In addition, for all $(s,a)\in \cC^{\mathsf{b}} $, with probability at least $1-\delta$, one has
\begin{align}
  	   \frac{P_{\mathsf{min}}(s,a)}{8  \log(NS / \delta)} \leq   \widehat{P}_{\mathsf{min}}(s,a)  \leq e^2 P_{\mathsf{min}}(s,a) . \label{eq:convert-pmin-to-estimation-infinite}
\end{align}
 
\end{lemma}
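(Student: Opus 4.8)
\textbf{Proof plan for Lemma~\ref{lemma:dro-b-bound-infinite}.}

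The plan is to mirror the structure of the finite-horizon Lemma~\ref{lemma:dro-b-bound} (whose proof appears in Appendix~\ref{proof:lemma:dro-b-bound}), but to handle the crucial new difficulty: here $\widetilde V$ is \emph{statistically dependent} on the empirical kernel $\widehat P^0_{s,a}$, since the same kernel is reused across all $M$ value iterations. The second claim \eqref{eq:convert-pmin-to-estimation-infinite} is the easy part and I would dispatch it first, exactly as in Appendix~\ref{sec:proof-sec:proof-eq:convert-pmin-to-estimation}: using the fact \eqref{eq:fact-of-N-b-assumption-infinite} that $N(s,a)\geq \frac{c_1\log(NS/\delta)}{12 P_{\mathsf{min}}(s,a)}$ for $(s,a)\in\cC^{\mathsf b}$, apply Chernoff/Bernstein (Lemma~\ref{lem:binomial}) with a union bound to sandwich each positive entry $\widehat P^0(s'\mymid s,a)$ between constant multiples of $P^0(s'\mymid s,a)$, which immediately yields the two-sided bound relating $\widehat P_{\mathsf{min}}(s,a)$ and $P_{\mathsf{min}}(s,a)$.

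For the main bound \eqref{eq:dro-b-bound-infinite}, I would first rewrite both infima via the strong-duality formula \eqref{eq:strong-duality-mdp}, introduce the two optimal dual variables $\lambda^\star_{s,a}$, $\widehat\lambda^\star_{s,a}$ (as in \eqref{eq:upper-lambda-lambda-hat-def}), and invoke Lemma~\ref{lem:lambda-n-bound} to get $\lambda^\star, \widehat\lambda^\star \in [0, \frac{1}{\ror(1-\gamma)}]$ from $\|\widetilde V\|_\infty \le \frac{1}{1-\gamma}$. I would then split into the same three cases by whether each dual optimum is zero, reducing the difference to $\max_\lambda \lambda\,\big|\log(\widehat P^0_{s,a}e^{-\widetilde V/\lambda}) - \log(P^0_{s,a}e^{-\widetilde V/\lambda})\big|$ and hence to controlling the relative fluctuation $\big|(\widehat P^0_{s,a}-P^0_{s,a})e^{-\widetilde V/\lambda}\big| / \big(P^0_{s,a}e^{-\widetilde V/\lambda}\big)$, which is upper bounded by $\max_{s'\in\mathsf{supp}}|\widehat P^0(s'\mymid s,a)-P^0(s'\mymid s,a)|/P^0(s'\mymid s,a)$ exactly as in \eqref{eq:finite-upper-b-2}.

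The hard part will be establishing the analogue of the pointwise concentration bound \eqref{eq:control-I-final} \emph{while $\widetilde V$ depends on $\widehat P^0_{s,a}$}. This is where the leave-one-out argument enters: I would construct, for each $(s,a)$, an auxiliary empirical kernel $\widehat P^{0,(s,a)}$ that agrees with $\widehat P^0$ everywhere except it replaces row $(s,a)$ by an independent quantity, run the pessimistic robust value iteration with this leave-one-out kernel to obtain a fixed point $\widehat Q^{\star,\ror,(s,a)}_{\mathsf{pe}}$ and value $\widehat V^{(s,a)}$ that is \emph{independent} of $\widehat P^0_{s,a}$, apply the concentration bound \eqref{eq:pointwise-I-upper} to $\widehat V^{(s,a)}$ (for which independence holds), and then bound $\|\widehat V - \widehat V^{(s,a)}\|_\infty$ to transfer the estimate back to $\widetilde V$. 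The transfer error accounts for the extra additive term $\frac{4}{N\ror(1-\gamma)}$ in \eqref{eq:dro-b-bound-infinite}: I expect $\|\widehat V^{\star,\ror}_{\mathsf{pe}} - \widehat V^{(s,a)}\|_\infty = O(\frac{1}{N(1-\gamma)})$ by a perturbation argument on the single differing row together with the $\gamma$-contraction of Lemma~\ref{lem:contration-of-T}, while the hypothesis $\|\widetilde V - \widehat V^{\star,\ror}_{\mathsf{pe}}\|_\infty\le\frac{1}{\ror N}$ supplies the rest. Finally I would combine the three cases and take a union bound over $(s,a)\in\cC^{\mathsf b}$ (treating the uncovered pairs trivially via the $\frac{1}{1-\gamma}$ clipping, as in Appendix~\ref{proof:finite-control-uncertainty-gap}) to conclude.
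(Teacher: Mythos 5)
Your overall skeleton---pointwise concentration via strong duality with the three dual-variable cases, a leave-one-out device to manufacture independence, and a triangle-inequality transfer back to $\widetilde{V}$---matches the paper's strategy, and your treatment of \eqref{eq:convert-pmin-to-estimation-infinite} is exactly the paper's. But your specific leave-one-out construction has a genuine gap. You replace the single row $\widehat{P}^{\no}_{s,a}$ by an independent quantity (say the true row $P^{\no}_{s,a}$) and claim $\big\|\widehat{V}^{\star,\ror}_{\mathsf{pe}} - \widehat{V}^{(s,a)}\big\|_\infty = O\big(\tfrac{1}{N(1-\gamma)}\big)$ ``by a perturbation argument on the single differing row together with the $\gamma$-contraction.'' This cannot hold: the original and leave-one-out pessimistic operators differ (only) at the entry $(s,a)$, by the amount $\gamma\,\big|\inf_{\cP\in\unb^{\ror}(\widehat{P}^{\no}_{s,a})}\cP V - \inf_{\cP\in\unb^{\ror}(P^{\no}_{s,a})}\cP V\big|$, which is precisely the statistical-fluctuation quantity the lemma is trying to bound---of order $b(s,a)\asymp \frac{1}{\ror(1-\gamma)}\sqrt{\frac{\log(\cdot)}{\widehat{P}_{\mathsf{min}}(s,a)N(s,a)}}$, not $O(1/N)$. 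The $\gamma$-contraction converts an operator gap of size $\epsilon$ into a fixed-point gap of size $\epsilon/(1-\gamma)$, so the best available bound is $\big\|\widehat{V}^{\star,\ror}_{\mathsf{pe}} - \widehat{V}^{(s,a)}\big\|_\infty \lesssim b(s,a)/(1-\gamma)$; your transfer step then yields a final estimate of order $b(s,a)/(1-\gamma)$ rather than $b(s,a)$, inflating the penalty---and hence the sample complexity---by a full factor of the effective horizon. Either you invoke the (not-yet-proved) concentration bound to control the fixed-point gap, which is circular, or you accept the horizon loss; either way the lemma with the stated penalty \eqref{eq:dro-b-bound-infinite} does not follow.

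The paper's construction sidesteps this entirely: instead of swapping the row at $(s,a)$, it makes state $s$ \emph{absorbing} under the auxiliary nominal kernel and introduces a free reward parameter $u$ at that state (cf.~\eqref{eq:auxiliary-P-infinite}--\eqref{eq:auxiliary-r-infinite}). For the tuned value $u^\star = (1-\gamma)\widehat{V}^{\star,\ror}_{\mathsf{pe}}(s) + (\text{penalty term})$ in \eqref{eq:def-u-star}, the auxiliary operator's fixed point coincides \emph{exactly} with $\widehat{Q}^{\star,\ror}_{\mathsf{pe}}$ (Step 2 of the paper's proof), so there is zero construction-induced perturbation to transfer. The price is that $u^\star$ is data-dependent, which is handled by a deterministic $\varepsilon$-net over $u\in\big[0,\tfrac{2}{\ror}+\tfrac{2}{1-\gamma}\big]$ with resolution $\tfrac{1}{\ror N}$ (Steps 3--4): every auxiliary RMDP indexed by a net point is genuinely independent of $\widehat{P}^{\no}_{s,a}$, the pointwise concentration lemma applies to each, and it is the net resolution---not a row perturbation---that produces the benign additive $\tfrac{4}{N\ror(1-\gamma)}$ term, with the $N^3$ inside the logarithm coming from the union bound over the net. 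If you rework your argument around this absorbing-state construction, the rest of your outline goes through.
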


\paragraph{Step 2: establishing the pessimism property.}
Armed with Lemma~\ref{lemma:dro-b-bound-infinite}, we aim to show the key property that 
\begin{align}\label{eq:infinite-pessimism-assertion}
	&\forall (s,a) \in \cS\times \cA :
	\qquad \widehat{Q}(s,a) \leq Q^{\widehat{\pi},\ror}(s,a),\qquad \widehat{V}(s) \leq V^{\widehat{\pi},\ror}(s).
\end{align}
Similar to the finite-horizon setting, it suffices to focus on verifying the former assertion in \eqref{eq:infinite-pessimism-assertion}. Towards this, we first recall that the fixed point $\widehat{Q}^{\star,\ror}_{\mathsf{pe}}$ of the pessimistic robust Bellman operator $\tpe(\cdot)$ (cf.~\eqref{eq:pessimism-operator}) obeys
\begin{align}\label{eq:infinite-fixed-Q-recall}
	\widehat{Q}^{\star,\ror}_{\mathsf{pe}} = \tpe(\widehat{Q}^{\star,\ror}_{\mathsf{pe}}) = \max\left\{ r(s, a)  + \gamma \inf_{ \cP \in \unb^{\sigma}(\widehat{P}^{\no}_{s,a})} \cP \widehat{V}^{\star,\ror}_{\mathsf{pe}} - b\big(s,a\big), \,  0 \right\}.
\end{align}
If $\widehat{Q}^{\star,\ror}_{\mathsf{pe}}(s,a) = 0$. Given the initialization $\widehat{Q}_0 = 0$, invoking Lemma~\ref{lem:infinite-converge} gives
		\begin{align*}
			\widehat{Q}(s,a) =\widehat{Q}_M(s,a) \leq \widehat{Q}^{\star,\ror}_{\mathsf{pe}}(s,a) = 0.
		\end{align*}
		As a result, $Q^{\widehat{\pi},\ror}(s,a) \geq 0 = \widehat{Q}(s,a)$ as desired. Therefore, it boils down to examine the case when $\widehat{Q}^{\star,\ror}_{\mathsf{pe}}(s,a) > 0$. One has
		\begin{align}
			\widehat{Q}(s,a) &\overset{\mathrm{(i)}}{\leq} \widehat{Q}^{\star,\ror}_{\mathsf{pe}}(s,a) + \frac{1}{\ror N} = r(s, a)  + \gamma \inf_{ \cP \in \unb^{\sigma}(\widehat{P}^{\no}_{s,a})} \cP \widehat{V}^{\star,\ror}_{\mathsf{pe}} - b\big(s,a\big) + \frac{1}{\ror N} \nonumber\\
			& \leq r(s, a) + \gamma \inf_{ \cP \in \unb^{\sigma}(\widehat{P}^{\no}_{s,a})} \cP \widehat{V} - b(s,a) + \frac{1}{\ror N} + \gamma \left| \inf_{ \cP \in \unb^{\sigma}(\widehat{P}^{\no}_{s,a})} \cP \widehat{V}^{\star,\ror}_{\mathsf{pe}} - \inf_{ \cP \in \unb^{\sigma}(\widehat{P}^{\no}_{s,a})} \cP \widehat{V} \right| \nonumber\\
			& \overset{\mathrm{(ii)}}{\leq} r(s, a) + \gamma \inf_{ \cP \in \unb^{\sigma}(\widehat{P}^{\no}_{s,a})} \cP \widehat{V} - b(s,a) + \frac{2}{ \ror N} \nonumber \\
			&\leq r(s, a) + \gamma \inf_{ \cP \in \unb^{\sigma}(P^{\no}_{s,a})} \cP \widehat{V} - b(s,a) + \frac{2}{ \ror N} + \gamma \left| \inf_{ \cP \in \unb^{\sigma}(\widehat{P}^{\no}_{s,a})} \cP  \widehat{V} - \inf_{ \cP \in \unb^{\sigma}(P^{\no}_{s,a})} \cP \widehat{V} \right| \nonumber \\
			& \leq r(s, a) + \gamma \inf_{ \cP \in \unb^{\sigma}(P^{\no}_{s,a})} \cP \widehat{V},
		\end{align}
		where (i) follows from \eqref{eq:infinite-Q-N}, (ii) arises from \eqref{eq:infinite-V-N} and the basic fact that infimum operator is $1$-contraction w.r.t $\|\cdot \|_\infty$, and the last inequality holds by the definition of $b(s,a)$ (cf.~\eqref{def:bonus-dro-infinite}) and Lemma~\ref{lemma:dro-b-bound-infinite}. Putting the above inequality together with the robust Bellman equation (cf.~\eqref{eq:bellman-equ-pi-infinite}) pertaining to $Q^{\widehat{\pi}, \ror}(s,a)$, we arrive at
		\begin{align}
			Q^{\widehat{\pi},\ror}(s,a) - \widehat{Q}(s,a) & \geq r(s,a) + \gamma \inf_{ \cP \in \unb^{\sigma}(P^{\no}_{s,a})} \cP V^{\widehat{\pi},\ror} - \left(  r(s, a) + \gamma \inf_{ \cP \in \unb^{\sigma}(P^{\no}_{s,a})} \cP \widehat{V} \right) \nonumber \\
			& =  \gamma \left( \inf_{ \cP \in \unb^{\sigma}(P^{\no}_{s,a})} \cP V^{\widehat{\pi},\ror} - \inf_{ \cP \in \unb^{\sigma}(P^{\no}_{s,a})} \cP \widehat{V}\right) \nonumber \\
			& \overset{\mathrm{(i)}}{=} \gamma \left( \widetilde{P}_{s,a} V^{\widehat{\pi},\ror} - \inf_{ \cP \in \unb^{\sigma}(P^{\no}_{s,a})} \cP \widehat{V}\right) \geq \gamma \widetilde{P}_{s,a} \left( V^{\widehat{\pi},\ror} - \widehat{V}\right), \nonumber
		\end{align}
		where (i) holds by setting $\widetilde{P}_{s,a} = \mathrm{argmin}_{\cP \in \unb^{\sigma}(P^{\no}_{s,a})} \cP V^{\widehat{\pi},\ror}$.
				Consequently, one has
		\begin{align}
			\min_{s,a} \left[ Q^{\widehat{\pi},\ror}(s,a) - \widehat{Q}(s,a)\right] &\geq \min_{s,a} \left[\gamma \widetilde{P}_{s,a} \left( V^{\widehat{\pi},\ror} - \widehat{V}\right)\right] \overset{\mathrm{(i)}}{\geq} \gamma \min_s \left[V^{\widehat{\pi},\ror}(s) - \widehat{V}(s) \right] \nonumber \\
			& = \gamma \min_s \left[Q^{\widehat{\pi},\ror}\big(s, \widehat{\pi}(s)\big) - \widehat{Q}\big(s, \widehat{\pi}(s)\big) \right] \nonumber \\
			& \geq \gamma \min_{s,a} \left[Q^{\widehat{\pi},\ror}\big(s, a\big) - \widehat{Q}\big(s, a\big) \right] ,\label{eq:gamma-pessimism-relation}
		\end{align}
		where (i) follows from $\widetilde{P}_{s,a} \in \Delta(\cS)$ for all $(s,a)\in\cS\times\cA$. Noting that $0 \leq \gamma <1$, we conclude $Q^{\widehat{\pi},\ror}(s,a) - \widehat{Q}(s,a) \geq 0$ for all $(s,a) \in \cS\times \cA$. This establishes the claim \eqref{eq:infinite-pessimism-assertion}.

\paragraph{Step 3: bounding $V^{\star, \ror}(\rho) - V^{\widehat{\pi}, \ror}(\rho)$.} In view of the pessimistic property (cf.~\eqref{eq:infinite-pessimism-assertion}), it follows that
\begin{align}\label{eq:infinite-pessimism-gap-transfer}
	V^{\star, \ror}(s) - V^{\widehat{\pi}, \ror}(s) \leq V^{\star, \ror}(s) - \widehat{V}(s).
\end{align}
Towards this, note that
\begin{align}
	\widehat{V}(s) &= \max_a \widehat{Q}(s,a) \geq \widehat{Q} \big(s, \pi^\star(s)\big) \overset{\mathrm{(i)}}{\geq} \widehat{Q}^{\star,\ror}_{\mathsf{pe}}\big(s, \pi^\star(s)\big) - \frac{1}{\ror N} \nonumber \\
	& \overset{\mathrm{(ii)}}{\geq} r \big(s, \pi^\star(s)\big)  + \gamma \inf_{ \cP \in \unb^{\sigma}\left(\widehat{P}^{\no}_{s, \pi^\star(s)}\right)} \cP \widehat{V}^{\star,\ror}_{\mathsf{pe}} - b\big(s, \pi^\star(s)\big) - \frac{1}{\ror N}  \nonumber \\
	& \geq r \big(s, \pi^\star(s)\big)  + \gamma \inf_{ \cP \in \unb^{\sigma}\left(\widehat{P}^{\no}_{s, \pi^\star(s)}\right)} \cP \widehat{V}  - b\big(s, \pi^\star(s)\big) - \frac{1}{\ror N}   - \gamma \left| \inf_{ \cP \in \unb^{\sigma}\left(\widehat{P}^{\no}_{s, \pi^\star(s)}\right)} \cP \widehat{V}^{\star,\ror}_{\mathsf{pe}} - \inf_{ \cP \in \unb^{\sigma}\left(\widehat{P}^{\no}_{s, \pi^\star(s)}\right)} \cP \widehat{V} \right|  \nonumber \\
	& \overset{\mathrm{(iii)}}{\geq} r \big(s, \pi^\star(s)\big)  + \gamma \inf_{ \cP \in \unb^{\sigma}\left(\widehat{P}^{\no}_{s, \pi^\star(s)}\right)} \cP \widehat{V}  - b\big(s, \pi^\star(s)\big) - \frac{2}{\ror N}  \nonumber \\
	& \geq r \big(s, \pi^\star(s)\big)  + \gamma \inf_{ \cP \in \unb^{\sigma}\left(P^{\no}_{s, \pi^\star(s)}\right)} \cP \widehat{V}  - b\big(s, \pi^\star(s)\big) - \frac{2}{\ror N}   - \gamma \left| \inf_{ \cP \in \unb^{\sigma}\left(\widehat{P}^{\no}_{s, \pi^\star(s)}\right)} \cP  \widehat{V} - \inf_{ \cP \in \unb^{\sigma}\left(P^{\no}_{s, \pi^\star(s)}\right)} \cP \widehat{V} \right|  \nonumber \\
	& \geq r \big(s, \pi^\star(s)\big)  + \gamma \inf_{ \cP \in \unb^{\sigma}\left(P^{\no}_{s, \pi^\star(s)}\right)} \cP \widehat{V}  - 2b\big(s, \pi^\star(s)\big), \label{eq:infinite-V-hat-upper-bound}
\end{align}
where (i) follows from \eqref{eq:infinite-Q-N}, (ii) holds by applying \eqref{eq:infinite-fixed-Q-recall}, (iii) arises from \eqref{eq:infinite-V-N}, and the basic fact that the infimum operator is a $1$-contraction w.r.t. $\|\cdot \|_\infty$, and the final inequality holds by the definition of $b(s,a)$ (see \eqref{def:bonus-dro-infinite}) and Lemma~\ref{lemma:dro-b-bound-infinite}.

To continue, invoking the robust Bellman optimality equation in \eqref{eq:bellman-equ-star-infinite} gives
\begin{align*}
	V^{\star,\ror}(s) =  Q^{\star, \ror}\big(s, \pi^\star(s)\big)= r\big(s, \pi^\star(s)\big) + \gamma \inf_{ \cP \in \unb^{\sigma}\left(P^{\no}_{s,\pi^\star(s)}\right)} \cP V^{\star,\ror}.
\end{align*}
Combining the above relation with \eqref{eq:infinite-V-hat-upper-bound}, we arrive at
\begin{align}
V^{\star,\ror}(s) - \widehat{V}(s)   & \leq \gamma \inf_{ \cP \in \unb^{\sigma}\left(P^{\no}_{s,\pi^\star(s)}\right)} \cP V^{\star,\ror} -  \gamma \inf_{ \cP \in \unb^{\sigma}\left(P^{\no}_{s, \pi^\star(s)}\right)} \cP \widehat{V}  + 2b\big(s, \pi^\star(s)\big)  \nonumber \\
	 & \leq \gamma\widehat{P}^{\inf}_{s,\pi^\star(s)} \left( V^{\star,\ror} -  \widehat{V}\right) + 2b\big(s, \pi^\star(s)\big) ,\label{eq:infinite-recursion-basic}
\end{align}
where the final inequality holds evidently, by introducing 
\begin{align}\label{eq:infinite-inf-hat}
	\widehat{P}^{\inf}_{s,\pi^\star(s)} \defn \mathrm{argmin}_{\cP \in \unb^{\ror} \big(P^0_{s,\pi^\star(s)} \big)} \;  \cP \widehat{V}.
\end{align}

Before continuing, for convenience, let us introduce a matrix $\widehat{P}^{\inf} \in\mathbb{R}^{S\times\cS}$ and a vector $b^\star \in \mathbb{R}^\cS$, where their $s$-th rows (resp.~entries) are defined as
\begin{align}\label{eq:infinite-matrix-notation}
	\left[\widehat{P}^{\inf} \right]_{s,\cdot} = \widehat{P}^{\inf}_{s,\pi^\star(s)}, \qquad \mbox{and} \qquad  b^\star(s) = b\big(s, \pi^\star(s) \big).
\end{align}
With these notation in hand, averaging \eqref{eq:infinite-recursion-basic} over the initial state distribution $\rho$ leads to 
\begin{align}
	V^{\star, \ror}(\rho) - \widehat{V}(\rho) &= \sum_{s\in\cS}\rho(s) \left( V^{\star,\ror}(s)  - \widehat{V}(s)\right) \nonumber \\
	&\leq  \gamma\sum_{s\in\cS}\rho(s) \widehat{P}^{\inf}_{s,\pi^\star(s)} \left( V^{\star, \ror} -  \widehat{V}\right) + 2 \sum_{s\in\cS}\rho(s) b\big(s, \pi^\star(s)\big) \nonumber \\
	& = \gamma \rho^\top \widehat{P}^{\inf} \left( V^{\star, \ror} -  \widehat{V}\right) + 2 \rho^\top b^\star.
\end{align}
Applying the above result recursively gives
\begin{align}
	V^{\star, \ror}(\rho) - \widehat{V}(\rho)  &\leq \gamma \rho^\top \widehat{P}^{\inf} \left( V^{\star, \ror} -  \widehat{V}\right) + 2 \rho^\top b^\star \nonumber \\
	&\leq \gamma \left(\gamma \rho^\top \widehat{P}^{\inf}\right)  \widehat{P}^{\inf}\left( V^{\star, \ror} -  \widehat{V}\right) + 2\left(\gamma \rho^\top \widehat{P}^{\inf}\right)  b^\star + 2 \rho^\top b^\star  \nonumber \\
	& \leq \cdots \leq \left\{\lim_{i\rightarrow \infty} \gamma^i \rho^\top \left(\widehat{P}^{\inf}\right)^i \left( V^{\star, \ror} -  \widehat{V}\right) \right\} + 2 \rho^\top\sum_{i=0}^\infty  \gamma^i \left(\widehat{P}^{\inf}\right)^i b^\star \nonumber \\
	& \overset{\mathrm{(i)}}{\leq} 2 \rho^\top\sum_{i=0}^\infty \gamma^i \left(\widehat{P}^{\inf}\right)^i b^\star = 2 \rho^\top  \left(I - \gamma \widehat{P}^{\inf} \right)^{-1} b^\star, \label{eq:infinite-perform-gap-1}
\end{align}
where (i) holds by $\big|\rho^\top \left(\widehat{P}^{\inf}\right)^i \left( V^{\star, \ror} -  \widehat{V}\right)\big| \leq \frac{1}{1-\gamma}$ for all $i\geq 0$, and that $\lim_{i\rightarrow \infty} \gamma^i \rho^\top \left(\widehat{P}^{\inf}\right)^i \left( V^{\star, \ror} -  \widehat{V}\right) =0$ since $\lim_{i\rightarrow \infty} \gamma^i = 0$ for all $0\leq \gamma<1$.

To further characterize the above performance gap, invoking the definition of $d^{\star, P}$ (cf.~\eqref{eq:visitation_dist_infty} and \eqref{eq:visitation_dist-optimal-infty}), we arrive at
\begin{align}
\left(d^{\star, \widehat{P}^{\inf}}\right)^\top= (1-\gamma) \rho^\top \sum_{t=0}^\infty \gamma^t \left(\widehat{P}^{\inf}\right)^t = (1-\gamma) \rho^\top \left(I - \gamma \widehat{P}^{\inf} \right)^{-1}. 
\end{align}
Plugging the above expression back into \eqref{eq:infinite-perform-gap-1}, and combining with\eqref{eq:infinite-pessimism-gap-transfer},  yields
\begin{align}\label{eq:summary-gap-format}
 V^{\star,\ror}(\rho)  - V^{\widehat{\pi},\ror}(\rho)  \leq V^{\star, \ror}(\rho) - \widehat{V}(\rho)  \leq \frac{2}{1-\gamma}\left\langle d^{\star, \widehat{P}^{\inf}}, b^\star \right\rangle.
\end{align}

\paragraph{Step 4: controlling $\left\langle d^{\star, \widehat{P}^{\inf}}, b^\star \right\rangle$ using concentrability.}
Note that $\widehat{P}^{\inf} \in \cU^\ror(P^\no)$ (see \eqref{eq:infinite-inf-hat} and \eqref{eq:infinite-matrix-notation}), which in words means $\widehat{P}^{\inf}$ is some transition kernel inside $\cU^\ror(P^\no)$ --- the uncertainty set around the nominal kernel $P^\no$.
Similar to the finite-horizon case, observing that we can express $\left<d^{\star, \widehat{P}^{\inf}}, b^\star \right> = \sum_{s\in\cS} d^{\star, \widehat{P}^{\inf}}(s) b^\star(s)$, we divide the states into two cases and control them separately.
\begin{itemize}
	\item {\bf Case 1: $s\in\cS$ where $\max_{ P \in \unb^{\ror}\big(P^0\big)} d^{\star, P} \big(s, \pi^\star(s)\big) = 0$.} Since $\widehat{P}^{\inf} \in \cU^\ror(P^\no)$, one has
	\begin{align*}
		0 \leq d^{\star, \widehat{P}^{\inf} }(s) = d^{\star, \widehat{P}^{\inf} }\big(s, \pi^\star(s)\big) \leq  \max_{ P \in \unb^{\ror}\big(P^0\big)} d^{\star, P} \big(s, \pi^\star(s)\big) =0, 
	\end{align*}
	which consequently indicates
	\begin{align}
			d^{\star, \widehat{P}^{\inf}}(s) = 0.
	\end{align}
	\item {\bf Case 2: $s\in\cS$ where $\max_{ P \in \unb^{\ror}\big(P^0\big)} d^{\star, P} \big(s, \pi^\star(s)\big) > 0$.} For any such state $s$, we claim that 
	\begin{align}
	\myrho\big(s, \pi^\star(s)\big) >0 \quad \text{ and } \quad  \big(s, \pi^\star(s)\big)\in\cC^{\mathsf{b}}.
\end{align}
This is due to Assumption~\ref{assumption:dro-infinite}, which requires $\Cstar$ to be finite given the numerator is positive:
\begin{align}
	\max_{P \in \unb^{\ror}(P^{\no})} \frac{\min\big\{d^{\star,P}\big(s, \pi^\star(s)\big), \frac{1}{S}\big\}}{\myrho\big(s, \pi^\star(s)\big)} = \max_{P \in \unb^{\ror}(P^{\no})} \frac{\min\big\{d^{\star,P}(s), \frac{1}{S}\big\}}{\myrho(s, a)}\le \Cstar < \infty.
\end{align}

To continue, invoking the fact in \eqref{eq:fact-of-N-b-assumption-infinite} with $\big(s, \pi^\star(s)\big)\in\cC^{\mathsf{b}}$ gives
\begin{align}\label{eq:infinite-pipeline-Nsa-bound}
	N\big(s, \pi^\star(s)\big) &\geq \frac{N \myrho\big(s, \pi^\star(s)\big)}{12}  \nonumber \\
	&\overset{\mathrm{(i)}}{\geq} \frac{ N \max_{P \in \unb^{\ror}(P^{\no})}\min\big\{d^{\star,P}\big(s, \pi^\star(s)\big), \frac{1}{S}\big\}}{12 \Cstar } \geq \frac{ N \min\big\{d^{\star, \widehat{P}^{\inf}}(s), \frac{1}{S}\big\}}{12 \Cstar },  
\end{align}
where (i) holds by Assumption~\ref{assumption:dro-infinite}, and the last inequality holds by $\widehat{P}^{\inf} \in \cU^\ror(P^\no)$.
With this in mind, we can control the pessimistic penalty $b^\star(s)$ (cf.~\eqref{def:bonus-dro-infinite}) by
\begin{align}
	b^\star(s) &\leq \frac{\cb}{\ror(1-\gamma)} \sqrt{\frac{\log \left(\frac{2(1+\ror)N^3S}{(1-\gamma)\delta} \right)}{ \widehat{P}_{\mathsf{min}}\big(s, \pi^\star(s)\big) N\big(s, \pi^\star(s)\big)}} + \frac{4}{\ror N(1-\gamma)}+ \frac{2}{\ror N} \nonumber \\
	& \overset{\mathrm{(i)}}{\leq}\frac{4\cb}{\ror(1-\gamma)} \sqrt{\frac{\log^2 \left(\frac{2(1+\ror)N^3S}{(1-\gamma)\delta} \right)}{ P_{\mathsf{min}}\big(s, \pi^\star(s)\big) N\big(s, \pi^\star(s)\big)}} + \frac{4}{\ror N(1-\gamma)}+ \frac{2}{\ror N}  \nonumber\\
	&\leq \frac{16\cb}{\ror(1-\gamma)} \sqrt{ \frac{ \Cstar \log^2 \left(\frac{2(1+\ror)N^3S}{(1-\gamma)\delta} \right)}{   P_{\mathsf{min}}\big(s, \pi^\star(s)\big) N \min\big\{d^{\star, \widehat{P}^{\inf}}(s), \frac{1}{S}\big\}}} + \frac{6}{\ror N(1-\gamma)}  \nonumber\\
	&\leq \frac{20\cb}{\ror(1-\gamma)} \sqrt{ \frac{ \Cstar \log^2 \left(\frac{2(1+\ror)N^3S}{(1-\gamma)\delta} \right)}{   P_{\mathsf{min}}\big(s, \pi^\star(s)\big) N \min\big\{d^{\star, \widehat{P}^{\inf}}(s), \frac{1}{S}\big\}}}, \nonumber
\end{align}
where (i) arises from \eqref{eq:convert-pmin-to-estimation-infinite}, the penultimate inequality follows from \eqref{eq:infinite-pipeline-Nsa-bound}, and the last inequality holds as long as $\cb$ is large enough.

\end{itemize} 

Summing up the above two cases, we arrive at
\begin{align}
	\left<d^{\star, \widehat{P}^{\inf}}, b^\star \right>& =  \sum_{s\in\cS} d^{\star, \widehat{P}^{\inf}}(s) b^\star(s) \nonumber \\
	&\leq \sum_{s\in\cS} d^{\star, \widehat{P}^{\inf}}(s) \frac{20\cb}{\ror(1-\gamma)} \sqrt{ \frac{ \Cstar \log^2 \left(\frac{2(1+\ror)N^3S}{(1-\gamma)\delta} \right)}{   P_{\mathsf{min}}\big(s, \pi^\star(s)\big) N \min\big\{d^{\star, \widehat{P}^{\inf}}(s), \frac{1}{S}\big\}}} \nonumber \\
	& \overset{\mathrm{(i)}}{\leq}  \frac{20\cb}{\ror(1-\gamma)} \sqrt{ \sum_{s\in\cS} d^{\star, \widehat{P}^{\inf}}(s) \frac{ \Cstar \log^2 \left(\frac{2(1+\ror)N^3S}{(1-\gamma)\delta} \right)}{   P_{\mathsf{min}}\big(s, \pi^\star(s)\big) N \min\big\{d^{\star, \widehat{P}^{\inf}}(s), \frac{1}{S}\big\}}}\sqrt{\sum_{s\in\cS} d^{\star, \widehat{P}^{\inf}}(s)}  \nonumber \\
	& \leq  \frac{40\cb}{\ror(1-\gamma)} \sqrt{  \frac{ S\Cstar \log^2 \left(\frac{2(1+\ror)N^3S}{(1-\gamma)\delta} \right)}{   P_{\mathsf{min}}^\star N }}, \label{eq:infinite-d-b-gap}
\end{align}
where (i) arises from Cauchy-Schwarz inequality, and the last inequality holds since $ P_{\mathsf{min}}\big(s, \pi^\star(s)\big) \geq P_{\mathsf{min}}^\star$ for all $s\in\cS$ (see \eqref{eq:link_minpall_pmin-infinite}) and the following fact (which has been established in \eqref{eq:d-star-1-S-bound}):
\begin{align*}
	\sum_{s\in\cS} \frac{d^{\star, \widehat{P}^{\inf}}(s)}{\min\big\{d^{\star, \widehat{P}^{\inf}}(s), \frac{1}{S}\big\}} \leq 2S.
\end{align*}

Finally, inserting \eqref{eq:infinite-d-b-gap} back into \eqref{eq:summary-gap-format}, with probability at least $1-2\delta$, one has
\begin{align*}
  V^{\star,\ror}(\rho)  - V^{\widehat{\pi},\ror}(\rho) \leq \frac{2}{1-\gamma}\left< d^{\star, \widehat{P}^{\inf}}, b^\star \right> \leq \frac{80\cb}{\ror(1-\gamma)^2} \sqrt{  \frac{ S\Cstar \log^2 \left(\frac{2(1+\ror)N^3S}{(1-\gamma)\delta} \right)}{   P_{\mathsf{min}}^\star N }},
\end{align*}
which concludes the proof.

\subsubsection{Proof of Lemma~\ref{lemma:dro-b-bound-infinite}}\label{proof:lemma:dro-b-bound-infinite}

We first note that the second assertion in \eqref{eq:convert-pmin-to-estimation-infinite} is the counterpart of \eqref{eq:convert-pmin-to-estimation}, which can be verified following the same argument in Appendix~\ref{sec:proof-sec:proof-eq:convert-pmin-to-estimation}. For brevity, we omit its proof, and shall focus on verifying \eqref{eq:dro-b-bound-infinite}.

To begin with, we consider the situation when $N(s,a) =0$. In this case, \eqref{eq:dro-b-bound-infinite} can be easily verified since
\begin{align}
	\left|\inf_{ \cP \in \unb^{\sigma}(\widehat{P}_{s,a}^0)} \cP V - \inf_{ \cP \in \unb^{\sigma}(P^{\no}_{s,a})}  \cP V \right| \overset{\mathrm{(i)}}{=}  \inf_{ \cP \in \unb^{\sigma}(P^{\no}_{s,a})}  \cP V  \leq  \|V\|_\infty \overset{\mathrm{(ii)}}{\leq} \frac{1}{1-\gamma},
\end{align}
where (i) follows from the fact $\widehat{P}_{s,a}^0 =0$ when $N(s,a) =0$ (see \eqref{eq:empirical-P-infinite}), and (ii) arises from the assumption $\|V\|_\infty \leq \frac{1}{1-\gamma}$. Consequently, in the remainder of the proof, we focus on verifying \eqref{eq:dro-b-bound-infinite} when $N(s,a) >0$.
Let us first introduce the counterpart of the claim \eqref{eq:dro-b-bound} in Lemma~\ref{lemma:dro-b-bound} as follows.
\begin{lemma}\label{lemma:infinite-pointwise-uncertainty-bound}
For all $(s,a)\in  \cS\times \cA$ with $N(s,a) >0$, consider any vector $V\in \mathbb{R}^S$ independent of $\widehat{P}^0_{s,a}$ obeying $\|V\|_{\infty} \le \frac{1}{1-\gamma}$. With probability at least $1- \delta$, one has 
\begin{align}
	&\left| \inf_{ \cP \in \unb^{\sigma}(\widehat{P}_{s,a}^0)} \cP V- \inf_{ \cP \in \unb^{\sigma}(P^{\no}_{s,a})}  \cP V \right| \leq  \frac{\cb}{\ror(1-\gamma)} \sqrt{\frac{\log(\frac{NS}{\delta})}{ \widehat{P}_{\mathsf{min}}(s,a) N(s,a)}}.
\end{align}
\end{lemma}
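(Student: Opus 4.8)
The plan is to mirror the finite-horizon argument that established \eqref{eq:dro-b-bound} in Appendix~\ref{proof:finite-control-uncertainty-gap}, replacing the value range bound $H$ by $\frac{1}{1-\gamma}$ throughout. The first observation is that $N(s,a)>0$ forces $\myrho(s,a)>0$ (a transition can only be observed from a visited pair), so $(s,a)\in\cC^{\mathsf{b}}$; consequently the burn-in condition \eqref{eq:dro-b-bound-N-condition-infinite} together with \eqref{eq:fact-of-N-b-assumption-infinite} guarantees that $N(s,a)$ is large enough to invoke the essential-infimum concentration of Lemma~\ref{lem:empirical-essinf}. First I would apply strong duality in the form \eqref{eq:strong-duality-mdp} (justified by Lemma~\ref{lem:strong-duality} since $\|V\|_\infty\le\frac{1}{1-\gamma}$) to rewrite both robust infima as dual suprema over $\lambda\ge 0$, and introduce the optimal dual variables $\widehat{\lambda}^\star$ and $\lambda^\star$ associated with $\widehat{P}^0_{s,a}$ and $P^0_{s,a}$, respectively. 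Lemma~\ref{lem:lambda-n-bound} then confines both to the interval $\big[0,\frac{1}{\ror(1-\gamma)}\big]$, which is precisely the place where $\frac{1}{1-\gamma}$ enters in lieu of $H$.

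Next I would split into the same three cases as in the finite-horizon proof, according to whether each dual variable vanishes. When $\lambda^\star=\widehat{\lambda}^\star=0$, Lemma~\ref{lem:lambda-n-bound} (cf.~\eqref{eq:lambda-0-func-value}) shows that each dual value collapses to an essential infimum of $V$ under the respective distribution, and Lemma~\ref{lem:empirical-essinf} (applicable thanks to \eqref{eq:fact-of-N-b-assumption-infinite}) equates the empirical and population essential infima, so the gap is exactly zero. In the remaining cases, using the elementary sup-comparison facts analogous to \eqref{eq:property-lambda-sup}, the difference is dominated by $\max_{\lambda\in\{\lambda^\star,\widehat{\lambda}^\star\}}\lambda\,\big|\log(\widehat{P}^0_{s,a}\exp(-V/\lambda))-\log(P^0_{s,a}\exp(-V/\lambda))\big|$.

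The key quantitative step is the infinite-horizon analog of \eqref{eq:control-I-final}: for every $(s,a)$ with $N(s,a)>0$,
\[
\frac{\big|(\widehat{P}^0_{s,a}-P^0_{s,a})\exp(-V/\lambda)\big|}{P^0_{s,a}\exp(-V/\lambda)} \le \max_{s'\in\mathsf{supp}(P^0_{s,a})}\frac{\left|\widehat{P}^0(s'\mymid s,a)-P^0(s'\mymid s,a)\right|}{P^0(s'\mymid s,a)} \le \sqrt{\frac{\log(NS/\delta)}{c_{\mathsf{f}}\,N(s,a)\,P_{\mathsf{min}}(s,a)}} \le \frac{1}{2},
\]
where the first inequality is the weighted-average bound $\sum_i a_i=\sum_i b_i(a_i/b_i)\le(\max_i a_i/b_i)\sum_i b_i$, the second applies Chernoff's inequality (Lemma~\ref{lem:binomial-small}) to each Binomial count $N(s,a)\widehat{P}^0(s'\mymid s,a)$ followed by a union bound over the support, and the last uses \eqref{eq:fact-of-N-b-assumption-infinite}. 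Combining this with $\log(1+x)\le 2|x|$ for $|x|\le\frac12$ and the dual-variable bound $\lambda\le\frac{1}{\ror(1-\gamma)}$ yields a gap of order $\frac{1}{\ror(1-\gamma)}\sqrt{\frac{\log(NS/\delta)}{N(s,a)P_{\mathsf{min}}(s,a)}}$; converting $P_{\mathsf{min}}(s,a)$ to $\widehat{P}_{\mathsf{min}}(s,a)$ via \eqref{eq:convert-pmin-to-estimation-infinite} and absorbing constants into $\cb$ then gives the advertised bound.

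Since this statement is the \emph{pointwise} version for a $V$ that is independent of $\widehat{P}^0_{s,a}$, the leave-one-out decoupling does not enter here — that complication is deferred to the outer Lemma~\ref{lemma:dro-b-bound-infinite} — so most of the work is a verbatim transcription of the finite-horizon steps with $H\mapsto\frac{1}{1-\gamma}$. The only delicate point I expect is the mixed case in which exactly one dual variable vanishes: there one must simultaneously use the essential-infimum collapse on the zero side and control the still-positive dual variable on the other, exactly as in \eqref{eq:b-control-lam-not0-lam_hat-0}, and care is needed to confirm that the bound $\lambda\le\frac{1}{\ror(1-\gamma)}$ and the concentration estimate above apply uniformly across the two sides.
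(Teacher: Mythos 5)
Your proposal is correct and matches the paper's intended argument: the paper itself proves Lemma~\ref{lemma:infinite-pointwise-uncertainty-bound} by simply deferring to the finite-horizon proof in Appendix~\ref{proof:finite-control-uncertainty-gap} ``with small modifications,'' and your write-up spells out exactly those modifications (duality, the dual-variable bound $\lambda \le \frac{1}{\ror(1-\gamma)}$ in place of $\frac{H}{\ror}$, the three-case split on vanishing dual variables, the Chernoff-based relative concentration bound, and the conversion from $P_{\mathsf{min}}(s,a)$ to $\widehat{P}_{\mathsf{min}}(s,a)$). You also correctly note that the leave-one-out decoupling belongs to the outer Lemma~\ref{lemma:dro-b-bound-infinite} rather than to this pointwise statement.
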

\begin{proof}
The proof follows from the same arguments in Appendix~\ref{proof:finite-control-uncertainty-gap}, with small modifications to adapt to the infinite-horizon setting; we omit the details for conciseness.
\end{proof}

Armed with the above point-wise concentration bound, we are now ready to derive the uniform concentration bound desired as in Lemma~\ref{lemma:dro-b-bound-infinite}, counting on a leave-one-out argument divided into the following steps. The crux of the analysis is to construct a set of auxiliary RMDPs, each different from the empirical RMDP only at a single state but possessing crucial statistical independence that facilitates the concentration arguments, which can then be transferred back to the empirical RMDP via a simple triangle inequality.

\paragraph{Step 1: construction of auxiliary RMDPs with state-absorbing empirical nominal transitions.}
Denote the empirical infinite-horizon robust MDP with the nominal transition kernel $\widehat{P}^\no$ as $\widehat{\cM}_{\mathsf{rob}}$.  Then, for each state $s$ and each scalar $u\geq 0$, we can construct an auxiliary robust MDP $\widehat{\cM}^{s,u}_{\mathsf{rob}}$ so that it is the same as $\widehat{\cM}_{\mathsf{rob}}$ except the properties in state $s$. To be precise, let the nominal transition kernel and reward function of $\widehat{\cM}^{s,u}_{\mathsf{rob}}$ be $P^{s,u}$ and
$r^{s,u}$, which are given respectively as
 \begin{align}\label{eq:auxiliary-P-infinite}
	\begin{cases}
		P^{s,u}(s'\mymid s,a ) = \ind(s' = s) & \qquad \qquad \text{for all } (s', a) \in \cS\times \cA,  \\
		P^{s,u}(\cdot \mymid \widetilde{s} ,a ) = \widehat{P}^\no(\cdot \mymid \widetilde{s} ,a) & \qquad \qquad \text{for all } (\widetilde{s} ,a) \in \cS\times \cA \text{ and } \widetilde{s}  \neq s,
	\end{cases}
\end{align}
and
\begin{align}
\begin{cases}\label{eq:auxiliary-r-infinite}
		r^{s,u}(s,a) = u & \qquad \qquad \qquad  \text{for all } a \in \cA,  \\
		r^{s,u}(\widetilde{s},a) = r(\widetilde{s},a) & \qquad \qquad \qquad \text{for all } (\widetilde{s} ,a) \in \cS\times \cA \text{ and } \widetilde{s}  \neq s.
	\end{cases}
\end{align}
Clearly, state $s$ of the auxiliary $\widehat{\cM}^{s,u}_{\mathsf{rob}}$ is absorbing, meaning that the state stays at $s$ once entering it. This removes the randomness of $\widehat{P}^\no_{s,a}$ for all $a\in\cA$ in state $s$, a key property we will leverage later. 
 
With the robust MDP $\widehat{\cM}^{s,u}_{\mathsf{rob}}$ in hand, we still need to complete the design by defining the corresponding penalty term for all $(\widetilde{s},a)\in\cS\times \cA$, which is given as follows
\begin{align}
	b^{s,u}(\widetilde{s},a) \defn \begin{cases}
	\min \left\{\frac{\cb}{\ror(1-\gamma)} \sqrt{\frac{\log\left(\frac{2(1+\ror)N^3S}{(1-\gamma)\delta}\right)}{ P^{s.u}_{\mathsf{min}}(s,a) N(\widetilde{s},a)}} + \frac{4}{N \ror (1-\gamma)}\;, \; \frac{1}{1-\gamma}\right\} + \frac{2}{ \ror N}& \text{if } N( \widetilde{s},a) > 0, \\
	\frac{1}{1-\gamma} + \frac{2}{ \ror N}   & \text{ otherwise},
	\end{cases}
	\label{def:bonus-dro-auxiliary}
\end{align}
where $P^{s,u}_{\mathsf{min}}(\widetilde{s},a)$ is defined as the smallest positive state transition probability over the nominal kernel $P^{s,u}(\cdot \mymid \widetilde{s},a)$:
\begin{align}\label{eq:P-min-hat-def-auxiliary}
	\forall (\widetilde{s},a) \in\cS\times \cA: \quad P^{s,u}_{\mathsf{min}}(\widetilde{s},a) \defn  \min_{s'} \Big\{P^{s,u}(s' \mymid \widetilde{s},a): \; P^{s,u}(s' \mymid \widetilde{s},a)>0 \Big\}.
\end{align}
In view of \eqref{eq:auxiliary-P-infinite} and \eqref{eq:P-min-hat-def-infinite}, it holds that $P^{s,u}_{\mathsf{min}}(\widetilde{s},a) = \widehat{P}_{\mathsf{min}}(\widetilde{s},a)  $, and therefore $b^{s,u}(\widetilde{s},a) = b(\widetilde{s},a)$, when $\widetilde{s}\neq s$ for any $u\geq 0$. 
Armed with the above definitions, the pessimistic robust Bellman operator $\widehat{\cT}^{\sigma}_{s,u}(Q)(\cdot)$ of the RMDP $\widehat{\cM}^{s,u}_{\mathsf{rob}}$ is defined as
\begin{align}\label{eq:pessimism-operator-auxiliary}
	\forall (s,a)\in \cS\times \cA :\quad  \widehat{\cT}^{\sigma}_{s,u}(Q)(s,a) = \max\left\{ r(s, a)  + \gamma \inf_{ \cP \in \unb^{\sigma}(P^{s,u}_{s,a})} \cP V - b^{s,u}(s,a), \,  0 \right\}.
\end{align}

\paragraph{Step 2: fixed-point equivalence between $\widehat{\cM}_{\mathsf{rob}}$ and the auxiliary RMDP $\widehat{\cM}^{s,u}_{\mathsf{rob}}$.}
Recall that $\widehat{Q}^{\star,\ror}_{\mathsf{pe}}$ is the unique fixed point of $\tpe(\cdot)$ with the corresponding value $\widehat{V}^{\star,\ror}_{\mathsf{pe}}$. We claim that there exists some choice of $u$ such that the fixed point of $\widehat{\cT}^{\sigma}_{s,u}(Q)(\cdot)$ coincides with that of $\tpe(\cdot)$. In particular, given a state $s$, we show the following choice of $u$ suffices:  
\begin{align}\label{eq:def-u-star}
	u^\star \defn (1-\gamma) \widehat{V}^{\star, \sigma}_{\mathsf{pe}}(s) + \min \left\{\frac{\cb}{\ror(1-\gamma)} \sqrt{\frac{\log\left(\frac{2(1+\ror)N^3S}{(1-\gamma)\delta}\right)}{ P^{s.u}_{\mathsf{min}}(s,a) N(s,a)}} + \frac{4}{N \ror (1-\gamma)}\;, \; \frac{1}{1-\gamma}\right\}  + \frac{2}{ \ror N}.
\end{align}
Towards this, we shall break our arguments in two different cases.
\begin{itemize}
	\item {\bf For state $s' \neq s$.} In this case, for any $a\in\cA$, it can be verified that
\begin{align}
	&\max\left\{ r^{s,u^\star}(s', a)  + \gamma \inf_{ \cP \in \unb^{\sigma}(P^{s,u^\star}_{s',a})} \cP \widehat{V}^{\star,\ror}_{\mathsf{pe}} - b^{s,u^\star}(s',a), \,  0 \right\} \nonumber \\
	&= \max\left\{ r(s', a)  + \gamma \inf_{ \cP \in \unb^{\sigma}(\widehat{P}^\no_{s',a})} \cP \widehat{V}^{\star,\ror}_{\mathsf{pe}} - b(s',a), \,  0 \right\} \nonumber \\
	&= \widehat{\cT}^{\sigma}_{\mathsf{pe}}(\widehat{Q}^{\star,\ror}_{\mathsf{pe}})(s',a) =\widehat{Q}^{\star,\ror}_{\mathsf{pe}}(s',a),
\end{align}
where the second line follows from the definitions in \eqref{eq:auxiliary-r-infinite} and \eqref{eq:auxiliary-P-infinite} as well as $b^{s,u^\star}( s',a) = b(s',a)$ when $s'\neq s$, the last line arises from the definition of the pessimistic Bellman operator \eqref{eq:pessimism-operator}, and that $\widehat{Q}^{\star,\ror}_{\mathsf{pe}}$ is the fixed point.

	\item {\bf For state $s$.} In this case, for any $u$ and $a\in\cA$, observing that $P^{s,u}(s' \mymid s,a)$ has only one positive entry equal to $1$ (cf.~\eqref{eq:auxiliary-P-infinite}), applying \eqref{eq:P-min-hat-def-auxiliary} yields
	\begin{align}
		P^{s,u}_{\mathsf{min}}(s,a) = 1.
	\end{align}
	Plugging the above fact into \eqref{def:bonus-dro-auxiliary} leads to
	\begin{align}\label{eq:infinite-loo-b-s-u-final}
	b^{s,u}(s,a)= \begin{cases}
	\min \left\{\frac{\cb}{\ror(1-\gamma)} \sqrt{\frac{\log\left(\frac{2(1+\ror)N^3S}{(1-\gamma)\delta}\right)}{  N(s,a)}} + \frac{4}{N \ror (1-\gamma)}\;, \; \frac{1}{1-\gamma}\right\}  + \frac{2}{ \ror N}& \text{if } N(s,a) > 0, \\
	\frac{1}{1-\gamma}  & \text{ otherwise}
	\end{cases}
\end{align}
for all $a \in  \cA$.
As a result, we have for any $a\in\cA$:
\begin{align}
	&\max\left\{ r^{s,u^\star}(s, a)  + \gamma \inf_{ \cP \in \unb^{\sigma}(P^{s,u^\star}_{s,a})} \cP \widehat{V}^{\star,\ror}_{\mathsf{pe}} - b^{s,u^\star}(s,a), \,  0 \right\} \notag\\
	&= \max\left\{ u^\star  + \gamma \widehat{V}^{\star,\ror}_{\mathsf{pe}}(s) - b^{s,u^\star}(s,a), \,  0 \right\} \notag\\
	&  = \max\left\{ (1-\gamma) \widehat{V}^{\star,\ror}_{\mathsf{pe}}(s)+ \gamma \widehat{V}^{\star,\ror}_{\mathsf{pe}}(s), \,  0 \right\} = \widehat{V}^{\star,\ror}_{\mathsf{pe}}(s),
\end{align}
where the second line follows from the fact that $P^{s,u^\star}_{s,a}$ is a singleton distribution at state $s$, and  hence $\unb^{\sigma}(P^{s,u^\star}_{s,a}) =P^{s,u^\star}_{s,a}$ by the definition of the KL uncertainty set, and the second line follows from plugging in the definition of $u^\star$ in \eqref{eq:def-u-star} and $b^{s,u^\star}(s,a)$ in \eqref{eq:infinite-loo-b-s-u-final}.

\end{itemize}
Summing up the above two cases, we establish that there exists a fixed point $\widehat{Q}^{\star,\ror}_{s,u^\star}$ of the operator $\widehat{\cT}^{\sigma}_{s,u^\star}(\cdot)$ if we let
\begin{align}
	\begin{cases}
		\widehat{Q}^{\star,\ror}_{s,u^\star}(s,a) = \widehat{V}^{\star,\ror}_{\mathsf{pe}}(s) & \qquad \qquad \qquad \text{for all } a \in \cA, \\
		\widehat{Q}^{\star,\ror}_{s,u^\star}(s',a) = \widehat{Q}^{\star,\ror}_{\mathsf{pe}}(s',a) & \qquad \qquad \qquad \text{for all } s' \neq s \text{ and } a \in \cA.
	\end{cases}
\end{align}
Consequently, we confirm the existence of a fixed point of the operator $\widehat{\cT}^{\sigma}_{s,u^\star}(\cdot)$. In addition, its corresponding value function $\widehat{V}^{\star,\ror}_{s,u^\star}$ also coincides with $\widehat{V}^{\star,\ror}_{\mathsf{pe}}$. 

\paragraph{Step 3: building an $\varepsilon$-net for all reward values $u$.}
It is easily verified that the reward $u^\star$ obeys 
\begin{align} \label{eq:def-u-star-bound}
u^\star \leq 1 + \min \left\{\frac{\cb}{\ror(1-\gamma)} \sqrt{\frac{\log\left(\frac{2(1+\ror)N^3S}{(1-\gamma)\delta}\right)}{ P^{s,u}_{\mathsf{min}}(s,a) N(s,a)}} + \frac{4}{ \ror N(1-\gamma)}\;, \; \frac{1}{1-\gamma}\right\} + \frac{2}{ \ror N} \leq \frac{2}{ \ror } + \frac{2}{1-\gamma}.
\end{align}
As a result, we construct an $\varepsilon$-net \citep{vershynin2018high} of the line segment within the range $\big[0, \frac{2}{ \ror } + \frac{2}{1-\gamma } \big]$ with $\varepsilon = \frac{1}{ \ror N}$ as follows:
\begin{align}
	\cU_\varepsilon \defn \left\{ \frac{i}{ \ror N} \mymid 1 \leq i \leq \left\lfloor \ror N \left(\frac{2}{ \ror } + \frac{2}{1-\gamma}\right) \right\rfloor \right\}.
\end{align}

Armed with this covering net $\cU_\varepsilon$, we can construct an auxiliary robust MDP $\widehat{\cM}_{\mathsf{rob}}^{s,u}$ and its corresponding pessimistic robust Bellman operator for each $u\in \cU_\varepsilon$ (see Step 1). Following the same arguments in the proof of Lemma~\ref{lem:contration-of-T} (cf.~Appendix~\ref{proof:lem:contration-of-T}),  for each $u\in \cU_\varepsilon$, it can be verified that there exists a unique fixed point $\widehat{Q}^{\star,\ror}_{s,u}$ of the operator $\widehat{\cT}^{\sigma}_{s,u}(\cdot)$, which satisfies $ 0\leq \widehat{Q}^{\star,\ror}_{s,u} \leq \frac{1}{1-\gamma} \cdot 1$. In turn, the corresponding value function also satisfies $\|\widehat{V}^{\star,\ror}_{s,u} \|_\infty \leq \frac{1}{1-\gamma}$.

In view of the definitions in \eqref{eq:auxiliary-P-infinite} and  \eqref{eq:auxiliary-r-infinite}, for all $u\in \cU_\varepsilon$,  $\widehat{\cM}_{\mathsf{rob}}^{s,u}$ is statistically independent from $\widehat{P}^\no_{s,a}$, which indicates the independence between $\widehat{V}^{\star,\ror}_{s,u}$ and $\widehat{P}^\no_{s,a}$. This makes it possible to invoke Lemma~\ref{lemma:infinite-pointwise-uncertainty-bound}, and taking the union bound over all samples $N$ and $u\in \cU_\varepsilon$ give that, with probability at least $1-\delta$,
\begin{align}\label{eq:auxiliary=-union-u-bound}
	&\left| \inf_{ \cP \in \unb^{\sigma}(\widehat{P}_{s,a}^0)} \cP \widehat{V}^{\star,\ror}_{s,u}- \inf_{ \cP \in \unb^{\sigma}(P^{\no}_{s,a})}  \cP \widehat{V}^{\star,\ror}_{s,u} \right| \leq \frac{\cb}{\ror(1-\gamma)} \sqrt{\frac{\log\left(\frac{2(1+\ror)N^3S}{(1-\gamma)\delta}\right)}{ \widehat{P}_{\mathsf{min}}(s,a) N(s,a)}}
\end{align}
hold simultaneously for all $(s,a,u) \in \cS \times \cA\times \cU_\varepsilon$ with $N(s,a)>0$.

\paragraph{Step 4: a covering argument.}

Recalling that $u^\star \in \big[0, \frac{2}{ \ror } + \frac{2}{1-\gamma} \big]$ (see \eqref{eq:def-u-star-bound}), we can always find some $\widetilde{u} \in \cU_\varepsilon$ such that $|\widetilde{u} - u^\star| \leq \frac{1}{ \ror N}$.  Consequently, plugging in the operator in \eqref{eq:pessimism-operator-auxiliary} yields
\begin{align}
	\forall Q\in \mathbb{R}^{SA} :\quad \left\| \widehat{\cT}^{\sigma}_{s,\widetilde{u}}(Q) - \widehat{\cT}^{\sigma}_{s,u^\star}(Q) \right\|_\infty \overset{\mathrm{(i)}}{\leq} |\widetilde{u} - u^\star| \leq \frac{1}{ \ror N},
\end{align}
where (i) holds by $b^{s,\widetilde{u}}(s,a) = b^{s,u^\star}(s,a)$ for $s$ (see \eqref{eq:infinite-loo-b-s-u-final}) and $b^{s,\widetilde{u}}(s',a) = b^{s,u^\star}(s',a) = b(s',a)$ for all $s'\neq s$.

With this in mind, we observe that the fixed points of $\widehat{\cT}^{\sigma}_{s,\widetilde{u}}(\cdot)$ and $\widehat{\cT}^{\sigma}_{s,u^{\star}}(\cdot)$ obey
\begin{align}
	\left\| \widehat{Q}^{\star,\ror}_{s,\widetilde{u}} -  \widehat{Q}^{\star,\ror}_{s,u^\star}\right\|_\infty &= \left\| \widehat{\cT}^{\sigma}_{s,\widetilde{u}}(\widehat{Q}^{\star,\ror}_{s,\widetilde{u}}) - \widehat{\cT}^{\sigma}_{s,u^\star}(\widehat{Q}^{\star,\ror}_{s, u^\star}) \right\|_\infty \notag \\
	&\leq \left\| \widehat{\cT}^{\sigma}_{s,\widetilde{u}}(\widehat{Q}^{\star,\ror}_{s,\widetilde{u}}) - \widehat{\cT}^{\sigma}_{s,\widetilde{u}}(\widehat{Q}^{\star,\ror}_{s, u^\star}) \right\|_\infty + \left\| \widehat{\cT}^{\sigma}_{s,\widetilde{u}}(\widehat{Q}^{\star,\ror}_{s, u^\star}) - \widehat{\cT}^{\sigma}_{s,u^\star}(\widehat{Q}^{\star,\ror}_{s, u^\star}) \right\|_\infty \notag \\
	& \leq \gamma \left\| \widehat{Q}^{\star,\ror}_{s,\widetilde{u}} -  \widehat{Q}^{\star,\ror}_{s,u^\star}\right\|_\infty + \frac{1}{\ror N},
\end{align}
which directly indicates that
\begin{align}
	\left\| \widehat{Q}^{\star,\ror}_{s,\widetilde{u}} -  \widehat{Q}^{\star,\ror}_{s,u^\star}\right\|_\infty  \leq \frac{1}{(1-\gamma) \ror N}
\end{align}
and  
\begin{align}\label{eq:auxiliary-value-gap}
	\left\| \widehat{V}^{\star,\ror}_{s,\widetilde{u}} -  \widehat{V}^{\star,\ror}_{s,u^\star}\right\|_\infty \leq \left\| \widehat{Q}^{\star,\ror}_{s,\widetilde{u}} -  \widehat{Q}^{\star,\ror}_{s,u^\star}\right\|_\infty \leq \frac{1}{(1-\gamma) \ror N}.
\end{align}
Armed with the above facts, invoking the identity $\widehat{V}^{\star,\ror}_{\mathsf{pe}} = \widehat{V}^{\star,\ror}_{s,u^\star}$ established in Step 2 gives
\begin{align}
	&\left| \inf_{ \cP \in \unb^{\sigma}(\widehat{P}_{s,a}^0)} \cP \widehat{V}^{\star,\ror}_{\mathsf{pe}}- \inf_{ \cP \in \unb^{\sigma}(P^{\no}_{s,a})}  \cP \widehat{V}^{\star,\ror}_{\mathsf{pe}} \right| = \left| \inf_{ \cP \in \unb^{\sigma}(\widehat{P}_{s,a}^0)} \cP \widehat{V}^{\star,\ror}_{s,u^\star}- \inf_{ \cP \in \unb^{\sigma}(P^{\no}_{s,a})}  \cP \widehat{V}^{\star,\ror}_{s,u^\star} \right|  \nonumber\\
	& \overset{\mathrm{(i)}}{\leq} \left| \inf_{ \cP \in \unb^{\sigma}(\widehat{P}_{s,a}^0)} \cP \widehat{V}^{\star,\ror}_{s,\widetilde{u}}- \inf_{ \cP \in \unb^{\sigma}(P^{\no}_{s,a})}  \cP \widehat{V}^{\star,\ror}_{s,\widetilde{u}} \right| \nonumber\\
	&\qquad +  \left| \inf_{ \cP \in \unb^{\sigma}(\widehat{P}_{s,a}^0)} \cP \widehat{V}^{\star,\ror}_{s,\widetilde{u}}-  \inf_{ \cP \in \unb^{\sigma}(\widehat{P}_{s,a}^0)} \cP \widehat{V}^{\star,\ror}_{s,u^\star}  \right| + \left| \inf_{ \cP \in \unb^{\sigma}(P^{\no}_{s,a})}  \cP \widehat{V}^{\star,\ror}_{s,\widetilde{u}} - \inf_{ \cP \in \unb^{\sigma}(P^{\no}_{s,a})}  \cP \widehat{V}^{\star,\ror}_{s,u^\star}\right|\nonumber\\
	&\overset{\mathrm{(ii)}}{\leq} \left| \inf_{ \cP \in \unb^{\sigma}(\widehat{P}_{s,a}^0)} \cP \widehat{V}^{\star,\ror}_{s,\widetilde{u}}- \inf_{ \cP \in \unb^{\sigma}(P^{\no}_{s,a})}  \cP \widehat{V}^{\star,\ror}_{s,\widetilde{u}} \right| + \frac{2}{N \ror (1-\gamma)}\nonumber\\
	&\leq \frac{\cb}{\ror(1-\gamma)} \sqrt{\frac{\log\left(\frac{2(1+\ror)N^3S}{(1-\gamma)\delta}\right)}{ \widehat{P}_{\mathsf{min}}(s,a) N(s,a)}} + \frac{2}{N \ror (1-\gamma)}, \label{eq:auxiliary-uncertainty-gap-V-star}
\end{align}
where (i) holds by applying the triangle inequality, (ii) arises from \eqref{eq:auxiliary-value-gap} and the basic fact that infimum operator is a $1$-contraction w.r.t. $\|\cdot \|_\infty$, and the final inequality follows from \eqref{eq:auxiliary=-union-u-bound}.

\paragraph{Step 5: finishing up.}
Now we are positioned to finish up the proof. For all vector $\widetilde{V}$ obeying $\big\|\widetilde{V} - \widehat{V}^{\star, \ror}_{\mathsf{pe}} \big\|_\infty \le \frac{1}{\ror  N}$ and $\|\widetilde{V}\|_{\infty} \le \frac{1}{ 1-\gamma}$, we apply the triangle inequality and invoke \eqref{eq:auxiliary-uncertainty-gap-V-star} to reach
\begin{align}
	&\left| \inf_{ \cP \in \unb^{\sigma}(\widehat{P}_{s,a}^0)} \cP \widetilde{V} - \inf_{ \cP \in \unb^{\sigma}(P^{\no}_{s,a})}  \cP \widetilde{V} \right| \leq \left| \inf_{ \cP \in \unb^{\sigma}(\widehat{P}_{s,a}^0)} \cP \widehat{V}^{\star,\ror}_{\mathsf{pe}}- \inf_{ \cP \in \unb^{\sigma}(P^{\no}_{s,a})}  \cP \widehat{V}^{\star,\ror}_{\mathsf{pe}} \right| \nonumber \\
	&\qquad + \left| \inf_{ \cP \in \unb^{\sigma}(\widehat{P}_{s,a}^0)} \cP \widetilde{V} -  \inf_{ \cP \in \unb^{\sigma}(\widehat{P}_{s,a}^0)} \cP \widehat{V}^{\star,\ror}_{\mathsf{pe}} \right| + \left| \inf_{ \cP \in \unb^{\sigma}(P^{\no}_{s,a})}  \cP  \widetilde{V} - \inf_{ \cP \in \unb^{\sigma}(P^{\no}_{s,a})}  \cP \widehat{V}^{\star,\ror}_{\mathsf{pe}} \right|\nonumber\\
	& \leq \frac{\cb}{\ror(1-\gamma)} \sqrt{\frac{\log\left(\frac{2(1+\ror)N^3S}{(1-\gamma)\delta}\right)}{ \widehat{P}_{\mathsf{min}}(s,a) N(s,a)}} + \frac{4}{N \ror (1-\gamma)}.
\end{align}

Finally, we complete the proof by verifying that
\begin{align}
	\left| \inf_{ \cP \in \unb^{\sigma}(\widehat{P}_{s,a}^0)} \cP \widetilde{V} - \inf_{ \cP \in \unb^{\sigma}(P^{\no}_{s,a})}  \cP \widetilde{V} \right| \leq \left \| \widetilde{V}\right\|_\infty \leq \frac{1}{1-\gamma}.
\end{align}

\subsubsection{Proof of \eqref{eq:fact-of-N-b-assumption-infinite}}\label{sec:proof-eq:fact-of-N-b-assumption-infinite}
For all  $(s,a)\in \mathcal{C}^{\mathsf{b}}$, one has
\begin{align}
N \myrho\big(s, a\big)  \overset{\mathrm{(i)}}{\geq}  \frac{ c_1 \myrho\big(s, a\big) \log( NS/ \delta )}{  d_{\mathsf{min}}^{\mathsf{b}}  P_{\mathsf{min}}^{\mathsf{b}} } \overset{\mathrm{(ii)}}{\geq}  \frac{ c_1  \log( N S/ \delta ) }{P_{\mathsf{min}}^{\mathsf{b}}  }  \overset{\mathrm{(iii)}}{\geq}  \frac{ c_1\log( NS/ \delta )  }{P_{\mathsf{min}}(s,a) }  ,
\end{align}
where (i) follows from the condition \eqref{eq:dro-b-bound-N-condition-infinite}, (ii) arises from the definition 
that $d_{\mathsf{min}}^{\mathsf{b}}  \leq \myrho(s,a)$ for all $(s,a)\in \mathcal{C}^{\mathsf{b}}$, and (iii) follows from the definition in \eqref{eq:link_minpall_pmin-infinite}. In particular, when $c_1$ is large enough, one has $\frac{2}{3}\log\frac{NS}{\delta} <\frac{N \myrho(s,a)}{12}$. To continue, we recall a key property of $N(s,a)$ (cf.~\eqref{eq:defn-Nh-sa-infinite}) in the following lemma.
\begin{lemma}[{\citep[Lemma~7]{li2022settling}}]\label{lemma:N-prop-infinite}
Fix $\delta\in (0,1)$. With probability at least $1-\delta$, the quantities $\{N(s,a)\}$ in \eqref{eq:defn-Nh-sa-infinite} obey
\begin{align}
	\max \left\{ N(s,a),\, \frac{2}{3}\log\frac{NS}{\delta} \right\} \geq \frac{N \myrho(s,a)}{12}
\end{align}
simultaneously for all $(s,a) \in\cS\times \cA$.
\end{lemma}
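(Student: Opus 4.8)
The plan is to exploit that the state--action pairs of the dataset are drawn i.i.d.\ from the behavior distribution $\myrho = d^{\mathsf{b}}$, so that each count $N(s,a)$ in \eqref{eq:defn-Nh-sa-infinite} is a sum of $N$ independent indicators and hence $N(s,a)\sim \mathsf{Binomial}\big(N,\myrho(s,a)\big)$ with mean $N\myrho(s,a)$. The claim is then a uniform lower-tail bound for a collection of binomials, and I would prove it by a case split followed by a union bound.

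First I would fix a pair $(s,a)$, write $p=\myrho(s,a)$ and $\mu = Np$, and compare $\mu$ against the threshold $8\log\frac{NS}{\delta}$. In the \emph{rare} regime $\mu \le 8\log\frac{NS}{\delta}$ the inequality holds deterministically: since $\frac{N\myrho(s,a)}{12}=\frac{\mu}{12}\le \frac23\log\frac{NS}{\delta}$, the maximum on the left-hand side is already at least $\frac23\log\frac{NS}{\delta}\ge \frac{\mu}{12}$, so no probabilistic input is needed. In the \emph{well-sampled} regime $\mu > 8\log\frac{NS}{\delta}$, I would apply the Chernoff bound of Lemma~\ref{lem:binomial-small} with deviation $t=\tfrac{11}{12}$: since the event $\{N(s,a)\le \mu/12\}$ equals $\{N(s,a)/N \le p(1-\tfrac{11}{12})\}$, it is contained in $\{|N(s,a)/N-p|\ge p\,\tfrac{11}{12}\}$, whence $\mathbb{P}\big(N(s,a)\le \mu/12\big)\le \exp\!\big(-c_{\mathsf{f}}\,\mu\,(\tfrac{11}{12})^2\big)$. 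Using $\mu > 8\log\frac{NS}{\delta}$ and the fact that $c_{\mathsf{f}}$ is a fixed constant, the exponent is at most $-\log\frac{NS}{\delta}$, giving a per-pair failure probability at most $\frac{\delta}{NS}$.

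To conclude, I would take a union bound over all $(s,a)\in\cS\times\cA$. Only well-sampled pairs can fail, each with probability at most $\frac{\delta}{NS}$, so the total failure probability is at most $\frac{SA}{NS}\,\delta=\frac{A}{N}\delta\le\delta$ in the sample-rich regime $A\le N$ (this mild condition being absorbed into the choice of the constant in the threshold). This yields the advertised simultaneous bound with probability at least $1-\delta$.

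The one point requiring care---everything else being textbook binomial concentration---is the calibration of the constants: the threshold $8\log\frac{NS}{\delta}$ and the factor $12$ are matched so that the deterministic bound in the rare regime and the Chernoff bound in the well-sampled regime meet continuously at $\mu = 8\log\frac{NS}{\delta}$, and so that $8\,c_{\mathsf{f}}\,(\tfrac{11}{12})^2 \ge 1$, which is what lets the per-pair estimate survive the union bound. Since the statement is quoted verbatim from \citet[Lemma~7]{li2022settling}, one may alternatively just cite that reference; the sketch above records the self-contained argument.
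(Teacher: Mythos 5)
The paper itself offers no proof of this lemma---it is quoted verbatim from \citet[Lemma~7]{li2022settling} and used as a black box---so your self-contained argument is the right thing to attempt, and its skeleton (binomial distribution of $N(s,a)$, case split at a logarithmic threshold, deterministic bound in the rare regime, Chernoff lower tail in the well-sampled regime, union bound) is exactly the standard route and is sound in outline.

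There are, however, two calibration points where the proof as written does not close. First, the union bound: counting all $SA$ pairs at $\frac{\delta}{NS}$ each gives $\frac{A}{N}\delta$, and your claim that the resulting condition $A\le N$ can be ``absorbed into the choice of the constant in the threshold'' is not right---the threshold constant $8$ is rigidly tied to the factors $\tfrac{2}{3}$ and $12$ appearing in the statement (you need $\tfrac{8}{12}\le\tfrac{2}{3}$), so it cannot be enlarged to compensate. The clean fix uses your own observation that only well-sampled pairs can fail, plus pigeonhole: since $\sum_{(s,a)\in\cS\times\cA}\myrho(s,a)=1$, at most $N/\big(8\log\tfrac{NS}{\delta}\big)$ pairs can satisfy $N\myrho(s,a)>8\log\tfrac{NS}{\delta}$, so the union bound costs at most $\frac{N}{8\log(NS/\delta)}\cdot\frac{\delta}{NS}\le\delta$ with no side condition on $A$. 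Second, the exponent: you need $8\,c_{\mathsf{f}}\,(\tfrac{11}{12})^2\ge 1$, but $c_{\mathsf{f}}$ in Lemma~\ref{lem:binomial-small} is an \emph{unspecified} universal constant, not a parameter you may tune; if it happened to be small the per-pair bound would not survive. The fix is to bypass Lemma~\ref{lem:binomial-small} and invoke the explicit multiplicative Chernoff lower tail, $\mathbb{P}\big(X\le(1-t)\mu\big)\le\exp(-\mu t^2/2)$ for $X\sim\mathsf{Binomial}(N,p)$ and $\mu=Np$, which with $t=\tfrac{11}{12}$ and $\mu>8\log\tfrac{NS}{\delta}$ gives exponent $-\tfrac{121}{36}\log\tfrac{NS}{\delta}<-\log\tfrac{NS}{\delta}$, comfortably sufficient. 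With these two repairs your argument is a complete proof of the lemma.
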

Consequently, Lemma~\ref{lemma:N-prop-infinite} tells us that with probability at least $1-\delta$, 
\begin{align}
	N(s,a) \geq \frac{N \myrho(s,a)}{12} \geq \frac{ c_1\log( NS/ \delta )  }{12 P_{\mathsf{min}}(s,a) }
\end{align}
as long as $c_1$ is large enough. Last but not least, taking the basic fact $x\leq - \log(1-x)$ for all $x\in [0,1]$, the last inequality of \eqref{eq:fact-of-N-b-assumption-infinite} can be verified by
\begin{align}
	  \frac{ c_1\log( NS/ \delta )  }{12 P_{\mathsf{min}}(s,a) } \geq -  \frac{\log\frac{2NS}{\delta}}{\log (1- P_{\mathsf{min}}(s,a) )}.
\end{align}


\subsection{Proof of Theorem~\ref{thm:dro-lower-infinite}}\label{proof:thm:dro-lower-infinite}

Similar to the finite-horizon case, we shall develop the lower bounds for the two cases when the uncertainty levels $\sigma$ vary separately.

\subsubsection{Construction of hard problem instances: small uncertainty level}
We first construct some hard discounted infinite-horizon RMDP instances and then characterize the sample complexity requirements over these instances.

\paragraph{Construction of a collection of hard MDPs.} Suppose there are two MDPs
\begin{align*}
   \left\{ \cM_\theta=
    \left(\mathcal{S}, \mathcal{A}, P^{\theta}, r, \gamma \right) 
    \mymid \theta = \{0,1\}
    \right\}.
\end{align*}
Here, $\cS = \{0, 1, \ldots, S-1\}$, and $\mathcal{A} = \{0, 1\}$.
The transition kernel $P^{\theta}$ of the MDP $\mathcal{M}^\theta$ is specified as follows:
\begin{align} \label{eq:Ph-construction-lower-infinite-theta}
P^{\theta}(s^{\prime} \mymid s, a) = \left\{ \begin{array}{lll}
         p\mathds{1}(s^{\prime} = 0) + (1-p)\mathds{1}(s^{\prime} = 1)  & \text{if} & (s, a) = (0, \theta) \\
         q\mathds{1}(s^{\prime} = 0) + (1-q)\mathds{1}(s^{\prime} = 1) & \text{if} & (s, a) = (0, 1-\theta) \\
         q\mathds{1}(s^{\prime} = s) + (1-q)\mathds{1}(s^{\prime} = 1) & \text{if}   & s > 0 \
                \end{array}\right.,
\end{align}
for any $(s,a,s')\in \cS\times \cA\times \cS\times$, 
where $p$ and $q$ are set to be 
\begin{align}
  \label{eq:p-q-defn-infinite-theta}
  p = 1- c_1(1-\gamma) ,\qquad q = 1- c_1(1-\gamma) - c_2 (1-\gamma)^2 \varepsilon,
\end{align}
which satisfies
\begin{equation}
  \frac{2}{3}\leq \gamma < 1 \qquad \text{and} \qquad c_2 (1-\gamma)^2 \varepsilon \leq  \frac{c_1(1-\gamma)}{2} \leq \frac{1}{8}.
  \label{eq:infinite-epsilon-assumption-theta}
\end{equation}
for some $c_1 \leq \frac{1}{4}$ and small enough $c_2$.
In view of the assumptions~\eqref{eq:infinite-epsilon-assumption-theta}, one has
\begin{equation}
  p > q  \geq \frac{1}{2}.
  \label{eq:p-q-relation-LB-inf-theta}
\end{equation}

Finally, we define the reward function as
\begin{align}
r(s, a) = \left\{ \begin{array}{lll}
         1 & \text{if } s = 0 \text{ or } s = 2\\
         0 & \text{otherwise}  \ 
                \end{array}\right. .
        \label{eq:rh-construction-lower-bound-infinite-theta}
\end{align}

\paragraph{Construction of the history/batch dataset.} 
Define a useful state distribution (only supported on the state subset $\{0,1,2\}$) as
\begin{align}\label{infinite-mu-assumption-theta}
    \mu(s) = \frac{1}{CS}\mathds{1}(s = 0)  + \Big(1 - \frac{1}{CS}\Big)\mathds{1}(s = 1),
\end{align}
where $C>0$ is some constant that determines the robust concentrability coefficient $\Cstar$  (which will be made clear soon) and obeys
\begin{align}\label{eq:lower-C-assumption-infinite-theta}
    \frac{1}{CS} \leq \frac{1}{4}.
\end{align}

A batch dataset---consists of $N$ i.i.d samples $\{(s_i, a_i, s_i')\}_{1\leq i\leq N}$---is generated over the nominal environment $\cM_\theta$ according to \eqref{eq:infinite-batch-set-generation}, with the behavior distribution chosen to be: 
\begin{align}\label{eq:infinite-lower-bound-d-b-theta}
    \forall (s,a) \in \cS\times \cA: \quad d^{\mathsf{b}}(s, a) = \frac{\mu(s)}{2}.
\end{align}

Additionally, we choose the following initial state distribution:
\begin{align}
    \rho(s) = 
    \begin{cases} 1, \quad &\text{if }s=0 \\
        0, &\text{otherwise }
    \end{cases}.
    \label{eq:rho-defn-infinite-LB-theta}
\end{align}

\paragraph{Uncertainty set of the transition kernels.} We next describe the radius $\ror$ of the uncertainty set in our construction of the robust MDPs, along with some useful properties, which are similar to the finite-horizon case.
The perturbed transition kernels in $\cM_\theta$ is limited to the following uncertainty set
\begin{align}
    \unb^\ror(P^{\theta})\defn \otimes \; \unb^\ror \left(P^{\theta}_{s,a}\right),\qquad \unb^\ror(P_{s,a}^{\theta}) \defn \left\{ P_{s,a} \in \Delta (\cS): \mathsf{KL}\left(P_{s,a} \parallel P^{\theta}_{s,a}\right) \leq \ror \right\},
\end{align}
where $P_{s,a}^{\theta} \defn P^{\theta}(\cdot \mymid s,a) \in [0,1]^{1\times S}$. Moreover, the radius of the uncertainty set $\ror$ obeys
\begin{align}\label{eq:infinite-lower-ror-bounded-theta}
0  \leq \ror \leq \frac{1-\gamma}{20}.
\end{align}

For any $(s,a,s')\in\cS\times \cA \times \cS$, we denote the infimum entry of the perturbed transition kernel $P_{s,a} \in \unb^{\ror}(P^{\theta}_{s,a})$ moving to the next state $s'$ as
\begin{align}\label{eq:infinite-lw-def-p-q-theta}
\underline{P}^{\theta}(s' \mymid s,a) &\defn \inf_{P_{s,a} \in \unb^{\ror}(P^{\theta}_{s,a})} P(s'  \mymid s,a).
\end{align}
As shall be seen, the transition from state $0$ to state $2$ plays an important role in the analysis, for convenience, we denote
\begin{align}\label{eq:infinite-lw-p-q-perturb-inf-theta}
\underline{p}_\star &\defn \underline{P}^{\theta}(0 \mymid 0,\theta)  ,\qquad \underline{q}_\star  \defn \underline{P}^{\theta}(0  \mymid 0, 1-\theta).
\end{align}
With these definitions in place, we summarize some useful properties of the uncertainty set in the following lemma, which parallels Lemma~\ref{lem:finite-lb-uncertainty-set-KL} in the finite-horizon case.
\begin{lemma}\label{lem:infinite-lb-uncertainty-set-KL}
Suppose the uncertainty level $\ror$ satisfies \eqref{eq:infinite-lower-ror-bounded-theta}. The perturbed transition kernels obey
\begin{align}\label{eq:infinite-lw-upper-p-q-theta}
   \underline{p}_\star \geq \underline{q}_\star \geq 1 - c_3 (1-\gamma)  \quad \text{and} \quad \underline{p}_\star-\underline{q}_\star \geq p-q \geq 0 
\end{align}
for  constant $c_3 =2 c_1 \leq \frac{1}{4}$.
\end{lemma}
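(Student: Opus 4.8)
The plan is to mirror the structure of the proof of Lemma~\ref{lem:finite-lb-uncertainty-set-KL} in the finite-horizon case, since the two statements are structurally identical once one replaces the horizon scale $1/H$ by the effective horizon scale $1-\gamma$. The first observation $\underline{p}_\star \geq \underline{q}_\star$ is immediate from $p > q$ (cf.~\eqref{eq:p-q-relation-LB-inf-theta}), because the infimum map $P \mapsto \inf_{P_{s,a}\in\unb^\ror(P^\theta_{s,a})} P(0 \mymid s,a)$ is monotone in the nominal probability $P^\theta(0 \mymid s,a)$ of transitioning to state $0$; this monotonicity is exactly the content of \eqref{eq:show-subset-of-sigma-sigma'} in Lemma~\ref{lem:KL-key-result}. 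So the real work is the two-sided bound on $\underline{q}_\star$.

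First I would establish the lower bound $\underline{q}_\star \geq 1 - c_3(1-\gamma)$ with $c_3 = 2c_1$. By the definition \eqref{eq:infinite-lw-p-q-perturb-inf-theta} and strong duality / the characterization of the worst-case perturbation in the KL ball, $\underline{q}_\star$ is the unique value $x$ with $x < q$ solving $\mathsf{KL}(\mathsf{Ber}(x) \parallel \mathsf{Ber}(q)) = \ror$. Hence it suffices to show that the candidate point $1 - \frac{2c_1}{H}$'s analogue here, namely $1 - c_3(1-\gamma) = 1 - 2c_1(1-\gamma)$, already has KL divergence to $\mathsf{Ber}(q)$ at least $\ror$; then monotonicity of $\mathsf{KL}(\mathsf{Ber}(\cdot)\parallel\mathsf{Ber}(q))$ below $q$ (again \eqref{eq:show-subset-of-sigma-sigma'}) forces $\underline{q}_\star \geq 1 - 2c_1(1-\gamma)$. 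Concretely I would lower bound
\[
\mathsf{KL}\!\left(\mathsf{Ber}\!\left(1 - 2c_1(1-\gamma)\right) \parallel \mathsf{Ber}(q)\right)
\]
using the elementary inequalities $\log(1+x) \geq \frac{x}{2(1+x)}$ for $x\geq 0$ and $\log(1+x)\geq \frac{x}{2}\frac{2+x}{1+x}$ for $-1<x\leq 0$ (as in the finite-horizon proof, citing \citet{topsoe12007some}), together with $q \geq 1 - \frac{3c_1}{2}(1-\gamma)$ and $\gamma \geq 2/3$. These reduce the divergence to something of order $\frac{c_1(1-\gamma)}{2}\bigl[\text{const}\bigr]$, which I would show is at least $\frac{1-\gamma}{20} \geq \ror$ under \eqref{eq:infinite-lower-ror-bounded-theta}. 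The constants here replace $H$ by $\frac{1}{1-\gamma}$ throughout but the algebra is otherwise the same.

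For the second inequality $\underline{p}_\star - \underline{q}_\star \geq p - q$, I would reuse the monotonicity-of-the-gap argument verbatim: define $g(x,q) = \mathsf{KL}(\mathsf{Ber}(\underline{q}_\star + x)\parallel \mathsf{Ber}(q+x))$ for $0 < x \leq \frac{1-q}{3}$, and show $\nabla_x g(x,q)\geq 0$, so that $g(p-q,q) \geq g(0,q) = \ror$. This yields $\mathsf{KL}(\mathsf{Ber}(p-q+\underline{q}_\star)\parallel\mathsf{Ber}(p))\geq\ror$, whence $\underline{p}_\star \geq p - q + \underline{q}_\star$ by the same defining-equation-plus-monotonicity reasoning. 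The derivative computation is identical to that in Appendix~\ref{proof:lem:finite-lb-uncertainty-set-KL}; I would simply note that the sign analysis there only used $\underline{q}_\star \geq \tfrac12$ and $0 < x \leq \frac{1-q}{3}$, both of which hold here. \textbf{The main obstacle} is bookkeeping rather than conceptual: one must verify that the numerical slack in the finite-horizon constant chase (e.g.~$1 - \frac{3c_1}{2H}\geq \frac{63}{64}$) survives the substitution $1/H \to 1-\gamma$ with the assumption $c_1 \leq \frac14$ and $\gamma \geq 2/3$, so that the final divergence lower bound still clears $\ror \leq \frac{1-\gamma}{20}$. I would therefore package this by saying the proof is identical to that of Lemma~\ref{lem:finite-lb-uncertainty-set-KL} after replacing $H$ with $\frac{1}{1-\gamma}$ and rechecking the numerical constants, and defer the line-by-line constant verification.
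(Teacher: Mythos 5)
Your proposal is correct and takes essentially the same approach as the paper: the paper's own proof of this lemma is a one-line reference asserting that it follows from the proof of Lemma~\ref{lem:finite-lb-uncertainty-set-KL} in Appendix~\ref{proof:lem:finite-lb-uncertainty-set-KL} by replacing $H$ with $\frac{1}{1-\gamma}$, which is exactly the plan you lay out (candidate-point KL lower bound forcing $\underline{q}_\star \geq 1-2c_1(1-\gamma)$ via monotonicity, then the monotone-gap derivative argument for $\underline{p}_\star-\underline{q}_\star \geq p-q$, plus a recheck of the numerical constants under $c_1\leq \frac{1}{4}$ and $\gamma\geq \frac{2}{3}$). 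Your treatment is in fact more explicit than the paper's, so there is no gap.
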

\begin{proof}
The proof follows from the same arguments as the proof for Lemma~\ref{lem:finite-lb-uncertainty-set-KL} in Appendix~\ref{proof:lem:finite-lb-uncertainty-set-KL} by replacing $H$ with $\frac{1}{1-\gamma}$; we omit the details for brevity.
\end{proof}

\paragraph{Value functions and optimal policies.}
Now we are positioned to derive the corresponding robust value functions and identify the optimal policies. For any MDP $\cM_\theta$ with the above uncertainty set, denote $\pi^{\star}_\theta$ as the optimal policy. In addition, we denote the robust value function of any policy $\pi$ (resp.~the optimal policy $\pi^{\star}_\theta$) as $V^{\pi,\ror}_\theta$ (resp.~$V^{\star,\ror}_\theta$). Then, we introduce the following lemma which describes some important properties of the robust value functions and optimal policies.

\begin{lemma}\label{lem:infinite-lb-value-theta}
For any $\theta = \{0,1\}$ and any policy $\pi$, one has
\begin{align}
    V^{\pi, \ror}_\theta(0) =  \frac{1}{1-\gamma x_{\theta}^{\pi}} ,
    \label{eq:infinite-lemma-value-0-pi-theta}
\end{align}
where $x_{\theta}^{\pi}$ is defined as
\begin{align}
x_{\theta}^{\pi} \defn \underline{p}_\star\pi(\theta\mymid 0) + \underline{q}_\star \pi(1-\theta \mymid 0).\label{eq:infinite-x-h-theta}
\end{align}
In addition, the optimal value functions and the optimal policies obey 
\begin{subequations}
    \label{eq:infinite-lb-value-lemma-theta}
\begin{align}
    V_\theta^{\star,\sigma}(0) &=  \frac{1}{1-\gamma \underline{p}_\star}, \quad V_\theta^{\star,\sigma}(s)= 0 \quad \text{ for } s =1 \text{ or } s>2, \\
    \pi_\theta^{\star }(\theta \mymid s) &= 1, \qquad  \qquad \text{ for } s \in\cS.
\end{align}
\end{subequations}
Moreover, the robust single-policy clipped concentrability coefficient $\Cstar$ obeys
\begin{align}
 \Cstar = 2C. \label{eq:expression-Cstar-LB-infinite-theta}
\end{align}
\end{lemma}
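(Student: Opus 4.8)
\textbf{Proof proposal for Lemma~\ref{lem:infinite-lb-value-theta}.}

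The plan is to mirror the finite-horizon argument used for Lemma~\ref{lem:finite-lb-value} and Lemma~\ref{lem:finite-lb-value-theta}, but adapted to the discounted infinite-horizon recursion. First I would establish the value of the absorbing/non-rewarding states. For $s=1$, the transition kernel~\eqref{eq:Ph-construction-lower-infinite-theta} together with $r(1,a)=0$ (from~\eqref{eq:rh-construction-lower-bound-infinite-theta}) shows state $1$ only transitions to itself after perturbation, hence $V^{\pi,\ror}_\theta(1)=0$ for all $\pi$. For $s>2$, a similar reasoning via the robust Bellman consistency equation~\eqref{eq:bellman-equ-pi-infinite} gives $V^{\pi,\ror}_\theta(s)=0$, since these states carry zero reward and transition only to themselves or to state $1$ (both of which have zero value). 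This ordering $V^{\pi,\ror}_\theta(0) > V^{\pi,\ror}_\theta(1) = 0$ is exactly what justifies, at state $0$, that the adversary's worst-case perturbation places the smallest admissible mass on the transition back to $0$, so that the infimum in the robust Bellman equation is attained by $\underline{p}_\star$ (resp.~$\underline{q}_\star$) as defined in~\eqref{eq:infinite-lw-p-q-perturb-inf-theta}.

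Next I would derive~\eqref{eq:infinite-lemma-value-0-pi-theta}. Writing the robust Bellman consistency equation at $s=0$ and using $r(0,a)=1$, I expand
\begin{align*}
V^{\pi,\ror}_\theta(0) = 1 + \gamma \Big[ \pi(\theta\mymid 0)\,\underline{p}_\star + \pi(1-\theta\mymid 0)\,\underline{q}_\star\Big] V^{\pi,\ror}_\theta(0) = 1 + \gamma x_{\theta}^{\pi}\, V^{\pi,\ror}_\theta(0),
\end{align*}
where I have used $V^{\pi,\ror}_\theta(1)=0$ to drop the contribution of the transition to state $1$, and plugged in the definition~\eqref{eq:infinite-x-h-theta} of $x_{\theta}^{\pi}$. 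Solving this fixed-point relation for $V^{\pi,\ror}_\theta(0)$ yields $V^{\pi,\ror}_\theta(0) = \frac{1}{1-\gamma x_{\theta}^{\pi}}$, which is~\eqref{eq:infinite-lemma-value-0-pi-theta}. (The one subtle point, mildly delicate compared to the finite-horizon case, is that the reward $r(2,a)=1$ in~\eqref{eq:rh-construction-lower-bound-infinite-theta} must be shown not to affect these values, since states $>1$ are never reached from $\{0,1\}$ under the relevant kernels; this should be folded into the $s>2$ analysis above.) Since $\frac{1}{1-\gamma x}$ is increasing in $x$, and $x_{\theta}^{\pi}$ is increasing in $\pi(\theta\mymid 0)$ because $\underline{p}_\star \ge \underline{q}_\star$ (Lemma~\ref{lem:infinite-lb-uncertainty-set-KL}), the optimal policy sets $\pi^{\star}_\theta(\theta\mymid 0)=1$, giving $V^{\star,\ror}_\theta(0) = \frac{1}{1-\gamma\underline{p}_\star}$, which is~\eqref{eq:infinite-lb-value-lemma-theta}. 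For all other states the optimal action is arbitrary, so I select $\pi^{\star}_\theta(\theta\mymid s)=1$ throughout.

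Finally, for the concentrability bound~\eqref{eq:expression-Cstar-LB-infinite-theta}, I would compute $\Cstar$ directly from Assumption~\ref{assumption:dro-infinite} using the discounted occupancy distribution. With $\rho$ concentrated on state $0$~\eqref{eq:rho-defn-infinite-LB-theta} and $\pi^{\star}_\theta(\theta\mymid 0)=1$, the optimal-policy occupancy $d^{\star,P}$ is supported only on $\{0,1\}$ for every $P \in \unb^\ror(P^\theta)$; moreover $d^{\star,P}(0,\theta) = d^{\star,P}(0) \ge (1-\gamma)\rho(0) = 1-\gamma > 0$, so the clipped numerator equals $\tfrac1S$ there. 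Evaluating the ratio against the behavior occupancy $d^{\mathsf{b}}(0,\theta) = \tfrac12\mu(0) = \tfrac{1}{2CS}$ from~\eqref{eq:infinite-lower-bound-d-b-theta} and~\eqref{infinite-mu-assumption-theta} gives the state-$0$ ratio $\frac{1/S}{1/(2CS)} = 2C$, and the states in $\{1\}\cup\{s>1\}$ either contribute $0$ to $d^{\star,P}$ or yield a strictly smaller ratio; hence the maximum is exactly $2C$, establishing $\Cstar = 2C$. The main obstacle I anticipate is the careful verification that the worst-case adversarial kernel genuinely attains $\underline{p}_\star,\underline{q}_\star$ at state $0$ (requiring the strict value ordering $V_\theta(0)>V_\theta(1)$ to hold uniformly in $\pi$), together with checking that the presence of the reward at state $2$ in~\eqref{eq:rh-construction-lower-bound-infinite-theta} is inconsequential because state $2$ lies outside the reachable set from the initial distribution; these are the places where the infinite-horizon construction deviates most from the finite-horizon template and where I would invest the most care.
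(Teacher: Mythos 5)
Your treatment of the value functions and the optimal policy is correct and essentially identical to the paper's argument: the same zero-value computation for states other than $0$ (and $2$), the same observation that KL perturbations preserve the support so the adversary's infimum at state $0$ is attained at $\underline{p}_\star$ (resp.\ $\underline{q}_\star$) once the ordering $V^{\pi,\ror}_\theta(0)\geq 1 > 0 = V^{\pi,\ror}_\theta(1)$ is in place, the same fixed-point identity $V^{\pi,\ror}_\theta(0)=\frac{1}{1-\gamma x_\theta^{\pi}}$, and the same monotonicity argument yielding $\pi^{\star}_\theta(\theta\mymid 0)=1$ and $V^{\star,\ror}_\theta(0)=\frac{1}{1-\gamma\underline{p}_\star}$. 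Your flag about the reward $r(2,a)=1$ is well taken: state $2$ indeed has nonzero robust value but is unreachable from $\{0,1\}$, which is precisely why the lemma excludes $s=2$ from its zero-value claim; you handle this point more carefully than the paper does.

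There is, however, a genuine gap in your proof of $\Cstar=2C$. You lower bound the occupancy by keeping only the $t=0$ term, $d^{\star,P}(0,\theta)\geq(1-\gamma)\rho(0)=1-\gamma$, and from this conclude that $\min\big\{d^{\star,P}(0,\theta),\tfrac1S\big\}=\tfrac1S$. That inference requires $1-\gamma\geq \tfrac1S$, which fails in the parameter regime of Theorem~\ref{thm:dro-lower-infinite}: there $\frac{1}{1-\gamma}\geq 2e^{8}$ while $S$ is only required to exceed $\log(1/\minpall)$, so typically $1-\gamma\ll \tfrac1S$. If the clipping returned $d^{\star,P}(0,\theta)$ rather than $\tfrac1S$, the state-$0$ ratio would no longer equal $\frac{1/S}{\mu(0)/2}=2C$, and the exact identity $\Cstar=2C$ would not follow. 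The repair is to evaluate the occupancy exactly rather than truncate it: under $\pi^{\star}_\theta$ and any $P\in\unb^{\ror}(P^{\theta})$, the chain started at state $0$ stays there for a geometric number of steps, so $d^{\star,P}(0,\theta)=(1-\gamma)\sum_{t\geq 0}\gamma^{t}\big(P(0\mymid 0,\theta)\big)^{t}=\frac{1-\gamma}{1-\gamma P(0\mymid 0,\theta)}$. Since $P(0\mymid 0,\theta)\geq\underline{p}_\star\geq 1-c_3(1-\gamma)$ by Lemma~\ref{lem:infinite-lb-uncertainty-set-KL}, this gives $d^{\star,P}(0,\theta)\geq\frac{1}{1+\gamma c_3}\geq\frac12\geq\frac1S$ uniformly over the uncertainty set, which is exactly how the paper justifies that the clipped numerator equals $\tfrac1S$ and hence that the maximum defining $\Cstar$ is attained at state $0$ with value $2C$.
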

\begin{proof}
See Appendix~\ref{proof:lem:infinite-lb-value-theta}.
\end{proof}

\subsubsection{Establishing the minimax lower bound: small uncertainty level}
Towards this, we first introduce the following lemma, which parallels the claim in \eqref{eq:Delta-chosen}-\eqref{eq:finite-Value-0-recursive} in the finite-horizon case.
\begin{lemma} 
For any policy $\widehat{\pi}$,
\begin{align*}
    V^{\star,\ror}_\theta(0) - V^{\widehat{\pi},\ror}_\theta(0) \geq 2 \varepsilon \big(1- \widehat{\pi}(\theta \mymid 0) \big).
\end{align*}
\end{lemma}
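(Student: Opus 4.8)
The plan is to exploit the closed-form expression for the robust value function at state $0$ given in Lemma~\ref{lem:infinite-lb-value-theta}, namely $V^{\pi,\ror}_\theta(0) = \frac{1}{1-\gamma x_{\theta}^{\pi}}$ with $x_{\theta}^{\pi} = \underline{p}_\star\pi(\theta\mymid 0) + \underline{q}_\star \pi(1-\theta\mymid 0)$, and the optimal value $V^{\star,\ror}_\theta(0) = \frac{1}{1-\gamma \underline{p}_\star}$. Since $\widehat{\pi}(\theta\mymid 0) + \widehat{\pi}(1-\theta\mymid 0) = 1$, I can write $x_{\theta}^{\widehat{\pi}} = \underline{p}_\star - (\underline{p}_\star - \underline{q}_\star)\big(1 - \widehat{\pi}(\theta\mymid 0)\big)$, so that the gap $\underline{p}_\star - x_{\theta}^{\widehat{\pi}} = (\underline{p}_\star - \underline{q}_\star)\big(1 - \widehat{\pi}(\theta\mymid 0)\big)$ is controlled by $\underline{p}_\star - \underline{q}_\star$, which by Lemma~\ref{lem:infinite-lb-uncertainty-set-KL} is at least $p-q$.

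First I would compute the value difference directly using \eqref{eq:infinite-lemma-value-0-pi-theta} and the choice of initial distribution $\rho$ in \eqref{eq:rho-defn-infinite-LB-theta}, which concentrates all mass on state $0$:
\begin{align}
V^{\star,\ror}_\theta(0) - V^{\widehat{\pi},\ror}_\theta(0) = \frac{1}{1-\gamma \underline{p}_\star} - \frac{1}{1-\gamma x_{\theta}^{\widehat{\pi}}} = \frac{\gamma(\underline{p}_\star - x_{\theta}^{\widehat{\pi}})}{(1-\gamma \underline{p}_\star)(1-\gamma x_{\theta}^{\widehat{\pi}})}. \notag
\end{align}
Next I would substitute the identity $\underline{p}_\star - x_{\theta}^{\widehat{\pi}} = (\underline{p}_\star - \underline{q}_\star)\big(1 - \widehat{\pi}(\theta\mymid 0)\big)$ into the numerator. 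For the denominator, I would use the bounds $\underline{p}_\star, x_{\theta}^{\widehat{\pi}} \ge \underline{q}_\star \ge 1 - c_3(1-\gamma)$ from \eqref{eq:infinite-lw-upper-p-q-theta}, together with $\gamma \ge 2/3$ and $c_3 \le 1/4$, to show $1 - \gamma \underline{p}_\star$ and $1 - \gamma x_{\theta}^{\widehat{\pi}}$ are both on the order of $(1-\gamma)$; specifically $1-\gamma y \le 1-\gamma + \gamma c_3(1-\gamma) \lesssim (1-\gamma)$ for any $y \ge 1-c_3(1-\gamma)$, giving a lower bound on the whole fraction proportional to $\frac{\underline{p}_\star - \underline{q}_\star}{(1-\gamma)}$.

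Finally, I would invoke $\underline{p}_\star - \underline{q}_\star \ge p-q = c_2(1-\gamma)^2\varepsilon$ (from \eqref{eq:infinite-lw-upper-p-q-theta} and \eqref{eq:p-q-defn-infinite-theta}) so that the $(1-\gamma)^2$ in the numerator cancels against one factor of $(1-\gamma)$ from the denominator and one factor of $\gamma \asymp 1$, leaving a bound of order $c_2 \varepsilon \big(1-\widehat{\pi}(\theta\mymid 0)\big)$; choosing $c_2$ to absorb the universal constants produces the advertised factor of $2\varepsilon$. The main obstacle I anticipate is bookkeeping the constants carefully in the denominator estimate so that the final constant is exactly $2$ rather than merely proportional to $\varepsilon$; this amounts to tracking how $c_1, c_2, c_3$ and the range $\gamma \in [2/3,1)$ interact, and is routine but must be done precisely to match the constant claimed. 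The analogous finite-horizon argument in \eqref{eq:finite-lower-diff}–\eqref{eq:finite-Value-0-recursive} should serve as a template, with $H-1$ replaced by the geometric factor $\frac{\gamma}{(1-\gamma\underline{p}_\star)(1-\gamma x_{\theta}^{\widehat{\pi}})}$.
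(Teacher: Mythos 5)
Your proposal is correct and takes exactly the approach the paper intends: the paper's own proof is a one-line pointer to Lemma~\ref{lem:infinite-lb-uncertainty-set-KL} and Lemma~\ref{lem:infinite-lb-value-theta}, and your computation---writing the gap as $\frac{\gamma(\underline{p}_\star-\underline{q}_\star)\left(1-\widehat{\pi}(\theta\mymid 0)\right)}{(1-\gamma\underline{p}_\star)(1-\gamma x_{\theta}^{\widehat{\pi}})}$, upper-bounding each denominator factor by $\tfrac{5}{4}(1-\gamma)$ via $\underline{p}_\star, x_{\theta}^{\widehat{\pi}}\geq 1-c_3(1-\gamma)$, and lower-bounding the numerator via $\underline{p}_\star-\underline{q}_\star\geq p-q=c_2(1-\gamma)^2\varepsilon$ with $\gamma\geq 2/3$---is precisely the omitted verification. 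Your closing caveat is also the right one: the advertised factor $2$ forces $c_2$ to be a sufficiently \emph{large} universal constant (e.g.\ $c_2\geq 75/16$ under your bounds), which is compatible with the construction because the condition $c_2(1-\gamma)^2\varepsilon\leq c_1(1-\gamma)/2$ only constrains $\varepsilon$, and a larger $c_2$ merely shrinks the universal constant in the final sample-size threshold.
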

\begin{proof}
This lemma can be directly verified by controlling $V^{\star,\ror}_\theta(0) - V^{\widehat{\pi},\ror}_\theta(0)$ with the help of Lemma~\ref{lem:infinite-lb-uncertainty-set-KL} and Lemma~\ref{lem:infinite-lb-value-theta}; we omit the details for brevity.
\end{proof}

Armed with this lemma, following the same arguments in Appendix~\ref{eq:proof-finite-lower-bound}, we can complete the proof by observing that: let $c_1$ be some sufficient large constant, as long as the sample size is beneath 
\begin{align}
    N \leq \frac{c_4 S \Cstar }{(1-\gamma)^3 \varepsilon^2},
\end{align}
then we necessarily have
\begin{align}
    \inf_{\widehat{\pi}} \max_{\theta \in \{0,1\}}\mathbb{P}_\theta \left\{ V^{\star,\ror}_\theta(\rho) - V^{\widehat{\pi},\ror}_\theta (\rho)  \geq \varepsilon  \right\} \ge \frac{1}{8},
\end{align}
where  $\mathbb{P}_\theta$ denote the probability conditioned on that the MDP is $\cM_\theta$. We omit the details for brevity and complete the proof.


\subsubsection{Construction of hard problem instances: large uncertainty level}

\paragraph{Construction of a collection of hard MDPs.} Suppose there are two MDPs
\begin{align*}
   \left\{ \cM_\phi=
    \left(\mathcal{S}, \mathcal{A}, P^{\phi}, r, \gamma \right) 
    \mymid \phi = \{0,1\}
    \right\}.
\end{align*}
Here, $\gamma$ is the discount parameter, $\cS = \{0, 1, \ldots, S-1\}$ is the state space, and $\mathcal{A} = \{0, 1\}$ is the action space. The transition kernel $P^\phi$ of either constructed MDP $\cM_\phi$ is defined as
\begin{align} \label{eq:Ph-construction-lower-infinite}
P^{\phi}(s^{\prime} \mymid s, a) = \left\{ \begin{array}{lll}
         p\mathds{1}(s^{\prime} = 2) + (1-p)\mathds{1}(s^{\prime} = 1)  & \text{if} & (s, a) = (0, \phi) \\
         q\mathds{1}(s^{\prime} = 2) + (1-q)\mathds{1}(s^{\prime} = 1) & \text{if} & (s, a) = (0, 1-\phi) \\
         \mathds{1}(s^{\prime} = s) & \text{if}   & s=1 \text{ or } s = 2\\ 
         q\mathds{1}(s^{\prime} = s) + (1-q)\mathds{1}(s^{\prime} = 1) & \text{if}   & s > 2 \
                \end{array}\right.,
\end{align}
where $p$ and $q$ are set as
\begin{align}\label{eq:p-q-defn-infinite}
    p =  1- \alpha  \quad \text{ and } \quad q = 1 - \alpha - \Delta
\end{align}
for some $\gamma$, $\alpha$ and $\Delta$ obeying  
\begin{align}\label{eq:lower-p-q-beta-c1-infinite}
    0< \alpha \leq 1-\gamma \leq 1/(2e^8) \leq \frac{1}{2} \quad \text{ and }  \quad \Delta \leq \frac{\alpha}{2}.
\end{align}
Here, $\alpha$ and $\Delta$ are some values that will be introduced later. Consequently, applying \eqref{eq:p-q-defn-infinite} directly leads to
\begin{align}\label{eq:infinite-p-q-bound}
    1 \geq p \geq q \geq \gamma \geq \frac{1}{2}.
\end{align}
Note that state $1$ and $2$ are absorbing states. In addition, if the initial distribution is supported on states $\{0,1,2\}$, the MDP will always stay in the state $\{1,2\}$ after the first transition.

Finally, we define the reward function as
\begin{align}
r(s, a) = \left\{ \begin{array}{lll}
         1 & \text{if } s = 0 \text{ or } s = 2\\
         0 & \text{otherwise}  \ 
                \end{array}\right. .
        \label{eq:rh-construction-lower-bound-infinite}
\end{align}

\paragraph{Construction of the history/batch dataset.} 
Define a useful state distribution (only supported on the state subset $\{0,1,2\}$) as
\begin{align}\label{infinite-mu-assumption}
    \mu(s) = \frac{1}{CS}\mathds{1}(s = 0) + \frac{1}{CS}\mathds{1}(s = 2) + \Big(1 - \frac{2}{CS}\Big)\mathds{1}(s = 1),
\end{align}
where $C>0$ is some constant that determines the robust concentrability coefficient $\Cstar$  (which will be made clear soon) and obeys
\begin{align}\label{eq:lower-C-assumption-infinite}
    \frac{1}{CS} \leq \frac{1}{4}.
\end{align}

A batch dataset---consists of $N$ i.i.d samples $\{(s_i, a_i, s_i')\}_{1\leq i\leq N}$---is generated over the nominal environment $\cM_\phi$ according to \eqref{eq:infinite-batch-set-generation}, with the behavior distribution chosen to be: 
\begin{align}\label{eq:infinite-lower-bound-d-b}
    \forall (s,a) \in \cS\times \cA: \quad d^{\mathsf{b}}(s, a) = \frac{\mu(s)}{2}.
\end{align}

Additionally, we choose the following initial state distribution:
\begin{align}
    \rho(s) = 
    \begin{cases} 1, \quad &\text{if }s=0 \\
        0, &\text{otherwise }
    \end{cases}.
    \label{eq:rho-defn-infinite-LB}
\end{align}

\paragraph{Uncertainty set of the transition kernels.} We next describe the radius $\ror$ of the uncertainty set in our construction of the robust MDPs, along with some useful properties, which are similar to the finite-horizon case.
To begin with, with slight abuse of notation, we introduce an important constant $\beta$ defined as
\begin{align}\label{eq:lower-bound-H-assumption-infinite}
    \beta \defn \frac{1}{2} \log \frac{1}{\alpha + \Delta} \geq 4.
\end{align}
The perturbed transition kernels in $\cM_\phi$ is limited to the following uncertainty set
\begin{align}
    \unb^\ror(P^{\phi})\defn \otimes \; \unb^\ror \left(P^{\phi}_{s,a}\right),\qquad \unb^\ror(P_{s,a}^{\phi}) \defn \left\{ P_{s,a} \in \Delta (\cS): \mathsf{KL}\left(P_{s,a} \parallel P^{\phi}_{s,a}\right) \leq \ror \right\},
\end{align}
where $P_{s,a}^{\phi} \defn P^{\phi}(\cdot \mymid s,a) \in [0,1]^{1\times S}$. Moreover, the radius of the uncertainty set $\ror$ obeys
\begin{align}\label{eq:infinite-lower-ror-bounded}
\left(1- \frac{3}{\beta}\right)\log\frac{1}{\alpha + \Delta} \leq \ror \leq \left(1- \frac{2}{\beta}\right)\log\frac{1}{\alpha + \Delta} .
\end{align}

For any $(s,a,s')\in\cS\times \cA \times \cS$, we denote the infimum entry of the perturbed transition kernel $P_{s,a} \in \unb^{\ror}(P^{\phi}_{s,a})$ moving to the next state $s'$ as
\begin{align}\label{eq:infinite-lw-def-p-q}
\underline{P}^{\phi}(s' \mymid s,a) &\defn \inf_{P_{s,a} \in \unb^{\ror}(P^{\phi}_{s,a})} P(s'  \mymid s,a).
\end{align}
As shall be seen, the transition from state $0$ to state $2$ plays an important role in the analysis, for convenience, we denote
\begin{align}\label{eq:infinite-lw-p-q-perturb-inf}
\underline{p} &\defn \underline{P}^{\phi}(2 \mymid 0,\phi)  ,\qquad \underline{q}  \defn \underline{P}^{\phi}(2  \mymid 0, 1-\phi).
\end{align}
With these definitions in place, we summarize some useful properties of the uncertainty set in the following lemma, which parallels Lemma~\ref{lem:finite-lb-uncertainty-set} in the finite-horizon case.
\begin{lemma}\label{lem:infinite-lb-uncertainty-set}
Suppose $\beta$ satisfies \eqref{eq:lower-bound-H-assumption-infinite} and the uncertainty level $\ror$ satisfies \eqref{eq:infinite-lower-ror-bounded}. The perturbed transition kernels obey
\begin{align}\label{eq:infinite-lw-upper-p-q}
   \underline{p}\geq \underline{q} \geq \frac{1}{\beta}.
\end{align}
\end{lemma}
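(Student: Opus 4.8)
The plan is to mirror the proof of the finite-horizon counterpart Lemma~\ref{lem:finite-lb-uncertainty-set} in Appendix~\ref{proof:lem:finite-lb-uncertainty-set}, since the two constructions differ only in which pair of states carries the transition randomness. The inequality $\underline{p} \geq \underline{q}$ is immediate from $p > q$ (cf.~\eqref{eq:infinite-p-q-bound}) together with the monotonicity recorded in Lemma~\ref{lem:KL-key-result}, so the entire task reduces to showing $\underline{q} \geq \frac{1}{\beta}$. First I would observe that the nominal vector $P^{\phi}(\cdot \mymid 0, 1-\phi)$ in \eqref{eq:Ph-construction-lower-infinite} is supported only on the two states $\{1,2\}$, carrying mass $q$ on state $2$ and $1-q$ on state $1$. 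Consequently the infimum defining $\underline{q}$ in \eqref{eq:infinite-lw-p-q-perturb-inf} is exactly a scalar Bernoulli problem,
\[
\underline{q} = \inf_{q':\, \mathsf{KL}(\mathsf{Ber}(q') \parallel \mathsf{Ber}(q)) \leq \ror} q',
\]
precisely as in the finite-horizon case, but with state $2$ now playing the role that state $0$ played there.

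The crux is then to compare $\ror$ with the threshold $\ror' \defn \mathsf{KL}\big(\mathsf{Ber}(\tfrac{1}{\beta}) \parallel \mathsf{Ber}(q)\big)$. Expanding this Bernoulli KL divergence and using $1-q = \alpha+\Delta$ together with $\beta = \frac{1}{2}\log\frac{1}{\alpha+\Delta}$ from \eqref{eq:lower-bound-H-assumption-infinite} (so that $\log\frac{1}{\alpha+\Delta} = 2\beta$), I would establish the sandwich
\[
\left(1-\frac{2}{\beta}\right)\log\frac{1}{\alpha+\Delta} \;\leq\; \ror' \;\leq\; \left(1-\frac{1}{\beta}\right)\log\frac{1}{\alpha+\Delta}.
\]
The key elementary estimates are $\frac{1}{\beta}\log\frac{1}{\beta} + (1-\frac{1}{\beta})\log(1-\frac{1}{\beta}) \in [-1,0]$ (the negative binary entropy) and $-\frac{1}{\beta}\log q \geq 0$ (since $q < 1$); the lower bound additionally needs $\log\frac{1}{\alpha+\Delta} \geq \beta$, which holds because $\log\frac{1}{\alpha+\Delta} = 2\beta$. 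Combining the middle inequality above with the upper bound on $\ror$ in \eqref{eq:infinite-lower-ror-bounded} yields $\ror \leq \ror'$.

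Finally, I would invoke the strict monotonicity of $q' \mapsto \mathsf{KL}(\mathsf{Ber}(q') \parallel \mathsf{Ber}(q))$ on $[0,q)$ recorded in Lemma~\ref{lem:KL-key-result} (cf.~\eqref{eq:show-subset-of-sigma-sigma'}): for any $0 \leq q' < \frac{1}{\beta}$ one has $\mathsf{KL}(\mathsf{Ber}(q') \parallel \mathsf{Ber}(q)) > \ror' \geq \ror$, so every such $q'$ violates the KL constraint and is excluded from the uncertainty set. Hence the constrained infimum is at least $\frac{1}{\beta}$, delivering $\underline{q} \geq \frac{1}{\beta}$ and completing the argument.

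Since every step is a verbatim transcription of the finite-horizon argument with $H$ replaced by $\frac{1}{1-\gamma}$ and state $0$ replaced by state $2$, the only genuinely delicate point is the sandwich bound on $\ror'$; all the surrounding reductions are mechanical. I would therefore carry out the $\ror'$ estimate explicitly and, for the remaining calculus, point to Appendix~\ref{proof:lem:finite-lb-uncertainty-set}, noting that the hypotheses used there---namely $\frac{1}{2} \leq q < 1$ (from \eqref{eq:infinite-p-q-bound}) and $\beta \geq 4$ (from \eqref{eq:lower-bound-H-assumption-infinite})---hold verbatim in the present setting.
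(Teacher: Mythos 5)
Your proposal is correct and follows essentially the same route as the paper: the paper's own proof of this lemma is literally a deferral to the finite-horizon argument in Appendix~\ref{proof:lem:finite-lb-uncertainty-set} with $H$ replaced by $\frac{1}{1-\gamma}$, and your steps (reduction to a Bernoulli problem on $\{1,2\}$ with state $2$ in the role of state $0$, the threshold $\ror' = \mathsf{KL}(\mathsf{Ber}(\tfrac{1}{\beta}) \parallel \mathsf{Ber}(q))$, the sandwich bound, and the monotonicity argument from Lemma~\ref{lem:KL-key-result}) reproduce that argument faithfully. One small remark: your stated elementary estimates directly yield only the lower half of the sandwich (the upper half requires instead noting $q > \tfrac{1}{\beta}$ so that $\tfrac{1}{\beta}\log\tfrac{1}{\beta q} < 0$, as in the paper's grouping), but this is immaterial since only the lower half, i.e.\ $\ror \leq \ror'$, is needed to conclude $\underline{q} \geq \tfrac{1}{\beta}$.
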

\begin{proof}
The proof follows from the same arguments as Appendix~\ref{proof:lem:finite-lb-uncertainty-set} by replacing $H$ with $\frac{1}{1-\gamma}$; we omit the details for brevity.
\end{proof}

\paragraph{Value functions and optimal policies.}
Now we are positioned to derive the corresponding robust value functions and identify the optimal policies. For any MDP $\cM_\phi$ with the above uncertainty set, denote $\pi^{\star}_\phi$ as the optimal policy. In addition, we denote the robust value function of any policy $\pi$ (resp.~the optimal policy $\pi^{\star}_\phi$) as $V^{\pi,\ror}_\phi$ (resp.~$V^{\star,\ror}_\phi$). Then, we introduce the following lemma which describes some important properties of the robust value functions and optimal policies.

\begin{lemma}\label{lem:infinite-lb-value}
For any $\phi = \{0,1\}$ and any policy $\pi$, one has
\begin{align}
    V^{\pi, \ror}_\phi(0) =  1 +  \frac{\gamma}{1-\gamma} z_{\phi}^{\pi},
    \label{eq:infinite-lemma-value-0-pi}
\end{align}
where $z_{\phi}^{\pi}$ is defined as
\begin{align}
z_{\phi}^{\pi} \defn \underline{p}\pi(\phi\mymid 0) + \underline{q} \pi(1-\phi \mymid 0).\label{eq:infinite-x-h}
\end{align}
In addition, the optimal value functions and the optimal policies obey 
\begin{subequations}
    \label{eq:infinite-lb-value-lemma}
\begin{align}
    V_\phi^{\star,\sigma}(0) &= 1 +   \frac{\gamma}{1-\gamma} \underline{p}, \quad V_\phi^{\star,\sigma}(2) = \frac{1}{1-\gamma}, \quad V_\phi^{\star,\sigma}(s)= 0 \quad \text{ for } s =1 \text{ or } s>2, \\
    \pi_\phi^{\star }(\phi \mymid s) &= 1, \qquad  \qquad \text{ for } s \in\cS.
\end{align}
\end{subequations}
Moreover, choosing $S\geq 2\beta$, the robust single-policy clipped concentrability coefficient $\Cstar$ obeys
\begin{align}
 \Cstar = 2C. \label{eq:expression-Cstar-LB-infinite}
\end{align}
\end{lemma}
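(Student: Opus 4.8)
The plan is to prove Lemma~\ref{lem:infinite-lb-value} by closely following the analogous finite-horizon result (Lemma~\ref{lem:finite-lb-value}), while exploiting the special structure of the infinite-horizon transition kernel \eqref{eq:Ph-construction-lower-infinite}, in which states $1$ and $2$ are absorbing and only state $0$ admits a nontrivial robust Bellman backup. First I would order the value functions at the absorbing states. Since state $1$ yields reward $0$ and transitions only to itself with $\unb^{\sigma}(P^{\phi}_{s=1,a}) = P^{\phi}_{s=1,a}$, the robust Bellman consistency equation \eqref{eq:bellman-equ-pi-infinite} forces $V^{\pi,\ror}_\phi(1) = 0$ for every policy $\pi$; similarly state $2$ gives reward $1$ and is absorbing, so $V^{\pi,\ror}_\phi(2) = \frac{1}{1-\gamma}$. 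For states $s>2$, the transition places mass only on $s$ and $1$ with reward $0$, so a recursive argument (parallel to \eqref{eq:finite-s-1-upper-value}) yields $V^{\pi,\ror}_\phi(s) = 0$.

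Next I would compute $V^{\pi,\ror}_\phi(0)$. Using the reward $r(0,a)=1$ and the fact that the infimum over the KL ball is attained by pushing the maximum admissible mass off state $2$ onto state $1$ (the lower-valued of the two reachable states, with $V^{\pi,\ror}_\phi(1)=0 < V^{\pi,\ror}_\phi(2)=\frac{1}{1-\gamma}$), I would expand
\begin{align*}
V^{\pi,\ror}_\phi(0) = 1 + \gamma\Big[ \pi(\phi\mymid 0)\big(\underline{p}\, V^{\pi,\ror}_\phi(2)\big) + \pi(1-\phi\mymid 0)\big(\underline{q}\, V^{\pi,\ror}_\phi(2)\big)\Big] = 1 + \frac{\gamma}{1-\gamma} z_{\phi}^{\pi},
\end{align*}
where $\underline{p},\underline{q}$ are the worst-case transition probabilities to state $2$ from \eqref{eq:infinite-lw-p-q-perturb-inf} and $z_\phi^\pi$ is as in \eqref{eq:infinite-x-h}. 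This directly gives \eqref{eq:infinite-lemma-value-0-pi}. Since $(H\text{-independent})$ $z_\phi^\pi$ is increasing in $\pi(\phi\mymid 0)$ thanks to $\underline{p}\geq\underline{q}$ (Lemma~\ref{lem:infinite-lb-uncertainty-set}), the optimal policy takes $\pi_\phi^{\star}(\phi\mymid 0)=1$, producing $z_\phi^{\star}=\underline{p}$ and hence $V_\phi^{\star,\sigma}(0) = 1 + \frac{\gamma}{1-\gamma}\underline{p}$; the optimal actions at other states are immaterial and can be set to $\phi$, establishing \eqref{eq:infinite-lb-value-lemma}.

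Finally I would verify the concentrability bound \eqref{eq:expression-Cstar-LB-infinite}. Using $\rho(0)=1$ and $\pi_\phi^\star(\phi\mymid 0)=1$, I would compute the optimal robust occupancy distribution under any $P\in\unb^\ror(P^\phi)$: from state $0$ the chain moves to $\{1,2\}$ and is thereafter absorbed, so $d^{\star,P}$ is supported on $\{0,2\}$ with $d^{\star,P}(0,\phi)=(1-\gamma)$ and $d^{\star,P}(2)\le \gamma$. Plugging into the definition \eqref{eq:concentrate-infinite} with the behavior distribution \eqref{eq:infinite-lower-bound-d-b}, the $s=0$ term gives $\frac{\min\{1-\gamma,1/S\}}{\mu(0)/2}$, and I would argue—this is the delicate part—that the clipping by $1/S$ together with the choice $\mu(0)=\frac{1}{CS}$ and the constraint $S\ge 2\beta$ makes the $s=0$ term equal $2C$ and dominates the $s=2$ term (where the occupancy is at most $\frac{1}{S}$ against the same behavior mass $\mu(2)/2=\frac{1}{2CS}$). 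The main obstacle I anticipate is precisely this last accounting: ensuring that the clipped ratio at state $2$ never exceeds $2C$ so that $\Cstar$ is exactly $2C$ rather than merely bounded by $4C$ as in the finite-horizon case, which is why the extra hypothesis $S\ge 2\beta$ and the symmetric placement of mass on state $2$ in $\mu$ are needed; I would handle it by carefully comparing $\min\{d^{\star,P}(2),1/S\}/\big(\mu(2)/2\big)$ against $2C$ using $d^{\star,P}(2)\le 1/S$ whenever $S$ is large enough relative to $\beta$.
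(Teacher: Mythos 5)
Your treatment of the value functions and optimal policies (the first two displays of the lemma) is correct and matches the paper's proof essentially step for step: you order the absorbing states, expand the robust Bellman equation at state $0$ so that the KL-ball infimum is attained by the worst-case masses $\underline{p},\underline{q}$ on state $2$, and use monotonicity of $z_\phi^\pi$ in $\pi(\phi\mymid 0)$ to identify $\pi^\star_\phi$.

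The gap is in the concentrability claim $\Cstar = 2C$, where you have the roles of states $0$ and $2$ reversed. You assert that the state-$0$ term $\frac{\min\{1-\gamma,\,1/S\}}{\mu(0)/2}$ equals $2C$ and dominates the state-$2$ term. But $\min\{1-\gamma,\,1/S\} = 1/S$ only when $S \geq \frac{1}{1-\gamma}$, and this is not implied by the hypothesis $S \geq 2\beta$: since $\beta = \frac{1}{2}\log\frac{1}{\alpha+\Delta}$ with $\alpha \le 1-\gamma$, the hypothesis only requires $S \gtrsim \log\frac{1}{1-\gamma}$, which is exponentially smaller than $\frac{1}{1-\gamma}$ in the regime of Theorem~\ref{thm:dro-lower-infinite} (where $\frac{1}{1-\gamma}\ge 2e^8$ and $S$ may be as small as $\log(1/\minpall)$). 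In that regime $1-\gamma \ll 1/S$, so the state-$0$ ratio is $2CS(1-\gamma) \ll 2C$, and your argument only yields the upper bound $\Cstar \le 2C$ with no state achieving it, so the claimed equality does not follow. The paper obtains the equality from state $2$ instead: by Lemma~\ref{lem:infinite-lb-uncertainty-set} and $\gamma \ge \frac{1}{2}$, every $P \in \unb^\ror(P^\phi)$ satisfies $d^{\star,P}(2,\phi) = \gamma P(2\mymid 0,\phi) \ge \gamma\underline{p} \ge \frac{1}{2\beta} \ge \frac{1}{S}$ (this is precisely where $S \ge 2\beta$ enters), so the clipping is active at state $2$ and $\frac{\min\{d^{\star,P}(2,\phi),\,1/S\}}{\mu(2)/2} = \frac{2}{S\mu(2)} = 2C$ exactly, while states $0$ and $1$ are bounded above by $2C$ and $C$ respectively. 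Incidentally, your claim that $d^{\star,P}$ is supported on $\{0,2\}$ is also incorrect---state $1$ receives occupancy $\gamma\left(1 - P(2\mymid 0,\phi)\right) > 0$---though that slip is harmless here because the large behavior mass $\mu(1)$ makes state $1$'s clipped ratio at most $\frac{4}{S} \le C$.
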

\begin{proof}
See Appendix~\ref{proof:lem:infinite-lb-value}.
\end{proof}

\subsubsection{Establishing the minimax lower bound: large uncertainty level}
Now we are positioned to provide the sample complexity lower bound. In view of Lemma~\ref{lem:infinite-lb-value}, the smallest positive state transition probability of the optimal policy $\pi^\star_\phi$ under any nominal transition kernel $P^{\phi}$ with $\phi \in \{0,1\}$ satisfies:
\begin{align}\label{eq:P-min-phi-infinite}
\minpall \defn \min_{s,s'} \Big\{P^{\phi}\left(s' \mymid s, \pi^{\star}_\phi(s)\right):\; P^{\phi}\left(s' \mymid s, \pi^{\star}_\phi(s)\right)>0 \Big\} = P^{\phi}\left( 1 |0, \phi \right) = 1- p.
\end{align}

Our goal is to control the quantity w.r.t. any policy estimator $\widehat{\pi}$ based on the batch dataset and the chosen initial distribution $\rho$ in \eqref{eq:rho-defn-infinite-LB}, which gives
\begin{align}
  V^{\star,\ror}_\phi(\rho) - V^{\widehat{\pi},\ror}_\phi(\rho)   = V^{\star,\ror}_\phi(0) - V^{\widehat{\pi},\ror}_\phi(0).
\end{align}
Towards this, we first introduce the following lemma, which parallels the claim in \eqref{eq:Delta-chosen}-\eqref{eq:finite-Value-0-recursive} in the finite-horizon case.
\begin{lemma} 
Given $\varepsilon \leq \frac{1}{384e^6(1-\gamma) \log\left(\frac{1}{\alpha}\right)} \leq \frac{1}{384e^6(1-\gamma) \log\left(\frac{1}{\alpha +\Delta}\right)} $, choosing
$\Delta = 128 e^6 \ror (1-q) \varepsilon(1-\gamma)  \leq 128  e^6   (\alpha +\Delta) \varepsilon \log\left(\frac{1}{\alpha +\Delta}\right) (1-\gamma)  \leq \frac{\alpha}{2} $, 
one has  for any policy $\widehat{\pi}$,
\begin{align*}
    V^{\star,\ror}_\phi(0) - V^{\widehat{\pi},\ror}_\phi(0) \geq 2 \varepsilon \big(1- \widehat{\pi}(\phi \mymid 0) \big).
\end{align*}
\end{lemma}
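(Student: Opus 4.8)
The plan is to mirror the finite-horizon argument in the proof of claim~\eqref{eq:finite-Value-0-recursive} (Appendix~\ref{proof:finite-lower-diff-control}), replacing the horizon-dependent factor $H-1$ by the effective-horizon factor $\frac{\gamma}{1-\gamma}$ coming from the infinite-horizon value formula~\eqref{eq:infinite-lemma-value-0-pi}. First I would invoke Lemma~\ref{lem:infinite-lb-value} to evaluate the value gap in closed form. Since $z_{\phi}^{\pi^{\star}_\phi} = \underline{p}$ by~\eqref{eq:infinite-x-h} and the optimal policy in~\eqref{eq:infinite-lb-value-lemma}, and $z_{\phi}^{\widehat{\pi}} = \underline{p}\,\widehat{\pi}(\phi\mymid 0) + \underline{q}\,\widehat{\pi}(1-\phi\mymid 0)$, applying~\eqref{eq:infinite-lemma-value-0-pi} to both $\pi^{\star}_\phi$ and $\widehat{\pi}$ gives
\begin{align*}
 V^{\star,\ror}_\phi(0) - V^{\widehat{\pi},\ror}_\phi(0) = \frac{\gamma}{1-\gamma}\big(\underline{p} - z_{\phi}^{\widehat{\pi}}\big) = \frac{\gamma}{1-\gamma}\big(\underline{p}-\underline{q}\big)\big(1-\widehat{\pi}(\phi\mymid 0)\big),
\end{align*}
so that the task reduces to lower bounding $\frac{\gamma}{1-\gamma}\big(\underline{p}-\underline{q}\big)$ by $2\varepsilon$.

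Next I would control $\underline{p}-\underline{q}$ exactly as in the finite-horizon proof. The idea is to introduce an auxiliary value vector $\overline{V}$ supported on the transition target (here state $2$, the reward-bearing absorbing state, playing the role that state $0$ played in the finite-horizon construction), with $\overline{V}(2) = \frac{1}{1-\gamma}$ and $\overline{V}(s)=0$ otherwise, so that $\inf_{\cP\in\unb^\ror(P^\phi_{0,\phi})}\cP\overline{V} = \frac{1}{1-\gamma}\underline{p}$ and similarly for $\underline{q}$ via the definitions~\eqref{eq:infinite-lw-def-p-q}--\eqref{eq:infinite-lw-p-q-perturb-inf}. Then I would apply the strong duality Lemma~\ref{lem:strong-duality} to rewrite both infima as one-dimensional dual suprema over $\lambda\ge 0$, pick the common dual maximizer $\lambda^\star$ associated with the $(1-\phi)$-action kernel, and use the suboptimality of $\lambda^\star$ for the $\phi$-action problem to reduce the difference to a single logarithmic term $-\lambda^\star\big[\log(P^\phi_{0,\phi}e^{-\overline{V}/\lambda^\star}) - \log(P^\phi_{0,1-\phi}e^{-\overline{V}/\lambda^\star})\big]$, exactly paralleling~\eqref{eq:staccato}.

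The two quantitative facts I would then reestablish in the infinite-horizon notation are: (i) the bound $\frac{1}{16\ror(1-\gamma)} \lesssim \lambda^\star \lesssim \frac{1}{(1-3/\beta)\log(1/(\alpha+\Delta))(1-\gamma)}$ on the optimal dual variable, obtained by verifying $\lambda^\star>0$ through Lemma~\ref{lem:lambda-n-bound} (the condition~\eqref{eq:lambda-star-0-condition} fails here because $\log(1-q)+\ror<0$ under~\eqref{eq:infinite-lower-ror-bounded}) and by reparameterizing $1-q = \beta e^{-1/((1-\gamma)\lambda_q)}$ and checking the sign of $\nabla_\lambda f$ at $\lambda_q$; and (ii) the resulting bound $\log(P^\phi_{0,\phi}e^{-\overline{V}/\lambda^\star}) - \log(P^\phi_{0,1-\phi}e^{-\overline{V}/\lambda^\star}) \le -\frac{\Delta}{4e^6(1-q)}$, which uses $\log(1+x)<x$, the exponential bounds $\frac{\alpha+\Delta}{\beta}\le e^{-1/((1-\gamma)\lambda^\star)}\le (\alpha+\Delta)^{1-3/\beta}$, and the identity $(\alpha+\Delta)^{-3/\beta}=e^6$ from~\eqref{eq:lower-bound-H-assumption-infinite}. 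Combining these with the chosen $\Delta = 128\,e^6\,\ror(1-q)\varepsilon(1-\gamma)$ yields $\frac{\gamma}{1-\gamma}\big(\underline{p}-\underline{q}\big) \ge 2\varepsilon$, completing the bound. The main obstacle is the careful bookkeeping of where $H-1$ must be systematically replaced by $\frac{\gamma}{1-\gamma}$: in particular the $\overline{V}$ scaling now carries a factor $\frac{1}{1-\gamma}$ (rather than $H-1$), and one must confirm that the $\gamma$ prefactor in~\eqref{eq:infinite-lemma-value-0-pi} combines correctly with the $\frac{1}{1-\gamma}$ scaling so the final constants still align with the advertised choice of $\Delta$; since all intermediate steps are direct transcriptions of the finite-horizon calculation under the substitution $H\leftrightarrow \frac{1}{1-\gamma}$, I would state these and omit the repeated routine algebra for brevity.
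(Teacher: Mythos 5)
Your overall route is exactly the paper's: the paper's own proof of this lemma is nothing more than the remark that it follows from the finite-horizon argument in Appendix~\ref{proof:finite-lower-diff-control} with $H$ replaced by $\frac{1}{1-\gamma}$ under the additional condition $\gamma\geq\frac12$, and your proposal transcribes precisely that argument: the value-gap identity from Lemma~\ref{lem:infinite-lb-value}, the auxiliary vector supported on the rewarding absorbing state $2$, strong duality (Lemma~\ref{lem:strong-duality}), the two-sided bound on the dual maximizer $\lambda^\star$, and the logarithmic-difference bound $\leq -\frac{\Delta}{4e^6(1-q)}$. All of that is sound.

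However, the one step you flagged as "the main obstacle" and then deferred is exactly the step where your stated constants fail to close, and the failure is not cosmetic. With your choices $\overline{V}(2)=\frac{1}{1-\gamma}$ and $\lambda^\star\gtrsim\frac{1}{16\ror(1-\gamma)}$, the duality chain gives $\frac{1}{1-\gamma}(\underline{p}-\underline{q})\geq \frac{\Delta}{64e^6\ror(1-q)(1-\gamma)}=2\varepsilon$, hence the value gap is only $\frac{\gamma}{1-\gamma}(\underline{p}-\underline{q})\big(1-\widehat{\pi}(\phi\mymid 0)\big)\geq 2\gamma\varepsilon\big(1-\widehat{\pi}(\phi\mymid 0)\big)$, which falls short of the advertised $2\varepsilon\big(1-\widehat{\pi}(\phi\mymid 0)\big)$ since $\gamma<1$. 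The constant $2$ is load-bearing: downstream, the lemma must convert "gap $\leq\varepsilon$" into "$\widehat{\pi}(\phi\mymid 0)\geq\frac12$", and that implication degrades (or becomes vacuous) with a smaller constant. Two equivalent repairs: (i) take $\overline{V}(2)=\frac{\gamma}{1-\gamma}$, which is the true analogue of $\overline{V}(0)=H-1$ (it matches the coefficient in \eqref{eq:infinite-lemma-value-0-pi}), so the duality chain bounds $\frac{\gamma}{1-\gamma}(\underline{p}-\underline{q})$ directly; then $\lambda^\star\geq\frac{\gamma/(1-\gamma)}{2\log(1/(\alpha+\Delta))}\geq\frac{\gamma/(1-\gamma)}{8\ror}\geq\frac{1}{16\ror(1-\gamma)}$, where the last inequality uses $\gamma\geq\frac12$ --- this is precisely where the paper's "additional condition" enters, paralleling $H-1\geq H/2$ in the finite-horizon proof --- and the chain yields $\frac{\gamma}{1-\gamma}(\underline{p}-\underline{q})\geq\frac{\Delta}{64e^6\ror(1-q)(1-\gamma)}=2\varepsilon$; or (ii) keep $\overline{V}(2)=\frac{1}{1-\gamma}$ but track the sharper bound $\lambda^\star\geq\frac{1}{8\ror(1-\gamma)}$ (no halving is needed in this parameterization, since the numerator is $\frac{1}{1-\gamma}$ itself rather than $\frac{1}{1-\gamma}-1$), which gives $\frac{1}{1-\gamma}(\underline{p}-\underline{q})\geq 4\varepsilon$, and only then multiply by $\gamma\geq\frac12$. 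Note that $\gamma\geq\frac12$ is guaranteed by the construction via \eqref{eq:lower-p-q-beta-c1-infinite}, so either repair completes the proof.
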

\begin{proof}
This lemma  follows from the same arguments as Appendix~\ref{proof:finite-lower-diff-control} except replacing $H$ with $\frac{1}{1-\gamma}$ under the additional condition $\gamma \geq\frac{1}{2}$; we omit the details for brevity.
\end{proof}

Armed with this lemma, following the same arguments in Appendix~\ref{eq:proof-finite-lower-bound}, we can complete the proof by observing that: let $c_1$ be some sufficient large constant, as long as the sample size is beneath 
\begin{align}
    N \leq \frac{S \Cstar \log 2}{4c_1 \minpall \ror^2 (1-\gamma)^2 \varepsilon^2},
\end{align}
then we necessarily have
\begin{align}
    \inf_{\widehat{\pi}} \max_{\phi \in \{0,1\}}\mathbb{P}_\phi \left\{ V^{\star,\ror}_\phi(\rho) - V^{\widehat{\pi},\ror}_\phi (\rho)  \geq \varepsilon  \right\} \ge \frac{1}{8},
\end{align}
where  $\mathbb{P}_\phi$ denote the probability conditioned on that the MDP is $\cM_\phi$. We omit the details for brevity and complete the proof.

\subsubsection{Proof of Lemma~\ref{lem:infinite-lb-value-theta}}\label{proof:lem:infinite-lb-value-theta}

First, it is easily verified that for any policy $\pi$,
\begin{align}
  \forall s\in\cS\setminus \{0\}:\quad   V_{\theta}^{\pi,\ror}(s) &= \sum_{t=0}^\infty \gamma^t \cdot 0 = 0
\end{align}
since the reward function $r(s,a) =0$ for $(s,a) \in\cS\setminus \{0\} \times \cA$ in \eqref{eq:rh-construction-lower-bound-infinite-theta}.

To continue, we observe that the robust value function of state $0$ satisfies
\begin{align}
    V_\theta^{\pi,\ror}(0) &= \mathbb{E}_{a \sim \pi(\cdot \mymid 0)} \left[ r(0,a) + \gamma \inf_{ \cP \in \unb^{\sigma}(P^{\theta}_{0,a})}  \cP V^{\pi,\sigma}_\theta\right] \notag \\
    & \overset{\mathrm{(i)}}{=} 1 +  \gamma\pi(\theta \mymid 0) \inf_{ \cP \in \unb^{\sigma}(P^{\theta}_{0,\theta})}  \cP V^{\pi,\sigma}_\theta +  \gamma\pi(1 - \theta \mymid 0)  \inf_{ \cP \in \unb^{\sigma}(P^{\theta}_{0, 1- \theta})}  \cP V^{\pi,\sigma}_{\theta} \label{eq:infinite-theta-order} \\
    & \overset{\mathrm{(ii)}}{=} 1 + \gamma \pi(\theta \mymid 0)\Big[ \underline{p}_\star V_\theta^{\pi,\sigma}(0) + \left(1- \underline{p}_\star\right) V_\theta^{\pi,\sigma}(1)\Big]  +  \gamma \pi(1-\theta \mymid 0)\Big[ \underline{q}_\star V_{\theta}^{\pi,\sigma}(0) + \left(1-\underline{q}_\star \right) V_{\theta}^{\pi,\sigma}(1)\Big] \notag\\
    & \overset{\mathrm{(iii)}}{=} 1 +  \gamma x_{\theta}^{\pi}  \left[V_{\theta}^{\pi,\sigma}(0) - V_{\theta}^{\pi,\sigma}(1) \right] \notag\\
    & = \frac{1}{1- \gamma x_{\theta}^{\pi}}, \label{eq:interemediate_opt_10-infinite-theta}
\end{align}
where (i) holds by the reward function defined in \eqref{eq:rh-construction-lower-bound-infinite-theta}. To see (ii), note that  \eqref{eq:infinite-theta-order} indicates $V_\theta^{\pi,\ror}(0) \geq 1 \geq V_\theta^{\pi,\ror}(1) = 0$, so that the infimum is obtained by picking the smallest possible mass on the transition to state $2$, provided by the definition in \eqref{eq:infinite-lw-p-q-perturb-inf-theta}, and (iii) follows by plugging in the definition of $x_{\theta}^{\pi}$ in \eqref{eq:infinite-x-h-theta}.

Consequently, observing that the function $\frac{1}{1-\gamma x}$ is increasing in $x$ and $x_{\theta}^{\pi}$ is also increasing in $\pi(\theta \mymid 0)$ (see the fact $\underline{p}_\star\geq \underline{q}_\star$ in \eqref{eq:infinite-lw-upper-p-q-theta}),  the optimal policy in state $0$ thus obeys
\begin{equation}
    \pi_\theta^{\star}(\theta \mymid 0) = 1 \label{eq:infinite-lb-optimal-policy-theta}.
\end{equation}
Therefore,
\begin{align}
    x_{\theta}^{\star} \defn x_{\theta}^{\pi^\star} =\underline{p}_\star\pi^\star_\theta(\theta\mymid 0) + \underline{q}_\star \pi^\star_\theta(1-\theta \mymid 0) = \underline{p}_\star,
\end{align}
which combined with \eqref{eq:interemediate_opt_10-infinite-theta} yields 
\begin{align}
     V_\theta^{\star,\ror}(0) = \frac{1}{1- \gamma \underline{p}_\star}  . 
\end{align}
Regarding the optimal policy for the remaining states $s>0$, since the action does not influence the state transition, without loss of generality, we choose the optimal policy to obey
\begin{align}\label{eq:infinite-lower-optimal-pi}
    \forall s>0: \quad \pi_\theta^\star(\theta\mymid s) = 1.
\end{align}

\paragraph{Proof of \eqref{eq:expression-Cstar-LB-infinite-theta}.}
To begin with, for any MDP $\cM_\theta$ with $\theta\in\{0,1\}$, recall the definition of $\Cstar$ as
\begin{align}
   \Cstar =  \max_{(s, a, P) \in \mathcal{S} \times \cA \times \unb^{\ror}(P^{\theta})} \frac{\min\big\{d^{\star,P}(s, a ), \frac{1}{S}\big\}}{d^{\mathsf{b}}(s, a )}.
\end{align}
Given $\pi_\theta^\star(\theta\mymid s) = 1$ for all $s\in \cS$ and the initial distribution $\rho(0) = 1$, for any $P\in \unb^{\ror}(P^{\theta})$,  we arrive at
\begin{align}
    d^{\star,P}(0,\theta) &= (1-\gamma) \rho(0) \sum_{t=0}^\infty \gamma^t \left(\underline{P}^{\theta}(0 \mymid 0, \theta)  \right)^t \notag \\
    & \overset{\mathrm{(i)}}{=} (1-\gamma)  \sum_{t=0}^\infty \gamma^t \underline{p}_\star^t = \frac{1-\gamma}{ 1 - \gamma \underline{p}_\star} \overset{\mathrm{(ii)}}{\geq} \frac{1-\gamma}{ 1 - \gamma (1- c_3 (1-\gamma))} \geq \frac{1}{2}. \label{eq:d-2-phi-upper-1-beta-theta}
\end{align}
where (i) holds by \eqref{eq:infinite-lw-p-q-perturb-inf-theta} and (ii) follows from \eqref{eq:infinite-lw-upper-p-q-theta}. In addition, we have
\begin{align}
  d^{\star,P}(0, 1-\theta) =0 \quad \text{and} \quad  \forall s>1: \quad d^{\star,P}(s, a) = 0,
\end{align}
since $\rho(0) =1$ and state $0$ and $1$ are absorbing states for all policy and and all $P\in \unb^{\ror}(P^{\theta})$.

Armed with the above facts, we observe that  
\begin{align}\label{eq:state-012-infinite-lower-theta}
   \max_{(s, a, P) \in \mathcal{S} \times \cA \times \unb^{\ror}(P^{\theta})} \frac{\min\big\{d^{\star,P}(s, a ), \frac{1}{S}\big\}}{d^{\mathsf{b}}(s, a )} =\max_{s\in \{0,1\}, P \in \unb^{\ror}(P^{\theta})} \frac{\min\big\{d^{\star,P}(s, \theta), \frac{1}{S}\big\}}{d^{\mathsf{b}}(s, \theta)} 
\end{align}
which follows from the properties of the optimal policy in \eqref{eq:infinite-lb-value-lemma-theta}.

Consequently, we control $\Cstar$ in states separately: 
\begin{subequations}
\begin{align}
    \max_{ P \in \unb^{\ror}(P^{\theta})} \frac{\min\big\{d^{\star,P}(0, \theta), \frac{1}{S}\big\}}{d^{\mathsf{b}}(0, \theta)} &\overset{\mathrm{(i)}}{=} \frac{1}{Sd^{\mathsf{b}}(0,\theta)} \overset{\mathrm{(ii)}}{=} \frac{2}{S \mu(0)} = 2C, \\
     \max_{ P \in \unb^{\ror}(P^{\theta})} \frac{\min\big\{d^{\star,P}(1, \theta), \frac{1}{S}\big\}}{d^{\mathsf{b}}(1, \theta)} &\leq \frac{1}{Sd^{\mathsf{b}} (1,\theta)} \overset{\mathrm{(iii)}}{=}  \frac{2}{S \left(1 -\frac{1}{CS}\right)}  \overset{\mathrm{(iv)}}{\leq}  \frac{4}{S}  \overset{\mathrm{(v)}}{\leq} C, \label{eq:infinite-cstar-bound-3}
\end{align}
\end{subequations}
where (i) holds by \eqref{eq:d-2-phi-upper-1-beta-theta} and $S \geq 2 $, (ii) and (iii) follow from the definitions in \eqref{eq:infinite-lower-bound-d-b-theta} and \eqref{infinite-mu-assumption-theta}, and (iv) and (v) and arise from the assumption in \eqref{eq:lower-C-assumption-infinite-theta}.
Plugging the above results back into \eqref{eq:state-012-infinite-lower-theta} directly completes the proof of
\begin{align*}
    \Cstar = \max_{(s, a, P) \in \mathcal{S} \times \cA \times \unb^{\ror}(P^{\theta})} \frac{\min\big\{d^{\star,P}(s, a ), \frac{1}{S}\big\}}{d^{\mathsf{b}}(s, a )} = 2C.
\end{align*}

\subsubsection{Proof of Lemma~\ref{lem:infinite-lb-value}}\label{proof:lem:infinite-lb-value}
For any $\cM_\phi$ with $\phi\in\{0,1 \}$, we first characterize the robust value function for any policy $\pi$ over different states.  due to state absorbing, the uncertainty set becomes a singleton containing the nominal distribution at state $s=1$ and $s=2$. It is easily observed that for any policy $\pi$,
the robust value functions at state $s=1$ and $s=2$ obey
\begin{subequations}\label{eq:o-larger-than-1-infinite}
\begin{align}
  V_\phi^{\pi,\ror}(1) &= \sum_{t=0}^\infty \gamma^t \cdot 0 = 0 ,\\
    V_\phi^{\pi,\ror}(2) &= \sum_{t=0}^\infty \gamma^t \cdot 1 = \frac{1}{1-\gamma} , 
\end{align}
since $r(1,a)=0$ and $r(2,a)=1$.
In addition, for state $s>2$, the perturbed transition kernel is supported on itself and state $1$, both of which receive a reward of $0$ by design \eqref{eq:rh-construction-lower-bound-infinite}, leading to
\begin{align} 
    V_\phi^{\pi,\ror}(s) &= \sum_{t=0}^\infty \gamma^t \cdot 0 = 0, \qquad \text{for }   s>2.
\end{align}
\end{subequations}
Moving onto the remaining states, the robust value function of state $0$ satisfies
\begin{align}
    V_\phi^{\pi,\ror}(0) &= \mathbb{E}_{a \sim \pi(\cdot \mymid 0)} \left[ r(0,a) + \gamma \inf_{ \cP \in \unb^{\sigma}(P^{\phi}_{0,a})}  \cP V^{\pi,\sigma}_\phi\right] \notag \\
    & \overset{\mathrm{(i)}}{=} 1 +  \gamma\pi(\phi \mymid 0) \inf_{ \cP \in \unb^{\sigma}(P^{\phi}_{0,\phi})}  \cP V^{\pi,\sigma}_\phi +  \gamma\pi(1 - \phi \mymid 0)  \inf_{ \cP \in \unb^{\sigma}(P^{\phi}_{0, 1- \phi})}  \cP V^{\pi,\sigma}_{\phi} \notag \\
    & \overset{\mathrm{(ii)}}{=} 1 + \gamma \pi(\phi \mymid 0)\Big[ \underline{p} V_\phi^{\pi,\sigma}(2) + \left(1- \underline{p}\right) V_\phi^{\pi,\sigma}(1)\Big]  +  \gamma \pi(1-\phi \mymid 0)\Big[ \underline{q} V_{\phi}^{\pi,\sigma}(2) + \left(1-\underline{q} \right) V_{\phi}^{\pi,\sigma}(1)\Big] \notag\\
    & \overset{\mathrm{(iii)}}{=} 1 +  \gamma V_{\phi}^{\pi,\sigma}(1) +  \gamma z_{\phi}^{\pi}  \left[V_{\phi}^{\pi,\sigma}(2) - V_{\phi}^{\pi,\sigma}(1) \right] \notag\\
    & = 1 +  \frac{\gamma}{1-\gamma}   z_{\phi}^{\pi}  ,
\end{align}
where (i) holds by the reward function defined in \eqref{eq:rh-construction-lower-bound-infinite}. To see (ii), note that  \eqref{eq:o-larger-than-1-infinite} indicates $V_\phi^{\pi,\ror}(2) \geq V_\phi^{\pi,\ror}(1)$, so that the infimum is obtained by picking the smallest possible mass on the transition to state $2$, provided by the definition in \eqref{eq:infinite-lw-p-q-perturb-inf}. Last but not least, (iii) follows by plugging in the definition of $z_{\phi}^{\pi}$ in \eqref{eq:infinite-x-h}, and the last identity is due to \eqref{eq:o-larger-than-1-infinite}.
Consequently, taking $\pi = \pi^\star_\phi$, we directly arrive at
\begin{align}\label{eq:interemediate_opt_10-infinite}
    V_\phi^{\star,\ror}(0)    = 1 +  \frac{\gamma}{1-\gamma}   z_{\phi}^{\pi^\star}  .
\end{align}
Observing that the function $z\frac{\gamma}{1-\gamma}$ is increasing in $z$ and $z_{\phi}^{\pi}$ is also increasing in $\pi(\phi \mymid 0)$ (see the fact $\underline{p}\geq \underline{q}$ in \eqref{eq:infinite-lw-upper-p-q}),  the optimal policy in state $0$ thus obeys
\begin{equation}
    \pi_\phi^{\star}(\phi \mymid 0) = 1 \label{eq:infinite-lb-optimal-policy}.
\end{equation}
Finally, plugging the above fact back into \eqref{eq:infinite-x-h} leads to
\begin{align}
    z_{\phi}^{\star} \defn z_{\phi}^{\pi^\star} =\underline{p}\pi^\star_\phi(\phi\mymid 0) + \underline{q} \pi^\star_\phi(1-\phi \mymid 0) = \underline{p},
\end{align}
which combined with \eqref{eq:interemediate_opt_10-infinite} yields 
\begin{align}
     V_\phi^{\star,\ror}(0) = 1 +\frac{\gamma}{1-\gamma}   \underline{p} . 
\end{align}
Regarding the optimal policy for the remaining states $s>0$, since the action does not influence the state transition, without loss of generality, we choose the optimal policy to obey
\begin{align}\label{eq:infinite-lower-optimal-pi}
    \forall s>0: \quad \pi_\phi^\star(\phi\mymid s) = 1.
\end{align}

\paragraph{Proof of \eqref{eq:expression-Cstar-LB-infinite}.}
To begin with, for any MDP $\cM_\phi$ with $\phi\in\{0,1\}$, recall the definition of $\Cstar$ as
\begin{align}
   \Cstar =  \max_{(s, a, P) \in \mathcal{S} \times \cA \times \unb^{\ror}(P^{\phi})} \frac{\min\big\{d^{\star,P}(s, a ), \frac{1}{S}\big\}}{d^{\mathsf{b}}(s, a )}.
\end{align}
Given $\pi_\phi^\star(\phi\mymid s) = 1$ for all $s\in \cS$ and the initial distribution $\rho(0) = 1$, for any $P\in \unb^{\ror}(P^{\phi})$,  we arrive at
\begin{align}
    d^{\star,P}(0,\phi) = (1-\gamma) \rho(0) \pi_\phi^\star(\phi\mymid 0) = (1-\gamma),
\end{align}
which holds due to that the agent transits from state $0$ to other states at the first step and then will never go back to state $0$.
In addition, one has for any $P\in \unb^{\ror}(P^{\phi})$,  
\begin{align}\label{eq:d-2-phi-upper-1-beta}
    d^{\star,P}(2,\phi) &= (1-\gamma)  P(2 \mymid 0, \phi)
    \sum_{t=1}^{\infty} \gamma^t \big(P(2 \mymid 2, \phi) \big)^t \notag \\
    & =(1-\gamma)P(2 \mymid 0, \phi) \sum_{t=1}^\infty \gamma^t \overset{\mathrm{(i)}}{\geq} \gamma \underline{p} \geq   \frac{1}{2\beta},
    \end{align}
where (i) holds by \eqref{eq:infinite-lw-p-q-perturb-inf} and the final inequality follows from \eqref{eq:infinite-lw-upper-p-q} and $\gamma \geq 1/2$. 
Armed with the above facts, we observe that  
\begin{align}\label{eq:state-012-infinite-lower}
   \max_{(s, a, P) \in \mathcal{S} \times \cA \times \unb^{\ror}(P^{\phi})} \frac{\min\big\{d^{\star,P}(s, a ), \frac{1}{S}\big\}}{d^{\mathsf{b}}(s, a )} =\max_{s\in \{0,1,2\}, P \in \unb^{\ror}(P^{\phi})} \frac{\min\big\{d^{\star,P}(s, \phi), \frac{1}{S}\big\}}{d^{\mathsf{b}}(s, \phi)} 
\end{align}
which follows from the properties of the optimal policy in \eqref{eq:infinite-lower-optimal-pi} and consequently $d^{\star,P}(s) = d^{\star,P}(s,\phi) =0$ for all $s>2$ and all $P\in \unb^{\ror}(P^{\phi})$.

To continue, we control the term in states $\{0,1,2\}$ separately: 
\begin{subequations}
\begin{align}
    \max_{ P \in \unb^{\ror}(P^{\phi})} \frac{\min\big\{d^{\star,P}(2, \phi), \frac{1}{S}\big\}}{d^{\mathsf{b}}(2, \phi)} &\overset{\mathrm{(i)}}{=} \frac{1}{Sd^{\mathsf{b}}(2,\phi)} \overset{\mathrm{(ii)}}{=} \frac{2}{S \mu(2)} = 2C, \\
    \max_{ P \in \unb^{\ror}(P^{\phi})} \frac{\min\big\{d^{\star,P}(0, \phi), \frac{1}{S}\big\}}{d^{\mathsf{b}}(0, \phi)} &\leq \frac{1}{S d^{\mathsf{b}}(0,\phi)} \overset{\mathrm{(iii)}}{=} \frac{2}{S \mu(0)} = 2C, \\
     \max_{ P \in \unb^{\ror}(P^{\phi})} \frac{\min\big\{d^{\star,P}(1, \phi), \frac{1}{S}\big\}}{d^{\mathsf{b}}(1, \phi)} &\leq \frac{1}{Sd^{\mathsf{b}} (1,\phi)} \overset{\mathrm{(iv)}}{=}  \frac{2}{S \left(1 -\frac{2}{CS}\right)}  \overset{\mathrm{(v)}}{\leq}  \frac{4}{S}  \overset{\mathrm{(vi)}}{\leq} C, \label{eq:infinite-cstar-bound-3}
\end{align}
\end{subequations}
where (i) holds by \eqref{eq:d-2-phi-upper-1-beta} and $S \geq 2 \beta$, (ii), (iii) and (iv) follow from the definitions in \eqref{eq:infinite-lower-bound-d-b} and \eqref{infinite-mu-assumption}, (v) and (vi) arise from the assumption in \eqref{eq:lower-C-assumption-infinite}.
Plugging the above results back into \eqref{eq:state-012-infinite-lower} directly completes the proof of
\begin{align*}
    \Cstar = \max_{(s, a, P) \in \mathcal{S} \times \cA \times \unb^{\ror}(P^{\phi})} \frac{\min\big\{d^{\star,P}(s, a ), \frac{1}{S}\big\}}{d^{\mathsf{b}}(s, a )} = 2C.
\end{align*}

\end{document}